\definecolor{brightPurple}{RGB}{170, 51, 119}
\definecolor{vibrantBlue}{RGB}{0, 119, 187}
\definecolor{brightGreen}{RGB}{34, 136, 51}
\newcommand\calS{\mathcal{S}}
\newcommand\calA{\mathcal{A}}
\newcommand\calX{\mathcal{X}}
\newcommand\entropy{\mathcal{H}}
\newcommand\regret{\mathcal{R}}
\newcommand\calY{\mathcal{Y}}
\newcommand\reals{\mathbb{R}}
\newcommand\exptn{\mathbb{E}}
\newcommand\indic{\mathbbm{1}}
\newcommand\interior[1]{\text{int}\round{X}}
\newcommand\simplex[1]{\Delta\left(#1\right)}
\newcommand\proj[2]{\text{proj}_{#1}\left(#2\right)}
\newcommand\kl{\text{KL}}
\newcommand\bregman[3]{B_{#1}(#2,#3)}
\newcommand\bregmanPsi[2]{\bregman{\psi}{#1}{#2}}
\newcommand\biggVert{~\bigg\vert~}
\newcommand\dotProd[2]{\left\langle #1, #2 \right\rangle}
\newcommand\sigmaAlg[1]{\mathcal{F}_{#1}}
\newcommand\piBar{\bar{\pi}}
\newcommand\piStar{\pi^\star}
\newcommand\regPiStar{\pi_{\tau}^\star}
\newcommand\regPiStarH[1]{\pi_{\tau,#1}^\star}
\newcommand\lambdaStar{\bm{\lambda}^\star}
\newcommand\lambdaStarI[1]{\lambda_{#1}^\star}
\newcommand\regLambdaStar{\bm{\lambda}_{\tau}^\star}
\newcommand\regLambdaStarI[1]{\lambda_{\tau, #1}^\star}
\newcommand\lambdaMax{\lambda_{max}}
\newcommand\xStar{x^\star}
\newcommand\xTilde{\tilde{x}}
\newcommand\yStar{y^\star}
\newcommand\pBar{\bar{p}}
\newcommand\xBar{\bar{x}}
\newcommand\yBar{\bar{y}}
\newcommand\fBar{\bar{f}}
\newcommand\fUnderline{\underline{f}}
\newcommand\rHat{\hat{r}}
\newcommand\uHat{\hat{u}}
\newcommand\gHat{\hat{g}}
\newcommand\pHat{\hat{p}}
\newcommand\rBar{\bar{r}}
\newcommand\uBar{\bar{u}}
\newcommand\gBar{\bar{g}}
\newcommand\zHat{\hat{z}}
\newcommand\psiHat{\hat{\psi}}
\newcommand\upOne[1]{#1 \vee 1}
\newcommand\counter{n_{k-1,h}(s,a)}
\newcommand\counterH{n_{k-1,h}(s_h,a_h)}
\newcommand\counterHK{n_{k-1,h}(s_h^{k},a_h^{k})}
\newcommand\counterMax{\upOne{\counter}}
\newcommand\counterMaxH{\upOne{\counterH}}
\newcommand\counterMaxHK{\upOne{\counterHK}}
\newcommand\curly[1]{\left\{#1\right\}}
\newcommand\round[1]{\left(#1\right)}
\newcommand\rectangular[1]{\left[#1\right]}
\newcommand\norm[1]{\left\|#1\right\|}
\newcommand\abs[1]{\left|#1\right|}
\newcommand\stepSet{[H]}
\newcommand\conSet{[I]}
\newcommand\cTauLambda{C_{\eta,\tau,\Lambda}}
\newcommand\dTauLambda{D_{\eta,\tau,\Lambda}}
\newcommand\dTauLambdaPrime{D'_{\tau,\Lambda}}
\newcommand\lagrangian{\mathcal{L}}
\newcommand\regLagrangian{\mathcal{L}_{\tau}}
\newcommand\oTilde[1]{\tilde{O}\round{#1}}
\newcommand\val[3]{V_{#1}^{#2}(#3)}
\newcommand\qval[3]{Q_{#1}^{#2}(#3)}
\newcommand\valHat[3]{\hat{V}_{#1}^{#2}(#3)}
\newcommand\valHatVec[3]{\bm{\hat{V}}_{#1}^{#2}(#3)}
\newcommand\qvalHat[3]{\hat{Q}_{#1}^{#2}(#3)}
\newcommand\valStart[2]{V_{#1}^{#2}}
\newcommand\valStartVec[2]{\bm{V}_{#1}^{#2}}
\newcommand\valStartHat[2]{\hat{V}_{#1}^{#2}}
\newcommand\valStartHatVec[2]{\bm{\hat{V}}_{#1}^{#2}}
\newcommand\occ[3]{d_{#1}^{#2}(#3)}
\newcommand\occFn[2]{\bm{d}_{#1}^{#2}}
\renewcommand{\epsilon}{\ensuremath\varepsilon}
\renewcommand{\phi}{\ensuremath{\varphi}}
\newcommand\St{\mathcal{S}}
\newcommand\A{\mathcal{A}}
\newcommand\E{\mathbb{E}}
\newcommand\M{\mathcal{M}}
\newcommand\R{\mathbb{R}}
\newcommand\Reg{\mathcal{R}}
\newcommand\xbar{\bar{x}}
\newcommand\pibar{\bar{\pi}}
\newcommand\Simplex[1]{\Delta\left(#1\right)}
\newtheorem{remark}{Remark}[section]
\newtheorem{definition}{Definition}[section]
\newtheorem{lemma}{Lemma}[section]
\newtheorem{observation}{Observation}[section]
\newtheorem{assumption}{Assumption}[section]
\icmltitlerunning{Truly No-Regret Learning in Constrained MDPs}
\begin{document}

\twocolumn[
\icmltitle{Truly No-Regret Learning in Constrained MDPs}

\icmlsetsymbol{equal}{*}

\begin{icmlauthorlist}
\icmlauthor{Adrian Müller}{EPFL}
\icmlauthor{Pragnya Alatur}{ETHZ}
\icmlauthor{Volkan Cevher}{EPFL}
\icmlauthor{Giorgia Ramponi}{UZH}
\icmlauthor{Niao He}{ETHZ}
\end{icmlauthorlist}

\icmlaffiliation{EPFL}{EPFL}
\icmlaffiliation{ETHZ}{ETH Zürich}
\icmlaffiliation{UZH}{University of Zürich}

\icmlcorrespondingauthor{Adrian Müller}{adrian.luis.mueller@gmail.com}

\icmlkeywords{Machine Learning, ICML}

\vskip 0.3in
]

\printAffiliationsAndNotice{}

\begin{abstract}
Constrained Markov decision processes (CMDPs) are a common way to model safety constraints in reinforcement learning. State-of-the-art methods for efficiently solving CMDPs are based on primal-dual algorithms. For these algorithms, all currently known regret bounds allow for \textit{error cancellations} --- one can compensate for a constraint violation in one round with a strict constraint satisfaction in another. This makes the online learning process unsafe since it only guarantees safety for the final (mixture) policy but not during learning. As \citet{efroni2020exploration} pointed out, it is an open question whether primal-dual algorithms can provably achieve sublinear regret if we do not allow error cancellations. In this paper, we give the first affirmative answer. We first generalize a result on last-iterate convergence of regularized primal-dual schemes to CMDPs with multiple constraints. Building upon this insight, we propose a model-based primal-dual algorithm to learn in an unknown CMDP. We prove that our algorithm achieves sublinear regret without error cancellations.
\end{abstract}

\section{Introduction}
\label{sec:introduction}

Classical reinforcement learning \citep[RL,][]{sutton2018reinforcement} aims to solve sequential decision-making problems under uncertainty. It involves learning a policy while interacting with an unknown Markov decision process \citep[MDP,][]{bellman1957}. However, in many real-world situations, RL algorithms need to solve the task while respecting certain safety constraints. For example, in autonomous driving and drone navigation, we must avoid collisions and adhere to traffic rules to ensure safe behavior \citep{brunke2022safe}. Such safety requirements are commonly described by constrained Markov decision processes \citep[CMDPs,][]{altman1999constrained}. In CMDPs, the goal is to maximize the expected cumulative reward while subject to multiple safety constraints, each modeled by a different expected cumulative reward signal that needs to lie above a respective threshold. We consider the finite-horizon setting, in which an algorithm chooses a policy in each episode, plays it for one episode, and observes the random transitions, rewards, and constraint rewards along its trajectory.

\looseness -1 In the literature, there are three standard approaches for finding an optimal policy in a known CMDP: linear programming \citep[LP,][]{altman1999constrained}, primal-dual \citep{paternain2022safe}, and dual algorithms \citep{paternain2019strongduality}. If the CMDP is unknown, a common approach to handle the uncertainty is the classical paradigm of optimism in the face of uncertainty \citep{auer2008near}. In their influential paper, \citet{efroni2020exploration} established comprehensive regret guarantees for all three types of optimistic algorithms in the online setup. In practice, especially primal-dual algorithms are preferred due to their high computational efficiency and flexibility for policy parameterization, thereby scaling to high-dimensional problems \citep{chow2017risk,achiam2017constrained,tessler2018reward}. Thus, it is important to rigorously understand the fundamental properties of this algorithm class. Indeed, there has been a large number of studies on primal-dual (and dual) approaches for CMDPs \citep[to list just a few]{ding2020natural,ding2022convergence,liu22global,ding2022policy,ghosh2022provably,ding2022provably,qiu2020upper,liu2021learning,bai2022achieving} since the work of \citet{efroni2020exploration}. 

\looseness -1 However, unlike for LP-based algorithms, the known bounds for primal-dual (and dual) algorithms suffer from the fundamental limitation pointed out by \citet[Section 2.2]{efroni2020exploration}: they concern a \textit{weaker}, less safe notion of regret. More precisely, the known guarantees bound the sum of the suboptimalities and the sum of the constraint violations across episodes, where one episode corresponds to one round of learning. However, a policy can have a negative constraint violation (by being very safe but obtaining a lower return than an optimal safe policy) or a positive constraint violation (by being unsafe but obtaining a higher return than an optimal safe policy). Thus, terms from these two cases can cancel each other out when summing the violations across episodes, a phenomenon referred to as \textit{error cancellations} \citep{efroni2020exploration}. An algorithm with sublinear weak regret may heavily violate safety constraints during learning. For example, if the policies alternate between the two cases above in every other episode, the algorithm may even obtain \textit{zero} regret despite being unsafe every second episode. This weak notion of regret falls short of capturing safety in a setup with no simulator and where the algorithm must adhere to constraints during learning. In fact, these cancellations are not a weakness in the analysis but rather due to oscillations of the underlying optimization method, which converges on average but not in the last-iterate \citep{efroni2020exploration,beck2017first}. Indeed, these oscillations are observed in practice \citep{stooke2020responsive,moskovitz2023reload}.

\looseness -1 We thus consider a stronger notion of regret that concerns the sum of the \textit{positive parts} of the error terms instead. This regret does not allow for error cancellations, and we refer to it as \textit{strong regret}. The results of this paper address the research question pointed out by \citet{efroni2020exploration}:
\begin{center}
    \onehalfspacing
    \textit{\textbf{Can we design an efficient primal-dual algorithm that achieves sublinear strong regret in an unknown CMDP?}}
\end{center}
We provide the first affirmative answer for tabular finite-horizon CMDPs. Specifically, we introduce a regularization framework inspired by the recent work of \citet{ding2023last} and derive guarantees in the online setup for a primal-dual algorithm that arises from this formulation.

\textbf{Contributions} Our main contributions are the following:
\begin{itemize}
    \item We first prove non-asymptotic policy last-iterate convergence (\cref{def:li-conv}) of a regularized primal-dual scheme for CMDPs despite the inherent non-concavity, assuming access to a value function oracle (\cref{sec:li-conv}). Our guarantee generalizes previous results for the strictly easier problem of CMDPs with only a single constraint. This is the first analysis that establishes last-iterate convergence of primal-dual algorithms in \emph{arbitrary} CMDPs.
    \item Combining this regularized primal-dual scheme with optimistic exploration, we propose an improved model-based primal-dual algorithm (\cref{algo:rpg-pd-finite-learn}) for online learning in CMDPs (\cref{sec:pd}). Our algorithm requires no prior knowledge of the CMDP and maintains value-optimism for the \textit{regularized} problem. 
    \item Finally, we establish that our algorithm achieves sublinear \textit{strong} regret when learning an unknown CMDP (\cref{sec:regret-bound}). This is the first primal-dual algorithm achieving a sublinear regret guarantee without allowing error cancellations, providing the first answer to the open question posed by \citet{efroni2020exploration}.
\end{itemize}
The latter is relevant due to the efficiency and practical importance of primal-dual algorithms, which are often preferred over LP-based algorithms in large-scale applications. Additionally, we provide numerical evaluations of our algorithm in simple environments. We illustrate that it exhibits sublinear regret when safety during learning is concerned, while unregularized algorithms do not. We conclude that error cancellations are not merely a hypothetical issue of existing algorithms but bear practical relevance.\looseness-1

\subsection{Related Work} \label{sec:related-work}

\looseness -1 Since \citet{efroni2020exploration} analyzed the vanilla primal-dual (and dual) algorithm, their analysis has been extended in various works, both for the case of an unknown or known CMDP \citep{ding2020natural,ding2022convergence,liu22global,ding2022policy,ghosh2022provably,ding2022provably,qiu2020upper,liu2021learning,bai2022achieving}. As \citet{calvo2023state} pointed out, even the works assuming full knowledge of the CMDP only establish convergence of the averaged iterates. Hence, none of the mentioned works provides a guarantee for the strong regret or is easily amendable to obtain one. 

\looseness -1 Very recently, \citet{ding2023last} were the first to provide a last-iterate convergence analysis for a primal-dual algorithm in a known discounted infinite-horizon CMDP closely related to our algorithm. However, their analysis is limited to the case of a single constraint, which is non-trivial to generalize to multiple constraints (\cref{sec:li-conv}). Moreover, the authors left it as an open question whether the algorithm can be generalized to achieve last-iterate convergence in the online setup, when the CMDP is unknown (\cref{sec:pd}). In addition, our analysis holds for an algorithm with closed-form updates (\cref{eq:reg-primal,eq:reg-dual}), while their algorithm involves Bregman projections for technical reasons (\cref{lem:potential-finite}). 

Prior, \citet{moskovitz2023reload} showed last-iterate convergence of a primal-dual scheme, but their analysis concerns a hypothetical algorithm whose implicit updates do not allow efficient implementation.
\citet{li2021faster} provided a dual (not primal-dual) algorithm based on regularization like ours but only proved convergence for a history-weighted mixture policy\footnote{By mixture policy we refer to a policy randomly drawn from all policy iterates.} in a known CMDP. Similarly, \citet{ying2022dual} derived a dual algorithm with last-iterate convergence but left it open whether an online version is possible.

\citet{muller2023cancellation} were the first to prove a sublinear regret guarantee without error cancellations for a dual algorithm in the online setup. However, their algorithm, which is based on the augmented Lagrangian method, lacks the desired computational efficiency. Very recently, \citet{ghosh2024towards} proposed a different primal-dual algorithm to achieve sublinear strong regret, but their algorithm requires an exponential runtime to enjoy this guarantee. We refer to \cref{sec:extra-lit} for further comparison with prior results.

\section{Problem Formulation} \label{sec:background}

\textbf{Notation} For $n\in\mathbb{N}$, we use $[n]$ to refer to the set of integers $\curly{1, \dots ,n}$. For a finite set $X$, we denote the probability simplex over $X$ as $\Simplex{X} = \{v \in [0,1]^{X} | \sum_{x \in X} v_x = 1\}$. For $a\in\reals$, we set $[a]_+ := \max\{0, a\}$ to be the positive part of $a$. $\| \bm{b} \|$ denotes the $\ell_2$-norm of a vector $\bm{b}\in\reals^n$. $\tilde{O}$-notation refers to asymptotics up to poly-log factors.

\textbf{Constrained MDPs} A finite-horizon CMDP with state and action spaces $\calS$, $\calA$ (with finite cardinalities $S$ and $A$) and horizon $H>0$ is defined by a tuple
$\mathcal{M} = (\calS, \calA, H, p, r, \bm{u}, \bm{c})$.
Every episode consists of $H$ steps and starts from an initial state $s_1\in\calS$.\footnote{It is straightforward to extend this to any initial distribution $\mu$.} At every step $h$, $p_h(s' | s,a)$ denotes the probability of transitioning to state $s'$ if the current state and action are $s$ and $a$. Moreover, $r_h \colon \calS\times\calA \to [0,1]$, $(s,a) \mapsto r_h(s,a)$ denotes the reward function at step $h\in[H]$. Similarly, $\bm{u}_h \colon \calS\times\calA \to [0,1]^I$, $(s,a) \mapsto \bm{u}_{h}(s,a) = (u_{1,h}(s,a), \dots, u_{I,h}(s,a))^T \in [0,1]^I$ refers to the $I$ constraint reward functions, and $\bm{c} \in [0, H]^I$ are the respective thresholds $c_i$ for the $i$-th constraint $(i\in[I])$. 

\looseness -1 The algorithm interacts with the CMDP by playing a policy $\pi\in\Pi$, where
\begin{align*}
    \Pi := \bigg\{&(\pi_1, \dots, \pi_H) \biggVert
     \forall h~\forall s\in\calS \colon \pi_h(\cdot | s) \in \simplex{\calA} \bigg\}.
\end{align*}
\looseness -1 For any $\pi \in \Pi$, we consider the Markov process given by $a_h \sim \pi_h(\cdot | s_h)$, $s_{h+1} \sim p_h(\cdot|s_h,a_h)$ for $h = 1, \dots, H$. For any function $r' \colon \stepSet\times\calS\times\calA \to \reals$, $(h,s,a) \mapsto r'_h(s,a)$, every $(s, h) \in \calS\times\stepSet$ and $\pi\in\Pi$, consider the value functions
\begin{align*}
    \val{r',h}{\pi}{s} :=& \exptn_{\pi} \rectangular{ \sum_{h'=h}^H r'_{h'}(s_{h'},a_{h'}) \biggVert s_h = s },\\
    \qval{r',h}{\pi}{s,a} :=& \exptn_{\pi} \rectangular{ \sum_{h'=h}^H r'_{h'}(s_{h'},a_{h'}) \biggVert s_h = s,~ a_h=a }.
\end{align*} 
For notational convenience, we drop the indices for the step and state if we refer to $h=1$ and $s_1$ and write $\valStart{r'}{\pi} = \val{r',1}{\pi}{s_1}$. 

In the CMDP setting, we are interested in solving the following optimization problem:
\begin{align}
    \max_{\pi \in \Pi} \quad \valStart{r}{\pi} \quad \text{s.t.} \quad \valStart{u_i}{\pi} \geq c_i \quad (\forall i\in \conSet),\label{eq:cmdp-finite}
\end{align}
and we fix an \emph{optimal solution} $\piStar \in \Pi$ for \cref{eq:cmdp-finite}. 
Among all policies that are feasible with respect to the $I$ safety constraints $\valStart{u_i}{\pi} \geq c_i$, the goal is to find one that maximizes $\valStart{r}{\pi}$. 

We consider the stochastic reward setting, in which the algorithm observes rewards sampled from random variables $R_h(s,a) \in [0,1]$ and $\bm{U}_h(s,a) \in [0,1]^I$ such that $\E[R_h(s,a)] = r_h(s,a)$ and $\E[U_{i,h}(s,a)] = u_{i,h}(s,a)$ for all $i\in [I]$ when taking action $a$ in state $s$ at step $h$. Throughout, we make the following assumption, which is standard in the context of CMDPs \citep{altman1999constrained,efroni2020exploration,li2021faster,ying2022dual,ding2022convergencenat,paternain2022safe,ding2023last}.  
\begin{assumption}[Slater policy] \label{ass:slater-finite}
    There exists $\piBar \in \Pi$ and $\bm{\xi} \in \reals_{> 0}^I$ such that $\valStart{u_i}{\piBar} \geq c_i + \xi_i$ for all $i\in[I]$. Set the \emph{Slater gap}
    \begin{align*}
        \Xi := \min_{i\in[I]}~ \xi_i.
    \end{align*}
\end{assumption}
\looseness -1 This assumption asserts that there exists a policy that strictly satisfies the constraints.

\textbf{Problem Formulation} The algorithm interacts with the unknown CMDP over a fixed number of $K>0$ episodes. Prior to every episode $k\in [K]$, the algorithm selects a policy $\pi_k\in\Pi$ and plays it for one run of the CMDP. The goal is to simultaneously minimize its two \textit{strong regrets}:
\begin{align*}
    \Reg(K; r) &:= \sum_{k\in[K]} \left[ \valStart{r}{\piStar} - \valStart{r}{\pi_k} \right]_+, \tag{Objective}\\ 
    \Reg(K; \bm{u}) &:= \max_{i\in [I]} \sum_{k\in[K]} \left[ c_i - \valStart{u_i}{\pi_k} \right]_+. \tag{Constraints} 
\end{align*}
Only when a policy has a suboptimal objective or violates the constraints, this counts to the respective regret. All existing works on primal-dual (and dual) algorithms \citep[e.g.,][]{liu2021learning, efroni2020exploration, bai2022achieving, ding2022convergencepol, ding2022convergencenat} only prove sublinear guarantees on a \textit{weaker} notion:
\begin{align*}
        \Reg_{\text{weak}}(K; r) &:= \sum_{k\in[K]} \left( \valStart{r}{\piStar} - \valStart{r}{\pi_k} \right), \\ 
        \Reg_{\text{weak}}(K; \bm{u}) &:= \max_{i\in [I]} \sum_{k\in[K]} \left( c_i - \valStart{u_i}{\pi_k} \right). 
\end{align*}
The weak regrets allow for the aforementioned \textit{error cancellations} as positive and negative terms count toward each of the regrets. Even if they are sublinear in $K$ (in fact, even if they are \textit{zero}), the algorithm may continue compensating for a constraint violation in one episode with strict constraint satisfaction in another. On the other hand, a sublinear bound on the \textit{stronger} notion of regret guarantees that the algorithm achieves a low constraint violation in most episodes (see \cref{sec:str-vs-weak}). This is crucial for many practical applications where we do not have access to a simulator, but we have to learn our optimal policy in an \textit{online} fashion. In the example of navigating an autonomous vehicle or drone, one would want to avoid crossing the boundaries of a specified track in each episode during learning. It is not helpful to compensate for crashing the vehicle into a wall by driving overly safely in the next episode. However, from a theoretical perspective, it is strictly more challenging to provide a guarantee for the strong regret than for the weaker notion.\footnote{For practical purposes, one may consider the strong regret only for the constraint violations and the weak one for the objective. We refer to \cref{sec:further-exp} for a discussion of the differences. However, this relaxation does not improve our theoretical results.}

\section{Primal-Dual Scheme} \label{sec:duality-lagrangian}

\textbf{Vanilla Scheme} Primal-dual and dual algorithms arise from the equivalent Lagrangian formulation \citep{altman1999constrained} of \cref{eq:cmdp-finite}:
\begin{align}
    \max_{\pi \in \Pi} \min_{\bm{\lambda} \in \reals_{\geq 0}^I}~& \lagrangian(\pi, \bm{\lambda}), \label{eq:primal-finite}
\end{align}
where 
\begin{align*}
    \lagrangian (\pi, \bm{\lambda}) := \valStart{r}{\pi} + \sum_{i \in [I]} \lambda_i (\valStart{u_i}{\pi} - c_i ) = \valStart{r + \bm{\lambda}^T (\bm{u}-H^{-1}\bm{c}) }{\pi}
\end{align*}
is the \emph{Lagrangian}. \citet{paternain2019strongduality} showed that CMDPs exhibit strong duality, by which \cref{eq:primal-finite} is equivalent to finding a saddle point $(\piStar,\lambdaStar)$ of the Lagrangian. Primal-dual algorithms solve this saddle point problem via iterated play between two no-regret dynamics for $\pi$ and $\bm{\lambda}$. Typically, as considered by \citet{efroni2020exploration} in the regret minimization setting,
\begin{align} 
    \pi_{k+1,h}(a | s) \propto& ~ \pi_{k,h} (a | s) \exp \round{\eta \qval{r + \bm{\lambda_k}^T\bm{u}, h}{\pi_k}{s,a}},\label{eq:vanilla-prim} \\
    \bm{\lambda}_{k+1} =& \proj{\Lambda}{\bm{\lambda}_k - \eta (\valStartVec{\bm{u}}{\pi_k} - \bm{c})}, \label{eq:vanilla-dual}
\end{align}
where $\text{proj}_{\Lambda}$ refers to the projection onto a predefined $\Lambda = [0, \lambdaMax]^I$, which amounts to truncating the coordinates. We refer to \cref{eq:vanilla-prim,eq:vanilla-dual} as \textit{vanilla primal-dual scheme}. The mixture policy of the iterates is guaranteed to converge to an optimal solution pair of the min-max problem. However, the last iterate is not guaranteed to converge. Instead, the method oscillates around an optimal solution, which results in the weak regret bounds of previous primal-dual algorithms (\cref{sec:experiments}).

\textbf{Regularized Scheme} The key idea of the regularization is to induce strict concavity in the primal variable (to be precise, in the state-action occupancy measure $\occ{h}{\pi}{s,a} := P_{\pi}[ s_h=s, a_h=a]$ and not in the policy) and strong convexity in the dual variable $\bm{\lambda}$. This enables us to establish convergence to the unique solution of the regularized problem. We then show how to retrieve an error bound for the original, unregularized problem by carefully choosing the amount of regularization.

For $\tau > 0$, we define the \emph{regularized Lagrangian} $\regLagrangian \colon \Pi \times \reals^I \to \reals$ as 
\begin{align*}
    \regLagrangian (\pi, \bm{\lambda}) := \lagrangian (\pi, \bm{\lambda}) + \tau \left( \entropy(\pi) + \frac{1}{2}\| \bm{\lambda}\|^2 \right),
\end{align*}
where $\entropy (\pi) := - \exptn_{\pi} [ \sum_{h=1}^{H} \log (\pi_h(a_h | s_h)) ]$ is the \emph{entropy} of a policy $\pi$. Then, consider the following \emph{regularized CMDP problem}:
\begin{align}
    \max_{\pi \in \Pi} \min_{\bm{\lambda} \in \Lambda}~& \regLagrangian(\pi, \bm{\lambda}), \label{eq:reg-primal-finite}
\end{align}
The domain of the dual variable $\bm{\lambda}$ is now a compact set $\Lambda := [0,\lambdaMax]^I$, with $\lambdaMax \geq H\Xi^{-1}$ to be specified (crucially, we will choose it depending on the number of episodes $K$). Thanks to strong duality of the unregularized problem, any saddle point $(\piStar,\lambdaStar)$ of $\lagrangian$ satisfies $\norm{\lambdaStar} \leq H\Xi^{-1}$ (e.g., \cite{ying2022dual} for infinite horizon), which will allow us to constrain the dual variable as above. We denote the regularized primal and dual optimizers as follows:
\begin{align*}
    \regPiStar =& \arg \max_{\pi\in\Pi} \min_{\bm{\lambda} \in \Lambda} ~\regLagrangian(\pi, \bm{\lambda}), \\
    \regLambdaStar =& \arg \min_{\bm{\lambda} \in \Lambda} \max_{\pi \in \Pi} ~\regLagrangian(\pi, \bm{\lambda}).
\end{align*}
Regularization preserves strong duality (\cref{app:lagrangian}), by which we are equivalently looking for a saddle point $(\regPiStar,\regLambdaStar)$ of the regularized Lagrangian $\regLagrangian$. \citet{ding2023last} proposed to perform the ascent-descent scheme in \cref{eq:vanilla-prim,eq:vanilla-dual} on $\regLagrangian$ rather than $\lagrangian$ in the discounted infinite-horizon setting given a value function oracle:
\begin{align}
    \pi_{k+1,h}(a | s) \propto& \pi_{k,h} (a | s) \exp \round{\eta \qval{r + \bm{\lambda_k}^T\bm{u} + \tau \psi_{k},h}{\pi_k}{s,a}}, \label{eq:reg-primal}\\
    \bm{\lambda}_{k+1} =& \proj{\Lambda}{ (1-\eta\tau)\bm{\lambda}_k - \eta (\valStartVec{\bm{u}}{\pi_k} -\bm{c})}, \label{eq:reg-dual}
\end{align}
where $\psi_{k,h}(s,a) := -\log(\pi_{k,h}(a|s))$. We refer to \cref{eq:reg-primal,eq:reg-dual} as \textit{regularized primal-dual scheme}. In fact, our scheme above is a simplification of \citet{ding2023last}'s algorithm, since their policy update would read $\pi_{k+1,h}( \cdot | s) = \arg\max_{\pi_h(\cdot|s)\in\widehat{\Delta}(\calA)} \langle \pi_{h} ( \cdot | s), \qval{r + \bm{\lambda_k}^T\bm{u} + \tau \psi_{k},h}{\pi_k}{s,\cdot}\rangle - \frac{1}{\eta}\text{KL}(\pi_h(\cdot | s) || \pi_{k,h}(\cdot|s))$, where $\widehat{\Delta}(\calA) := \{ \pi_h(\cdot|s) \in \Simplex{\calA} ~\vert~ \forall a \in \calA \colon \pi_h(a|s) \geq \epsilon_0 /A \}$ for some $\epsilon_0 > 0$ is a \textit{restricted} probability simplex for technical reasons stemming from the analysis. While this update can be performed via Bregman projections \citep{orabona2019modern} of the KL divergence onto $\widehat{\Delta}(\calA)$, this requires solving a convex program at every iteration $k$ of the scheme. In contrast, our scheme admits a closed form of the policy update in \cref{eq:reg-primal} due to our modified analysis (see discussion of \cref{lem:potential-finite}).

\section{Last-Iterate Convergence} \label{sec:li-conv}

In this section, we prove last-iterate convergence of the regularized primal-dual scheme (\cref{eq:reg-primal,eq:reg-dual}) with an exact value function oracle (e.g., via policy evaluation if the true model is known) for an arbitrary number of constraints. We define last-iterate convergence as follows.

\begin{definition}[Last-iterate convergence] \label{def:li-conv}
    A method producing policy iterates $\pi_k \in \Pi$ ($k=1,2,\dots$) is \emph{last-iterate convergent} if 
    \begin{align*}
        V_r^{\pi^\star}-V_r^{\pi_k} \to 0 \quad \text{and} \quad [c_i-V_{u_i}^{\pi_k}]_+ \to 0 \quad (\forall i \in [I])
    \end{align*}
     as $k \to \infty$.
\end{definition}

The main technical challenges we overcome to show last-iterate convergence are: (a) to prove ascent properties for the primal update (\cref{eq:reg-primal}), which optimizes a nonconcave objective with surrogate gradients $\qval{r + \bm{\lambda_k}^T\bm{u} + \tau \psi_{k},h}{\pi_k}{s,a}$ that are unbounded in general; and (b) to bound all \textit{unregularized} constraint violations of the last iterate $\pi_k$ in the presence of more than one constraint. We provide all proofs for this section in \cref{app:convergence}.

\textbf{Regularized Optimizers} Our first step is to show that the iterates $(\pi_k, \bm{\lambda}_k)$ converge to the regularized optimizers $(\regPiStar, \regLambdaStar)$. Indeed, we formalize this by showing that the \textit{potential function}
\begin{align*}
    \Phi_k := & \sum_{s,h} P_{\regPiStar}[s_h=s] \kl_{k,h}(s) + \frac{1}{2} \norm{\regLambdaStar - \bm{\lambda}_k}^2
\end{align*}
approaches zero, if we choose the regularization parameter $\tau$ and the step size $\eta$ sufficiently small. Here, $\kl_{k,h}(s) := \kl(\regPiStarH{h}(\cdot|s), \pi_{k,h}(\cdot|s))$ refers to the Kullback-Leibler divergence between the optimal and the $k$-th policy, and $P_{\regPiStar}$ refers to the probability distribution under policy $\regPiStar$.

\begin{restatable}[Regularized convergence]{lemma}{potentialFinite} \label{lem:potential-finite} 
    Let $\eta, \tau < 1$ and $\lambdaMax \geq H\Xi^{-1}$. The iterates in \cref{eq:reg-primal,eq:reg-dual} satisfy 
    \begin{align*}
        \Phi_{k+1} \leq (1 - \eta \tau)^k \Phi_{1} + \tilde{O} \round{ \eta\tau^{-1} \cTauLambda },
    \end{align*}
    where
    \begin{align*}
        \cTauLambda =& \lambdaMax^2H^3A^{1/2}I^2\exp\round{\eta H \round{1 + \lambdaMax I +  \log(A)}} \\
        &+ I \round{ H + \tau \lambdaMax }^2.
    \end{align*}
\end{restatable}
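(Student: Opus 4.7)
The plan is to prove a one-step contraction
\[
\Phi_{k+1} \;\le\; (1-\eta\tau)\,\Phi_k \;+\; \tilde O\bigl(\eta^{2}\,\cTauLambda\bigr),
\]
and then iterate it; summing the geometric series $\sum_{j=0}^{k-1}(1-\eta\tau)^{j}\eta^{2}\cTauLambda \le \eta\cTauLambda/\tau$ delivers the bound stated in the lemma. The one-step analysis decomposes $\Phi_k$ into its dual half $\tfrac12\norm{\regLambdaStar-\bm{\lambda}_k}^{2}$ and its primal half $\sum_{s,h}P_{\regPiStar}[s_h=s]\,\kl_{k,h}(s)$ and treats them separately. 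Both halves will produce Lagrangian-gap terms; these will cancel via the saddle-point inequality $\regLagrangian(\pi_k,\regLambdaStar)\le\regLagrangian(\regPiStar,\bm{\lambda}_k)$, leaving the contraction together with an $\eta^{2}$ residual.

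For the dual half, non-expansiveness of the projection onto $\Lambda$ in \cref{eq:reg-dual} combined with $\tau$-strong convexity of $\regLagrangian(\pi_k,\cdot)$ in $\bm{\lambda}$ yields
\[
\tfrac12\norm{\regLambdaStar-\bm{\lambda}_{k+1}}^{2} \;\le\; \tfrac{1-\eta\tau}{2}\norm{\regLambdaStar-\bm{\lambda}_k}^{2} + \eta\bigl[\regLagrangian(\pi_k,\regLambdaStar)-\regLagrangian(\pi_k,\bm{\lambda}_k)\bigr] + \tfrac{\eta^{2}}{2}\norm{\valStartVec{\bm{u}}{\pi_k}-\bm{c}+\tau\bm{\lambda}_k}^{2},
\]
whose squared-gradient residual is bounded by $I(H+\tau\lambdaMax)^{2}$, exactly the second summand of $\cTauLambda$.

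For the primal half, the closed-form softmax step \cref{eq:reg-primal} admits the three-point identity $\kl_{k+1,h}(s)-\kl_{k,h}(s) = -\eta\bigl\langle\regPiStarH{h}(\cdot\,|\,s),\,Q_{k,h}(s,\cdot)\bigr\rangle + \log Z_{k,h}(s)$, where $Q_{k,h}(s,a) := \qval{r+\bm{\lambda}_k^{T}\bm{u}+\tau\psi_k,h}{\pi_k}{s,a}$ and $Z_{k,h}(s)$ is the softmax normalizer. A second-order expansion of $\log Z_{k,h}(s)$ around $\pi_{k,h}(\cdot\,|\,s)$, followed by weighting with $P_{\regPiStar}[s_h=s]$ and summing over $(s,h)$, converts the telescoping sum (via a soft-value performance-difference lemma applied to the regularized value $\valStart{r+\bm{\lambda}_k^{T}\bm{u}+\tau\psi}{\cdot}$) into $-\eta\bigl[\regLagrangian(\regPiStar,\bm{\lambda}_k)-\regLagrangian(\pi_k,\bm{\lambda}_k)\bigr]$ plus the extra contraction $-\eta\tau\sum_{s,h}P_{\regPiStar}[s_h=s]\,\kl_{k,h}(s)$ that arises from the entropy term in $Q_{k,h}$, plus an $\eta^{2}$ residual.

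Adding the two halves, the Lagrangian-gap cross terms collapse into $\eta[\regLagrangian(\pi_k,\regLambdaStar)-\regLagrangian(\regPiStar,\bm{\lambda}_k)]\le 0$ by saddle-point optimality of $(\regPiStar,\regLambdaStar)$, and the primal and dual contraction terms together deliver the factor $(1-\eta\tau)$ in front of $\Phi_k$. The main obstacle is bounding the surrogate $Q_{k,h}$: because the closed-form update does \emph{not} project onto a restricted simplex, the regularizer $\tau\psi_k=-\tau\log\pi_k$ is a priori unbounded, and the second-order expansion of $\log Z_{k,h}$ is only quantitatively useful when $\eta\|Q_{k,h}\|_\infty$ is moderate. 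I would handle this inductively: starting from uniform $\pi_1$, the multiplicative update $\pi_{k+1,h}(a|s)\propto\pi_{k,h}(a|s)\exp(\eta Q_{k,h}(s,a))$ preserves a uniform lower bound of the form $\pi_{k,h}(a|s)\ge A^{-1}\exp(-\eta H(1+\lambdaMax I+\log A))$, yielding exactly the $A^{1/2}\exp(\eta H(1+\lambdaMax I+\log A))$ factor in $\cTauLambda$. This is precisely the step that lets our analysis dispense with the Bregman projection onto a restricted simplex used in \citet{ding2023last}.
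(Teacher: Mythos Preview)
Your overall architecture---one-step contraction via a primal/dual split, with the two Lagrangian-gap cross terms collapsing to $\regLagrangian(\pi_k,\regLambdaStar)-\regLagrangian(\regPiStar,\bm{\lambda}_k)\le 0$ by saddle-point optimality---matches the paper exactly, as does the treatment of the dual half. The discrepancy is in the final paragraph, where you control the primal second-order residual. The paper does \emph{not} establish a lower bound on $\pi_{k,h}(a\,|\,s)$; instead it invokes a mirror-descent lemma with \emph{local norms} (\cref{lem:md-descent-kl}), giving a residual of the form $\tfrac{\eta}{2}\sum_a\tilde x_a\,Q_{k,h}(s,a)^2$ with $\tilde x_a\le\pi_{k,h}(a\,|\,s)\exp(\eta Q_{k,h}(s,a))\le\pi_{k,h}(a\,|\,s)^{1-\eta\tau}e^{\eta M}$ for $M:=H(1+I\lambdaMax+\tau\log A)$. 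This weighted sum is then bounded \emph{without} any floor on $\pi_{k,h}$, via Cauchy--Schwarz on $\sum_a\pi_{k,h}(a\,|\,s)^{1/2}$ (producing the $\sqrt A$) together with the elementary inequality $q^{1/4}\log^2(1/q)\le 64/e^2$ for $q\in(0,1)$ to absorb the unbounded $\tau^2\log^2(1/\pi_{k,h})$ part of $Q_{k,h}^2$. That combination is precisely the origin of the $A^{1/2}\exp(\eta H(1+\lambdaMax I+\log A))$ factor in $\cTauLambda$.

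Your alternative---an inductive uniform lower bound on $\pi_{k,h}$---is a legitimate idea but is misstated. The bound $\pi_{k,h}(a\,|\,s)\ge A^{-1}\exp(-\eta M)$ is \emph{not} preserved by the update: one step can shrink $\min_a\pi_{k,h}(a\,|\,s)$ by a factor of order $e^{-\eta M}$, so iterating naively degrades with $k$. What \emph{is} $k$-invariant follows from using $Q_{k,h}(s,a)\ge-\tau\log\pi_{k,h}(a\,|\,s)$ to obtain the recursion $-\log\min_a\pi_{k+1,h}\le(1-\eta\tau)\bigl(-\log\min_a\pi_{k,h}\bigr)+\eta(M+\tau\log A)$, whose fixed point is $\log A+M/\tau$; hence the correct invariant is $\pi_{k,h}(a\,|\,s)\ge A^{-1}\exp(-M/\tau)$, with $M/\tau$ rather than $\eta M$ in the exponent. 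With this correction the standard OMD lemma indeed delivers a valid one-step contraction (the residual is $O(\eta^2 HM^2)$, in fact without any $A^{1/2}e^{\eta M}$ factor), but your claim that the inductive bound ``yields exactly'' the paper's constant misidentifies the mechanism by which that constant arises.
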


\looseness -1 Despite the exponential term, we can control the factor $\cTauLambda$ to be constant of order $\text{poly}(A,H,I,\Xi^{-1})$ by choosing $\eta < (H\lambdaMax I \log(A))^{-1}$. For the remaining part, $\eta$ and $\tau$ need to be traded off to have fast linear convergence $(1 - \eta \tau)^k \Phi_{1}$ and a small bias term $\eta\tau^{-1} \cTauLambda$ simultaneously. \citet{ding2023last} showed a similar result with a different constant $C_{\eta,\tau}$ for their update rule that constrains the policies to the restricted probability simplex $\widehat{\Delta}(\calA) := \{ \pi_h(\cdot|s) \in \Simplex{\calA} ~\vert~ \forall a \in \calA \colon \pi_h(a|s) \geq \epsilon_0 /A \}$ by solving a convex problem in every iteration. They introduce this restriction as their proof requires a uniform bound of $\qval{r + \bm{\lambda_k}^T\bm{u} + \tau \psi_{k},h}{\pi_k}{s,a}$ and thus of $\tau \psi_{k,h} = - \tau \log(\pi_{k,h}(s,a))$, which may be unbounded outside of $\widehat{\Delta}(\calA)$. Our modified proof overcomes this challenge by leveraging a mirror descent (MD) lemma with local norms rather than the standard online MD lemma \citep{orabona2019modern}. While the standard norm of the regularized $Q$-values may be unbounded outside of $\widehat{\Delta}(\calA)$, we are able to bound their local norms to arrive at \cref{lem:potential-finite}, even though our policy updates (\cref{eq:reg-primal}) are not restricted to $\widehat{\Delta}(\calA)$ and thus closed-form. We refer to \cref{app:convergence} for the proof.

\textbf{Unregularized Error Bounds} While the bound in \cref{lem:potential-finite} depends on the choice of $\eta < 1$, $\tau < 1$ and $\lambdaMax \geq H\Xi^{-1}$, we show that it is possible though not obvious to choose them (depending on the desired approximation) such that $\Phi_k$ decays to zero. Prior to this, we show that this will allow us to upper-bound both the constraint violation and the objective suboptimality in the original problem. 

\begin{restatable}[Error bounds]{lemma}{potToRegretFinite}\label{lem:pot-to-regret-finite}
    For any sequence $(\pi_k)_{k\in[K]}$,
    \begin{align*}
        \rectangular{\valStart{r}{\piStar} - \valStart{r}{\pi_k}}_+ \leq& H^{3/2} (2\Phi_k)^{1/2} + \tau H\log(A),\\
        \max_{i\in \conSet} \rectangular{c_i-\valStart{u_i}{\pi_k}}_+ \leq& H^{3/2} (2\Phi_k)^{1/2} + \tau \lambdaMax \\
        &+ \lambdaMax^{-1} \round{H^2\Xi^{-1} + \tau H\log(A)}.
    \end{align*}
\end{restatable}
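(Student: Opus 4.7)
The plan is to split each error into a ``closeness'' piece measuring the distance between $\pi_k$ and the regularized optimizer $\regPiStar$, which is controlled by the KL summand of $\Phi_k$, and a ``regularization bias'' piece measuring the gap between $\regPiStar$ and $\piStar$ in the unregularized problem.

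For the closeness piece, I would apply the performance difference lemma to any reward signal $r'\in\{r,u_1,\dots,u_I\}$ (all taking values in $[0,1]$) to write
\begin{equation*}
\valStart{r'}{\regPiStar}-\valStart{r'}{\pi_k}=\sum_{h=1}^H\sum_{s}P_{\regPiStar}[s_h=s]\,\bigl\langle\regPiStarH{h}(\cdot|s)-\pi_{k,h}(\cdot|s),\,\qval{r',h}{\pi_k}{s,\cdot}\bigr\rangle.
\end{equation*}
Combining H\"older (with $\|\qval{r',h}{\pi_k}{s,\cdot}\|_\infty\le H$), Pinsker's $\ell_1\le\sqrt{2\kl_{k,h}(s)}$, Jensen over the state expectation, and Cauchy--Schwarz over the $H$ steps yields $\bigl|\valStart{r'}{\regPiStar}-\valStart{r'}{\pi_k}\bigr|\le H^{3/2}\sqrt{2\Phi_k}$, since by definition $\sum_{s,h}P_{\regPiStar}[s_h=s]\kl_{k,h}(s)\le\Phi_k$.

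For the objective bias, I would use the saddle-point inequalities at $(\regPiStar,\regLambdaStar)$. The min condition with $\mathbf 0\in\Lambda$ gives $\regLagrangian(\regPiStar,\regLambdaStar)\le\regLagrangian(\regPiStar,\mathbf 0)=\valStart{r}{\regPiStar}+\tau\entropy(\regPiStar)\le\valStart{r}{\regPiStar}+\tau H\log A$; the max condition gives $\regLagrangian(\regPiStar,\regLambdaStar)\ge\min_{\lambda\in\Lambda}\regLagrangian(\piStar,\lambda)\ge\valStart{r}{\piStar}$, because $\piStar$ is feasible and $\entropy(\piStar)\ge 0$, so $\lambda=\mathbf 0$ already bounds the min from below by $\valStart{r}{\piStar}$. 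Chaining yields $\valStart{r}{\piStar}-\valStart{r}{\regPiStar}\le\tau H\log A$. Combining with the closeness bound (and noting the upper bound is nonnegative, so passing to positive parts is harmless) gives the first inequality.

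For the constraint bias, I would test the min condition at $\lambda=\lambdaMax e_i$, chained with $\regLagrangian(\regPiStar,\regLambdaStar)\ge\valStart{r}{\piStar}$ from the previous step, to obtain
\begin{equation*}
\lambdaMax\bigl(c_i-\valStart{u_i}{\regPiStar}\bigr)\le\valStart{r}{\regPiStar}-\valStart{r}{\piStar}+\tau H\log A+\tfrac{1}{2}\tau\lambdaMax^2.
\end{equation*}
To control the primal gap, I would turn to strong duality of the \emph{unregularized} problem: $\piStar$ maximizes $\lagrangian(\cdot,\lambdaStar)$, so $\valStart{r}{\regPiStar}-\valStart{r}{\piStar}\le(\lambdaStar)^T(\bm c-\valStartVec{\bm u}{\regPiStar})\le\|\lambdaStar\|_1\max_i[c_i-\valStart{u_i}{\regPiStar}]_+\le H^2\Xi^{-1}$, using the standard Slater bound $\|\lambdaStar\|_1\le H\Xi^{-1}$ and the trivial $[c_i-\valStart{u_i}{\regPiStar}]_+\le H$. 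Dividing by $\lambdaMax$, taking positive parts, and combining with the closeness bound via $[c_i-\valStart{u_i}{\pi_k}]_+\le[c_i-\valStart{u_i}{\regPiStar}]_+ + \bigl|\valStart{u_i}{\regPiStar}-\valStart{u_i}{\pi_k}\bigr|$ delivers the second inequality. The main obstacle is the joint use of two dualities---the regularized saddle to produce the $\lambdaMax$-scaled constraint inequality and the unregularized Slater bound to control $\valStart{r}{\regPiStar}-\valStart{r}{\piStar}$; the trivial $H$ bound on $[c_i-\valStart{u_i}{\regPiStar}]_+$ is what prevents the estimate from becoming circular in the quantity we are trying to bound.
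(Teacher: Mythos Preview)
Your proposal is correct and follows essentially the same argument as the paper: the closeness piece via the performance difference lemma, Pinsker, and Jensen is identical, and the bias pieces use the same regularized saddle-point inequalities (at $\bm{0}$ for the objective, at $\lambdaMax e_i$ for the constraints) combined with the unregularized saddle point and the Slater bound $\|\lambdaStar\|_1\le H/\Xi$. The only cosmetic difference is that you invoke complementary slackness to write $\valStart{r}{\regPiStar}-\valStart{r}{\piStar}\le(\lambdaStar)^T(\bm c-\valStartVec{\bm u}{\regPiStar})$ and then bound $[c_j-\valStart{u_j}{\regPiStar}]_+\le H$, whereas the paper keeps the full difference $\sum_j \lambdaStarI{j}(\valStart{u_j}{\piStar}-\valStart{u_j}{\regPiStar})$ and bounds it by $\|\lambdaStar\|_1\cdot H$; both routes yield the same $H^2\Xi^{-1}$ term.
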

A similar result was provided by \citet{ding2023last} for the case of a single constraint ($I=1$). Generalizing this is technically challenging as the standard way of showing that approximate saddle points have small constraint violation \citep[Theorem 3.60]{beck2017first} does not apply in the case of regularized saddle points. Simultaneously, the technique of \citet[Corollary 1]{ding2023last} leverages the fact that only one constraint is present. We overcome this by choosing the domain $\Lambda=[0,\lambdaMax]^I$ larger than standard primal-dual algorithms, making it possible to extract bounds on the individual constraint violations from the approximate saddle points. See \cref{app:convergence} for the proof. This novel approach yields the rather uncommon inverse dependency on the diameter of $\Lambda$ in \cref{lem:pot-to-regret-finite}, which needs to be chosen such that both terms $\tau \lambdaMax$ and $\lambdaMax^{-1}$ are $O(\epsilon)$ to obtain an $\epsilon$-close solution. 

\cref{lem:pot-to-regret-finite} tells us that we can bound the objective suboptimality and all constraint violations by controlling the terms $\propto (\Phi_k)^{1/2}$ via \cref{lem:potential-finite}, and the remaining terms by appropriately choosing the regularization and domain diameter, in terms of $\epsilon$.

\textbf{Last-iterate Convergence} We are now ready to establish last-iterate convergence to the unregularized optimal policy of the regularized primal-dual scheme. 

\begin{restatable}[Last-iterate convergence]{theorem}{lastIterate} \label{lem:last-iterate}
    Let $\epsilon \in (0,1)$. Then, with appropriate choices of $\eta \propto \epsilon^{6}$, $\tau \propto \epsilon^{2}$, $\lambdaMax \propto \epsilon^{-1}$, for $k = \Omega(\text{poly}(A,H,I,\Xi^{-1}) \cdot \epsilon^{-10})$ we have 
    \begin{align*}
        \rectangular{\valStart{r}{\piStar} - \valStart{r}{\pi_k}}_+ \leq \epsilon, \quad \rectangular{c_i-\valStart{u_i}{\pi_k}}_+ \leq \epsilon \quad (\forall i \in [I]).
    \end{align*}
\end{restatable}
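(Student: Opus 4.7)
The plan is to compose the two prior lemmas and tune $\eta, \tau, \lambdaMax$ carefully as functions of $\epsilon$. From \cref{lem:pot-to-regret-finite}, both the objective suboptimality and each constraint violation are upper-bounded by $H^{3/2}\sqrt{2\Phi_k}$ plus bias terms depending only on $\tau$ and $\lambdaMax$. Accordingly, I would first choose $\tau$ and $\lambdaMax$ so that those biases are at most $\epsilon/2$, and then choose $\eta$ and $k$ via \cref{lem:potential-finite} to drive $\Phi_k$ small enough that the residual $H^{3/2}\sqrt{2\Phi_k}$ is also at most $\epsilon/2$.

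For the bias step, the competing constraint-violation terms $\tau\lambdaMax$ and $\lambdaMax^{-1}(H^2\Xi^{-1} + \tau H\log A)$ pull $\lambdaMax$ in opposite directions. Setting $\lambdaMax = \Theta(H^2\Xi^{-1}\epsilon^{-1})$ (automatically $\geq H\Xi^{-1}$) together with $\tau = \Theta(\epsilon^2 \Xi H^{-2})$ meets both simultaneously and also handles the objective bias $\tau H\log A$. These give the advertised scalings $\lambdaMax \propto \epsilon^{-1}$ and $\tau \propto \epsilon^2$ and reduce the remaining task to producing an iterate with $\Phi_k \leq \epsilon^2/(8 H^3)$.

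To achieve this via \cref{lem:potential-finite}, I balance each of its two summands against $\epsilon^2/H^3$. The dangerous piece is the exponential $\exp(\eta H(1 + \lambdaMax I + \log A))$ hidden inside $\cTauLambda$, so I first restrict $\eta$ so that $\eta H \lambdaMax I = O(1)$, i.e., $\eta = O(\epsilon/(H^3\Xi^{-1} I))$; this collapses that exponential to a constant and leaves $\cTauLambda = O(\lambdaMax^2)\cdot\mathrm{poly}(H,A,I) = O(\epsilon^{-2})\cdot\mathrm{poly}$. The condition $\eta\tau^{-1}\cTauLambda \leq \epsilon^2/H^3$ then pins $\eta = O(\epsilon^6/\mathrm{poly})$, matching $\eta \propto \epsilon^6$. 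Since $\eta\tau = \Theta(\epsilon^8/\mathrm{poly})$ and $(1-\eta\tau)^k\Phi_1 \leq e^{-k\eta\tau}\Phi_1$, the linear term drops below $\epsilon^2/H^3$ after $k = \Omega(\mathrm{poly}(A,H,I,\Xi^{-1})\epsilon^{-10})$ iterations once all polynomial factors accumulated along the chain $\lambdaMax \sim \epsilon^{-1} \Rightarrow \tau \sim \epsilon^{2} \Rightarrow \cTauLambda \sim \epsilon^{-2} \Rightarrow \eta \sim \epsilon^{6}$ are tracked.

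The main obstacle is exactly this interlocking parameter balance: each of $\eta, \tau, \lambdaMax$ appears in at least two competing terms across the two lemmas, and the exponential inside $\cTauLambda$ rules out any naive constant-step choice, forcing $\eta$ to shrink polynomially in $\epsilon$ \emph{before} one even starts extracting a contraction rate. Once the parameter chain is laid out consistently and one verifies that the restricted range of $\eta$ still makes the exponential $O(1)$ while preserving a nontrivial linear-convergence rate, the stated bounds follow directly from the two preceding lemmas without any further technical work.
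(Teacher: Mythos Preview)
Your proposal is correct and follows essentially the same route as the paper: fix the bias terms of \cref{lem:pot-to-regret-finite} via $\lambdaMax \propto \epsilon^{-1}$ and $\tau \propto \epsilon^{2}$, then tame the exponential in $\cTauLambda$ by taking $\eta$ small enough that $\eta H\lambdaMax I = O(1)$, which forces $\eta \propto \epsilon^{6}$ once the $\eta\tau^{-1}\cTauLambda$ term is balanced against $\epsilon^2/H^3$, and finally read off the iteration count from the linear contraction in \cref{lem:potential-finite}. The only cosmetic difference is that the paper bounds the contraction term via the crude geometric-sum inequality $\exp(-\eta\tau k/2)\leq \frac{4}{k\eta\tau}$ rather than using the exponential directly as you do; your treatment is actually a bit sharper but both suffice for the stated $\Omega(\epsilon^{-10})$.
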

Here, we only highlight the explicit dependency on the desired approximation. The dependency on the CMDP size is (low-degree) polynomial and detailed in \cref{app:convergence}. The only problem-dependent constant in this bound is the Slater gap $\Xi$, which is shared by all primal-dual analyses of our knowledge. While the provided rate is slow and may be improved in the future, all other known rates of primal-dual algorithms for CMDPs with arbitrary constraints ($I>1$) only hold for the averaged and not the last iterate. More importantly, the technique leading to this result will allow us to achieve sublinear strong regret in the following section.

\section{Online Setup} \label{sec:pd}

Recall the regularized primal-dual scheme from \cref{eq:reg-primal,eq:reg-dual}. In our online learning setup, the true value functions are not known as we are learning the unknown CMDP. Thus, we are required to explore the CMDP and respect safety \textit{during} exploration. Replacing the value functions by optimistic estimates \citep{shani2020optimistic,auer2008near} allows us to turn the primal-dual scheme into an online learning algorithm for finite-horizon CMDPs (see \cref{algo:rpg-pd-finite-learn}). Importantly, we need to be optimistic with respect to the regularization term $\tau \entropy(\pi)$ too, rather than just the classical mixture value $\valStart{r + \bm{\lambda_k}^T\bm{u}}{\pi}$. The main technical challenge is to incorporate the model uncertainty into our primal-dual analysis from \cref{sec:li-conv}.

\subsection{Optimistic Model} For all $s,a,h$ and $k\in[K]$, let $\counter := \sum_{l=1}^{k-1} \indic_{\{ s_h^l=s,~ a_h^l = a \}}$ count the number of times that the state-action pair $(s,a)$ has been visited at step $h$ before episode $k$. Here, ($s_h^l$, $a_h^l$) denotes the state-action pair visited at step $h$ in episode $l$. First, we compute the empirical averages of the reward and transition probabilities as follows:
\begin{align}
    \rBar_{k-1,h}(s,a) :=& \frac{\sum_{l=1}^{k-1} R^{l}_{h}(s,a) \indic_{\{ s_h^l=s,~ a_h^l = a \}}}{\counterMax}, \nonumber\\
    \uBar_{k-1,i,h}(s,a) :=& \frac{\sum_{l=1}^{k-1} U_{i,h}^{l}(s,a) \indic_{\{ s_h^l=s,~ a_h^l = a \}}}{\counterMax} , \label{eq:counters} \\ 
    \pBar_{k-1,h}(s'|s,a) :=& \frac{\sum_{l=1}^{k-1} \indic_{\{ s_h^l=s,~ a_h^l = a,~ s_{h+1}^l=s' \}}}{\counterMax}, \nonumber
\end{align}
where $a \vee b := \max\{a,b\}$ and $\indic_{A}$ is the indicator function of an event $A$. We consider optimistic estimates
\begin{align}
    \rHat_{k,h}(s,a) :=& \rBar_{k-1,h}(s,a) + b_{k-1,h}(s,a), \nonumber\\
    \uHat_{k,i,h}(s,a) :=& \uBar_{k-1,i,h}(s,a) + b_{k-1,h}(s,a), \label{eq:model}\\
    \psiHat_{k,h}(s,a) :=& -\log(\pi_{k,h}(a|s)) + b^p_{k-1,h}(s,a) \log(A), \nonumber\\
    \pHat_{k,h}(s' | s,a) :=& \pBar_{k-1,h}(s' | s,a),\nonumber
\end{align}
where $b_{k-1,h}(s,a) = b^r_{k-1,h}(s,a) + b^p_{k-1,h}(s,a)$, and for any $\delta \in (0,1)$, we specify the correct values for
\begin{align*} 
    b^r_{k-1,h}(s,a) =& O \round{\sqrt{\frac{\log\round{ SAHIK\delta^{-1} }}{\counterMax}}}, \\
    b^p_{k-1,h}(s,a) =& O \round{H \sqrt{\frac{S + \log\round{ SAHK\delta^{-1} }}{\counterMax}}},
\end{align*} 
in \cref{app:model} to obtain our regret guarantees with probability at least $1- \delta$. The optimistic model guarantees that, with high probability, the obtained value functions overestimate the true ones and simultaneously allows us to control the estimation error. While optimistic exploration is standard, we here also take the entropy term in the objective into account via $\psiHat_k$. Let
\begin{align}
    \zHat_k := \rHat_k + \bm{\lambda}_k^T \bm{\uHat}_{k} + \tau \psiHat_k \label{eq:z}
\end{align}
be the optimistic reward function mimicking the $\pi$-dependency of the regularized Lagrangian at $(\pi_k, \lambda_k)$. Consider the \emph{truncated value functions}
\begin{align*}
    (h,s,a) \mapsto \qvalHat{\zHat_k, h}{k}{s,a}, \quad \quad \valStartHatVec{\bm{\uHat}_k}{k} = \valHatVec{\bm{\uHat}_k,1}{k}{s_1}
\end{align*}
that we compute via truncated policy evaluation (by dynamic programming) of $\pi_k$ with respect to the optimistic model. We refer to \cref{algo:trun-eval} in \cref{app:model}, where we also establish the relevant properties of the model. 

\textbf{Algorithm} Combining the truncated policy estimation under our learned model with the regularized primal-dual scheme (\cref{eq:reg-primal,eq:reg-dual}) yields \cref{algo:rpg-pd-finite-learn}. The computational cost of the algorithm amounts to evaluating a policy $O(I)$ times per episode, which matches the complexity of standard primal-dual algorithms and is more efficient than running dual or LP-based algorithms. Projecting onto $\Lambda$ is immediate since $\Lambda$ is a product of intervals.
\begin{algorithm}[t!] 
	\begin{algorithmic}
        \REQUIRE{$\Lambda = [0,\lambdaMax]^I$, stepsize $\eta>0$, regularization parameter $\tau > 0$, number of episodes $K$, initial policy $\pi_{1,h}(a | s) = 1/A$ ~($\forall s,a,h$), $\bm{\lambda}_1 := \bm{0} \in \reals^I$}
        \vspace{0.2cm}
        \FOR{$ k=1,\dots, K$}
        \STATE{Update $\rHat_{k}$, $\bm{\uHat}_{k}$, $\pHat_{k}$, $\psiHat_{k}$ via \cref{eq:model}}
        \vspace{0.2cm}
        \STATE{Truncated policy evaluation (\cref{algo:trun-eval}) w.r.t. $\zHat_k$ (\cref{eq:z}) and $\bm{\uHat}_k$:}
        \begin{align*}
            \qvalHat{\zHat_k}{k}{\cdot}, \valStartHatVec{{\bm\uHat}_{k}}{k} :=& \textsc{Eval}(\pi_k, \lambda_k, \rHat_k, \bm{\uHat}_k, \psiHat_k, \pHat_k).
        \end{align*}
        \STATE{Update primal variables for all $h$, $s$, $a$:}
            \begin{align*}
                \pi_{k+1,h}(a | s) \propto& \pi_{k,h} (a | s) \exp \round{\eta \qvalHat{h,\zHat_k}{k}{s,a}}
            \end{align*}
        \STATE{Update dual variables:}
            \begin{align*}
                \bm{\lambda}_{k+1} =& \proj{\Lambda}{(1-\eta\tau)\bm{\lambda}_k - \eta (\valStartHatVec{\bm{\uHat}_k}{k} - \bm{c}) }.
            \end{align*}
        \STATE{Play $\pi_k$ for one episode, update $\rBar_{k}$, $\bm{\uBar}_{k}$, $\gBar_{k}$, $\pBar_{k}$ via \cref{eq:counters}}
        \ENDFOR 
	\caption{Regularized Primal-Dual Algorithm with Optimistic Exploration} 
    \label{algo:rpg-pd-finite-learn}
	\end{algorithmic}	
\end{algorithm}

\subsection{Regret Analysis}\label{sec:regret-bound}
We now provide the key steps of our regret analysis, showing that \cref{algo:rpg-pd-finite-learn} indeed achieves sublinear strong regret for both the constraint violations and the objective. We defer all proofs for this section to \cref{app:pd}.

\begin{restatable}[Regularized convergence]{lemma}{potentialFiniteLearn} \label{lem:potential-finite-learn} 
    Let $\eta, \tau < 1$ and $\lambdaMax \geq H\Xi^{-1}$. With probability at least $1-\delta$, the iterates of \cref{algo:rpg-pd-finite-learn} satisfy 
    \begin{align*}
        \Phi_{k+1} \leq& (1-\eta \tau )^k \Phi_{1} + \tilde{O} \bigg( \eta\tau^{-1} \cTauLambda \\
        &+ \eta \lambdaMax \round{ISA^{1/2}H^2 k^{1/2} + IS^{3/2}AH^2} \bigg),
    \end{align*}
    where $\cTauLambda$ is the same constant as in \cref{lem:potential-finite}. 
\end{restatable}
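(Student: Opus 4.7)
The plan is to follow the potential-function analysis of \cref{lem:potential-finite} while carefully tracking the extra bias introduced by using the optimistic estimates $\hat{Q}$ and $\hat{V}$ in place of the true value functions. Throughout, I would condition on the good event (established in \cref{app:model}) on which, with probability at least $1-\delta$, the bonuses $b^r_{k-1,h}$ and $b^p_{k-1,h}$ dominate the true reward- and transition-estimation errors. Re-running the one-step decrease argument of \cref{lem:potential-finite} --- the local-norm mirror-descent lemma for the primal update and the projected-gradient identity for the dual update --- but with the optimistic surrogate gradients $\qvalHat{\zHat_k}{k}{s,a}$ and $\valStartHatVec{\bm{\hat u}_k}{k}$, and adding and subtracting the true quantities, I would obtain a recursion of the form
\begin{align*}
    \Phi_{k+1} \leq (1-\eta\tau)\Phi_k + \tilde{O}(\eta^2\tau^{-1}\cTauLambda) + \eta\,\mathrm{err}_k,
\end{align*}
where $\mathrm{err}_k$ collects (i) the primal inner-product error $\langle d_1^{\regPiStar},\, \qvalHat{\zHat_k}{k}{\cdot} - Q^{\pi_k}_{z_k}\rangle$ against the optimal occupancy, and (ii) the dual error proportional to $\lambdaMax\sqrt{I}\cdot\|\valStartHatVec{\bm{\hat u}_k}{k} - \bm{V}^{\pi_k}_{\bm{u}}\|$.

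Next, I would bound $\mathrm{err}_k$ via the value-difference (simulation) lemma between the optimistic model and the true CMDP: each of the $I$ components of $\hat{V}^{\pi_k}_{\hat{u}_i}-V^{\pi_k}_{u_i}$ equals an expectation under $\pi_k$ in the true model of the per-step bonus $b^r_{k-1,h}+b^p_{k-1,h}$, plus a lower-order $\sqrt{S}$-correction from transition concentration; the same decomposition governs $\qvalHat{\zHat_k}{k}{\cdot} - Q^{\pi_k}_{z_k}$ once the optimistic entropy correction $b^p_{k-1,h}\log(A)$ is folded in. Unrolling the recursion and using the crude bound $(1-\eta\tau)^{k-j}\leq 1$, the summed error reduces to a sum of bonuses along the realized trajectories. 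Converting expectations to realized samples via an Azuma--Hoeffding martingale concentration and invoking the standard pigeonhole inequalities
\begin{align*}
    \sum_{j=1}^{k}\sum_{h\in[H]} \tfrac{1}{\sqrt{n_{j-1,h}(s_h^j,a_h^j)\vee 1}} = \tilde{O}(H\sqrt{SAk}),
\end{align*}
together with the analogous $\tilde{O}(SAH)$ bound for the sum of $1/(n\vee 1)$, and multiplying by $I\lambdaMax$ from the constraints and dual truncation, yields the claimed extra contribution $\tilde{O}(\eta\lambdaMax(ISA^{1/2}H^2\sqrt{k}+IS^{3/2}AH^2))$, which when added to the discount-free part inherited from \cref{lem:potential-finite} gives the stated bound.

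The main obstacle is extending the local-norm mirror-descent step of \cref{lem:potential-finite} to the optimistic surrogate gradient $\qvalHat{\zHat_k}{k}{\cdot}$. The exact-oracle argument relies on a uniform control of $Q^{\pi_k}_{z_k}(s,a)$ of order $H(1+\lambdaMax I + \log A)$; after incorporating the bonuses and the optimistic entropy correction $b^p_{k-1,h}\log(A)$, I must verify that the same constant $\cTauLambda$ still absorbs the local-norm contribution up to poly-logarithmic factors, while the remaining mismatch is cleanly attributable to $\mathrm{err}_k$. A secondary subtlety is ensuring that the martingale concentration yields a bound that adds \emph{outside} the $(\eta\tau)^{-1}$ factor; this is why the new error term appears as $\eta\lambdaMax(\cdot)$ rather than $\eta\tau^{-1}\lambdaMax(\cdot)$, a distinction that is crucial when combined with the parameter choices of \cref{sec:regret-bound} to obtain sublinear strong regret.
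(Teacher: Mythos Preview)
Your high-level strategy---derive a one-step contraction $\Phi_{k+1}\leq(1-\eta\tau)\Phi_k + \eta^2(\dTauLambda+\dTauLambdaPrime) + \eta\cdot(\text{estimation error})$, unroll it, and bound the accumulated error via pigeonhole on visitation counts---matches the paper. Your concern about the local-norm step for $\hat Q$ is resolved not by additional analysis but by truncation: \cref{algo:trun-eval} caps $\qvalHat{\zHat_k,h}{k}{s,a}$ by the same quantity that bounds $\qval{z_k,h}{\pi_k}{s,a}$, so \cref{lem:val-bounds-finite-surrogate} yields exactly the same constant $\cTauLambda$. (A minor slip: the one-step recursion carries $\eta^2\cTauLambda$, not $\eta^2\tau^{-1}\cTauLambda$; the factor $\tau^{-1}$ appears only after the geometric series.)

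The genuine gap is in how you isolate the primal estimation error. You write it as $\langle d_1^{\regPiStar},\, \hat Q - Q^{\pi_k}_{z_k}\rangle$, an expectation under the \emph{optimal} occupancy, and then propose to bound it via on-policy tools---bonuses along realized trajectories and $\sum_{j\leq k}\sum_h 1/\sqrt{n_{j-1,h}\vee 1}=\tilde O(H\sqrt{SAk})$. But those bounds hold only under the \emph{played} policies $\pi_j$; at states that $\regPiStar$ visits but the sequence $(\pi_j)$ does not, the counts may be zero and the bonuses $\Theta(1)$, so an off-policy average of $\hat Q - Q$ is not controllable this way. The paper sidesteps the issue by applying the extended value-difference lemma (\cref{lem:regret-decomp}) to $\valStart{z_k}{\regPiStar}-\valStart{z_k}{\pi_k}$ directly, which produces three pieces: the on-policy error $\valStartHat{\zHat_k}{k}-\valStart{z_k}{\pi_k}$ (controlled by \cref{lem:est-error-reg}), the MD inner product with $\hat Q$ already in the right shape, and a per-state Bellman residual that is $\leq 0$ by optimism (\cref{lem:optimism-val-z}). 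It is this last inequality---not an Azuma--Hoeffding step---that eliminates the off-policy contribution; no martingale concentration is needed beyond what is already baked into the good event.
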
 
Here, we use $\tilde{O}$-notation for asymptotics up to polylogarithmic factors in $S$, $A$, $H$, $I$, $K$, $\Xi^{-1}$, and $\delta^{-1}$. This result is similar to our \cref{lem:potential-finite}, but now we obtain an additional term corresponding to the model uncertainty (estimation error), which we control when choosing the step size $\eta$.

\textbf{Regret Bound} In a final step, we can leverage \cref{lem:pot-to-regret-finite} to turn \cref{lem:potential-finite-learn} into a sublinear regret bound for \cref{algo:rpg-pd-finite-learn}, when summing up the error terms and choosing $\eta$, $\tau$, and $\lambdaMax \geq H\Xi^{-1}$ optimally depending on $K$ given our bounds. This yields our main result.
\begin{restatable}[Regret bound]{theorem}{rateFiniteLearn} \label{lem:rate-finite-learn} 
    Let $\tau = K^{-1/7}$, \quad $\eta = (H^2 I)^{-1}\Xi K^{-5/7}$, $\lambdaMax = H\Xi^{-1} K^{1/14}$. Then with probability at least $1-\delta$, \cref{algo:rpg-pd-finite-learn} obtains a \emph{strong} regret of
    \begin{align*}
        \Reg(K; r)  \leq C_{r} K^{0.93}, \quad
        \Reg(K; \bm{u}) \leq C_u K^{0.93},
    \end{align*}
    where $C_r,~C_u = \text{poly}(S,A,H,I,\Xi^{-1},\log(1/\delta),\log(K))$ and $K$ is the number of episodes.
\end{restatable}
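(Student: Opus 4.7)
The plan is to apply \cref{lem:pot-to-regret-finite} to each iterate $\pi_k$ produced by \cref{algo:rpg-pd-finite-learn}, sum the per-episode bounds over $k=1,\ldots,K$, substitute the high-probability potential bound from \cref{lem:potential-finite-learn}, and finally tune $\eta,\tau,\lambdaMax$ to balance all resulting terms.

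Summing \cref{lem:pot-to-regret-finite} over episodes gives
\begin{align*}
\Reg(K;r) &\leq H^{3/2}\sum_{k=1}^{K}(2\Phi_k)^{1/2} + K\tau H\log(A),\\
\Reg(K;\bm{u}) &\leq H^{3/2}\sum_{k=1}^{K}(2\Phi_k)^{1/2} + K\tau\lambdaMax + K\lambdaMax^{-1}\bigl(H^2\Xi^{-1}+\tau H\log(A)\bigr).
\end{align*}
Applying $\sqrt{a+b+c}\leq\sqrt{a}+\sqrt{b}+\sqrt{c}$ to the bound of \cref{lem:potential-finite-learn}, the sum $\sum_{k=1}^K \Phi_k^{1/2}$ decomposes into a geometric tail $\sum_k (1-\eta\tau)^{(k-1)/2}\Phi_1^{1/2}=O((\eta\tau)^{-1}\Phi_1^{1/2})$, a constant bias $K\sqrt{\eta\tau^{-1}\cTauLambda}$, and a model-uncertainty term $\tilde{O}\bigl(\sqrt{\eta\lambdaMax\, ISA^{1/2}H^2}\,\sum_{k=1}^K k^{1/4}\bigr)=\tilde{O}(\sqrt{\eta\lambdaMax}\,K^{5/4})$, using $\sum_{k=1}^K k^{1/4}=\Theta(K^{5/4})$ and absorbing the $k$-independent piece of \cref{lem:potential-finite-learn} into a lower order $\tilde O(\sqrt{\eta\lambdaMax}\,K)$ term. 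The initial potential is $\Phi_1=O(H\log A+I\lambdaMax^2)$, since $\pi_1$ is uniform.

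Plugging in $\tau=K^{-1/7}$, $\eta=(H^2I)^{-1}\Xi K^{-5/7}$, $\lambdaMax=H\Xi^{-1}K^{1/14}$, I would first check that $\eta H\lambdaMax I=K^{-9/14}=o(1)$, which keeps the $\exp(\eta H(1+\lambdaMax I+\log A))$ factor inside $\cTauLambda$ bounded and hence $\cTauLambda=\mathrm{poly}(S,A,H,I,\Xi^{-1})$. Each of the three dominant contributions then evaluates to $K^{13/14}\leq K^{0.93}$ times a polynomial: $K\tau\lambdaMax=K^{1-1/7+1/14}=K^{13/14}$, $K\lambdaMax^{-1}H^2\Xi^{-1}=K^{13/14}\cdot H$, and $H^{3/2}\sqrt{\eta\lambdaMax}\,K^{5/4}=K^{-9/28+35/28}\cdot\mathrm{poly}(\cdot)=K^{13/14}\cdot\mathrm{poly}(\cdot)$. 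The remaining pieces $K\tau=K^{6/7}$, $K\sqrt{\eta/\tau\,\cTauLambda}=\tilde O(K^{5/7})$, and the geometric tail are $o(K^{13/14})$. The combined bound is $\mathrm{poly}(S,A,H,I,\Xi^{-1},\log(1/\delta),\log K)\cdot K^{0.93}$, valid on the high-probability event supplied by \cref{lem:potential-finite-learn}.

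The main obstacle is the three-way balancing of $\eta,\tau,\lambdaMax$: $\lambdaMax$ must be at least $H\Xi^{-1}$ (as required by both preceding lemmas) and large enough that $K\lambdaMax^{-1}H^2\Xi^{-1}$ is sublinear, yet small enough that both $K\tau\lambdaMax$ and the exponential inside $\cTauLambda$ stay controlled; simultaneously, $\eta$ must be small enough to suppress the estimation-error term $\sqrt{\eta\lambdaMax}\,K^{5/4}$, while $\eta\tau$ must not be so small that the linear-convergence factor $(1-\eta\tau)^{(k-1)/2}$ fails to erase $\Phi_1^{1/2}$. These coupled constraints essentially pin down the exponents $1/7$, $5/7$, $1/14$ uniquely up to the gap between the sharp rate $K^{13/14}$ and the stated $K^{0.93}$.
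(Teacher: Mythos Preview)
Your approach is essentially identical to the paper's: combine \cref{lem:pot-to-regret-finite} with \cref{lem:potential-finite-learn}, sum over episodes, and balance $\eta,\tau,\lambdaMax$ so that all pieces land at $K^{13/14}\leq K^{0.93}$. Two small arithmetic slips are worth flagging, though neither breaks the argument: (i) the bias term $K\sqrt{\eta\tau^{-1}\cTauLambda}$ is $\tilde O(K^{11/14})$, not $\tilde O(K^{5/7})$, because $\cTauLambda$ carries a factor $\lambdaMax^2\sim K^{1/7}$; (ii) the geometric tail $(\eta\tau)^{-1}\Phi_1^{1/2}$ is $\Theta(K^{13/14})$, not $o(K^{13/14})$, since $\Phi_1^{1/2}$ itself scales like $\lambdaMax\sim K^{1/14}$---so it is one of the dominant terms, exactly as in the paper's bound on term (a).
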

Here, we only highlight the leading term in $K$. The dependency on the CMDP parameters is (low-degree) polynomial and detailed in \cref{app:pd}. Again, $\Xi$ is the only problem-dependent constant (and unavoidable).

\begin{remark}
    We remark that our proof of \cref{lem:rate-finite-learn}, in fact, shows last-iterate convergence in the online setup, which is strictly stronger than a regret bound in general.
\end{remark}

Our strong regret bound of $\tilde{O} (K^{0.93})$ is less tight than the $\tilde{O}(K^{1/2})$ that the vanilla primal-dual algorithm achieves for the \textit{weak} regret, for which there exist well-known lower bounds \citep{jin2018q,domingues2021episodic}. Nevertheless, \cref{algo:rpg-pd-finite-learn} is the first primal-dual algorithm for CMDPs provably achieving sublinear strong regret. It is thus the first algorithm of its kind for which we can guarantee that it cannot keep violating constraints indefinitely. While LP-based approaches achieve strong regret of $\tilde{O}(K^{1/2})$, most modern (deep) safe RL algorithms for CMDPs follow primal-dual schemes \citep{chow2017risk,tessler2018reward,stooke2020responsive}. We believe that it might be possible to tighten our analysis, although this will require novel ideas. Indeed, our numerical evaluations show that the parameter choices in \cref{lem:rate-finite-learn} are overly pessimistic.

\subsection{Strong vs. Weak Regret and Safety at any Time} \label{sec:str-vs-weak}
We allude to the strong regret several times by saying that a sublinear bound guarantees safety \emph{during learning} or \textit{in most episodes}. As our algorithm does not guarantee safety in \emph{every} episode, one may wonder in which sense safety during learning is formally guaranteed by the strong regret compared to the weak one. Indeed, this is an important theme in CMDPs. In an unknown CMDP, there is no way to explore it without constraint violations unless further assumptions are made (as the constraint rewards $u_i$ and transitions $P$ are unknown, we cannot know that an action is unsafe without trying at least once). However, this is a limitation of the CMDP model with exploration rather than our algorithm. We can thus only argue about safety in \emph{most episodes}. 

\textbf{Approximate Safety in Most Episodes} Unlike any previous primal-dual algorithm, our method guarantees that for any fixed $\epsilon > 0$, the fraction of episodes in which our policy is not $\epsilon$-safe vanishes to $0$ as the number of episodes $K$ grows. Not being $\epsilon$-safe here means to violate at least one constraint by at least $\epsilon$. We make this formal in the following remark.
\begin{remark}
    Fix $\epsilon > 0$ and suppose $\mathcal{R}(K;\mathbf{u}) \leq \tilde{O}(K^{\alpha})$ for some $\alpha \in (0,1)$. Then there exist at most $\tilde{O}(K^{\alpha} / \epsilon)$ episodes with a constraint violation of at least $\epsilon$. In other words, only a small fraction $\tilde{O}(K^{\alpha - 1}/\epsilon) = o(1)$ of the iterates is not $\epsilon$-safe. In comparison, this is by no means guaranteed by a sublinear bound on $\mathcal{R}_{\text{weak}}(K;\mathbf{u})$.
\end{remark}
Hence, our algorithm is approximately safe in \emph{most} episodes, while being safe in \emph{every} episode is not possible by design. This is a remarkable result since previous works on primal-dual algorithms can only guarantee safety of the average policy and not in most of the episodes (such algorithms can be fully unsafe in, e.g., half of the episodes). 

\textbf{Strict Safety in Most Episodes} Furthermore, $\epsilon$-safety can be strengthened by a simple reduction to ensure \emph{strict} safety in most episodes. This is possible by increasing the true constraint thresholds by a small shift of $\epsilon = O(K^{(\alpha-1)/2})$ to be more conservative and applying our results. For the formal details of this reduction, see, e.g., Appendix C.4 in \citet{ding2023last}. We thus established that the fraction of unsafe episodes is vanishing (in terms of $K$).

\section{Simulation} \label{sec:experiments}

We perform numerical simulations of our algorithms and compare them to their unregularized counterparts \citep{efroni2020exploration}. We find that the vanilla primal-dual and dual algorithms can suffer linear strong regret while our regularized counterparts do not, illustrating that \textit{error cancellations} are not merely a hypothetical issue. We provide further details in \cref{sec:further-exp}. 

\subsection{Baselines and Environment}

We compare our regularized primal-dual algorithm (\cref{algo:rpg-pd-finite-learn}) to the vanilla primal-dual algorithm of \citet{efroni2020exploration}, which corresponds to \cref{eq:vanilla-prim,eq:vanilla-dual} with optimistic exploration. We also include the vanilla dual algorithm of \citet{efroni2020exploration} as a baseline and our regularized dual algorithm (below), which arises from the same regularization framework as \cref{algo:rpg-pd-finite-learn}. We test each algorithm for the same total number (6) of hyperparameter configurations and report the best results for each.

\textbf{Dual Algorithm} Leveraging the framework we introduced, it is immediate to also derive a \textit{dual} algorithm for finite-horizon CMDPs. Dual algorithms amount to performing projected dual descent \citep{beck2017first,paternain2019strongduality} on the Lagrangian, where one can again use the optimistic model to estimate the unknown CMDP. \citet{efroni2020exploration} proved that this algorithm achieves a sublinear \textit{weak} regret. Instead, we perform dual descent on the \textit{regularized} Lagrangian $\regLagrangian$. Explicitly,
\begin{align}
    \pi_{k} &= \arg\max_{\pi \in \Pi} \round{ \valStart{\rHat_k + \bm{\lambda}_k^T\bm{\uHat}_k}{\pHat_k, \pi} + \tau \hat{\entropy}_k(\pi) }, \label{eq:reg-dual1}\\
    \bm{\lambda}_{k+1} &= \proj{\Lambda}{(1-\eta\tau)\bm{\lambda}_k - \eta (\valStartVec{\bm{\uHat}_k}{\pHat_k, \pi_k} - \bm{c}) }, \label{eq:reg-dual2}
\end{align}
where $\valStart{}{\pHat_k, \pi}$ refers to the value functions under transition model $\pHat_k$. These updates are similar to the ones proposed by \citet{li2021faster,ying2022dual}, yet both assume a value function oracle. We can compute the first update via regularized dynamic programming, and the second one is the same as before. The dual approach has a higher computational complexity as the primal update requires a planning subroutine rather than just policy evaluation, but shows similar numerical performance. See \cref{sec:dual} for the full description of the regularized dual algorithm.

\begin{figure}[t]
    \centering
    \includegraphics[width=0.49\textwidth]{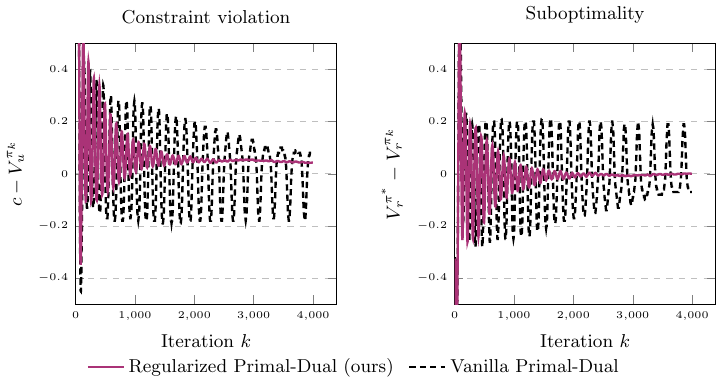}
    \caption{Constraint violation and objective suboptimality of the \textbf{vanilla} primal-dual algorithm \citep[cf. \cref{eq:vanilla-prim,eq:vanilla-dual}]{efroni2020exploration} and our {\color{brightPurple}\textbf{regularized}} version (\cref{algo:rpg-pd-finite-learn}). We present the values of the individual policies in each episode while learning the CMDP.}
    \label{fig:iterates}
\end{figure}

\textbf{Environment} We consider a randomly generated CMDP with deterministic rewards and unknown transitions. We draw the reward function $r$, constraint thresholds $\bm{c}$, and transitions $p$ uniformly at random. In order for oscillations (and thus error cancellations) to occur, the objective must be conflicting with the constraints \citep{moskovitz2023reload}, as they can otherwise easily be satisfied. However, by concentration of measure, two random vectors in high dimension are nearly orthogonal with high probability \citep{blum2020foundations}. Uniformly sampling the constraints would thus not yield interesting CMDPs, which is why we invoke a negative correlation between reward and constraint function. We sample the constraint function as $(1-r) + \beta \zeta$, where $\zeta \in \reals^{HSA}$ is Gaussian with zero mean and identity covariance matrix. We consider $S=A=H=5$, $\beta = 0.1$, and focus on the case of one constraint for visualization purposes. We refer to \cref{sec:further-exp} for further details.

\begin{figure*}[h]
    \centering
    \begin{subfigure}[b]{0.49\textwidth}
        \caption{Strong regrets}
        \label{fig:strong}
        \includegraphics[width=1\textwidth]{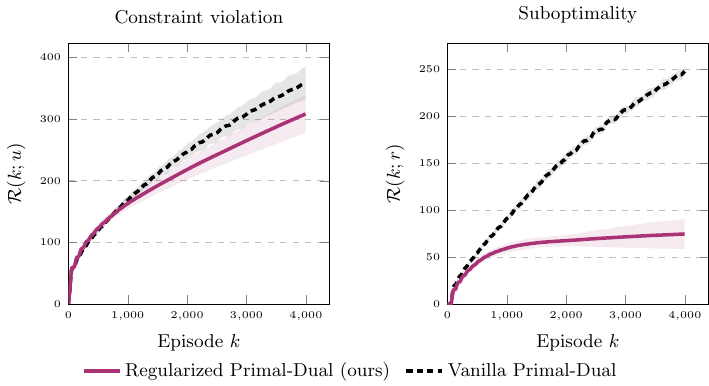}
    \end{subfigure}
    \begin{subfigure}[b]{0.49\textwidth}
        \caption{Weak regrets}
        \label{fig:weak}
        \includegraphics[width=1\textwidth]{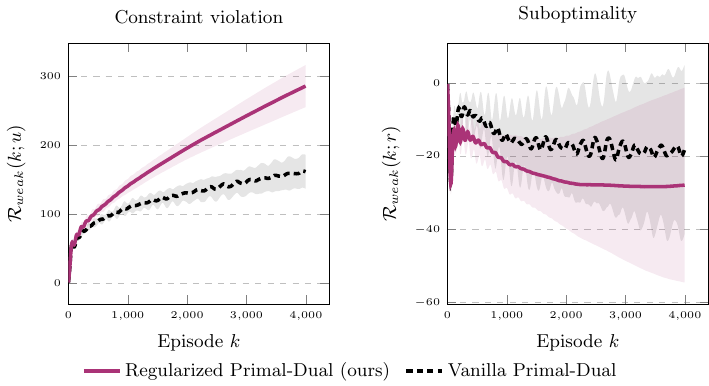} 
    \end{subfigure}
    \label{fig:regrets}
    \caption{\textbf{Vanilla} primal-dual algorithm \citep[cf. \cref{eq:vanilla-prim,eq:vanilla-dual}]{efroni2020exploration} and our {\color{brightPurple}\textbf{regularized}} version (\cref{algo:rpg-pd-finite-learn}). \cref{fig:strong} shows the strong regret; \cref{fig:weak} shows the weak regret. The weak regret regarding the objective can be negative, illustrating that the iterates are superoptimal but unsafe on average. Y-axes differ across plots. All results are averaged over $n=5$ independent runs, with plotted confidence intervals.}
\end{figure*}
\begin{figure*}[t]
    \centering
    \begin{subfigure}[b]{0.49\textwidth}
        \caption{Strong regrets}
        \label{fig:strong-dual}
        \includegraphics[width=1\textwidth]{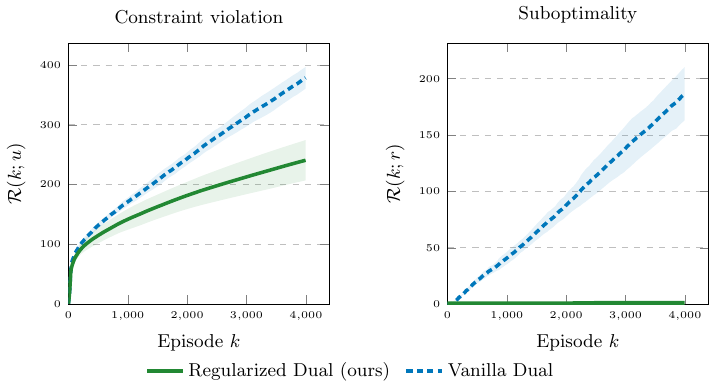} 
    \end{subfigure}
    \begin{subfigure}[b]{0.49\textwidth}
        \caption{Weak regrets}
        \label{fig:weak-dual}
        \includegraphics[width=1\textwidth]{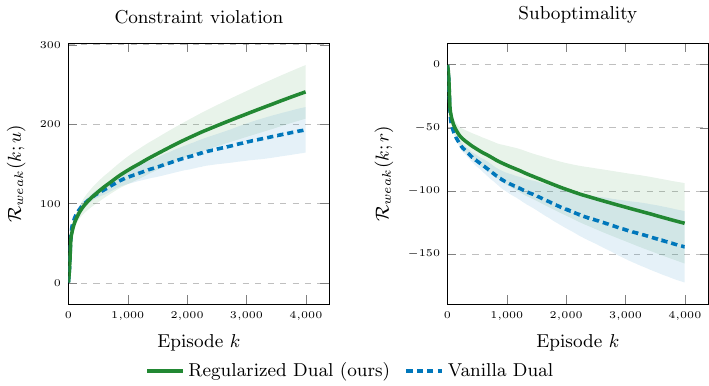} 
    \end{subfigure}
    \label{fig:regrets-dual}
    \caption{{\color{vibrantBlue}\textbf{Vanilla}} dual algorithm \citep{efroni2020exploration} and our {\color{brightGreen}\textbf{regularized}} version (\cref{eq:reg-dual1,eq:reg-dual2}). \cref{fig:strong-dual} shows the strong regret; \cref{fig:weak-dual} shows the weak regret. Y-axes differ across plots. All results are averaged over $n=5$ independent runs, with plotted confidence intervals.}
\end{figure*}

\subsection{Results}

The constraint violation and suboptimality of the iterates in each episode show the oscillatory behavior of the vanilla primal-dual algorithm as opposed to ours (\cref{fig:iterates}). While the on-average errors across episodes are sublinear, the vanilla algorithm keeps violating the constraints indefinitely as the number of episodes grows. In comparison, the oscillations of the regularized method are dampened, thus allowing it to converge to an optimal safe policy.

With respect to the \emph{weak} regret, the vanilla algorithms perform better (\cref{fig:weak,fig:weak-dual}, even constant for the suboptimality). However, with respect to the \emph{strong} regret, the regularized algorithms outperform the unregularized ones, as they achieve sublinear regret \emph{without} allowing for error cancellations (\cref{fig:strong,fig:strong-dual}). While the strong regrets for the vanilla algorithms may look sublinear, a second look at their iterates (\cref{fig:iterates}) reveals that their regret will indeed grow linearly due to the persisting oscillations. This confirms our key point that a sublinear bound on the weak regret is not informative whenever we do not allow compensating for an unsafe episode with a safe one. 

The vanilla algorithms will suffer linear strong regret even with a potentially better learning rate scheduling. We observed that the learning rate influences the oscillation frequency: With a larger learning rate, the vanilla methods oscillate faster. However, changing the learning rate does not dampen the oscillation magnitude. Hence, the strong regret is still linear. Indeed, we observe a change of magnitude only via the regularization parameter rather than the learning rate.

\textbf{Comparison with Guarantee} With the theoretically derived stepsize $\eta$, regularization $\tau$, and exploration from \cref{lem:rate-finite-learn}, we need many episodes to observe a benefit, due to the slowly vanishing gap between regularized and unregularized problem. Setting hyperparameters empirically, we observe a better regret than the theory suggests. Therefore, the plots in this section refer to the empirical choice.

\section{Conclusion}

In this paper, we gave the first answer to the open question of \citet{efroni2020exploration} whether primal-dual algorithms can achieve sublinear strong regret in finite-horizon CMDPs. While our answer is affirmative, it remains open in how far it is possible to lower the gap to the desired $\tilde{O}(K^{1/2})$ regret bound. We hope that our first analysis inspires further research on truly no-regret learning in CMDPs, including improvements in the analysis of our algorithm, incorporating function approximation, algorithms for the infinite-horizon average reward setup, and showing provable benefits of related approaches such as optimistic gradients.

\section*{Impact Statement}

This paper presents work whose goal is to advance the field of Machine Learning. There are many potential societal consequences of our work, none of which we feel must be specifically highlighted here.

\section*{Acknowledgements}

We thank the anonymous reviewers for their valuable feedback. P.A. is supported by the AI Center and ETH Foundations of Data Science (ETH-FDS) initiative. V.C. is supported by the Hasler Foundation Program: Hasler Responsible AI (project number 21043). V.C. is supported by the Swiss National Science Foundation (SNSF) under grant number 200021\_205011. N.H. is supported by Swiss National Science Foundation Project Funding No. 200021-207343 and SNSF Starting Grant.

\bibliography{refs}

\begin{thebibliography}{52}
\providecommand{\natexlab}[1]{#1}
\providecommand{\url}[1]{\texttt{#1}}
\expandafter\ifx\csname urlstyle\endcsname\relax
  \providecommand{\doi}[1]{doi: #1}\else
  \providecommand{\doi}{doi: \begingroup \urlstyle{rm}\Url}\fi

\bibitem[Achiam et~al.(2017)Achiam, Held, Tamar, and
  Abbeel]{achiam2017constrained}
Achiam, J., Held, D., Tamar, A., and Abbeel, P.
\newblock Constrained policy optimization.
\newblock In \emph{International conference on machine learning}, pp.\  22--31.
  PMLR, 2017.

\bibitem[Agrawal \& Devanur(2016)Agrawal and Devanur]{agrawal2016linear}
Agrawal, S. and Devanur, N.
\newblock Linear contextual bandits with knapsacks.
\newblock \emph{Advances in Neural Information Processing Systems}, 29, 2016.

\bibitem[Altman(1999)]{altman1999constrained}
Altman, E.
\newblock \emph{Constrained Markov decision processes}, volume~7.
\newblock CRC press, 1999.

\bibitem[Amani et~al.(2019)Amani, Alizadeh, and Thrampoulidis]{amani2019linear}
Amani, S., Alizadeh, M., and Thrampoulidis, C.
\newblock Linear stochastic bandits under safety constraints.
\newblock \emph{Advances in Neural Information Processing Systems}, 32, 2019.

\bibitem[Auer et~al.(2008)Auer, Jaksch, and Ortner]{auer2008near}
Auer, P., Jaksch, T., and Ortner, R.
\newblock Near-optimal regret bounds for reinforcement learning.
\newblock \emph{Advances in neural information processing systems}, 21, 2008.

\bibitem[Badanidiyuru et~al.(2018)Badanidiyuru, Kleinberg, and
  Slivkins]{badanidiyuru2018bandits}
Badanidiyuru, A., Kleinberg, R., and Slivkins, A.
\newblock Bandits with knapsacks.
\newblock \emph{Journal of the ACM (JACM)}, 65\penalty0 (3):\penalty0 1--55,
  2018.

\bibitem[Bai et~al.(2022)Bai, Bedi, Agarwal, Koppel, and
  Aggarwal]{bai2022achieving}
Bai, Q., Bedi, A.~S., Agarwal, M., Koppel, A., and Aggarwal, V.
\newblock Achieving zero constraint violation for constrained reinforcement
  learning via primal-dual approach.
\newblock In \emph{Proceedings of the AAAI Conference on Artificial
  Intelligence}, volume~36, pp.\  3682--3689, 2022.

\bibitem[Beck(2017)]{beck2017first}
Beck, A.
\newblock \emph{First-order methods in optimization}.
\newblock SIAM, 2017.

\bibitem[Bellman(1957)]{bellman1957}
Bellman, R.
\newblock A markovian decision process.
\newblock \emph{Journal of Mathematics and Mechanics}, 6\penalty0 (5):\penalty0
  679--684, 1957.
\newblock ISSN 00959057, 19435274.
\newblock URL \url{http://www.jstor.org/stable/24900506}.

\bibitem[Bertsekas et~al.(2003)Bertsekas, Nedic, and
  Ozdaglar]{bertsekas2003convex}
Bertsekas, D., Nedic, A., and Ozdaglar, A.
\newblock \emph{Convex analysis and optimization}, volume~1.
\newblock Athena Scientific, 2003.

\bibitem[Blum et~al.(2020)Blum, Hopcroft, and Kannan]{blum2020foundations}
Blum, A., Hopcroft, J., and Kannan, R.
\newblock \emph{Foundations of data science}.
\newblock Cambridge University Press, 2020.

\bibitem[Borkar(1988)]{borkar1988convex}
Borkar, V.~S.
\newblock A convex analytic approach to markov decision processes.
\newblock \emph{Probability Theory and Related Fields}, 1988.

\bibitem[Brunke et~al.(2022)Brunke, Greeff, Hall, Yuan, Zhou, Panerati, and
  Schoellig]{brunke2022safe}
Brunke, L., Greeff, M., Hall, A.~W., Yuan, Z., Zhou, S., Panerati, J., and
  Schoellig, A.~P.
\newblock Safe learning in robotics: From learning-based control to safe
  reinforcement learning.
\newblock \emph{Annual Review of Control, Robotics, and Autonomous Systems},
  5:\penalty0 411--444, 2022.

\bibitem[Cai et~al.(2020)Cai, Yang, Jin, and Wang]{cai2020provably}
Cai, Q., Yang, Z., Jin, C., and Wang, Z.
\newblock Provably efficient exploration in policy optimization.
\newblock In \emph{International Conference on Machine Learning}, pp.\
  1283--1294. PMLR, 2020.

\bibitem[Calvo-Fullana et~al.(2023)Calvo-Fullana, Paternain, Chamon, and
  Ribeiro]{calvo2023state}
Calvo-Fullana, M., Paternain, S., Chamon, L.~F., and Ribeiro, A.
\newblock State augmented constrained reinforcement learning: Overcoming the
  limitations of learning with rewards.
\newblock \emph{IEEE Transactions on Automatic Control}, 2023.

\bibitem[Chow et~al.(2017)Chow, Ghavamzadeh, Janson, and Pavone]{chow2017risk}
Chow, Y., Ghavamzadeh, M., Janson, L., and Pavone, M.
\newblock Risk-constrained reinforcement learning with percentile risk
  criteria.
\newblock \emph{The Journal of Machine Learning Research}, 18\penalty0
  (1):\penalty0 6070--6120, 2017.

\bibitem[Cover(1999)]{cover1999elements}
Cover, T.~M.
\newblock \emph{Elements of information theory}.
\newblock John Wiley \& Sons, 1999.

\bibitem[Dann et~al.(2017)Dann, Lattimore, and Brunskill]{dann2017unifying}
Dann, C., Lattimore, T., and Brunskill, E.
\newblock Unifying pac and regret: Uniform pac bounds for episodic
  reinforcement learning.
\newblock \emph{Advances in Neural Information Processing Systems}, 30, 2017.

\bibitem[Ding \& Jovanovi{\'c}(2022)Ding and Jovanovi{\'c}]{ding2022policy}
Ding, D. and Jovanovi{\'c}, M.~R.
\newblock Policy gradient primal-dual mirror descent for constrained mdps with
  large state spaces.
\newblock In \emph{2022 IEEE 61st Conference on Decision and Control (CDC)},
  pp.\  4892--4897. IEEE, 2022.

\bibitem[Ding et~al.(2020)Ding, Zhang, Basar, and Jovanovic]{ding2020natural}
Ding, D., Zhang, K., Basar, T., and Jovanovic, M.
\newblock Natural policy gradient primal-dual method for constrained markov
  decision processes.
\newblock \emph{Advances in Neural Information Processing Systems},
  33:\penalty0 8378--8390, 2020.

\bibitem[Ding et~al.(2022{\natexlab{a}})Ding, Zhang, Ba{\c{s}}ar, and
  Jovanovi{\'c}]{ding2022convergencepol}
Ding, D., Zhang, K., Ba{\c{s}}ar, T., and Jovanovi{\'c}, M.~R.
\newblock Convergence and optimality of policy gradient primal-dual method for
  constrained markov decision processes.
\newblock In \emph{2022 American Control Conference (ACC)}, pp.\  2851--2856.
  IEEE, 2022{\natexlab{a}}.

\bibitem[Ding et~al.(2022{\natexlab{b}})Ding, Zhang, Duan, Ba{\c{s}}ar, and
  Jovanovi{\'c}]{ding2022convergence}
Ding, D., Zhang, K., Duan, J., Ba{\c{s}}ar, T., and Jovanovi{\'c}, M.~R.
\newblock Convergence and sample complexity of natural policy gradient
  primal-dual methods for constrained mdps.
\newblock \emph{arXiv preprint arXiv:2206.02346}, 2022{\natexlab{b}}.

\bibitem[Ding et~al.(2022{\natexlab{c}})Ding, Zhang, Duan, Ba{\c{s}}ar, and
  Jovanovi{\'c}]{ding2022convergencenat}
Ding, D., Zhang, K., Duan, J., Ba{\c{s}}ar, T., and Jovanovi{\'c}, M.~R.
\newblock Convergence and sample complexity of natural policy gradient
  primal-dual methods for constrained mdps.
\newblock \emph{arXiv preprint arXiv:2206.02346}, 2022{\natexlab{c}}.

\bibitem[Ding et~al.(2023)Ding, Wei, Zhang, and Ribeiro]{ding2023last}
Ding, D., Wei, C.-Y., Zhang, K., and Ribeiro, A.
\newblock Last-iterate convergent policy gradient primal-dual methods for
  constrained mdps.
\newblock \emph{arXiv preprint arXiv:2306.11700}, 2023.

\bibitem[Ding \& Lavaei(2022)Ding and Lavaei]{ding2022provably}
Ding, Y. and Lavaei, J.
\newblock Provably efficient primal-dual reinforcement learning for cmdps with
  non-stationary objectives and constraints.
\newblock \emph{arXiv preprint arXiv:2201.11965}, 2022.

\bibitem[Domingues et~al.(2021)Domingues, M{\'e}nard, Kaufmann, and
  Valko]{domingues2021episodic}
Domingues, O.~D., M{\'e}nard, P., Kaufmann, E., and Valko, M.
\newblock Episodic reinforcement learning in finite mdps: Minimax lower bounds
  revisited.
\newblock In \emph{Algorithmic Learning Theory}, pp.\  578--598. PMLR, 2021.

\bibitem[Efroni et~al.(2019)Efroni, Merlis, Ghavamzadeh, and
  Mannor]{efroni2019tight}
Efroni, Y., Merlis, N., Ghavamzadeh, M., and Mannor, S.
\newblock Tight regret bounds for model-based reinforcement learning with
  greedy policies.
\newblock \emph{Advances in Neural Information Processing Systems}, 32, 2019.

\bibitem[Efroni et~al.(2020)Efroni, Mannor, and Pirotta]{efroni2020exploration}
Efroni, Y., Mannor, S., and Pirotta, M.
\newblock Exploration-exploitation in constrained mdps.
\newblock \emph{arXiv preprint arXiv:2003.02189}, 2020.

\bibitem[Ghosh et~al.(2022)Ghosh, Zhou, and Shroff]{ghosh2022provably}
Ghosh, A., Zhou, X., and Shroff, N.
\newblock Provably efficient model-free constrained rl with linear function
  approximation.
\newblock \emph{Advances in Neural Information Processing Systems},
  35:\penalty0 13303--13315, 2022.

\bibitem[Ghosh et~al.(2024)Ghosh, Zhou, and Shroff]{ghosh2024towards}
Ghosh, A., Zhou, X., and Shroff, N.
\newblock Towards achieving sub-linear regret and hard constraint violation in
  model-free rl.
\newblock In \emph{International Conference on Artificial Intelligence and
  Statistics}, pp.\  1054--1062. PMLR, 2024.

\bibitem[Jin et~al.(2018)Jin, Allen-Zhu, Bubeck, and Jordan]{jin2018q}
Jin, C., Allen-Zhu, Z., Bubeck, S., and Jordan, M.~I.
\newblock Is q-learning provably efficient?
\newblock \emph{Advances in neural information processing systems}, 31, 2018.

\bibitem[Kazerouni et~al.(2017)Kazerouni, Ghavamzadeh, Abbasi~Yadkori, and
  Van~Roy]{kazerouni2017conservative}
Kazerouni, A., Ghavamzadeh, M., Abbasi~Yadkori, Y., and Van~Roy, B.
\newblock Conservative contextual linear bandits.
\newblock \emph{Advances in Neural Information Processing Systems}, 30, 2017.

\bibitem[Li et~al.(2021)Li, Guan, Zou, Xu, Liang, and Lan]{li2021faster}
Li, T., Guan, Z., Zou, S., Xu, T., Liang, Y., and Lan, G.
\newblock Faster algorithm and sharper analysis for constrained markov decision
  process.
\newblock \emph{arXiv preprint arXiv:2110.10351}, 2021.

\bibitem[Liu et~al.(2021{\natexlab{a}})Liu, Zhou, Kalathil, Kumar, and
  Tian]{liu2021learning}
Liu, T., Zhou, R., Kalathil, D., Kumar, P., and Tian, C.
\newblock Learning policies with zero or bounded constraint violation for
  constrained mdps.
\newblock \emph{Advances in Neural Information Processing Systems},
  34:\penalty0 17183--17193, 2021{\natexlab{a}}.

\bibitem[Liu et~al.(2021{\natexlab{b}})Liu, Zhou, Kalathil, Kumar, and
  Tian]{liu22global}
Liu, T., Zhou, R., Kalathil, D., Kumar, P.~R., and Tian, C.
\newblock Policy optimization for constrained mdps with provable fast global
  convergence, 2021{\natexlab{b}}.
\newblock URL \url{https://arxiv.org/abs/2111.00552}.

\bibitem[Moskovitz et~al.(2023)Moskovitz, O'Donoghue, Veeriah, Flennerhag,
  Singh, and Zahavy]{moskovitz2023reload}
Moskovitz, T., O'Donoghue, B., Veeriah, V., Flennerhag, S., Singh, S., and
  Zahavy, T.
\newblock Reload: Reinforcement learning with optimistic ascent-descent for
  last-iterate convergence in constrained mdps.
\newblock \emph{arXiv preprint arXiv:2302.01275}, 2023.

\bibitem[M{\"u}ller et~al.(2023)M{\"u}ller, Alatur, Ramponi, and
  He]{muller2023cancellation}
M{\"u}ller, A., Alatur, P., Ramponi, G., and He, N.
\newblock Cancellation-free regret bounds for lagrangian approaches in
  constrained markov decision processes.
\newblock In \emph{Sixteenth European Workshop on Reinforcement Learning},
  2023.

\bibitem[Orabona(2019)]{orabona2019modern}
Orabona, F.
\newblock A modern introduction to online learning.
\newblock \emph{arXiv preprint arXiv:1912.13213}, 2019.

\bibitem[Pacchiano et~al.(2021)Pacchiano, Ghavamzadeh, Bartlett, and
  Jiang]{pacchiano2021stochastic}
Pacchiano, A., Ghavamzadeh, M., Bartlett, P., and Jiang, H.
\newblock Stochastic bandits with linear constraints.
\newblock In \emph{International conference on artificial intelligence and
  statistics}, pp.\  2827--2835. PMLR, 2021.

\bibitem[Paternain et~al.(2019)Paternain, Chamon, Calvo-Fullana, and
  Ribeiro]{paternain2019strongduality}
Paternain, S., Chamon, L.~F., Calvo-Fullana, M., and Ribeiro, A.
\newblock Constrained reinforcement learning has zero duality gap.
\newblock \emph{arXiv preprint arXiv:1910.13393}, 2019.

\bibitem[Paternain et~al.(2022)Paternain, Calvo-Fullana, Chamon, and
  Ribeiro]{paternain2022safe}
Paternain, S., Calvo-Fullana, M., Chamon, L.~F., and Ribeiro, A.
\newblock Safe policies for reinforcement learning via primal-dual methods.
\newblock \emph{IEEE Transactions on Automatic Control}, 2022.

\bibitem[Puterman(2014)]{puterman2014markov}
Puterman, M.~L.
\newblock \emph{Markov decision processes: discrete stochastic dynamic
  programming}.
\newblock John Wiley \& Sons, 2014.

\bibitem[Qiu et~al.(2020)Qiu, Wei, Yang, Ye, and Wang]{qiu2020upper}
Qiu, S., Wei, X., Yang, Z., Ye, J., and Wang, Z.
\newblock Upper confidence primal-dual reinforcement learning for cmdp with
  adversarial loss.
\newblock \emph{Advances in Neural Information Processing Systems},
  33:\penalty0 15277--15287, 2020.

\bibitem[Shani et~al.(2020)Shani, Efroni, Rosenberg, and
  Mannor]{shani2020optimistic}
Shani, L., Efroni, Y., Rosenberg, A., and Mannor, S.
\newblock Optimistic policy optimization with bandit feedback.
\newblock In \emph{International Conference on Machine Learning}, pp.\
  8604--8613. PMLR, 2020.

\bibitem[Sion(1958)]{sion1958general}
Sion, M.
\newblock On general minimax theorems.
\newblock \emph{Pacific J. Math}, 1958.

\bibitem[Stooke et~al.(2020)Stooke, Achiam, and Abbeel]{stooke2020responsive}
Stooke, A., Achiam, J., and Abbeel, P.
\newblock Responsive safety in reinforcement learning by pid lagrangian
  methods.
\newblock In \emph{International Conference on Machine Learning}, pp.\
  9133--9143. PMLR, 2020.

\bibitem[Sutton \& Barto(2018)Sutton and Barto]{sutton2018reinforcement}
Sutton, R.~S. and Barto, A.~G.
\newblock \emph{Reinforcement learning: An introduction}.
\newblock MIT press, 2018.

\bibitem[Tessler et~al.(2018)Tessler, Mankowitz, and Mannor]{tessler2018reward}
Tessler, C., Mankowitz, D.~J., and Mannor, S.
\newblock Reward constrained policy optimization.
\newblock \emph{arXiv preprint arXiv:1805.11074}, 2018.

\bibitem[Weissman et~al.(2003)Weissman, Ordentlich, Seroussi, Verdu, and
  Weinberger]{weissman2003inequalities}
Weissman, T., Ordentlich, E., Seroussi, G., Verdu, S., and Weinberger, M.~J.
\newblock Inequalities for the l1 deviation of the empirical distribution.
\newblock \emph{Hewlett-Packard Labs, Tech. Rep}, 2003.

\bibitem[Wu et~al.(2016)Wu, Shariff, Lattimore, and
  Szepesv{\'a}ri]{wu2016conservative}
Wu, Y., Shariff, R., Lattimore, T., and Szepesv{\'a}ri, C.
\newblock Conservative bandits.
\newblock In \emph{International Conference on Machine Learning}, pp.\
  1254--1262. PMLR, 2016.

\bibitem[Ying et~al.(2022)Ying, Ding, and Lavaei]{ying2022dual}
Ying, D., Ding, Y., and Lavaei, J.
\newblock A dual approach to constrained markov decision processes with entropy
  regularization.
\newblock In \emph{International Conference on Artificial Intelligence and
  Statistics}, pp.\  1887--1909. PMLR, 2022.

\bibitem[Zanette \& Brunskill(2019)Zanette and Brunskill]{zanette2019tighter}
Zanette, A. and Brunskill, E.
\newblock Tighter problem-dependent regret bounds in reinforcement learning
  without domain knowledge using value function bounds.
\newblock In \emph{International Conference on Machine Learning}, pp.\
  7304--7312. PMLR, 2019.

\end{thebibliography}
\bibliographystyle{icml2024}

\newpage
\appendix
\onecolumn

\section{Summary of Notation} 
\label{app:notation}
\vspace{2cm}
The following table summarizes our general CMDP notation.
{{\renewcommand{\arraystretch}{1.75} \centering
\begin{table}[H]
    \fontsize{10pt}{10pt}\selectfont
    \centering
    \begin{tabular}{|>{\bfseries}l|l|}
        \hline
        State space& $\St$, with cardinality $S$\\
        Action space& $\A$, with cardinality $A$\\
        \# of constraints & $I$ \\
        Time horizon &$H$\\
        Transition probability & $p_h(s' | s,a) = P[s_{h+1}=s' \mid s_h=s, a_h=a]$\\
        Initial state & $s_1 \in \calS$\\
        Slater gap of $\piBar$ & $\Xi = \min_{i\in[I]} (\valStart{u_i}{\piBar} - c_i)$\\ 
        Number of episodes& $K$\\\hline
        Objective reward & Random variable $R_{h}(s,a) \in [0,1]$,
        with $\E[R_h(s,a)]=r_h(s,a)$\\
        Constraint rewards & Random variable $U_{i,h}(s,a)$ with $\E[U_{i,h}(s,a)]=u_{i,h}(s,a)$ \\
        Constraint thresholds & $\bm{c} \in \reals^I$, with $c_i \in [0,H]$ \\ 
        Constraint functions & $g_{i,h}(s,a) = u_{i,h}(s,a) - \frac{1}{H}c_i$ \\\hline
        Policy & $\pi \in \Pi$ with $(h,s,a) \mapsto \pi_h(a|s) $ (non-stationary)\\ 
        Value functions & $\val{r',h}{\pi}{s} = \E_{\pi}[\sum_{h'=h}^H r'_{h'}(s_{h'}, a_{h'}) \mid s_h=s]$ \\
        \normalfont{(shorthand)} & $\valStart{r'}{\pi} = \val{r',1}{\pi}{s_1}$ \\
        \normalfont{(vector-valued)} & $\valStartVec{\bm{u'}}{\pi} = (\valStart{u'_1}{\pi}, \dots, \valStart{u'_I}{\pi})^T \in \reals^I$ \\
        $Q$-values & $\qval{r',h}{\pi}{s,a} = \exptn_{\pi} [ \sum_{h'=h}^H r'_{h'}(s_{h'},a_{h'}) \mid s_h = s,~ a_h=a ]$\\
        Occupancy measures & $\occ{h}{\pi}{s,a} = P_{\pi}[s_h=s,a_h=a]$\\
        & $\occ{h}{\pi}{s} = P_{\pi}[s_h=s]$\\\hline
        Lagrangian & $\lagrangian (\pi, \bm{\lambda}) = \valStart{r}{\pi} + \sum_{i \in [I]} \lambda_i (\valStart{u_i}{\pi} - c_i ) = \valStart{r + \bm{\lambda}^T \bm{g} }{\pi}$\\
        Optimal policy & $\piStar \in \arg \max_{\pi\in\Pi} \min_{\bm{\lambda} \in \reals_{\geq 0}^I} ~\lagrangian(\pi, \bm{\lambda})$ \\ 
        Dual optimizer & $\lambdaStar \in \arg \min_{\bm{\lambda} \in \reals_{\geq 0}^I} \max_{\pi \in \Pi} ~\lagrangian(\pi, \bm{\lambda})$\\\hline
        Confidence level & $1-\delta$ \\
        Objective regret & $\Reg(K; r) = \sum_{k\in[K]} \left[ \valStart{r}{\piStar} - \valStart{r}{\pi_k} \right]_+$\\
        Constraint regret & $\Reg(K; \bm{u}) = \max_{i\in [I]} \sum_{k\in[K]} \left[ c_i - \valStart{u_i}{\pi_k} \right]_+$\\\hline
    \end{tabular}
    \label{aug-tab:mdp}
\end{table}
}

\newpage

The following table summarizes the notation specific to the algorithm.
{{\renewcommand{\arraystretch}{1.75} \centering
\begin{table}[H]
    \fontsize{10pt}{10pt}\selectfont
    \centering
    \begin{tabular}{|>{\bfseries}l|l|}
        \hline
        Step size & $\eta > 0$ (hyperparameter)\\
        Regularization parameter & $\tau > 0$ (hyperparameter)\\
        Dual threshold & $\lambdaMax > 0$ (hyperparameter)\\
        Dual domain & $\Lambda = \left[0, \lambdaMax \right]^I$ \\\hline
        Entropy & $\entropy (\pi) = - \exptn_{\pi} \left[ \sum_{h=1}^{H} \log (\pi_h(a_h | s_h)) \right]$\\
        Regularized Lagrangian & $\regLagrangian (\pi, \bm{\lambda}) = \valStart{r}{\pi} + \sum_{i \in [I]} \lambda_i (\valStart{u_i}{\pi} - c_i ) + \tau \left( \entropy(\pi) + \frac{1}{2}\|\bm{\lambda}\|^2 \right)$\\
        Regularized optimal policy & $\regPiStar \in \arg \max_{\pi\in\Pi} \min_{\bm{\lambda} \in \Lambda} ~\regLagrangian(\pi, \bm{\lambda})$\\
        Regularized dual optimizer & $\regLambdaStar \in \arg \min_{\bm{\lambda} \in \Lambda} \max_{\pi \in \Pi} ~\regLagrangian(\pi, \bm{\lambda})$\\\hline
        Auxiliary function & $\psi_{k,h}(s,a) = -\log(\pi_{k,h}(a | s))$\\
        KL divergence & $\kl(q,q') = \sum_{a \in \calA} q(a) \log\round{\frac{q(a)}{q'(a)}}$ ($q,q' \in \simplex{\calA}$)\\
        & $\kl_{k,h}(s) = \kl(\regPiStarH{h}(\cdot|s), \pi_{k,h}(\cdot|s))$\\
        & $\kl_k = \sum_{h} \sum_{s} \occ{h}{\regPiStar}{s} \kl_{k,h}(s)$\\ 
        Potential function & $\Phi_k = \kl_k + \frac{1}{2} \norm{\regLambdaStar - \bm{\lambda}_k}^2 \quad (k \geq 1)$\\
        Visitation counter & $\counter = \sum_{l=1}^{k-1} \indic_{\{ s_h^l=s,~ a_h^l = a \}}$\\
        Averages & $\rBar_{k-1,h}(s,a)$, $\uBar_{k-1,h}(s,a)$, $\pBar_{k-1,h}(s'|s,a)$\\
        Exploration bonuses & $ b_{k-1,h}(s,a) = b^r_{k-1,h}(s,a) + b^p_{k-1,h}(s,a) $\\
        Optimistic estimates & $\rHat_k$, $\bm{\uHat}_k$, $\bm{\gHat}_k = \bm{\uHat}_k - \frac{1}{H}\bm{c}$, $\psiHat_k$, $\pHat_k$\\
        Regularized reward function & $z_k = r + \bm{\lambda}_k^T \bm{u} + \tau \psi_k$, $\zHat_k = \rHat_{k} + \bm{\lambda}_k^T\bm{\uHat}_k + \tau \psiHat_k$\\
        Success event & $G$\\\hline
        Truncated value functions & $\qvalHat{\zHat_k,h}{k}{s,a} = \qvalHat{\rHat_k,h}{k}{s,a} + \sum_{i} \lambda_{k,i} \qvalHat{\uHat_{k,i},h}{k}{s,a}$\\
        &~~~~~~~~~~~~~~~~~~$+ \tau \qvalHat{\psiHat_k,h}{k}{s,a} $\\
        & $\valHat{\zHat_k,h}{k}{s} = \dotProd{\pi_{k,h}(\cdot|s)}{\qvalHat{\zHat_k,h}{k}{s,\cdot}}$\\\hline
    \end{tabular}
    \label{aug-tab:regpd}
\end{table}
} 
}

\section{Extended Related Work} \label{sec:extra-lit}

In this section, we review further related work and provide a technical comparison with prior works.

\textbf{Constrained MDPs} \citet{efroni2020exploration} provided the first regret analysis for LP-based (\textsc{OptLP}), primal-dual (\textsc{OptPrimalDual}), and dual algorithms (\textsc{OptDual}). \textsc{OptLP} achieves the optimal strong regret of $\tilde{O}(K^{1/2})$, yet most modern CMDP algorithms are based on primal-dual schemes rather than LP. \textsc{OptPrimalDual} is akin to our \cref{algo:rpg-pd-finite-learn} but without regularization. It guarantees a weak regret of $\tilde{O}(K^{1/2})$ but no bound on the strong regret, which is left as an open question that we addressed in \cref{sec:pd}. The same holds regarding the guarantees for \textsc{OptDual}, for which the question about strong regret bounds is still unanswered.

Since \citet{efroni2020exploration} analyzed the vanilla primal-dual (and dual) algorithm, their analysis has been extended in various works, both for the case of an unknown or known CMDP. Specifically, the algorithms have been extended to natural policy gradient methods with policy parameterization \citep{ding2020natural,ding2022convergence,liu22global}, function approximation in the linear MDP setup \citep{ding2022policy, ghosh2022provably}, CMDPs with time-varying characteristics \citep{ding2022provably,qiu2020upper}, and have even been shown to achieve bounded on-average constraint violation \citep{liu2021learning,bai2022achieving}. However, all these works only established convergence of the \textit{averaged} iterates or a sublinear \textit{weak} regret. In practice, recent works do show empirical success (using optimistic gradients \citep{moskovitz2023reload} and PID control \citep{stooke2020responsive}) but without the desired theoretical guarantees. 

\textbf{Comparison with Prior Results} \citet{ding2023last} analyzed two algorithms, \textsc{RPG-PD} and \textsc{OPG-PD}, for last-iterate convergence assuming a value function oracle. \textsc{RPG-PD} follows the same scheme as our \cref{eq:reg-primal,eq:reg-dual}. However, the analysis is tailored for a single constraint. It is not straightforward (as far as we know) how Corollary 1 in \cite{ding2023last} can be extended to multiple constraints (which we achieve in \cref{sec:li-conv}). Our \cref{lem:pot-to-regret-finite} generalizes the analysis to deal with multiple constraints. However this extensions leads to a worse iteration complexity in \cref{lem:last-iterate} of $O(\epsilon^{-10})$ rather than $\tilde{O}(\epsilon^{-6})$. In addition to this, \citet{ding2023last}'s policy update differs from ours in that it does not allow a closed-form solution but requires projection onto a \textit{restricted} probability simplex for technical reasons (see discussion of \cref{lem:potential-finite}, providing an analysis for our closed-form updates). 

Moreover, \citet{ding2023last} left it as an open question whether the algorithm can be generalized to achieve last-iterate convergence in the online setup, when the CMDP is unknown; we addressed this point in \cref{sec:pd}.

The other algorithm of \citet{ding2023last}, \textsc{OPG-PD}, is based on optimistic gradient updates and requires the restrictive assumption that the optimal state-visitation distribution (i.e., occupancy measure) is unique and introduces an extra problem-dependent constant. Moreover, it assumes a uniform lower bound on the state-visitation frequency in the discounted infinite-horizon setting, an assumption that cannot be guaranteed in the finite-horizon setting.

\citet{moskovitz2023reload} showed last-iterate convergence of a primal-dual scheme using optimistic gradient updates given a known CMDP, but their analysis concerns an algorithm operating over occupancy measures rather than policies (different from the practical implementation). Its implicit updates are constrained over the set of occupancy measures (i.e., the Bellman flow polytope), making them at least as computationally expensive as solving the CMDP directly via an LP in the first place.

\citet{calvo2023state} considered a rather different approach to overcome the problem that CMDPs cannot be modeled by a single (mixture) reward weighted by Lagrange multipliers (sometimes referred to as \textit{scalarization fallacy}). They proposed a state-augmentation technique that addresses this related problem without guaranteeing last-iterate convergence.

\citet{ghosh2024towards} proposed a rather different model-free primal-dual algorithm for the linear MDP setting. Their algorithm achieves $\tilde{O}(K^{1/2})$ strong regret if it is allowed to take $\Omega(d^{H-1}K^{1.5H+0.5}\log(A)^H)$ computational steps in every episode. This is needed because their algorithm searches for an optimal dual variable in each episode by making incremental steps of $\eta=1/(d^{H-1}K^{1.5H}\log(A)^H)$, potentially until reaching $K^{1/2}$ (see their Algorithm 1). In our work, we focus on polynomial-time algorithms that achieve a strong regret guarantee.

\textbf{Dual Algorithms} \citet{li2021faster} provided a dual (not primal-dual) algorithm based on the same regularization scheme as ours but considered an accelerated dual update and only proved convergence for a history-weighted mixture policy in a known CMDP. Similarly, \citet{ying2022dual} derived a dual (not primal-dual) algorithm with last-iterate convergence but left it open if a sample-based version is possible. Moreover, their analysis covers the discounted infinite-horizon setting and requires a uniform lower bound on the state-visitation frequency, an assumption that cannot be guaranteed in the finite-horizon setting.

\textbf{Constrained Bandits} In the simpler bandit setup where there is only a single state, there are mainly three setups in the literature: Knapsack bandits \citep{agrawal2016linear,badanidiyuru2018bandits} consider reward maximization over time as long a some global budget is not used up yet. Conservative bandits \citep{wu2016conservative,kazerouni2017conservative} concern algorithms whose cumulative reward performs sufficiently well relative to some pre-defined baseline policy. Finally, there is a line of research on stage-wise constrained bandits \citep{amani2019linear,pacchiano2021stochastic}, which require algorithms that obtain a reward and a cost associated with an action, where the latter should stay below a threshold in each round. While these settings may inspire related research in CMDPs, they are rather different from ours: They consider hard thresholds in the single-state setup, while exploration in CMDPs is generally stateful and commonly aims at simultaneous minimization of reward and constraint regrets.

\section{Properties of the Lagrangian Formulation} \label{app:lagrangian}

The results in this section are not novel by themselves, but we re-establish them here for finite-horizon CMDPs for completeness. We refer to \cref{app:convopt} for the relevant convex optimization background. To view the CMDP as a convex optimization problem, we will express it via the common notion of occupancy measures \citep{borkar1988convex}.
\begin{definition}
    The \emph{state-action occupancy measure} $d^{\pi}$ of a policy $\pi$ for a CMDP $\M$ is defined as 
    \begin{align*}
        \occ{h}{\pi}{s,a} := \E \left[ \indic_{\{ s_h = s, a_h = a \}} \mid s_1; p, \pi \right] = P[ s_h=s, a_h=a \mid s_1; p, \pi ],
    \end{align*}
    for $s\in\St$, $a\in\A$, $h\in[H]$. We denote the stacked vector of these values as $\occFn{ }{\pi} \in \R^{HSA}$, with the element at index $(h,s,a)$ being $\occ{h}{\pi}{s,a}$. Similarly, we define
    \begin{align*}
        \occ{h}{\pi}{s} := P[ s_h=s \mid s_1; p, \pi ] = \sum_{a} \occ{h}{\pi}{s,a}
    \end{align*}
    for $s\in\calS$.
\end{definition}
We can now define
\begin{align*}
    Q(p) := \left\{ \occFn{ }{\pi} \in \R^{HSA} \mid \pi \in \Pi \right\}
\end{align*}
as the \textit{state-action occupancy measure polytope}. Note that $Q(p)$ is indeed a polytope \citep{puterman2014markov}. Moreover, we have a surjective map $\pi \mapsto \occFn{ }{\pi}$ between $\Pi$ and $Q(p)$, for which we can explicitly compute an element in the pre-image of $\bm{d} \in Q(p)$ via $\pi_{h}(a|s) = d_h(s,a) / (\sum_{a'} d_h(s,a'))$.

We can stack the expected rewards $r_h(s,a)$ and constraint rewards $u_{i,h}(s,a)$ in the same way as $\occ{h}{\pi}{s,a}$ to obtain vectors $\bm{r} \in \R^{HSA}$ and $\bm{u}_i \in \R^{HSA}$. Note that we then have $\valStart{r}{\pi} = \sum_{h,s,a} \occ{h}{\pi}{s,a} r_h(s,a) = \bm{r}^T \occFn{ }{\pi}$ by linearity of expectation. Similarly, for all $i\in [I]$, we have $\valStart{u_i}{\pi} = \bm{u}_i^T \occFn{ }{\pi}$. Moreover, if we stack $\bm{U} = (\bm{u}_i)_{i\in[I]} \in \R^{I \times HSA}$ and $\bm{c} = (c_i)_{i\in[I]}\in \R^I$ as
\begin{align*}
    \bm{U} := \left(\begin{matrix}
    \bm{u}_1^T\\
    \vdots\\
    \bm{u}_I^T
    \end{matrix}\right),\hspace{1cm}
    \bm{c} := \left( \begin{matrix}
        c_1\\
        \vdots \\
        c_I
    \end{matrix} \right),
\end{align*}
we obtain $\valStartVec{\bm{u}}{\pi} = \bm{U} \occFn{ }{\pi} \in [0,H]^I$ for the vector of the constraint value functions. We can thus write
\begin{align*}
    \pi^* \in \arg \max_{\pi \in \Pi} \quad \valStart{r}{\pi} \quad \text{s.t.} \quad \valStart{u_i}{\pi} \geq c_i \quad (\forall i \in [I])
\end{align*}
equivalently as
\begin{align}
    \occFn{ }{\piStar} \in \arg \max_{\occFn{ }{\pi} \in Q(p)} \quad \bm{r}^T \occFn{ }{\pi} \quad \text{s.t.} \quad \bm{U} \occFn{ }{\pi} \geq \bm{c}, \label{eq:cmdp-finite-occ}
\end{align}
where $\geq$ is understood element-wise. This is a linear program (LP). In particular, by compactness of the state-action occupancy polytope, there exists an optimal solution $\pi^*$ as we assume feasibility.

\begin{restatable}[Strong duality CMDP \citep{paternain2019strongduality}]{lemma}{SdCmdpFinite} \label{lem:SD-cmdp-finite}
    We have 
    \begin{align*}
        \max_{\pi \in \Pi} \min_{\bm{\lambda} \in \reals_{\geq 0}^{I}}~ \lagrangian(\pi, \bm{\lambda}) = \min_{ \bm{\lambda} \in \reals_{\geq 0}^I } \max_{\pi \in \Pi}~ \lagrangian(\pi, \bm{\lambda}),
    \end{align*}
    and both optima are attained.
\end{restatable}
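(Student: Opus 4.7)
The plan is to reduce the claim to strong duality of the LP reformulation \cref{eq:cmdp-finite-occ} and then translate the result back to the policy formulation. Using the surjection $\pi \mapsto \occFn{ }{\pi}$ from $\Pi$ onto the compact polytope $Q(p)$, the Lagrangian factorizes as
\begin{align*}
    \lagrangian(\pi, \bm{\lambda}) = \bm{r}^T \occFn{ }{\pi} + \bm{\lambda}^T(\bm{U} \occFn{ }{\pi} - \bm{c}),
\end{align*}
so the minimax and maximin values of $\lagrangian$ over $(\pi, \bm{\lambda}) \in \Pi \times \reals_{\geq 0}^I$ coincide with those of the LP Lagrangian $L(\bm{d}, \bm{\lambda}) := \bm{r}^T \bm{d} + \bm{\lambda}^T(\bm{U}\bm{d} - \bm{c})$ over $(\bm{d}, \bm{\lambda}) \in Q(p) \times \reals_{\geq 0}^I$. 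This is the crucial bridge, because the CMDP problem is non-concave in $\pi$ but affine in $\bm{d}$, so LP duality becomes available in the latter representation.

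Next I would invoke LP strong duality for \cref{eq:cmdp-finite-occ}. The primal is feasible by \cref{ass:slater-finite}, which yields $\occFn{ }{\piBar} \in Q(p)$ with $\bm{U} \occFn{ }{\piBar} \geq \bm{c} + \bm{\xi} > \bm{c}$, and its value is bounded above by $H$ since $\bm{r}^T \occFn{ }{\pi} = \valStart{r}{\pi} \leq H$ for every $\pi$. The fundamental theorem of linear programming then yields equal finite optima in primal and dual LP, attained at some $(\bm{d}^\star, \lambdaStar) \in Q(p) \times \reals_{\geq 0}^I$, together with the saddle-point identity
\begin{align*}
    \max_{\bm{d} \in Q(p)} \min_{\bm{\lambda} \in \reals_{\geq 0}^I} L(\bm{d}, \bm{\lambda}) = \min_{\bm{\lambda} \in \reals_{\geq 0}^I} \max_{\bm{d} \in Q(p)} L(\bm{d}, \bm{\lambda}).
\end{align*}
Translating back through the surjection gives the equality of the claim, and any preimage $\piStar$ of $\bm{d}^\star$ (obtained via $\pi^\star_h(a|s) = d^\star_h(s,a)/\sum_{a'} d^\star_h(s,a')$) attains the outer maximum on the policy side.

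For attainment on the dual side, the challenge is that $\reals_{\geq 0}^I$ is unbounded. I would confine $\lambdaStar$ to a compact box by invoking Slater's condition: from the already established value equality, $\max_\pi \lagrangian(\pi, \lambdaStar) = \valStart{r}{\piStar} \leq H$, and evaluating $\lagrangian(\piBar, \lambdaStar) \leq \max_\pi \lagrangian(\pi, \lambdaStar)$ combined with $\valStart{u_i}{\piBar} - c_i \geq \Xi$ gives $\valStart{r}{\piBar} + \Xi \sum_i \lambdaStarI{i} \leq H$, hence $\|\lambdaStar\|_1 \leq H/\Xi$. Restricting the outer minimum to $[0, H/\Xi]^I$ therefore preserves the value and secures attainment by continuity of $\bm{\lambda} \mapsto \max_\pi \lagrangian(\pi, \bm{\lambda})$ on this compact set.

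The main obstacle is precisely this dual attainment: general minimax theorems such as Sion's do not apply directly on the unbounded domain $\reals_{\geq 0}^I$, and the Slater-based bound $\|\lambdaStar\|_1 \leq H/\Xi$ is what bypasses the difficulty a posteriori. The same bound is also what later justifies the dual truncation $\Lambda = [0, \lambdaMax]^I$ used throughout the algorithm in the main text.
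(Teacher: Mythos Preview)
Your proposal is correct and follows essentially the same route as the paper: both reduce the policy formulation to the occupancy-measure LP \cref{eq:cmdp-finite-occ} via the surjection $\pi \mapsto \occFn{ }{\pi}$ and then invoke strong duality for the resulting convex problem (the paper does this by verifying the hypotheses of a packaged convex-duality theorem, \cref{aug-thm:duality}, which yields value equality and dual attainment simultaneously). Your separate Slater-based argument for dual attainment is valid but redundant once LP/convex duality has already produced a saddle point $(\bm{d}^\star,\lambdaStar)$; the bound $\|\lambdaStar\|_1 \le H/\Xi$ you derive is in fact the content of the paper's subsequent \cref{lem:dual-bound-finite}.
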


\begin{proof}
    Note that, under Assumption \ref{ass:slater-finite}, we can view \cref{eq:cmdp-finite} as the convex optimization problem in \cref{eq:cmdp-finite-occ} over $Q(p)$ that satisfies all parts of Assumption \ref{aug-ass:841} from \cref{aug-sec:conv-prelim}. Indeed, 
    \begin{itemize}
        \item[(a)] $X:=Q(p)$ is a polytope and thus convex
        \item[(b)] the objective $f(\cdot) := -\bm{r}^T (\cdot)$ is affine and thus convex
        \item[(c)] the constraints $g_i(\cdot) := c_i - \bm{u}_{i}^T (\cdot)$ are affine and thus convex
        \item[(d)] by Assumption $\ref{ass:slater-finite}$, \cref{eq:cmdp-finite-occ} is feasible, and thus its optimum is attained (since the domain is compact and the objective continuous)
        \item[(e)] a Slater point exists by Assumption \ref{ass:slater-finite}, namely $\occFn{ }{\piBar}$
        \item[(f)] all dual problems have an optimal solution since the domain $X$ is compact and the objective $f(\cdot) + \bm{\lambda}^T\bm{g}(\cdot)$ is continuous,
    \end{itemize} 
    where $Q(p) \subset \reals^{HSA}$, $\bm{r} \in \reals^{HSA}$ and $\bm{u}_i \in \reals^{SAH}$ are defined as above. The claim now readily follows from \cref{aug-thm:duality}.
\end{proof}

\begin{restatable}[e.g., \citet{ying2022dual}]{lemma}{dualBoundFinite} \label{lem:dual-bound-finite}
    We have $\norm{\lambdaStar}_1 \leq \frac{H}{\Xi}$,
\end{restatable}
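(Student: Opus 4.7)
The strategy is the standard Slater argument: use the saddle-point characterization from \cref{lem:SD-cmdp-finite}, evaluate the Lagrangian at the Slater policy to extract a linear lower bound in $\|\lambdaStar\|_1$, and then use boundedness of the values.

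First, by \cref{lem:SD-cmdp-finite}, $(\piStar, \lambdaStar)$ is a saddle point of $\lagrangian$ on $\Pi \times \reals_{\geq 0}^I$, so in particular
\begin{align*}
    \valStart{r}{\piStar} \;=\; \lagrangian(\piStar,\lambdaStar) \;\geq\; \lagrangian(\pi,\lambdaStar) \qquad (\forall \pi \in \Pi),
\end{align*}
where the equality uses $\min_{\bm{\lambda}\geq 0}\lagrangian(\piStar,\bm\lambda) = \valStart{r}{\piStar}$ (attained at any feasible minimizer, including the complementary-slackness one). Plug in the Slater policy $\piBar$ from \cref{ass:slater-finite}:
\begin{align*}
    \valStart{r}{\piStar} \;\geq\; \valStart{r}{\piBar} + \sum_{i\in[I]} \lambdaStarI{i}\bigl(\valStart{u_i}{\piBar} - c_i\bigr) \;\geq\; \valStart{r}{\piBar} + \sum_{i\in[I]} \lambdaStarI{i}\,\xi_i,
\end{align*}
using $\valStart{u_i}{\piBar} - c_i \geq \xi_i$.

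Next, since $\lambdaStar \in \reals_{\geq 0}^I$ and $\xi_i \geq \Xi$ for every $i$, the sum on the right is at least $\Xi\|\lambdaStar\|_1$. Rearranging,
\begin{align*}
    \|\lambdaStar\|_1 \;\leq\; \frac{\valStart{r}{\piStar} - \valStart{r}{\piBar}}{\Xi}.
\end{align*}
Finally, $\valStart{r}{\piStar} \leq H$ because rewards lie in $[0,1]$ and the horizon is $H$, and $\valStart{r}{\piBar} \geq 0$ because rewards are nonnegative. This gives $\|\lambdaStar\|_1 \leq H/\Xi$, as claimed.

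There is no real obstacle here; the only subtle point is justifying the equality $\lagrangian(\piStar,\lambdaStar) = \valStart{r}{\piStar}$, which follows from strong duality (any saddle value equals the primal optimum, and the primal optimum is $\valStart{r}{\piStar}$ since $\piStar$ is primal feasible with $\sum_i \lambdaStarI{i}(\valStart{u_i}{\piStar}-c_i) \leq 0$ while the saddle-point inequalities force this quantity to be $\geq 0$ as well, hence equal to $0$).
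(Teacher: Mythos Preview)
Your proof is correct and is essentially the same Slater argument the paper uses: both arrive at the intermediate bound $\|\lambdaStar\|_1 \leq (\valStart{r}{\piStar} - \valStart{r}{\piBar})/\Xi$ and then bound the numerator by $H$. The only cosmetic difference is that the paper invokes this bound by casting the CMDP as a convex program in occupancy measures and citing the general result \cref{aug-thm:conv-dual}, whereas you write out the saddle-point computation directly; also, the saddle-point inequality you use is stated as \cref{lem:sp-cmdp-finite} rather than \cref{lem:SD-cmdp-finite}.
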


\begin{proof}
    As in the proof of \cref{lem:SD-cmdp-finite}, under Assumption \ref{ass:slater-finite}, we can view the CMDP problem as a convex optimization problem in the occupancy measure, in the same setup as \cref{aug-sec:conv-prelim}.
    Specifically, we have $\valStart{r}{\pi} = \bm{r}^T \occFn{ }{\pi}$ and $\valStartVec{\bm{u}}{\pi} = \bm{U} \occFn{ }{\pi}$. Then, set $X=Q(p)$, $\xbar = \occFn{ }{\pibar}$, $f(\cdot) = -\bm{r}^T(\cdot)$ and $g_i(\cdot) = c_i - \bm{u}_{i}^T(\cdot)$. Plugging this into \cref{aug-thm:conv-dual} indeed yields 
    \begin{align*}
        \norm{\lambdaStar}_1 \leq \frac{\valStart{r}{\piStar} - \valStart{r}{\piBar}}{\min_{i\in[I]}(\valStart{u_i}{\piBar} - c_i)} \leq \frac{H}{\Xi}.
    \end{align*}
\end{proof}

\begin{restatable}[Saddle point CMDP]{lemma}{spCmdpFinite} \label{lem:sp-cmdp-finite}
    Let $\pi \in \Pi$ and $\bm{\lambda} \in \reals_{\geq 0}^I$. Then 
    \begin{align*}
        \lagrangian(\pi, \lambdaStar) \leq \lagrangian(\piStar, \lambdaStar) \leq \lagrangian(\piStar, \bm{\lambda}).
    \end{align*}
\end{restatable}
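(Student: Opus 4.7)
The plan is to derive both inequalities from strong duality (\cref{lem:SD-cmdp-finite}) combined with primal feasibility of $\piStar$ and dual non-negativity of $\lambdaStar$. First, I will observe that by \cref{lem:SD-cmdp-finite} the common saddle value equals $\valStart{r}{\piStar}$, since the inner minimization at any $\pi\in\Pi$ gives $\min_{\bm\lambda \geq 0} \lagrangian(\pi,\bm\lambda) = \valStart{r}{\pi}$ if $\pi$ is feasible and $-\infty$ otherwise, and $\piStar$ attains the outer max.

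Next, I will pin down $\lagrangian(\piStar,\lambdaStar) = \valStart{r}{\piStar}$. One direction is immediate: by the definition of $\lambdaStar$ as the min-max optimizer, $\max_{\pi\in\Pi}\lagrangian(\pi,\lambdaStar)$ equals the common saddle value $\valStart{r}{\piStar}$, hence $\lagrangian(\piStar,\lambdaStar) \leq \valStart{r}{\piStar}$. For the opposite direction, since $\piStar$ is feasible, $\valStartVec{\bm u}{\piStar} - \bm c \geq \bm 0$, and since $\lambdaStar \geq \bm 0$, we obtain $\lagrangian(\piStar,\lambdaStar) = \valStart{r}{\piStar} + (\lambdaStar)^T(\valStartVec{\bm u}{\piStar}-\bm c) \geq \valStart{r}{\piStar}$. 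This sandwich gives equality (and, as a byproduct, complementary slackness $(\lambdaStar)^T(\valStartVec{\bm u}{\piStar}-\bm c) = 0$).

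With this in hand, the two saddle-point inequalities follow quickly. For the left inequality, any $\pi \in \Pi$ satisfies $\lagrangian(\pi,\lambdaStar) \leq \max_{\pi'\in\Pi}\lagrangian(\pi',\lambdaStar) = \valStart{r}{\piStar} = \lagrangian(\piStar,\lambdaStar)$. For the right inequality, any $\bm\lambda \in \reals_{\geq 0}^I$ gives $\lagrangian(\piStar,\bm\lambda) = \valStart{r}{\piStar} + \bm\lambda^T(\valStartVec{\bm u}{\piStar}-\bm c) \geq \valStart{r}{\piStar} = \lagrangian(\piStar,\lambdaStar)$, again using feasibility of $\piStar$ and $\bm\lambda \geq \bm 0$.

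The main conceptual step is the identification $\lagrangian(\piStar,\lambdaStar) = \valStart{r}{\piStar}$, since $\piStar$ and $\lambdaStar$ are a priori defined as optimizers of \emph{different} min-max orderings; without strong duality there is no reason for $\lambdaStar$ to be a best response to $\piStar$ or vice versa. Once \cref{lem:SD-cmdp-finite} closes that gap, the rest is essentially arithmetic with the non-negativity constraints, so I do not expect any additional obstacle.
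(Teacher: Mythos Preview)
Your proof is correct. The paper's own proof is a one-liner: it invokes the abstract result \cref{lem:minmax-to-sp}, which shows that whenever strong duality holds (\cref{lem:SD-cmdp-finite}), the pair of primal and dual optimizers automatically forms a saddle point, via the chain $\lagrangian(\pi,\lambdaStar) \leq \max_{\pi'}\lagrangian(\pi',\lambdaStar) = \min_{\bm\lambda'}\lagrangian(\piStar,\bm\lambda') \leq \lagrangian(\piStar,\bm\lambda)$. That argument uses nothing about the specific form of $\lagrangian$. Your route is a bit more concrete: you exploit the CMDP structure directly, namely feasibility of $\piStar$ and non-negativity of $\bm\lambda$, to pin down $\lagrangian(\piStar,\lambdaStar) = \valStart{r}{\piStar}$ and to obtain the right inequality without passing through $\min_{\bm\lambda'}\lagrangian(\piStar,\bm\lambda')$. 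The upside of your approach is that it makes complementary slackness $(\lambdaStar)^T(\valStartVec{\bm u}{\piStar}-\bm c)=0$ explicit along the way; the paper's abstract lemma is shorter and reusable (indeed it is applied again for the regularized problem in \cref{lem:sp-reg-cmdp-finite-main}).
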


\begin{proof}
    By \cref{lem:SD-cmdp-finite}, this immediately follows from \cref{lem:minmax-to-sp} in \cref{app:minmax}.
\end{proof}

\begin{restatable}[Strong duality regularized CMDP \citep{ding2023last}]{lemma}{SdRegCmdpFinite} \label{lem:SD-reg-cmdp-finite}
    We have 
    \begin{align*}
        \max_{\pi \in \Pi} \min_{\bm{\lambda} \in \Lambda}~ \regLagrangian(\pi, \bm{\lambda}) = \min_{ \bm{\lambda} \in \Lambda } \max_{\pi \in \Pi}~ \regLagrangian(\pi, \bm{\lambda}),
    \end{align*}
    and both primal and dual optimum are attained.
\end{restatable}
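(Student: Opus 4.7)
The plan is to mirror the strategy of \cref{lem:SD-cmdp-finite} and reduce to a concave-convex minimax statement on compact convex domains. Concretely, I would lift the problem from policies $\pi$ to state-action occupancies $\bm{d} = \occFn{ }{\pi} \in Q(p)$. Using $\valStart{r'}{\pi} = \bm{r'}^T\bm{d}^\pi$ for any reward $r'$ together with the identity
\begin{align*}
\entropy(\pi) = -\sum_{h,s,a} d_h^\pi(s,a)\, \log\!\bigl(d_h^\pi(s,a)/d_h^\pi(s)\bigr),
\end{align*}
where $d_h^\pi(s) := \sum_{a'} d_h^\pi(s,a')$ and we adopt $0\log 0 = 0$, the regularized Lagrangian can be rewritten as a function $\tilde L \colon Q(p) \times \Lambda \to \reals$ satisfying $\regLagrangian(\pi,\bm{\lambda}) = \tilde L(\bm{d}^\pi, \bm{\lambda})$. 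Since $\pi \mapsto \bm{d}^\pi$ is surjective onto $Q(p)$, the outer $\max_{\pi \in \Pi}$ and the outer $\max_{\bm{d} \in Q(p)}$ yield identical values.

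In this lifted form, $\tilde L(\cdot, \bm{\lambda})$ is concave in $\bm{d}$: the bilinear terms $\bm{r}^T\bm{d} + \bm{\lambda}^T(\bm{U}\bm{d}-\bm{c})$ are linear, and each summand $-d_h(s,a)\log(d_h(s,a)/d_h(s))$ in the entropy is a perspective of the concave function $-x\log x$, hence jointly concave in $(d_h(s,a), d_h(s))$ and thus in $\bm{d}$. Dually, $\tilde L(\bm{d}, \cdot)$ is strictly convex in $\bm{\lambda}$ thanks to the regularizer $\tfrac{\tau}{2}\|\bm{\lambda}\|^2$, as the remaining terms are affine in $\bm{\lambda}$. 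Finally, $\tilde L$ is jointly continuous on $Q(p) \times \Lambda$. Since $Q(p)$ (a polytope) and $\Lambda = [0,\lambdaMax]^I$ are both nonempty, convex, and compact, Sion's minimax theorem gives
\begin{align*}
\max_{\bm{d}\in Q(p)} \min_{\bm{\lambda}\in\Lambda} \tilde L(\bm{d},\bm{\lambda}) = \min_{\bm{\lambda}\in\Lambda} \max_{\bm{d}\in Q(p)} \tilde L(\bm{d},\bm{\lambda}),
\end{align*}
which, transported back via the surjection, is the claimed strong-duality identity. Attainment of both outer optima then follows from continuity of $\tilde L$ on the compact product domain (strict concavity/convexity makes the inner optimizers unique, and Berge's theorem yields continuous envelopes).

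The main technical point is justifying concavity of $\entropy$ as a function of $\bm{d}$, since it was originally defined through $\pi$ and only implicitly depends on $\bm{d}$ via $\pi_h(a|s) = d_h(s,a)/d_h(s)$. This is handled once we view each summand as a perspective of $-x\log x$, the only subtlety being boundary behaviour at $d_h(s) = 0$, which forces $d_h(s,a) = 0$ and makes the summand vanish under $0\log 0 = 0$. Everything else is routine once the lift to occupancies is in place.
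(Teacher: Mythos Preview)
Your proposal is correct and follows essentially the same route as the paper: lift to occupancy measures, verify that the resulting function is concave in $\bm{d}$ and (strictly) convex in $\bm{\lambda}$, and apply a Sion-type minimax result on compact convex domains (the paper's \cref{lem:exist-minmax}). The only cosmetic difference is that you argue concavity of the entropy term via the perspective-of-$(-x\log x)$ construction, whereas the paper verifies concavity directly from the log-sum inequality; these are equivalent.
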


\begin{proof} 
    For all $\pi\in\Pi$, $\bm{\lambda}\in\Lambda$, we have
    \begin{align*}
        \regLagrangian(\pi, \bm{\lambda}) =& \valStart{r+\bm{\lambda}^T\bm{g}}{\pi} + \tau\round{\entropy(\pi) + \frac{1}{2}\norm{\bm{\lambda}}^2}\\
        =& \sum_{s,a,h} (r_h(s,a) + \sum_{i} \lambda_i g_{i,h}(s,a)) \occ{h}{\pi}{s,a} \\
        &+ \tau \round{-\sum_{s,a,h} \occ{h}{\pi}{s,a} \log\round{\frac{\occ{h}{\pi}{s,a}}{\sum_{a'} \occ{h}{\pi}{s,a'}}} + \frac{1}{2}\norm{\bm{\lambda}}^2}\\
        =:& \regLagrangian^{occ} (\occFn{ }{\pi}, \bm{\lambda}),
    \end{align*}
    where $g_{i,h}(s,a) = u_{i,h}(s,a) - \frac{1}{H}c_i$ and where we used the definition of the occupancy measures and the polytope $Q(p)$. Consider the problem
    \begin{align}
        \max_{\bm{d} \in Q(p)} \min_{\bm{\lambda} \in \Lambda}~ \regLagrangian^{occ} (\bm{d}, \bm{\lambda}). \label{eq:reg-primal-finite-occ}
    \end{align}
    For any $\pi \in \Pi$ that is optimal for \cref{eq:reg-primal-finite}, $\occFn{ }{\pi}$ is also optimal for \cref{eq:reg-primal-finite-occ}. Conversely, for every $\bm{d} \in Q(p)$ that is optimal for \cref{eq:reg-primal-finite-occ}, we have that any $\pi$ given by $\pi_h(a|s) := \frac{d_h(s,a)}{\sum_{a'\in\calA} d_h(s,a')}$ for $s$ with $\sum_{a'\in\calA} d_h(s,a) > 0$, and arbitrary otherwise, is optimal for \cref{eq:primal-finite}. 
    
    Note that $\regLagrangian^{occ}$ is continuous. We further claim that $\regLagrangian^{occ}$ is 1-strongly convex in $\bm{\lambda} \in \Lambda$ and concave in $\bm{d} \in Q(p)$. Indeed, while the former claim is immediate, we can see the latter via the log-sum inequality (e.g., \citet[Theorem 2.7.1]{cover1999elements}) with $n=2$: For non-negative $a_i$, $b_i$,
    \begin{align*}
        -\round{\sum_{i=1} a_i}\log\round{\frac{\sum_{i=1} a_i}{\sum_{i=1} b_i}} \geq - \sum_{i=1} a_i\log\round{\frac{a_i}{b_i}} 
    \end{align*}
    and equality if and only if $a_i/b_i$ is the same for all $i$. Only considering the nonlinear term in $\regLagrangian(\bm{\lambda}, \cdot)$, for $\bm{d}_1, \bm{d}_2 \in Q(p)$ and $\alpha\in (0,1)$ we have 
    \begin{align*}
        &-\sum_{s,a,h} \round{\alpha d_{1,h}(s,a) + (1-\alpha)d_{2,h}(s,a)} \log\round{\frac{\alpha d_{1,h}(s,a) + (1-\alpha)d_{2,h}(s,a)}{\sum_{a'} \round{\alpha d_{1,h}(s,a') + (1-\alpha)d_{2,h}(s,a')} }}\\
        =& \sum_{s,a,h} - \round{\alpha d_{1,h}(s,a) + (1-\alpha)d_{2,h}(s,a)} \log\round{\frac{\alpha d_{1,h}(s,a) + (1-\alpha)d_{2,h}(s,a)}{\sum_{a'} \alpha d_{1,h}(s,a') + \sum_{a'}(1-\alpha)d_{2,h}(s,a') }}\\
        \geq& \sum_{s,a,h} - \alpha d_{1,h}(s,a) \log\round{\frac{\alpha d_{1,h}(s,a)}{\sum_{a'} \alpha d_{1,h}(s,a')}} \\
        &+ \sum_{s,a,h} - (1-\alpha)d_{2,h}(s,a) \log\round{\frac{(1-\alpha)d_{2,h}(s,a)}{\sum_{a'}(1-\alpha)d_{2,h}(s,a') }}\\
        =& - \alpha \sum_{s,a,h} d_{1,h}(s,a) \log\round{\frac{d_{1,h}(s,a)}{\sum_{a'} d_{1,h}(s,a')}} \\
        & - (1-\alpha)\sum_{s,a,h} d_{2,h}(s,a) \log\round{\frac{d_{2,h}(s,a)}{\sum_{a'}d_{2,h}(s,a') }}.
    \end{align*}
    with equality if and only if $\frac{d_{1,h}(s,a)}{\sum_{a'} d_{1,h}(s,a')} = \frac{d_{2,h}(s,a)}{\sum_{a'} d_{2,h}(s,a')}$ for all $s,h$. By \cref{lem:exist-minmax}, we thus have
    \begin{align*}
        \max_{\bm{d} \in Q(p)} \min_{\bm{\lambda} \in \Lambda}~ \regLagrangian^{occ} (\bm{d}, \bm{\lambda}) = \min_{\bm{\lambda} \in \Lambda} \max_{\bm{d} \in Q(p)}~ \regLagrangian^{occ} (\bm{d}, \bm{\lambda}),
    \end{align*}
    and primal and dual optimizers exist. This implies the same for the original problem \cref{eq:primal-finite} by converting the occupancy measures back into policies via $\pi_h(a|s) = d_h(s,a)/(\sum_{a'} d_h(s,a'))$.
\end{proof}

\begin{restatable}[Saddle point regularized CMDP]{lemma}{spRegCmdpFinite} \label{lem:sp-reg-cmdp-finite-main}
    Let $\pi \in \Pi$ and $\bm{\lambda} \in \Lambda$. Then 
    \begin{align*}
        \regLagrangian(\pi, \regLambdaStar) \leq \regLagrangian(\regPiStar, \regLambdaStar) \leq \regLagrangian(\regPiStar, \bm{\lambda}).
    \end{align*}
\end{restatable}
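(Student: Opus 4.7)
The plan is to deduce the saddle-point inequalities as a direct corollary of the strong duality result \cref{lem:SD-reg-cmdp-finite}, in exactly the same manner as the unregularized version \cref{lem:sp-cmdp-finite} was obtained from \cref{lem:SD-cmdp-finite}. The observation is purely abstract and does not use any structural properties of $\regLagrangian$ beyond what is already encoded in strong duality and attainment of the primal/dual optima, both of which \cref{lem:SD-reg-cmdp-finite} provides.

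Concretely, I would let $v := \max_{\pi\in\Pi}\min_{\bm{\lambda}\in\Lambda} \regLagrangian(\pi,\bm{\lambda}) = \min_{\bm{\lambda}\in\Lambda}\max_{\pi\in\Pi} \regLagrangian(\pi,\bm{\lambda})$ be the common value guaranteed by \cref{lem:SD-reg-cmdp-finite}. Since $\regPiStar$ attains the outer maximum on the left, $\min_{\bm{\lambda}\in\Lambda} \regLagrangian(\regPiStar,\bm{\lambda}) = v$, and since $\regLambdaStar$ attains the outer minimum on the right, $\max_{\pi\in\Pi}\regLagrangian(\pi,\regLambdaStar)=v$. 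Hence for every $\bm{\lambda}\in\Lambda$ we have $\regLagrangian(\regPiStar,\bm{\lambda}) \geq v$, and for every $\pi\in\Pi$ we have $\regLagrangian(\pi,\regLambdaStar) \leq v$. Specializing these two inequalities at $\bm{\lambda}=\regLambdaStar$ and $\pi=\regPiStar$ respectively forces $\regLagrangian(\regPiStar,\regLambdaStar)=v$, after which chaining gives the desired sandwich
\[
\regLagrangian(\pi,\regLambdaStar) \leq v = \regLagrangian(\regPiStar,\regLambdaStar) \leq \regLagrangian(\regPiStar,\bm{\lambda}).
\]

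Operationally, this can be packaged by a single appeal to the generic min-max-to-saddle-point lemma \cref{lem:minmax-to-sp} in \cref{app:minmax}, applied to the function $\regLagrangian$ on the product domain $\Pi\times\Lambda$, exactly as was done in the proof of \cref{lem:sp-cmdp-finite}. There is no real obstacle here: all the technical work (establishing that $\regLagrangian^{occ}$ is concave-convex via the log-sum inequality and that the compact domains $Q(p)$ and $\Lambda$ guarantee attainment) is already contained in \cref{lem:SD-reg-cmdp-finite}. Thus the present lemma is essentially a one-line consequence, included for later reference when characterizing the regularized optimizers used in \cref{sec:li-conv}.
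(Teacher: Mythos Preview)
Your proposal is correct and matches the paper's own proof essentially verbatim: the paper simply writes ``By \cref{lem:SD-reg-cmdp-finite}, this follows from \cref{lem:minmax-to-sp},'' which is exactly the appeal to strong duality plus the generic min-max-to-saddle-point lemma that you describe. Your expanded explanation of why $\regLagrangian(\regPiStar,\regLambdaStar)=v$ is a helpful unpacking of what \cref{lem:minmax-to-sp} does, but there is nothing more to add.
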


\begin{proof}
    By \cref{lem:SD-reg-cmdp-finite}, this follows from \cref{lem:minmax-to-sp}.
\end{proof}

\begin{restatable}{lemma}{spRegCmdpFiniteCor} \label{lem:sp-reg-cmdp-finite}
    Let $\pi \in \Pi$ and $\bm{\lambda} \in \Lambda$. Then 
    \begin{align*}
        \valStart{r+(\regLambdaStar)^T \bm{g}}{\pi} - \tau \entropy( \regPiStar ) \leq \valStart{r + (\regLambdaStar)^T \bm{g}}{\regPiStar} \leq \valStart{r + \bm{\lambda}^T \bm{g}}{\regPiStar} + \frac{\tau}{2} \norm{\bm{\lambda}}^2,
    \end{align*}
    where $\bm{g} = \bm{u} - \frac{1}{H}\bm{c}$.
\end{restatable}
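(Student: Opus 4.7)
The plan is to derive this corollary as a direct consequence of the saddle-point property established in \cref{lem:sp-reg-cmdp-finite-main}, by unfolding the definition of $\regLagrangian$ and dropping nonnegative terms in the right directions.

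First I would rewrite the regularized Lagrangian in the $\bm{g}$-form. From the definition in \cref{sec:duality-lagrangian}, $\lagrangian(\pi,\bm{\lambda}) = \valStart{r + \bm{\lambda}^T \bm{g}}{\pi}$ with $\bm{g}=\bm{u}-H^{-1}\bm{c}$, so
\begin{align*}
    \regLagrangian(\pi,\bm{\lambda}) = \valStart{r + \bm{\lambda}^T\bm{g}}{\pi} + \tau\entropy(\pi) + \frac{\tau}{2}\norm{\bm{\lambda}}^2.
\end{align*}

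For the lower bound, I would apply the left inequality $\regLagrangian(\pi,\regLambdaStar) \leq \regLagrangian(\regPiStar,\regLambdaStar)$ from \cref{lem:sp-reg-cmdp-finite-main}. Expanding both sides and cancelling the common term $\frac{\tau}{2}\norm{\regLambdaStar}^2$ yields
\begin{align*}
    \valStart{r+(\regLambdaStar)^T\bm{g}}{\pi} + \tau \entropy(\pi) \leq \valStart{r+(\regLambdaStar)^T\bm{g}}{\regPiStar} + \tau \entropy(\regPiStar).
\end{align*}
Since $\entropy(\pi)\geq 0$, dropping it on the left side and rearranging gives the first claimed inequality. For the upper bound, I would apply the right inequality $\regLagrangian(\regPiStar,\regLambdaStar) \leq \regLagrangian(\regPiStar,\bm{\lambda})$. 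Expanding both sides and cancelling the common term $\tau\entropy(\regPiStar)$ yields
\begin{align*}
    \valStart{r+(\regLambdaStar)^T\bm{g}}{\regPiStar} + \frac{\tau}{2}\norm{\regLambdaStar}^2 \leq \valStart{r+\bm{\lambda}^T\bm{g}}{\regPiStar} + \frac{\tau}{2}\norm{\bm{\lambda}}^2,
\end{align*}
and dropping the nonnegative term $\frac{\tau}{2}\norm{\regLambdaStar}^2$ on the left side gives the second claimed inequality.

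There is no real obstacle here: the lemma is essentially a bookkeeping reformulation of \cref{lem:sp-reg-cmdp-finite-main} that makes the entropy term and the dual-regularization term appear as explicit slack on the appropriate sides, which is convenient for later use in bounding unregularized suboptimality and constraint violation. The only thing to be careful about is to match the $\bm{g}$-form used in the statement with the $\bm{u},\bm{c}$-form of $\lagrangian$, and to ensure the cancellations of $\frac{\tau}{2}\norm{\regLambdaStar}^2$ (in the lower bound) and $\tau\entropy(\regPiStar)$ (in the upper bound) are done before the nonnegative terms are discarded.
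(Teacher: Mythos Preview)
Your proposal is correct and matches the paper's own proof, which simply says to plug the definition of $\regLagrangian$ into \cref{lem:sp-reg-cmdp-finite-main} and use $\entropy(\pi)\geq 0$ and $\norm{\bm{\lambda}}^2\geq 0$. Your write-up just makes the cancellations and the dropping of nonnegative terms explicit.
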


\begin{proof}
    Plugging the definition of $\regLagrangian$ into \cref{lem:sp-reg-cmdp-finite-main} proves the claim, after using that $\entropy(\pi) \geq 0$ and $\norm{\bm{\lambda}}^2 \geq 0$.
\end{proof}

\section{Last-Iterate Convergence} \label{app:convergence}

In this section, we provide the proofs for all results in \cref{sec:li-conv}, resulting in the proof of last-iterate convergence of the regularized primal-dual scheme (\cref{eq:reg-primal,eq:reg-dual}).

We first establish the convergence of the aforementioned potential function $\Phi_k$.

\potentialFinite*

\begin{proof}
    We first decompose the $k$-th \emph{primal-dual gap} as follows:
    \begin{align}
        \regLagrangian(\regPiStar, \bm{\lambda}_k) - \regLagrangian(\pi_k, \regLambdaStar) = \underbrace{\regLagrangian(\regPiStar, \bm{\lambda}_k) - \regLagrangian(\pi_k, \bm{\lambda}_k)}_{(i)} + \underbrace{\regLagrangian(\pi_k, \bm{\lambda}_k) - \regLagrangian(\pi_k, \regLambdaStar)}_{(ii)}. \label{eq:pd-decomp-finite}
    \end{align}
    We first bound term (i):
    \begin{align}
        (i) =& \regLagrangian(\regPiStar, \bm{\lambda}_k) - \regLagrangian(\pi_k, \bm{\lambda}_k)\nonumber\\
        =& \valStart{r + \bm{\lambda}_k^T\bm{g}}{\regPiStar} - \valStart{r + \bm{\lambda}_k^T\bm{g}}{\pi_k} \tag{as $\occ{h}{\pi}{s,a}=\occ{h}{\pi}{s}\pi_h(a|s)$, and cancel $\norm{\bm{\lambda}_k}^2$}\nonumber\\
        &- \tau \sum_{s,a,h} \occ{h}{\regPiStar}{s} \regPiStarH{h}(a|s) \log(\regPiStarH{h}(a|s)) + \tau \sum_{s,a,h} \occ{h}{\pi_k}{s} \pi_{k,h}(a|s) \log(\pi_{k,h}(a|s))\nonumber\\
        =& \valStart{r + \bm{\lambda}_k^T\bm{g} + \tau \psi_k}{\regPiStar} - \valStart{r + \bm{\lambda}_k^T\bm{g} + \tau \psi_k}{\pi_k} \quad \tag{since $\psi_{k,h}(s,a) = -\log(\pi_{k,h}(a|s))$} \nonumber\\
        &+ \tau \sum_{s,a,h} \occ{h}{\regPiStar}{s} \regPiStarH{h}(a|s) \log(\pi_{k,h}(a|s)) - \tau \sum_{s,a,h} \occ{h}{\pi_k}{s} \pi_{k,h}(a|s) \log(\pi_{k,h}(a|s))\nonumber\\
        &- \tau \sum_{s,a,h} \occ{h}{\regPiStar}{s} \regPiStarH{h}(a|s) \log(\regPiStarH{h}(a|s)) + \tau \sum_{s,a,h} \occ{h}{\pi_k}{s} \pi_{k,h}(a|s) \log(\pi_{k,h}(a|s))\nonumber\\
        =& \valStart{r + \bm{\lambda}_k^T\bm{g} + \tau \psi_k}{\regPiStar} - \valStart{r + \bm{\lambda}_k^T\bm{g} + \tau \psi_k}{\pi_k} \nonumber\\
        &+ \tau \sum_{s,a,h} \occ{h}{\regPiStar}{s} \regPiStarH{h}(a|s) \log(\pi_{k,h}(a|s)) \nonumber\\
        &- \tau \sum_{s,a,h} \occ{h}{\regPiStar}{s} \regPiStarH{h}(a|s) \log(\regPiStarH{h}(a|s)) \nonumber\\
        =& \valStart{r + \bm{\lambda}_k^T\bm{g} + \tau \psi_k}{\regPiStar} - \valStart{r + \bm{\lambda}_k^T\bm{g} + \tau \psi_k}{\pi_k} \nonumber\\
        &- \tau \sum_{s,h} \occ{h}{\regPiStar}{s} \sum_{a} \regPiStarH{h}(a|s) \log\round{\frac{\regPiStarH{h}(a|s)}{\pi_{k,h}(a|s)}}\nonumber\\
        =& \valStart{r + \bm{\lambda}_k^T\bm{g} + \tau \psi_k}{\regPiStar} - \valStart{r + \bm{\lambda}_k^T\bm{g} + \tau \psi_k}{\pi_k} - \tau\sum_{s,h} \occ{h}{\regPiStar}{s} \kl_{k,h}(s) \nonumber\\
        =& \valStart{r + \bm{\lambda}_k^T\bm{g} + \tau \psi_k}{\regPiStar} - \valStart{r + \bm{\lambda}_k^T\bm{g} + \tau \psi_k}{\pi_k} - \tau \kl_k \nonumber\\
        =& \valStart{r + \bm{\lambda}_k^T\bm{u} + \tau \psi_k}{\regPiStar} - \valStart{r + \bm{\lambda}_k^T\bm{u} + \tau \psi_k}{\pi_k} - \tau \kl_k \tag{as $\bm{g} = \bm{u} - \frac{1}{H}\bm{c}$}\nonumber\\
        =&\valStart{z_k}{\regPiStar} - \valStart{z_k}{\pi_k} - \tau \kl_k \nonumber\\
        =& \sum_{s,h} \occ{h}{\regPiStar}{s} \left\langle \qval{z_k,h}{\pi_k}{s,\cdot},  \regPiStarH{h}(\cdot|s) - \pi_{k,h}(\cdot|s) \right\rangle - \tau \kl_k \tag{by \cref{lem:pdl-finite}} \nonumber
    \end{align}
    Note that for all $s,h$,
    \begin{align*}
        &\left\langle \qval{z_k,h}{\pi_k}{s,\cdot},  \regPiStarH{h}(\cdot|s) - \pi_{k,h}(\cdot|s) \right\rangle\\
        \leq& \frac{\kl_{k,h}(s) - \kl_{k+1,h}(s)}{\eta} + \frac{\eta}{2}\sum_{a} \pi_{k,h}(a|s)\exp\round{\qval{z_k,h}{\pi_k}{s,a}} \qval{z_k,h}{\pi_k}{s,a}^2 \tag{\cref{lem:md-descent-kl}}\\
        \leq&\frac{\kl_{k,h}(s) - \kl_{k+1,h}(s)}{\eta} \tag{\cref{lem:val-bounds-finite}} \\
        &+ \frac{\eta}{2} A^{1/2}\exp\round{\eta H \round{1 + \lambdaMax I + \tau\log(A)}} \round{ 2H^2 \round{1 + I \lambdaMax + \tau \log(A)}^2 + 2\tau^2 (64/e^2) } \\
        =& \frac{\kl_{k,h}(s) - \kl_{k+1,h}(s)}{\eta} + \frac{\eta}{2} \frac{1}{H}\dTauLambda, 
    \end{align*}
    with 
    \begin{align*}
        \dTauLambda = HA^{1/2}\exp\round{\eta H \round{1 + \lambdaMax I + \tau\log(A)}} \round{ 2H^2 \round{1 + I \lambdaMax + \tau \log(A)}^2 + 2\tau^2 (64/e^2) }
    \end{align*}
    and where we were able to apply \cref{lem:md-descent-kl} by \cref{lem:update-md-implicit} and since $\qval{z,h}{\pi_k}{s,a}\geq0$. Hence, 
    \begin{align*}
        \sum_{s,h} \occ{h}{\regPiStar}{s} \left\langle \qval{z_k,h}{\pi_k}{s,\cdot},  \regPiStarH{h}(\cdot|s) - \pi_{k,h}(\cdot|s) \right\rangle
        \leq& \sum_{s,h} \occ{h}{\regPiStar}{s} \round{\frac{\kl_{k,h}(s) - \kl_{k+1,h}(s)}{\eta} + \frac{\eta}{2} \frac{1}{H}\dTauLambda}\\
        =& \frac{\kl_k - \kl_{k+1}}{\eta} + \frac{\eta}{2} \dTauLambda.
    \end{align*}
    Plugging in, we thus find 
    \begin{align}
        (i) = \valStart{z_k}{\regPiStar} - \valStart{z_k}{\pi_k} - \tau \kl_k \leq& \frac{\kl_k - \kl_{k+1}}{\eta} + \frac{\eta}{2} \dTauLambda - \tau \kl_k = \frac{(1-\eta\tau)\kl_k - \kl_{k+1}}{\eta} + \frac{\eta}{2} \dTauLambda. \label{eq:rpg-i-bound-finite}
    \end{align}
    We now bound term (ii):
    \begin{align}
        (ii) =& \regLagrangian(\pi_k, \bm{\lambda}_k) - \regLagrangian(\pi_k, \regLambdaStar)\nonumber\\
        =& \valStart{r+\bm{\lambda}_k^T \bm{g}}{\pi_k} - \valStart{r+(\regLambdaStar)^T \bm{g}}{\pi_k} + \frac{\tau}{2} \norm{\bm{\lambda}_k}^2 - \frac{\tau}{2} \norm{\regLambdaStar}^2 \tag{cancel $\entropy(\pi_k)$}\nonumber\\
        =& \sum_{i} (\lambda_{k,i} - \regLambdaStarI{i}) \valStart{g_i}{\pi_k} + \frac{\tau}{2} \norm{\bm{\lambda}_k}^2 - \frac{\tau}{2} \norm{\regLambdaStar}^2\nonumber\\
        =& \sum_{i} (\lambda_{k,i} - \regLambdaStarI{i}) (\valStart{u_i}{\pi_k} - c_i + \tau \lambda_{k,i}) - \frac{\tau}{2} \norm{\bm{\lambda}_k - \regLambdaStar}^2 \nonumber\\
        \leq& \frac{\norm{\regLambdaStar - \bm{\lambda}_k}^2 - \norm{\regLambdaStar - \bm{\lambda}_{k+1}}^2}{2\eta} +\frac{\eta}{2}\norm{\valStartVec{u_{k}}{\pi_k} - \bm{c} + \tau \bm{\lambda}_k}^2 \tag{\cref{lem:md-descent-proj}}\\
        \leq& \frac{\norm{\regLambdaStar - \bm{\lambda}_k}^2 - \norm{\regLambdaStar - \bm{\lambda}_{k+1}}^2}{2\eta} + \frac{\eta}{2} \dTauLambdaPrime, \tag{\cref{lem:val-bounds-finite}}
    \end{align}
    with $\dTauLambdaPrime = I(H + \tau \lambdaMax)^2$ and where we were able to apply \cref{lem:md-descent-proj} by \cref{lem:update-md-implicit}. Plugging in, we find 
    \begin{align}
        (ii) =& \sum_{i} (\lambda_{k,i} - \regLambdaStarI{i} ) (\valStart{u_i}{\pi_k} - c_i + \tau \lambda_{k,i}) - \frac{\tau}{2} \norm{\bm{\lambda}_k - \regLambdaStar}^2 \nonumber\\
        \leq& \frac{\norm{\regLambdaStar - \bm{\lambda}_k}^2 - \norm{\regLambdaStar - \bm{\lambda}_{k+1}}^2}{2\eta} + \frac{\eta}{2} \dTauLambdaPrime
        - \frac{\tau}{2} \norm{\bm{\lambda}_k - \regLambdaStar}^2\nonumber\\
        =& \frac{(1-\eta\tau)\norm{\regLambdaStar - \bm{\lambda}_k}^2 - \norm{\regLambdaStar - \bm{\lambda}_{k+1}}^2}{2\eta} + \frac{\eta}{2} \dTauLambdaPrime. \label{eq:rpg-ii-bound-finite}
    \end{align}
    From \cref{lem:sp-cmdp-finite} (with $\pi=\pi_k$, $\bm{\lambda}=\bm{\lambda}_k$), we have $0 \leq \regLagrangian(\regPiStar, \bm{\lambda}_k) - \regLagrangian(\pi_k, \regLambdaStar)$. Moreover, recall $\Phi_k = \kl_k + \frac{1}{2} \norm{\bm{\lambda}_k - \regLambdaStar}^2$, thus by \cref{eq:rpg-i-bound-finite,eq:rpg-ii-bound-finite},
    \begin{align*}
        \Phi_{k+1} =& \kl_{k+1} + \frac{1}{2} \norm{\bm{\lambda}_{k+1} - \regLambdaStar}^2\\
        \leq& (1-\eta\tau) \kl_k + \frac{\eta^2}{2} \dTauLambda - \eta(i) + (1-\eta\tau) \frac{\norm{\bm{\lambda}_k - \regLambdaStar}^2}{2} + \frac{\eta^2}{2} \dTauLambdaPrime - \eta(ii) \tag{\cref{eq:rpg-i-bound-finite,eq:rpg-ii-bound-finite}}\\
        \leq& (1-\eta\tau) \Phi_k + \eta^2(\dTauLambda + \dTauLambdaPrime)\tag{Def. $\Phi_k$} - \eta \round{(i) + (ii)} \\
        \leq& (1-\eta\tau) \Phi_k + \eta^2(\dTauLambda + \dTauLambdaPrime) - \eta \round{\regLagrangian(\regPiStar, \bm{\lambda}_k) - \regLagrangian(\pi_k, \regLambdaStar)} \tag{\cref{eq:pd-decomp-finite}}\\
        \leq& (1-\eta\tau) \Phi_k + \eta^2(\dTauLambda + \dTauLambdaPrime) \tag{as $\regLagrangian(\regPiStar, \bm{\lambda}_k) - \regLagrangian(\pi_k, \regLambdaStar) \geq 0$}.
    \end{align*}
    Finally, the claimed bound follows by noting that 
    \begin{align*}
        &\dTauLambda + \dTauLambdaPrime \\
        =& HA^{1/2}\exp\round{\eta H \round{1 + \lambdaMax I + \tau\log(A)}} \round{ 2H^2 \round{1 + I \lambdaMax + \tau \log(A)}^2 + 2\tau^2 (64/e^2) } + I(H + \tau \lambdaMax)^2\\
        \leq& \tilde{O} \round{\lambdaMax^2 H^3A^{1/2}I^2 \exp\round{\eta H \round{1 + \lambdaMax I + \log(A)}} + I(H + \tau \lambdaMax)^2},
    \end{align*}
    as $\tau \leq 1$ and $\lambdaMax\geq H\Xi^{-1}\geq 1$.
\end{proof}

We can use the following result to turn the convergence of the potential function into an error bound. We will then choose the optimal values for $\lambdaMax$, $\tau$, and $\eta$.

\potToRegretFinite*

\begin{proof}
    (1) We bound the objective optimality gap. First, decompose it as
    \begin{align}
        \valStart{r}{\piStar} - \valStart{r}{\pi_k} = \underbrace{ \valStart{r}{\piStar} -  \valStart{r}{\regPiStar}}_{(i)} + \underbrace{\valStart{r}{\regPiStar} - \valStart{r}{\pi_k}}_{(ii)}.
    \end{align}
    We bound (ii) as follows:
    \begin{align*}
        (ii) =& \valStart{r}{\regPiStar} - \valStart{r}{\pi_k}\\
        =& \sum_{s,a,h} \occ{h}{\regPiStar}{s} \round{ \regPiStarH{h}(a|s) - \pi_{k,h}(a|s) } \qval{r,h}{\pi_k}{s,a}
        \tag{\cref{lem:pdl-finite}}\\
        \leq& H \sum_{s,h} \occ{h}{\regPiStar}{s} \norm{\regPiStarH{h}(\cdot|s) - \pi_{k,h}(\cdot|s)}_1\\
        \leq& H \sum_{s,h} \occ{h}{\regPiStar}{s} \sqrt{2\kl_{k,h}(s)}
        \tag{by Pinsker's} \\ 
        \leq& H^2 \sqrt{2 \sum_{s,h} \frac{1}{H}\occ{h}{\regPiStar}{s}\kl_{k,h}(s)}
        \tag{by Jensen's} \\ 
        =& H^{3/2} \sqrt{2 \kl_k}.
    \end{align*}
    We next bound term (i). By \cref{lem:sp-reg-cmdp-finite} with $\pi=\piStar$ we have 
    \begin{align*}
        \valStart{r}{\piStar} - \tau \entropy(\regPiStar) \leq \valStart{r}{\regPiStar} + \sum_{i} \regLambdaStarI{i} \round{\valStart{g_i}{\regPiStar} - \valStart{g_i}{\piStar}}.
    \end{align*}
    By \cref{lem:sp-reg-cmdp-finite} with $\bm{\lambda}=\bm{0}$ we have 
    \begin{align*}
        \sum_{i} \regLambdaStarI{i} \valStart{g_i}{\regPiStar} \leq 0.
    \end{align*}
    Moreover, $\valStart{g_i}{\piStar} \geq 0$ by feasibility and $\regLambdaStarI{i} \geq 0$. Combing these inequalities, we find 
    \begin{align}
        (i) =& \valStart{r}{\piStar} -  \valStart{r}{\regPiStar} \leq \tau \entropy(\regPiStar) \leq \tau H\log(A),
    \end{align}
    which concludes the proof for the objective optimality gap.

    (2) Let $i \in [I]$. We now bound the $i$-th constraint violation. First, decompose it as
    \begin{align}
        c_i - \valStart{u_i}{\pi_k} = - \valStart{g_i}{\pi_k} = \underbrace{- \valStart{g_i}{\regPiStar}}_{(iii)} + \underbrace{\valStart{g_i}{\regPiStar} - \valStart{g_i}{\pi_k}}_{(iv)}
    \end{align}
    We first bound (iv). The same calculation as for the objective optimality gap (1) shows
    \begin{align}
        (iv) =& \valStart{g_i}{\regPiStar} - \valStart{g_i}{\pi_k} \leq H^{3/2} \sqrt{2\kl_k}.
    \end{align}
    We next bound term (iii). Recall $\Lambda = [0,\lambdaMax]^I$. 
    \cref{lem:sp-reg-cmdp-finite} with $\pi = \piStar$ and $\bm{\lambda} \in \Lambda$ as
    \begin{align*}
        \lambda_j := \begin{cases}
            0 \quad (j \neq i)\\ 
            \lambdaMax \quad (j = i)
        \end{cases}
    \end{align*}
    yields
    \begin{align*}
         \valStart{r}{\piStar} + \sum_{j} \regLambdaStarI{j} \valStart{g_j}{\piStar} \leq& \valStart{r}{\regPiStar} + \lambdaMax \valStart{g_i}{\regPiStar} + \frac{\tau}{2} \lambdaMax^2 + \tau \entropy(\regPiStar)
    \end{align*}
    From \cref{lem:sp-cmdp-finite} (with $\pi=\regPiStar$) we get
    \begin{align*}
        \valStart{r}{\regPiStar} - \valStart{r}{\piStar} \leq \sum_{j} \lambdaStarI{j} \round{ \valStart{g_j}{\piStar} - \valStart{g_j}{\regPiStar} }.
    \end{align*}
    Adding the two previous inequalities and canceling terms, we get 
    \begin{align*}
        0 \leq \sum_{j} \regLambdaStarI{j} \valStart{g_j}{\piStar} \leq& \lambdaMax \valStart{g_i}{\regPiStar} + \frac{\tau}{2} \lambdaMax^2 + \sum_{j} \lambdaStarI{j} \round{ \valStart{g_j}{\piStar} - \valStart{g_j}{\regPiStar} } + \tau \entropy(\regPiStar),
    \end{align*}
    where the first inequality holds since $0 \leq \valStart{g_j}{\piStar}$ by feasibility and $\regLambdaStar\geq \bm{0}$. Rearranging this shows
    \begin{align*}
        - \valStart{g_i}{\regPiStar} \leq& \frac{\tau}{2} \lambdaMax + \frac{1}{\lambdaMax} \sum_{j} \lambdaStarI{j} \round{ \valStart{g_j}{\piStar} - \valStart{g_j}{\regPiStar} } + \frac{1}{\lambdaMax} \tau \entropy(\regPiStar)\\
        =& \frac{\tau}{2} \lambdaMax + \frac{1}{\lambdaMax} \sum_{j} \lambdaStarI{j} \round{ \valStart{u_j}{\piStar} - \valStart{u_j}{\regPiStar} } + \frac{1}{\lambdaMax} \tau \entropy(\regPiStar) \tag{$\bm{g}=\bm{u}-\frac{1}{H}\bm{c}$}\\
        \leq& \frac{\tau}{2} \lambdaMax + \frac{1}{\lambdaMax} \norm{\lambdaStar}_1 H + \frac{1}{\lambdaMax} \tau \entropy(\regPiStar) \tag{Hölder's}\\
        \leq& \frac{\tau}{2} \lambdaMax + \frac{1}{\lambdaMax} \round{\frac{H^2}{\Xi} + \tau \entropy(\regPiStar)} \tag{\cref{lem:dual-bound-finite}}\\
        \leq& \frac{\tau}{2} \lambdaMax + \frac{1}{\lambdaMax}\round{\frac{H^2}{\Xi} + \tau H\log(A)}.
    \end{align*}
\end{proof}

Finally, we are ready to prove last-iterate convergence by combining the previous two lemmas.

\lastIterate*

\begin{proof}
    The bound follows from \cref{lem:potential-finite} and \cref{lem:pot-to-regret-finite}. We choose $\tau = \epsilon^2$, $\eta = (H^2 I \log(A))^{-1}\Xi \epsilon^6$, $\lambdaMax = \frac{H}{\Xi} \epsilon^{-1} \geq \frac{H}{\Xi}$. Set $\Delta_r(k) := \rectangular{\valStart{r}{\piStar} - \valStart{r}{\pi_k}}_+$ and $\Delta_{g_i}(k) := \rectangular{- \valStart{g_i}{\pi_k}}_+$.

    We first consider the suboptimality for the reward. Plugging \cref{lem:potential-finite} into \cref{lem:pot-to-regret-finite} we find, using $\sqrt{a+b} \leq \sqrt{a} + \sqrt{b}$ and $1+x\leq \exp(x)$,
    \begin{align*}
        \Delta_r(k) \leq& H^{3/2}\Phi_1^{1/2} \exp\round{-\eta\tau k/2} \tag{a}\\
        &+ H^{3/2} \round{\frac{\eta}{\tau}}^{1/2} \tilde{O}(\cTauLambda^{1/2}) \tag{b}\\
        &+ \tau H\log(A). \tag{c}
    \end{align*}
    For (b), note that, using the definitions of $\eta$, $\tau$, $\lambdaMax$ (and taking $\sqrt{\cdot}$, and $\tau < 1$)
    \begin{align*}
        \cTauLambda^{1/2} \leq& \lambdaMax H^{3/2}A^{1/4}I \exp\round{\eta H \round{1 + \lambdaMax I + \log(A)}/2} + I^{1/2}(H + \tau \lambdaMax)\\
        \leq& \lambdaMax H^{3/2}A^{1/4}I \exp\round{2} + I^{1/2}(H + \tau \lambdaMax)\\
        =& \epsilon^{-1} \cdot H^{5/2}A^{1/4}I \Xi^{-1} \exp\round{2} + I^{1/2}H + I^{1/2} \epsilon^2 H\Xi^{-1}\epsilon^{-1})\\
        \lesssim& \epsilon^{-1} \cdot H^{5/2}A^{1/4}I \Xi^{-1}.
    \end{align*}
    Since 
    \begin{align*}
        \round{\frac{\eta}{\tau}}^{1/2} = (H^2I \log(A))^{-1/2}\Xi^{1/2} \epsilon^{(6 - 2)/2} = (H^2I\log(A))^{-1/2}\Xi^{1/2}\epsilon^{2},
    \end{align*}
    we thus have
    \begin{align*}
        (b) = H^{3/2} \round{\frac{\eta}{\tau}}^{1/2} \cTauLambda \lesssim H^{3}I^{1/2}A^{1/4} \Xi^{-1/2} \epsilon = \text{poly}(A,H,I, \Xi^{-1}) \cdot \epsilon.
    \end{align*}
    Similarly,
    \begin{align*}
        (c) = \tau H\log(A) =  H \log(A) \epsilon^2.
    \end{align*}
    For (a), using the standard inequality $e^{-x} \leq 1-x/2$ (if $0 \leq x \leq 1$) with $x := \eta\tau/2$, we first find 
    \begin{align*}
        \exp(-\eta\tau l/2) \leq (1 - \eta\tau /4)^l
    \end{align*}
    and hence,
    \begin{align*}
        (a) = H^{3/2}\Phi_1^{1/2} \exp\round{-\eta\tau k/2} \leq& H^{3/2} \Phi_1^{1/2} \cdot \frac{1}{k} \sum_{l=1}^k \exp\round{-\eta\tau l/2} \\
        \leq& H^{3/2}\Phi_1^{1/2} \cdot \frac{1}{k} \sum_{l=1}^k (1 - \eta\tau /4)^l\\
        \leq& H^{3/2}\Phi_1^{1/2} \cdot \frac{1}{k} \sum_{l=1}^{\infty} (1 - \eta\tau /4)^l\\
        =& H^{3/2}\Phi_1^{1/2} \cdot \frac{1}{k} \frac{4}{\eta\tau}\\
        =& H^{3/2}\Phi_1^{1/2} \frac{1}{k} \frac{4}{(H^2I \log(A))^{-1}\Xi \epsilon^6 \epsilon^2}\\
        =& 4H^{7/2} I \Xi^{-1} \log(A) \frac{1}{k} \Phi_1^{1/2} \epsilon^{-8}.
    \end{align*}
    Furthermore, since $\pi_1$ plays actions uniformly at random and $\bm{\lambda_1}=\bm{0}$, we have $\Phi_1^{1/2} \leq (H\log(A) + \frac{1}{2}I\lambdaMax^2)^{1/2} \leq H^{1/2}\log(A)^{1/2} + I^{1/2}\lambdaMax = H^{1/2}\log(A)^{1/2} + I^{1/2}H\Xi^{-1} \epsilon^{-1}$. Hence, the calculation above shows
    \begin{align*}
        (a) \leq& 4H^{7/2} I \Xi^{-1} \log(A) \frac{1}{k} (H^{1/2}\log(A)^{1/2} + I^{1/2}H\Xi^{-1} \epsilon^{-1}) \epsilon^{-8} \leq \text{poly}(A,H,I,\Xi^{-1}) \epsilon
    \end{align*}
    for $k = \Omega(\epsilon^{-10})$. Hence, summing up terms (a) to (c) and choosing $k = \Omega(\text{poly}(A,H,I,\Xi^{-1}) \epsilon^{-10})$ yields the bound for the objective.

    Next, we consider the regret for the constraints. Plugging \cref{lem:potential-finite} into \cref{lem:pot-to-regret-finite} we find, using $\sqrt{a+b} \leq \sqrt{a} + \sqrt{b}$,
    \begin{align*}
        \Delta_{u_i}(k) \leq& H^{3/2}\Phi_1^{1/2} \exp\round{-\eta\tau k/2} \tag{a'}\\
        &+ H^{3/2} \round{\frac{\eta}{\tau}}^{1/2} \tilde{O}( \cTauLambda^{1/2}) \tag{b'}\\
        &+ \tau \lambdaMax + \frac{1}{\lambdaMax}H^2 \Xi^{-1} \tag{c'}\\
        &+ \frac{1}{\lambdaMax} \tau H\log(A). \tag{d'}
    \end{align*}
    Note that terms (a'), (b') are identical to (a), (b). Moreover, for (d') we have
    \begin{align*}
        (d') = \frac{1}{\lambdaMax} \tau\log(A) =& \Xi H^{-1} \log(A) \epsilon^3.
    \end{align*}
    Finally, for (c'), we have 
    \begin{align*}
        (c') =& \tau \lambdaMax + \frac{1}{\lambdaMax}H^2 \Xi^{-1}\\
        =& \epsilon^2 \cdot H\Xi^{-1} \epsilon^{-1} + H^{-1}\Xi \epsilon \cdot H^2 \Xi^{-1}\\
        =& H(1 + \Xi^{-1}) \epsilon.
    \end{align*}
    Thus, summing up (a') to (d') and choosing $k = \Omega(\text{poly}(A,H,I,\Xi^{-1})\epsilon^{-10})$ yields the bound for the constraints.
\end{proof}

\section{Properties of the Optimistic Model} \label{app:model}

In this section, we establish important properties of the model \cref{algo:rpg-pd-finite-learn} builds.

\subsection{Building the Model} \label{sec:building-model}

First, we describe the exact model and how we perform policy evaluation.

We follow \citet{shani2020optimistic} for the optimistic exploration, but we also take the $I$ constraint functions $\bm{u}$ into account rather than just the reward function $r$. We also need to pay special attention to the auxiliary term $\psi_k$.

For all $s,a,h$ and $k\in[K]$, let $\counter := \sum_{l=1}^{k-1} \indic_{\{ s_h^l=s,~ a_h^l = a \}}$ count the number of times that the state-action pair $(s,a)$ has been visited at step $h$ before episode $k$. Here, ($s_h^l$, $a_h^l$) denotes the state-action pair visited at step $h$ in episode $l$. First, we compute the empirical averages of the reward and transition probabilities as follows:
\begin{align*}
    \rBar_{k-1,h}(s,a) :=& \frac{\sum_{l=1}^{k-1} R_{h}^{l}(s,a) \indic_{\{ s_h^l=s,~ a_h^l = a \}}}{\counterMax}, \\
    \uBar_{k-1,i,h}(s,a) :=& \frac{\sum_{l=1}^{k-1} U_{i,h}^{l}(s,a) \indic_{\{ s_h^l=s,~ a_h^l = a \}}}{\counterMax} \quad (\forall i\in [I]), \\
    \pBar_{k-1,h}(s'|s,a) :=& \frac{\sum_{l=1}^{k-1} \indic_{\{ s_h^l=s,~ a_h^l = a,~ s_{h+1}^l=s' \}}}{\counterMax},
\end{align*}
where $a \vee b := \max\{a,b\}$. We consider optimistic estimates $\rHat_k$, $\bm{\uHat}_k$, $\pHat_k$:
\begin{align*}
    \rHat_{k,h}(s,a) &:= \rBar_{k-1,h}(s,a) + b_{k-1,h}(s,a), \nonumber\\
    \uHat_{k,i,h}(s,a) &:= \uBar_{k-1,i,h}(s,a) + b_{k-1,h}(s,a) \quad (\forall i\in [I]),\\
    \pHat_{k,h}(s' | s,a) &:= \pBar_{k-1,h}(s' | s,a),
\end{align*}
with the bonuses $b_{k-1,h}(s,a) = b^r_{k-1,h}(s,a) + b^p_{k-1,h}(s,a)$ specified below. For $\psi_k$, we take\footnote{In other words, there is no bonus for the function, only for the transitions. This is because $\psi_k$ is known in episode $k$, and so the only uncertainty in the corresponding value function is due to estimating the transitions $p$. Note that the extra $\log(A)$ factor corrects for the fact that $\psi_k$ is not a function to $[0,1]$.}
\begin{align*}
    \psiHat_{k,h}(s,a) := \psi_{k,h}(s,a) + b^p_{k-1,h}(s,a) \log(A).
\end{align*}
For notational convenience, we write 
\begin{align*}
    z_k := r + \bm{\lambda}_k^T \bm{u} + \tau \psi_k, \quad \quad \quad \zHat_k := \rHat_{k} + \bm{\lambda}_k^T\bm{\uHat}_k + \tau \psiHat_k
\end{align*}
for the reward function mimicking the $\pi$-dependency of the regularized Lagrangian at $(\pi_k, \bm{\lambda}_k)$.

For any $\delta \in (0,1)$, we specify the correct bonuses to obtain our regret guarantees with probability at least $1- \delta$:
\begin{align*}
    b_{k-1,h}(s,a) :=& b^r_{k-1,h}(s,a) + b^p_{k-1,h}(s,a),
\end{align*}
where
\begin{align*} 
    b^r_{k-1,h}(s,a) := \sqrt{\frac{\frac{1}{2}\log\round{ \frac{2SAH(I+1)K}{\delta'} }}{\counterMax}}, \quad
    b^p_{k-1,h}(s,a) := H \sqrt{\frac{2S + 2\log\round{ \frac{SAHK}{\delta'} }}{\counterMax}}. 
\end{align*} 
For $\psi_k$, recall
\begin{align*}
    \psiHat_{k,h}(s,a) := \psi_{k,h}(s,a) + b^p_{k-1,h}(s,a) \log(A).
\end{align*}

We define the \emph{truncated value functions}\footnote{Importantly, note that this is the definition of $\qvalHat{\zHat_k,h}{k}{s,a}$, rather than running truncated policy evaluation on $z_k$.}
\begin{align}
    \qvalHat{\zHat_k,h}{k}{s,a} :=& \qvalHat{\rHat_k,h}{k}{s,a} + \sum_{i} \lambda_{k,i} \qvalHat{\uHat_{k,i},h}{k}{s,a} + \tau \qvalHat{\psiHat_k,h}{k}{s,a} \label{eq:def-qk},\\
    \valHat{\zHat_k,h}{k}{s} :=& \dotProd{\pi_{k,h}(\cdot|s)}{\qvalHat{\zHat_k,h}{k}{s,\cdot}}, \label{eq:def-vk}
\end{align}
where we compute $\qvalHat{\rHat_k,h}{k}{s,a}$, $\qvalHat{\uHat_{k,i},h}{k}{s,a}$, $\qvalHat{\psiHat_k,h}{k}{s,a}$ via truncated policy evaluation with respect to the estimated model, see \cref{algo:trun-eval}.
\begin{algorithm}[H]
    \begin{algorithmic}
        \small
        \STATE{Initialize $\valHat{\rHat_k,H+1}{k}{s}=\valHat{\uHat_{k,i},H+1}{k}{s}=\valHat{\psiHat_{k},H+1}{k}{s} =0$ (for $s\in\calS$)}
        \FOR{$h=H, \dots, 1$}
            \FOR{$(s,a) \in \calS \times \calA$}
                    \STATE{Truncated DP step:}
                    \begin{align*}
                        &\qvalHat{\rHat_k,h}{k}{s,a} := \min\curly{H-h+1,~ \rHat_{k,h}(s,a) + \dotProd{\pHat_{k,h}(\cdot|s,a)} {\valHat{\rHat_k,h+1}{k}{\cdot}}} \\
                        &\qvalHat{\uHat_{k,i},h}{k}{s,a} := \min\curly{H-h+1,~ \uHat_{k,i,h}(s,a) + \dotProd{\pHat_{k,h}(\cdot|s,a)} {\valHat{\uHat_{k,i},h+1}{k}{\cdot}}} \quad (\forall i \in [I])\\
                        &\qvalHat{\psiHat_k,h}{k}{s,a} := \min\curly{\psi_{k,h}(s,a) + (H-h+1)\log(A),~ \psiHat_{k,h}(s,a) + \dotProd{\pHat_{k,h}(\cdot|s,a)} {\valHat{\psiHat_k,h+1}{k}{\cdot}}}
                    \end{align*}
                    \STATE{Retrieve $V$-function:}
                    \begin{align*}
                        \valHat{\rHat_k,h}{k}{s} :=& \dotProd{\pi_{k,h}(\cdot|s)}{\qvalHat{\rHat_k,h}{k}{s,\cdot}} \\
                        \valHat{\uHat_{k,i},h}{k}{s} :=& \dotProd{\pi_{k,h}(\cdot|s)}{\qvalHat{\uHat_{k,i},h}{k}{s,\cdot}} \quad (\forall i \in [I]) \\
                        \valHat{\psiHat_k,h}{k}{s} :=& \dotProd{\pi_{k,h}(\cdot|s)}{\qvalHat{\psiHat_k,h}{k}{s,\cdot}}
                    \end{align*}
            \ENDFOR
        \ENDFOR
    \OUTPUT{$\qvalHat{\zHat_k}{k}{\cdot} := \qvalHat{\rHat_k}{k}{\cdot} +\sum_i \lambda_{k,i}\qvalHat{\uHat_{k,i}}{k}{\cdot} + \tau \qvalHat{\psiHat_k}{k}{\cdot}$, and $(\valHat{\uHat_{k,i}}{k}{s_1,1})_i$}
    \end{algorithmic}
    \caption{\textsc{Eval} (Truncated Policy Evaluation)} \label{algo:trun-eval}
\end{algorithm}
The main reason for truncating during the otherwise standard policy evaluation algorithm is the need for a bonus-independent upper bound on the surrogate value functions so that \cref{lem:optimism-val-z} holds.\footnote{In fact, truncation is only required for the update of $\pi$, and for the update of $\bm{\lambda}$, we can use either truncated or exact values.} Clearly, the truncated value functions are all lower bounded by zero and upper bounded by the actual value functions under the estimated model. Finally, note that these truncated value functions need not correspond to the true value function of a policy in some MDP. 

Recall the truncated value functions from \cref{algo:trun-eval} in \cref{sec:building-model}. Note that due to the separate definition of $\qvalHat{\zHat_k,h}{k}{s,a}$ and the updates in \cref{algo:trun-eval}, for all $s\in\calS$, $h\in[H]$,
\begin{align}
    \valHat{\zHat_k,h}{k}{s} =& \dotProd{\pi_{k,h}(\cdot|s)}{\qvalHat{\zHat_k,h}{k}{s,\cdot}} \tag{by def.} \nonumber\\
    =& \dotProd{\pi_{k,h}(\cdot|s)}{\qvalHat{\rHat_k,h}{k}{s,a} + \sum_{i} \lambda_{k,i} \qvalHat{\uHat_{k,i},h}{k}{s,a} + \tau \qvalHat{\psiHat_k,h}{k}{s,a}} \tag{by def.}\nonumber\\
    =& \dotProd{\pi_{k,h}(\cdot|s)}{\qvalHat{\rHat_k,h}{k}{s,a}} + \sum_{i} \lambda_{k,i} \dotProd{\pi_{k,h}(\cdot|s)}{\qvalHat{\uHat_{k,i},h}{k}{s,a}} \nonumber\\
    &+ \tau\dotProd{\pi_{k,h}(\cdot|s)}{\qvalHat{\psiHat_k,h}{k}{s,a}}\nonumber \\
    =& \valHat{\rHat_k,h}{k}{s} + \sum_{i} \lambda_{k,i} \valHat{\uHat_{k,i},h}{k}{s} + \tau\valHat{\psiHat_k,h}{k}{s}. \label{eq:trun-v-lin}
\end{align}
Similarly, by linearity of expectation
\begin{align}
    \val{z_k,h}{\pi_k}{s} = \val{r_k,h}{\pi_k}{s} + \sum_{i} \lambda_{k,i} \val{u_{k,i},h}{\pi_k}{s} + \tau\val{\psi_k,h}{\pi_k}{s} \label{eq:v-lin}
\end{align}
for the true value functions.

\subsection{Properties of the Model} \label{sec:model-props}

We are now ready to establish the properties of the model. In particular, we will show that it is optimistic with respect to the value function and prove bounds on the estimation error during the learning procedure.

\paragraph{Success Event}

We will condition our regret analysis on a \textit{success event} $G$, which we formally define below. Fix $\delta> 0$, and construct the estimated model as in \cref{sec:building-model}. $G$ ensures that (a) the optimistic reward estimates are in fact optimistic and (b) the true transitions are close to the estimated ones, i.e.:
\begin{align*}
     r \leq& \rHat_k, \\
     u_i \leq& \uHat_{k,i} \quad (\forall i\in [I]),\\
    \norm{p_h(\cdot | s,a) - \pBar_{k-1,h}(\cdot | s,a)}_1 H \leq& b^p_{k-1,h}(s,a) \quad (\forall s,a,h),
\end{align*}
for every episode $k\in [K]$. Formally, with $\delta':= \delta/3$, define the \emph{failure events} 
\begin{align*}
    F_k^r :=& \curly{\exists s,a,h \colon \abs{r_h(s,a) - \rBar_{k-1,h}(s,a)} \geq b^r_{k-1,h}(s,a)}, \\
    F_k^{u} :=& \curly{\exists i,s,a,h \colon \abs{u_{i,h}(s,a) - \uBar_{k-1,i,h}(s,a)} \geq b^r_{k-1,h}(s,a)}, \\
    F_k^{p} :=& \curly{\exists s,a,h \colon \norm{p_h(\cdot | s,a) - \pBar_{k-1,h}(\cdot | s,a)}_1 H \geq b^p_{k-1,h}(s,a)}, \\
    F_k^{n} :=& \curly{\exists s,a,h \colon \counter \leq \frac{1}{2} \sum_{j<k} \occ{h}{\pi_j}{s,a} -H\log\round{\frac{SAH}{\delta'}}},
\end{align*}
where $\occ{h}{\pi_j}{s,a}$ refers to the occupancy measure (\cref{app:lagrangian}), and 
\begin{align*}
    F^r :=& \round{\bigcup_{k=1}^K F_k^r} \bigcup \round{\bigcup_{k=1}^K F_k^u}, \\
    F^p :=& \bigcup_{k=1}^K F_k^p, \\
    F^n :=& \bigcup_{k=1}^K F_k^n.
\end{align*}
We define the \emph{success event} $G$ as the complement of $F^r \cup F^p \cup F^n$, i.e.
\begin{align*}
    G := \overline{F^r \cup F^p \cup F^n}.
\end{align*}

We now show that this event holds with high probability. The proof of this theorem relies on standard concentration bounds (specifically, Hoeffding for the rewards and $L1$-concentration for the transitions) and a union bound over all involved indices.
\begin{restatable}[Success event]{lemma}{succEvent} \label{lem:succ-event}
    Let $\delta > 0$ and define the bonuses accordingly. Suppose that for all $k\in [K]$, in episode $k$, policy $\pi_k$ is played. Then $P[G] \geq 1- \delta$.
\end{restatable}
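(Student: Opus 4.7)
The plan is to apply a union bound over the three failure events,
\[
P[G^c] \leq P[F^r] + P[F^p] + P[F^n],
\]
and show that each contribution is at most $\delta' = \delta/3$. I would split $F^r$ into its reward and constraint-reward parts, and budget $\delta'/(I+1)$ and $I\delta'/(I+1)$ to the two respectively.

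\emph{Controlling $F^r$.} For each fixed $(s,a,h)$ (and each $i \in [I]$ in the constraint-reward case), the quantity $\rBar_{k-1,h}(s,a)$ (resp.\ $\uBar_{k-1,i,h}(s,a)$) is the average of $N := \counter$ i.i.d.\ samples in $[0,1]$ with mean $r_h(s,a)$ (resp.\ $u_{i,h}(s,a)$). Since $N$ is a random integer in $\{0,\ldots,K-1\}$, I would condition on $N=n$, apply Hoeffding's inequality, and then union-bound over $n=1,\ldots,K$. The choice of $b^r_{k-1,h}(s,a)$ is calibrated so that $2n(b^r(n))^2 = \log(2SAH(I+1)K/\delta')$, so each $(n,i,s,a,h)$-term contributes at most $\delta'/[SAH(I+1)K]$; summing over $n\leq K$ and then over the $(I+1)SAH$ index tuples yields $P[F^r] \leq \delta'$.

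\emph{Controlling $F^p$.} The argument is analogous, but uses an $L_1$ concentration inequality for empirical categorical distributions (Weissman et al., 2003): $P[\|\pBar_n - p\|_1 \geq t] \leq (2^S-2)\exp(-nt^2/2)$. With $t = b^p_{k-1,h}(s,a)/H$ and the stated $b^p$, the exponent is $-n t^2/2 + \log(2^S - 2) \leq -\log(SAHK/\delta')$, so each failure term is at most $\delta'/(SAHK)$; summing over $n=1,\ldots,K$ (for the random count) and over $(s,a,h)$ gives $P[F^p] \leq \delta'$.

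\emph{Controlling $F^n$.} For each fixed $(s,a,h)$, set $X_l := \indic_{\{s_h^l=s,\,a_h^l=a\}}$ and $E_l := \occ{h}{\pi_l}{s,a}$; then $\mathbb{E}[X_l \mid \mathcal{F}_{l-1}] = E_l$ and $\mathrm{Var}(X_l \mid \mathcal{F}_{l-1}) \leq E_l$. Applying a martingale Bernstein / Freedman inequality (cf.\ Dann et al., 2017) to the differences $X_l - E_l$, together with a union bound or peeling over $k=1,\ldots,K$, yields the anytime lower bound $\counter \geq \tfrac{1}{2}\sum_{l<k} E_l - H\log(SAH/\delta')$ with probability at least $1-\delta'/(SAH)$; a final union bound over $(s,a,h)$ gives $P[F^n] \leq \delta'$.

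\emph{Main obstacle.} The principal difficulty is that the effective sample size $\counter$ is itself a random variable, preventing a direct application of Hoeffding or Weissman at a deterministic index. My remedy is a union bound over $n=1,\ldots,K$, which is affordable precisely because the bonus radii $b^r$ and $b^p$ carry a $\log(SAHK/\delta')$ factor. The counter event $F^n$ is the most delicate point: it is not an i.i.d.\ concentration but requires a martingale Bernstein argument adapted to the fact that $\pi_l$ is chosen sequentially based on the history, with the variance bound $\mathrm{Var}(X_l \mid \mathcal{F}_{l-1}) \leq E_l$ being the key to obtaining a multiplicative (rather than additive) lower bound on the visit counts.
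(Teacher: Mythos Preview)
Your proposal is correct and essentially matches the paper's own proof: Hoeffding with a union bound over $(s,a,h,i)$ and the random sample size for $F^r$, the Weissman et al.\ $L_1$ concentration inequality for $F^p$, and the Dann et al.\ martingale/Bernstein lower bound on visitation counts for $F^n$, combined by a union bound over the three events. You supply more of the numerology than the paper does (in particular the explicit bookkeeping over $n\le K$ and the $(I{+}1)SAH$ index tuples), but the strategy and ingredients are identical.
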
 

\begin{proof}
    By Hoeffding's for any possible realization of $\counter$ (and total probability), we have $P[F^r] \leq \delta'$ after union bound over all indices $s$, $a$, $h$ and all episodes $k$. For $\counter=0$ the bound holds trivially.
    
    By the $L1$ concentration bound of \citet[Theorem 2.1]{weissman2003inequalities}, for any possible realization of $\counter$ (and total probability), we have $P[F^p] \leq \delta'$ after union bound over all indices $s$, $a$, $h$ and all episodes $k$. For $\counter=0$ the bound holds trivially.

    By \citet[Corollary E.4]{dann2017unifying}, we also have $P[F^n] \leq \delta'$.

    We conclude by union bound over the three events.
\end{proof}

\paragraph{Decomposition via Extended Value Difference Lemma} The following lemma allows us to decompose the instantaneous regret into three terms that we will bound separately.

\begin{restatable}[Decomposition via simulation lemma]{lemma}{regretDecomp} \label{lem:regret-decomp}
    We have the following decomposition:
    \begin{align*}
        &\valStart{z_k}{\regPiStar} - \valStart{z_k}{\pi_k}\\
        =& \valStartHat{\zHat_k}{k} - \valStart{z_k}{\pi_k} \tag{a}\\
        +& \sum_h \exptn \rectangular{ \left\langle \qvalHat{\zHat_k,h}{k}{s_h,\cdot},  \regPiStarH{h}(\cdot|s_h) - \pi_{k,h}(\cdot|s_h) \right\rangle \biggVert s_1, \regPiStar, p} \tag{b}\\ 
        +& \sum_h \exptn \rectangular{ - \qvalHat{\zHat_k,h}{k}{s_h,a_h} + z_{k,h}(s_h,a_h) + \left\langle p_h(\cdot | s_h,a_h), \valHat{z_k, h+1}{k}{\cdot} \right\rangle \biggVert s_1, \regPiStar, p} . \tag{c}
    \end{align*}
\end{restatable}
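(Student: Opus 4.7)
The claim is a variant of the extended value difference (``simulation'') lemma, so my plan is the standard telescoping--plus--policy-mismatch derivation, with a little care because $\valHat{\zHat_k}{k}$ is produced by truncated dynamic programming (\cref{algo:trun-eval}) and need not satisfy any Bellman equation under $\pHat_k$.

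First, I would isolate term (a) by adding and subtracting $\valStartHat{\zHat_k}{k}$, reducing the claim to
\[
\valStart{z_k}{\regPiStar} - \valStartHat{\zHat_k}{k} \;=\; (b) + (c).
\]
For this, I would write $\valStart{z_k}{\regPiStar}=\sum_{h=1}^H \exptn[z_{k,h}(s_h,a_h)\mid s_1,\regPiStar,p]$ and telescope $\valHat{\zHat_k}{k}$ along the true rollout of $\regPiStar$: using $\valHat{\zHat_k,H+1}{k}{\cdot}\equiv 0$,
\[
-\valStartHat{\zHat_k}{k} \;=\; \sum_{h=1}^H \exptn\bigl[\valHat{\zHat_k,h+1}{k}{s_{h+1}} - \valHat{\zHat_k,h}{k}{s_h} \bigm| s_1,\regPiStar,p \bigr].
\]
Combining the two identities rewrites $\valStart{z_k}{\regPiStar} - \valStartHat{\zHat_k}{k}$ as a sum over $h$ of $\exptn[z_{k,h}(s_h,a_h) - \valHat{\zHat_k,h}{k}{s_h} + \valHat{\zHat_k,h+1}{k}{s_{h+1}}\mid s_1,\regPiStar,p]$.

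To extract (b), I would then expand $\valHat{\zHat_k,h}{k}{s_h}=\langle \pi_{k,h}(\cdot|s_h), \qvalHat{\zHat_k,h}{k}{s_h,\cdot}\rangle$ via \cref{eq:def-vk} and add/subtract $\qvalHat{\zHat_k,h}{k}{s_h,a_h}$. Because $a_h\sim\regPiStarH{h}(\cdot|s_h)$ under the conditioning $(s_1,\regPiStar,p)$, the inner conditional expectation gives $\exptn[\qvalHat{\zHat_k,h}{k}{s_h,a_h}\mid s_h]=\langle \regPiStarH{h}(\cdot|s_h), \qvalHat{\zHat_k,h}{k}{s_h,\cdot}\rangle$, so the combined term produces exactly the summand of (b). The remaining per-step quantity $z_{k,h}(s_h,a_h) - \qvalHat{\zHat_k,h}{k}{s_h,a_h} + \valHat{\zHat_k,h+1}{k}{s_{h+1}}$ becomes the summand of (c) once I apply the tower rule $\exptn[\valHat{\zHat_k,h+1}{k}{s_{h+1}}\mid s_h,a_h,\regPiStar,p]=\langle p_h(\cdot|s_h,a_h), \valHat{\zHat_k,h+1}{k}{\cdot}\rangle$, which is valid because $s_{h+1}$ is drawn from the \emph{true} transitions $p$.

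I do not expect any real obstacle: the argument is a purely algebraic telescoping. The one point to track carefully is that $\qvalHat{\zHat_k}{k}$ satisfies no exact Bellman relation under $\pHat_k$ because of the truncation in \cref{algo:trun-eval}; this is precisely why term (c) cannot be collapsed and must be left as the per-step ``pseudo-Bellman residual'' $z_{k,h}(s_h,a_h) + \langle p_h(\cdot|s_h,a_h), \valHat{\zHat_k,h+1}{k}{\cdot}\rangle - \qvalHat{\zHat_k,h}{k}{s_h,a_h}$. This residual couples reward mismatch, model error, and truncation, and will be bounded jointly by optimism arguments later in \cref{app:pd}.
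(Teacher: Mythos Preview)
Your proposal is correct and follows essentially the same route as the paper: add and subtract $\valStartHat{\zHat_k}{k}$ to isolate term (a), then decompose $\valStart{z_k}{\regPiStar}-\valStartHat{\zHat_k}{k}$ into the policy-mismatch term (b) and the Bellman-residual term (c). The only cosmetic difference is that the paper obtains (b)+(c) by a one-line citation of the extended value difference lemma (\cref{lem:ext-val-diff}) with $\pi=\pi_k$, $\pi'=\regPiStar$, $M=(\calS,\calA,\pHat_k,\zHat_k)$, $M'=(\calS,\calA,p,z_k)$, whereas you unroll that lemma's telescoping proof inline; the underlying algebra is identical.
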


\begin{proof}
    First, expand
    \begin{align*}
        \valStart{z_k}{\regPiStar} - \valStart{z_k}{\pi_k} = \round{\valStartHat{\zHat_k}{k} - \valStart{z_k}{\pi_k}} + \round{\valStart{z_k}{\regPiStar} - \valStartHat{\zHat_k}{k}}.
    \end{align*}
    Then apply \cref{lem:ext-val-diff} to $\pi = \pi_k$, $\pi' = \regPiStar$ and $M=(\calS,\calA,\pHat_k,\zHat_k)$, $M'=(\calS,\calA,p,z_k)$ to the second term (after multiplying both sides by $-1$).
\end{proof}

\paragraph{General On-Policy Bounds}

The following two results are standard and will allow us to bound the estimation errors during learning. Consider the setup in which policy $\pi_k$ is derived based on the previous episodes $1,\dots,k-1$, and then played in episode $k$. Recall that for all $s,a,h$ and $k\in[K]$, $\counter := \sum_{l=1}^{k-1} \indic_{\{ s_h^l=s,~ a_h^l = a \}}$ counts the visits of state-action pair $(s,a)$ at step $h$ before episode $k$. We write $\lesssim$ for asymptotic inequality up to polylogarithmic terms.

Note that in the following two lemmas, the exponent of $H$ is different from the one in the referenced proofs. This is because the referenced works consider the case of stationary transition probabilities, whereas we consider non-stationary dynamics. See \citet[Lemmas 18, 19]{shani2020optimistic}.

\begin{lemma}[Lemma 36, \citet{efroni2020exploration}] \label{aug-lem:l36-efroni} 
    Suppose for all $s$, $a$, $h$, $k \in [K]$, we have 
    \begin{align*}
        \counter &> \frac{1}{2} \sum_{j<k} \occ{h}{\pi_j}{s,a} - H \log\left( \frac{SAH}{\delta'}\right).
    \end{align*}
    Then for all $K' \leq K$
    \begin{align*}
       \sum_{k'=1}^{K'} \sum_{h=1}^{H} \E \left[ \frac{1}{\sqrt{n_{k'-1,h}(s_h^{k'},a_h^{k'})}} \mid \mathcal{F}_{k'-1} \right] \leq \tilde{O}( \sqrt{SAH^2K'} + SAH ),
    \end{align*}
    where $\mathcal{F}_{k'-1}$ is the $\sigma$-algebra induced by all random variables up to and including episode $k'-1$.
\end{lemma}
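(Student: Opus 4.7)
The approach is to exchange the conditional expectation for a deterministic sum over $(s,a)$ via the occupancy measure, replace the random visit counters by their pseudo-counts using the hypothesis, and then apply a telescoping $1/\sqrt{\cdot}$ inequality together with Cauchy--Schwarz. Concretely, since conditional on $\mathcal{F}_{k'-1}$ the pair $(s_h^{k'},a_h^{k'})$ is distributed according to the occupancy measure of the pre-committed policy $\pi_{k'}$,
\[
    \E\!\left[\frac{1}{\sqrt{n_{k'-1,h}(s_h^{k'},a_h^{k'})\vee 1}}\,\bigg|\,\mathcal{F}_{k'-1}\right] \;=\; \sum_{s,a}\frac{\occ{h}{\pi_{k'}}{s,a}}{\sqrt{n_{k'-1,h}(s,a)\vee 1}}.
\]

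Introducing the pseudo-counts $\bar N_{k,h}(s,a) := \sum_{j<k}\occ{h}{\pi_j}{s,a}$, I would split each summand into a \emph{large regime} where $\bar N_{k',h}(s,a) \geq 4H\log(SAH/\delta')$ and its complement. In the large regime, the hypothesis yields $n_{k'-1,h}(s,a)\vee 1 \geq \tfrac{1}{4}\bar N_{k',h}(s,a)$, so $1/\sqrt{n\vee 1} \leq 2/\sqrt{\bar N_{k',h}(s,a)}$; in the small regime I would just use the crude bound $1/\sqrt{n\vee 1}\leq 1$.

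For the large regime, I would apply the classical integral-type inequality $\sum_{k=1}^{K'} a_k/\sqrt{\sum_{j<k} a_j \vee 1} = O(\sqrt{\sum_{k\leq K'} a_k})$ with $a_k := \occ{h}{\pi_k}{s,a}\in[0,1]$, bounding the per-$(h,s,a)$ contribution by $O(\sqrt{\bar N_{K'+1,h}(s,a)})$. Summing over $(s,a)$ by Cauchy--Schwarz and using that $\sum_{s,a}\bar N_{K'+1,h}(s,a) \leq K'$ (since each $\occFn{h}{\pi}$ is a probability distribution on $\calS\times\calA$), then summing over $h\in[H]$, yields the leading term $\tilde O(\sqrt{SAH^2K'})$. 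For the small regime, each triple $(s,a,h)$ accumulates at most $O(H\log(SAH/\delta'))$ units of expected occupancy before crossing the threshold (because the increments lie in $[0,1]$), so after multiplying by the trivial bound $1$ and summing over $(s,a,h)$ and absorbing logarithmic factors, I obtain the additive $\tilde O(SAH)$ contribution.

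The main difficulty is the clean constant bookkeeping in the case split: the additive slack $H\log(SAH/\delta')$ in the hypothesis must be turned into a multiplicative factor on $\bar N$ in the large regime via the threshold choice $4H\log(SAH/\delta')$, and the small regime must be handled by a direct mass-accumulation argument rather than by extending the telescoping inequality down to vanishing pseudo-counts. Once these two pieces are reconciled, summing the regimes gives the claimed $\tilde O(\sqrt{SAH^2K'} + SAH)$ bound.
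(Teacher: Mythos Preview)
The paper does not supply its own proof here; it simply defers to \citet[Lemma~38]{efroni2019tight}. Your sketch is exactly the standard argument behind that reference: rewrite the conditional expectation as a deterministic sum via the occupancy measure, replace the empirical counts by the pseudo-counts $\bar N_{k,h}(s,a)$ using the hypothesis, telescope $a_k/\sqrt{\bar N_k}$ per $(s,a,h)$, and finish with Cauchy--Schwarz over $(s,a)$ and a sum over $h$. So approach and paper coincide.

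One small bookkeeping slip in your small-regime step: you assert the additive contribution is $\tilde O(SAH)$, but your own threshold is $4H\log(SAH/\delta')$ and there are $SAH$ triples, so the argument as written yields $SAH\cdot O(H\log(SAH/\delta'))=\tilde O(SAH^2)$, not $\tilde O(SAH)$. This is a lower-order term (dominated by $\sqrt{SAH^2K'}$ once $K'\gtrsim SAH^2$) and is immaterial for every downstream use in the paper; to match the stated $\tilde O(SAH)$ exactly one would need either a concentration hypothesis with slack $O(\log)$ rather than $O(H\log)$, or a more delicate treatment of the small regime than the crude $1/\sqrt{n\vee 1}\le 1$ bound.
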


\begin{proof}
    We refer to \citet[Lemma 38]{efroni2019tight} for a proof of the statement.
\end{proof}

\begin{lemma}[Lemma 37, \citet{efroni2020exploration}] \label{aug-lem:37efr} 
    Suppose for all $s$, $a$, $h$, $k \in [K]$, we have 
    \begin{align*}
        \counter &> \frac{1}{2} \sum_{j<k} \occ{h}{\pi_j}{s,a} - H \log\left( \frac{SAH}{\delta'}\right).
    \end{align*}
    Then for all $K' \leq K$
    \begin{align*}
       \sum_{k'=1}^{K'} \sum_{h=1}^{H} \E \left[ \frac{1}{n_{k'-1,h}(s_h^{k'},a_h^{k'})} \mid \mathcal{F}_{k'-1} \right] \leq \tilde{O}( SA H^2),
    \end{align*}
    where $\mathcal{F}_{k'-1}$ is the $\sigma$-algebra induced by all random variables up to and including episode $k'-1$.
\end{lemma}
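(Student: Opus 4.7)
My plan is to reduce the expected sum to a deterministic sum over occupancy measures, plug in the lower bound on $\counter$ provided by the hypothesis, and then for each fixed triple $(s,a,h)$ handle the resulting series by splitting the episodes into an early regime (where the expected cumulative visitation is too small for the lower bound to dominate the constant 1) and a late regime (where a harmonic / dyadic telescoping argument yields a logarithmic bound).

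First, I would condition on $\mathcal{F}_{k'-1}$: since $\pi_{k'}$ is $\mathcal{F}_{k'-1}$-measurable and the law of $(s_h^{k'},a_h^{k'})$ under the true transitions is exactly $\occ{h}{\pi_{k'}}{s,a}$, one gets
\begin{align*}
\E\!\left[\tfrac{1}{n_{k'-1,h}(s_h^{k'},a_h^{k'})\vee 1}\,\Big|\,\mathcal{F}_{k'-1}\right] \;=\; \sum_{s,a}\frac{\occ{h}{\pi_{k'}}{s,a}}{n_{k'-1,h}(s,a)\vee 1}.
\end{align*}
Swapping the outer sums, it then suffices to bound
$\displaystyle\sum_{s,a,h}\sum_{k'=1}^{K'}\frac{\occ{h}{\pi_{k'}}{s,a}}{n_{k'-1,h}(s,a)\vee 1}$
by $\tilde O(SAH^2)$.

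Next, I would invoke the hypothesis, writing $C:=H\log(SAH/\delta')$ and $N_{k'}^{s,a,h}:=\sum_{j\le k'}\occ{h}{\pi_j}{s,a}$, so that
\begin{align*}
n_{k'-1,h}(s,a)\vee 1 \;\ge\; \max\!\left(1,\ \tfrac{1}{2}N_{k'-1}^{s,a,h}-C\right).
\end{align*}
Fixing $(s,a,h)$ and writing $a_{k'}:=\occ{h}{\pi_{k'}}{s,a}\in[0,1]$, let $K_0$ be the first index with $N_{k'-1}^{s,a,h}\ge 4C+4$. For $k'<K_0$ I use the trivial bound $n_{k'-1,h}(s,a)\vee 1\ge 1$, which gives a contribution of at most $\sum_{k'<K_0}a_{k'}\le 4C+4$. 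For $k'\ge K_0$, the inequality $\tfrac{1}{2}N_{k'-1}^{s,a,h}-C\ge \tfrac{1}{4}N_{k'-1}^{s,a,h}$ holds, so each term is at most $4a_{k'}/N_{k'-1}^{s,a,h}$.

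To control the late-regime sum $\sum_{k'\ge K_0}a_{k'}/N_{k'-1}^{s,a,h}$, I would use the standard elementary telescoping: since $a_{k'}\le 1\le N_{k'-1}^{s,a,h}$ in this regime, $N_{k'}^{s,a,h}\le 2 N_{k'-1}^{s,a,h}$, so
\begin{align*}
\frac{a_{k'}}{N_{k'-1}^{s,a,h}} \;\le\; 2\,\frac{a_{k'}}{N_{k'}^{s,a,h}} \;\le\; 2\bigl(\log N_{k'}^{s,a,h}-\log N_{k'-1}^{s,a,h}\bigr),
\end{align*}
using $\log(1+x)\ge x/(1+x)$ for $x\ge 0$. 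Summing telescopes to $2\log(N_{K'}^{s,a,h}/N_{K_0-1}^{s,a,h})\le 2\log K'$. Combining both regimes gives an $(s,a,h)$-wise bound of order $C+\log K'\lesssim H\log(SAHK/\delta')$, and summing over the $SAH$ triples yields $\tilde O(SAH^2)$ as claimed.

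The only mild obstacles are bookkeeping ones: properly handling the $\vee 1$ convention so the sum is well defined, and ensuring the additive slack $C$ is absorbed into the early regime so that the $1/N$-telescoping kicks in cleanly in the late regime. No concentration is needed at this stage because the hypothesis has already traded randomness for a deterministic lower bound on the counts.
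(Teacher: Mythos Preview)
Your argument is correct and is essentially the standard pigeonhole/telescoping proof for such counting bounds. The paper itself does not give an independent proof of this lemma; it simply cites \citet[Lemma 13]{zanette2019tighter}. Your decomposition into an early regime (cumulative expected visitation below a threshold of order $C=H\log(SAH/\delta')$) and a late regime (where $\tfrac{1}{2}N_{k'-1}-C\ge \tfrac{1}{4}N_{k'-1}$ so that the $\log$-telescoping kicks in) is exactly the argument underlying that reference, so there is no substantive difference in approach --- you have just written out what the citation hides. One tiny bookkeeping slip: since $K_0$ is defined as the \emph{first} index with $N_{K_0-1}\ge 4C+4$, the early-regime sum $\sum_{k'<K_0}a_{k'}=N_{K_0-1}$ can exceed $4C+4$; however $N_{K_0-2}<4C+4$ and $a_{K_0-1}\le 1$ give $N_{K_0-1}\le 4C+5$, which is harmless for the $\tilde{O}$ bound.
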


\begin{proof}
    We refer to \citet[Lemma 13]{zanette2019tighter} for a proof of the statement. 
\end{proof}

\paragraph{Estimation Error (On-Policy Error Bounds)}

We next prove bounds on the estimation error obtained while learning the model.

\begin{lemma}[Estimation error $\rHat$, $\bm{\uHat}$] \label{lem:est-error-ru}
    Conditioned on $G$, for every $K' \in [K]$, we have
    \begin{align*}
        \sum_{k=1}^{K'} \round{\valStartHat{\rHat_k}{k} - \valStart{r}{\pi_k}} \leq& \oTilde{ \sqrt{S^2AH^4 K'} + S^{3/2}AH^2 },\\
        \sum_{k=1}^{K'} \round{\valStartHat{\uHat_{k,i}}{k} - \valStart{u_i}{\pi_k}}\leq& \oTilde{ \sqrt{S^2AH^4 K'} + S^{3/2}AH^2 } \quad (\forall i\in[I]).
    \end{align*}
\end{lemma}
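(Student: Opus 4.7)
The plan is to apply the extended value difference lemma (in the form used in the proof of \cref{lem:regret-decomp}) with both policies taken equal to $\pi_k$ and MDPs $M=(\calS, \calA, \pHat_k, \rHat_k)$ versus $M'=(\calS, \calA, p, r)$. Because the two policies coincide, the $\langle \pi - \pi', Q\rangle$ term vanishes and the identity reduces to the on-policy per-step decomposition
\begin{align*}
    \valStartHat{\rHat_k}{k} - \valStart{r}{\pi_k} = \sum_{h=1}^H \E\rectangular{\qvalHat{\rHat_k,h}{k}{s_h,a_h} - r_h(s_h,a_h) - \dotProd{p_h(\cdot|s_h,a_h)}{\valHat{\rHat_k,h+1}{k}{\cdot}} \biggVert s_1, \pi_k, p}.
\end{align*}
The argument for $\uHat_{k,i}$ is word-for-word identical after replacing $r$ by $u_i$ and $\rHat_k$ by $\uHat_{k,i}$, so I focus on the reward bound.

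Next, I would bound each per-step term by exploiting the truncation in \cref{algo:trun-eval}. Since the DP update is a pointwise minimum, one always has $\qvalHat{\rHat_k,h}{k}{s,a} \leq \rHat_{k,h}(s,a) + \dotProd{\pHat_{k,h}(\cdot|s,a)}{\valHat{\rHat_k,h+1}{k}{\cdot}}$ together with $0 \leq \valHat{\rHat_k,h+1}{k}{\cdot} \leq H$ by a straightforward induction on $h$. Substituting the upper bound rewrites the summand as $(\rHat_{k,h}(s,a) - r_h(s,a)) + \dotProd{\pHat_{k,h}(\cdot|s,a) - p_h(\cdot|s,a)}{\valHat{\rHat_k,h+1}{k}{\cdot}}$. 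Conditioned on $G$, the identity $\rHat_{k,h} = \rBar_{k-1,h} + b^r_{k-1,h} + b^p_{k-1,h}$ combined with $|\rBar_{k-1,h}-r_h| \leq b^r_{k-1,h}$ bounds the first piece by $2 b^r_{k-1,h} + b^p_{k-1,h}$, while Hölder together with $\norm{\pHat_{k,h}(\cdot|s,a) - p_h(\cdot|s,a)}_1 H \leq b^p_{k-1,h}(s,a)$ bounds the second piece by $b^p_{k-1,h}$.

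Finally, I would sum over $h\in\stepSet$ and $k\in[K']$. On $G$ the event $F^n$ does not occur, so \cref{aug-lem:l36-efroni} applies and yields $\sum_{k=1}^{K'} \sum_{h=1}^H \E\rectangular{1/\sqrt{\counterMaxHK} \biggVert \sigmaAlg{k-1}} \leq \tilde{O}(\sqrt{SAH^2 K'} + SAH)$. Since $b^p_{k-1,h} \lesssim H\sqrt{S/\counterMaxHK}$ dominates $b^r_{k-1,h} \lesssim \sqrt{1/\counterMaxHK}$, multiplying the displayed bound by the extra $H\sqrt{S}$ factor produces the claimed $\tilde{O}(\sqrt{S^2 A H^4 K'} + S^{3/2}AH^2)$. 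The main obstacle is the truncation bookkeeping: without the $H-h+1$ clipping in \cref{algo:trun-eval}, the bonuses would compound through the Bellman recursion and $\norm{\valHat{\rHat_k,h+1}{k}{\cdot}}_\infty$ could grow well beyond $H$, ruining the $\langle \pHat - p, \valHat\rangle$ estimate. The price of clipping is that $\qvalHat{\rHat_k,h}{k}{s,a}$ no longer satisfies a proper Bellman equation, which is precisely why the \emph{extended} value difference lemma (rather than the standard simulation lemma) is indispensable to launch the decomposition.
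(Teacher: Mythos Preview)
Your proposal is correct and follows essentially the same approach as the paper: apply \cref{lem:ext-val-diff} with $\pi=\pi'=\pi_k$ to obtain the on-policy per-step decomposition, use the truncation in \cref{algo:trun-eval} to upper-bound the summand by $2b^r_{k-1,h}+2b^p_{k-1,h}$ (the paper groups the extra $b^p$ with the transition term rather than the reward term, but the total is identical), invoke the $\ell_1$-concentration on $G$ together with $\norm{\valHat{\rHat_k,h+1}{k}{\cdot}}_\infty\leq H$, and finish via \cref{aug-lem:l36-efroni}. Your remark on why the truncation is needed matches the paper's motivation exactly.
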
 

\begin{proof}
    By \cref{lem:ext-val-diff} (with $\pi=\pi'=\pi_k$ and $M=(\calS,\calA,\pHat_k,\zHat_k)$, $M'=(\calS,\calA,p,z_k)$), we have according to the truncated policy evaluation (\cref{algo:trun-eval}),
    \begin{align*}
         &\valStartHat{\rHat_k}{k} - \valStart{r}{\pi_k} \\
         =& \sum_{h=1}^H \exptn\rectangular{ \qvalHat{\rHat_k,h}{k}{s_h,a_h} - r_h(s_h,a_h) - \dotProd{p_h(\cdot|s_h,a_h)}{\valHat{\rHat_k,h+1}{k}{\cdot}} \biggVert s_1, \pi_k, p}\\
         \leq& \sum_{h=1}^H \exptn\rectangular{ \rHat_{k,h}(s_h,a_h) - r_h(s_h,a_h) \biggVert s_1, \pi_k, p}\\
         &+\sum_{h=1}^H  \exptn\rectangular{\dotProd{\pHat_{k,h}(\cdot|s_h,a_h) - p_h(\cdot|s_h,a_h)}{\valHat{\rHat_k,h+1}{k}{\cdot}} \biggVert s_1, \pi_k, p}\\
         =& \sum_{h=1}^H \exptn\rectangular{ \rBar_{k-1,h}(s_h,a_h) + b_{k-1,h}^r(s_h,a_h) - r_h(s_h,a_h) \biggVert s_1, \pi_k, p}\\
         &+\sum_{h=1}^H  \exptn\rectangular{ b_{k-1,h}^p(s_h,a_h) + \dotProd{\pHat_{k,h}(\cdot|s_h,a_h) - p_h(\cdot|s_h,a_h)}{\valHat{\rHat_k,h+1}{k}{\cdot}} \biggVert s_1, \pi_k, p}.
    \end{align*}
    Since $G$ occurs, we have $\rBar_{k-1,h}(s_h,a_h) - r_h(s_h,a_h) \leq b_{k-1,h}^r(s_h,a_h)$. Moreover, 
    \begin{align*}
        &\dotProd{\pHat_{k,h}(\cdot|s_h,a_h) - p_h(\cdot|s_h,a_h)}{\valHat{\rHat_k,h+1}{k}{\cdot}} \\
        \leq& \norm{\pHat_{k,h}(\cdot|s_h,a_h) - p_h(\cdot|s_h,a_h)}_1 \norm{\valHat{\rHat_k,h+1}{k}{\cdot}}_{\infty}\\
        \leq& \norm{\pHat_{k,h}(\cdot|s_h,a_h) - p_h(\cdot|s_h,a_h)}_1 H\\
        \leq& b^p_{k-1,h}(s_h,a_h)
    \end{align*}
    by Hölder's, the truncation in the policy evaluation, and since $G$ occurs. Plugging this into the inequality above,
    \begin{align*}
         \valStartHat{\rHat_k}{k} - \valStart{r}{\pi_k}
         \leq& \sum_{h=1}^H \exptn\rectangular{ 2 b_{k-1,h}^r(s_h,a_h) \biggVert s_1, \pi_k, p}\\
         &+\sum_{h=1}^H  \exptn\rectangular{ 2 b_{k-1,h}^p(s_h,a_h) \biggVert s_1, \pi_k, p}.
    \end{align*}
    Recalling the definition of $b_{k-1,h}^r(s,a) \lesssim \frac{1}{\sqrt{\counterMax}}$, $b_{k-1,h}^p(s,a) \lesssim \frac{H\sqrt{S}}{\sqrt{\counterMax}}$ and summing up, we thus find
    \begin{align*}
        \sum_{k=1}^{K'} \round{\valStartHat{\rHat_k}{k} - \valStart{r}{\pi_k}} \lesssim& H\sqrt{S} \sum_{k=1}^{K'} \sum_{h=1}^H \exptn\rectangular{ \frac{1}{\sqrt{\counterMaxH}} \biggVert s_1, \pi_k, p}\\
        =& H\sqrt{S} \sum_{k=1}^{K'} \sum_{h=1}^H \exptn\rectangular{ \frac{1}{\sqrt{\counterMaxHK}} \biggVert \sigmaAlg{k-1}} \tag{play $\pi_k$}\\
        \lesssim& H\sqrt{S} \round{\sqrt{SAH^2 K'} + SAH} \tag{\cref{aug-lem:l36-efroni}}\\
        =& \sqrt{S^2AH^4 K'} + S^{3/2}AH^2
    \end{align*}
    where we used that we play $\pi_k$ in episode $k$ and \cref{aug-lem:l36-efroni}, which applies since $G$ occurs.

    The proof for $u_{i}$ ($i\in[I]$) is identical.
\end{proof}

\begin{lemma}[Estimation error $\psiHat$] \label{lem:est-error-psi}
    Conditioned on $G$, for every $K' \in [K]$, we have
    \begin{align*}
        \sum_{k=1}^{K'} \round{\valStartHat{\psiHat_k}{k} - \valStart{\psi_k}{\pi_k}} \leq& \oTilde{ \sqrt{S^2AH^4 K'} + S^{3/2}AH^2 }.
    \end{align*}
\end{lemma}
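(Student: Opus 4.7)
The plan is to mirror the proof of \cref{lem:est-error-ru}, exploiting the fact that $\psi_k$ is known at episode $k$, so the only source of estimation error is the transition model. Applying \cref{lem:ext-val-diff} with $\pi=\pi'=\pi_k$, $M=(\calS,\calA,\pHat_k,\psiHat_k)$, $M'=(\calS,\calA,p,\psi_k)$, and using the truncation rule for $\qvalHat{\psiHat_k,h}{k}{s,a}$ from \cref{algo:trun-eval}, I get
\begin{align*}
    &\valStartHat{\psiHat_k}{k} - \valStart{\psi_k}{\pi_k} \\
    \leq& \sum_{h=1}^H \exptn\rectangular{ \psiHat_{k,h}(s_h,a_h) - \psi_{k,h}(s_h,a_h) \biggVert s_1, \pi_k, p} \\
    &+ \sum_{h=1}^H \exptn\rectangular{ \dotProd{\pHat_{k,h}(\cdot|s_h,a_h) - p_h(\cdot|s_h,a_h)}{\valHat{\psiHat_k,h+1}{k}{\cdot}} \biggVert s_1, \pi_k, p}.
\end{align*}
By the definition $\psiHat_{k,h}(s,a) = \psi_{k,h}(s,a) + b^p_{k-1,h}(s,a)\log(A)$, the first sum equals $\log(A) \sum_h \exptn[b^p_{k-1,h}(s_h,a_h) \mid s_1,\pi_k,p]$.

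For the second sum, I need a uniform $\ell_\infty$ bound on $\valHat{\psiHat_k,h+1}{k}{\cdot}$. This is precisely what the (otherwise unusual) truncation $\psi_{k,h}(s,a) + (H-h+1)\log(A)$ in \cref{algo:trun-eval} is designed for: it yields
\begin{align*}
    \valHat{\psiHat_k,h+1}{k}{s} \leq& \sum_a \pi_{k,h+1}(a|s)\round{\psi_{k,h+1}(s,a) + (H-h)\log(A)} \\
    \leq& \log(A) + (H-h)\log(A) \lesssim H\log(A),
\end{align*}
where the first step uses the truncation and the second uses $-\sum_a \pi_{k,h+1}(a|s)\log(\pi_{k,h+1}(a|s)) \leq \log(A)$. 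Combining this with Hölder's inequality and the transition-concentration guarantee of the success event $G$ gives $\dotProd{\pHat_{k,h}(\cdot|s_h,a_h) - p_h(\cdot|s_h,a_h)}{\valHat{\psiHat_k,h+1}{k}{\cdot}} \lesssim b^p_{k-1,h}(s_h,a_h)\log(A)$.

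Adding the two sums yields $\valStartHat{\psiHat_k}{k} - \valStart{\psi_k}{\pi_k} \lesssim \log(A)\sum_{h=1}^H \exptn[b^p_{k-1,h}(s_h,a_h) \mid s_1,\pi_k,p]$. Plugging in $b^p_{k-1,h}(s,a) \lesssim H\sqrt{S/\counterMax}$, switching the conditioning from $(s_1,\pi_k,p)$ to $\sigmaAlg{k-1}$ (since $\pi_k$ is played in episode $k$), and summing over $k \in [K']$ with \cref{aug-lem:l36-efroni} (whose hypothesis holds on $G$ by the definition of $F^n$) gives the claimed $\tilde{O}(\sqrt{S^2AH^4 K'} + S^{3/2}AH^2)$ bound. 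The only subtlety is the $\ell_\infty$ bound on $\valHat{\psiHat_k,h+1}{k}{\cdot}$; beyond that, the argument is a direct adaptation of the $\rHat,\bm{\uHat}$ case, with no reward-concentration bonus required since $\psi_k$ is deterministic given $\pi_k$.
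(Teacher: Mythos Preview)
Your proposal is correct and follows essentially the same approach as the paper: apply \cref{lem:ext-val-diff} with $\pi=\pi'=\pi_k$, use the truncation in \cref{algo:trun-eval} to bound $\|\valHat{\psiHat_k,h+1}{k}{\cdot}\|_\infty \leq H\log(A)$ (this is exactly the paper's \cref{eq:trunc-psi}), combine with H\"older and the transition-concentration part of $G$ to get $\valStartHat{\psiHat_k}{k} - \valStart{\psi_k}{\pi_k} \lesssim \log(A)\sum_h \exptn[b^p_{k-1,h}(s_h,a_h)]$, and finish with \cref{aug-lem:l36-efroni}. The only cosmetic difference is that you state the MDPs in \cref{lem:ext-val-diff} as $(\pHat_k,\psiHat_k)$ and $(p,\psi_k)$ directly, whereas the paper (with some typos) nominally writes $(\pHat_k,\zHat_k)$ and $(p,z_k)$ before specializing to the $\psi$-component; the substance is identical.
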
 

\begin{proof}
    By \cref{lem:ext-val-diff} (with $\pi=\pi'=\pi_k$ and $M=(\calS,\calA,\pHat_k,\zHat_k)$, $M'=(\calS,\calA,p,z_k)$), we have according to the truncated policy evaluation,
    \begin{align*}
         &\valStartHat{\rHat_k}{k} - \valStart{r}{\pi_k}\\
         =& \sum_{h=1}^H \exptn\rectangular{ \qvalHat{\psiHat_k,h}{k}{s_h,a_h} - \psi_{k,h}(s_h,a_h) - \dotProd{p_h(\cdot|s_h,a_h)}{\valHat{\psiHat_k,h+1}{k}{\cdot}} \biggVert s_1, \pi_k, p}\\
         \leq& \sum_{h=1}^H \exptn\rectangular{ \psiHat_{k,h}(s_h,a_h) - \psi_{k,h}(s_h,a_h) \biggVert s_1, \pi_k, p}\\
         &+\sum_{h=1}^H  \exptn\rectangular{\dotProd{\pHat_{k,h}(\cdot|s_h,a_h)-p_h(\cdot|s_h,a_h)}{\valHat{\psiHat_k,h+1}{k}{\cdot}} \biggVert s_1, \pi_k, p}\\
         =& \sum_{h=1}^H \exptn\rectangular{ \psi_{k,h}(s_h,a_h) - \psi_{k,h}(s_h,a_h) \biggVert s_1, \pi_k, p}\\
         +&\sum_{h=1}^H  \exptn\rectangular{ b_{k-1,h}^p(s_h,a_h) \log(A) + \dotProd{\pHat_{k,h}(\cdot|s_h,a_h)-p_h(\cdot|s_h,a_h)}{\valHat{\psiHat_k,h+1}{k}{\cdot}} \biggVert s_1, \pi_k, p}.
    \end{align*}
    Since $G$ occurs, we have
    \begin{align*}
        &\dotProd{\pHat_{k,h}(\cdot|s_h,a_h) - p_h(\cdot|s_h,a_h)}{\valHat{\psiHat_k,h+1}{k}{\cdot}} \\
        \leq& \norm{\pHat_{k,h}(\cdot|s_h,a_h) - p_h(\cdot|s_h,a_h)}_1 \norm{\valHat{\psiHat_k,h+1}{k}{\cdot}}_{\infty}\\
        \leq& \norm{\pHat_{k,h}(\cdot|s_h,a_h) - p_h(\cdot|s_h,a_h)}_1 H\log(A)\\
        \leq& b^p_{k-1,h}(s_h,a_h) \log(A)
    \end{align*}
    by Hölder's, the truncation in the policy evaluation, and since $G$ occurs. Plugging this into the inequality above,
    \begin{align*}
         \valStartHat{\psiHat_k}{k} - \valStart{\psi_k}{\pi_k}
         \leq& \sum_{h=1}^H  \exptn\rectangular{ 2 b_{k-1,h}^p(s_h,a_h)\log(A) \biggVert s_1, \pi_k, p}
    \end{align*}
    and the rest of the proof follows exactly as in the proof of \cref{lem:est-error-ru}, with an extra $\log(A)$ factor.
\end{proof}

The following lemma allows us to control the total estimation error (a) from the optimistic model. Roughly speaking, it guarantees that the estimates are not too optimistic.
\begin{restatable}[Estimation error with regularization]{lemma}{estErrorReg}\label{lem:est-error-reg}
    Conditioned on $G$, for every $K' \in [K]$, we have (if $\tau \leq 1$)
    \begin{align*}
        \sum_{k=1}^{K'} \round{\valStartHat{\zHat_{k}}{k} - \valStart{z_k}{\pi_{k}}} \lesssim (2 + I \lambdaMax)\round{\sqrt{S^2AH^4 K'} + S^{3/2}AH^2}.
    \end{align*}
\end{restatable}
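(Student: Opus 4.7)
The plan is to exploit the linearity of the value functions in the reward to split the estimation error into three pieces, which are already handled by the two preceding on-policy error lemmas. By the definitions of $z_k = r + \bm{\lambda}_k^T\bm{u} + \tau \psi_k$ and $\zHat_k = \rHat_k + \bm{\lambda}_k^T \bm{\uHat}_k + \tau \psiHat_k$, together with the identities \cref{eq:trun-v-lin} and \cref{eq:v-lin}, I would write
\begin{align*}
  \valStartHat{\zHat_k}{k} - \valStart{z_k}{\pi_k}
  =&~ \bigl(\valStartHat{\rHat_k}{k} - \valStart{r}{\pi_k}\bigr) \\
  &+ \sum_{i \in [I]} \lambda_{k,i}\bigl(\valStartHat{\uHat_{k,i}}{k} - \valStart{u_i}{\pi_k}\bigr) \\
  &+ \tau \bigl(\valStartHat{\psiHat_k}{k} - \valStart{\psi_k}{\pi_k}\bigr).
\end{align*}

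Next, I would bound each piece separately after summing over $k \in [K']$. For the reward term and each constraint term, I apply \cref{lem:est-error-ru} to obtain $\tilde{O}(\sqrt{S^2AH^4 K'} + S^{3/2}AH^2)$. For the entropy term, I apply \cref{lem:est-error-psi} to obtain the same bound (up to log factors absorbed in $\tilde{O}$). To pull out the dual variable from the constraint contribution, I would use $0 \leq \lambda_{k,i} \leq \lambdaMax$ together with the optimism property of the truncated model (which gives $\valStartHat{\uHat_{k,i}}{k} \geq \valStart{u_i}{\pi_k}$ termwise under the success event $G$, established in the companion optimism lemma referenced in \cref{sec:building-model}); this allows the crude but sufficient bound $\sum_k \lambda_{k,i}(\valStartHat{\uHat_{k,i}}{k} - \valStart{u_i}{\pi_k}) \leq \lambdaMax \sum_k (\valStartHat{\uHat_{k,i}}{k} - \valStart{u_i}{\pi_k})$.

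Combining, the total error is bounded by
\begin{align*}
  \bigl(1 + I \lambdaMax + \tau\bigr) \cdot \tilde{O}\!\round{\sqrt{S^2AH^4 K'} + S^{3/2}AH^2},
\end{align*}
and using $\tau \leq 1$ gives the prefactor $2 + I\lambdaMax$ in the statement. The only real subtlety is the dual-scaling step: without termwise optimism the individual signed errors could in principle not be scaled by $\lambdaMax$ directly, so I expect the main (minor) obstacle to be invoking the optimism property cleanly so that each $\valStartHat{\uHat_{k,i}}{k} - \valStart{u_i}{\pi_k}$ is nonnegative on $G$; once this is in place, the decomposition reduces everything to \cref{lem:est-error-ru,lem:est-error-psi}.
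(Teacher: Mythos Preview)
Your proposal is correct and follows essentially the same route as the paper: decompose via linearity (\cref{eq:trun-v-lin,eq:v-lin}), then apply \cref{lem:est-error-ru} and \cref{lem:est-error-psi} to the three pieces and use $\tau\leq 1$ to obtain the prefactor $2+I\lambdaMax$. Your treatment of the dual-scaling step is in fact more careful than the paper's write-up, which pulls $\lambda_{k,i}$ outside the $k$-sum without comment; explicitly invoking termwise optimism on $G$ (so that each $\valStartHat{\uHat_{k,i}}{k} - \valStart{u_i}{\pi_k}\geq 0$) is exactly what is needed to justify replacing $\lambda_{k,i}$ by $\lambdaMax$.
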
 

\begin{proof}
    By \cref{eq:def-qk,eq:trun-v-lin,eq:v-lin}, conditioned on $G$,
    \begin{align*}
        &\sum_{k=1}^{K'} \round{\valStartHat{\zHat_{k}}{k} - \valStart{z_k}{\pi_{k}}} \\
        =& \sum_{k=1}^{K'} \round{\valStartHat{\rHat_k}{k} - \valStart{r}{\pi_k} }
        + \sum_{i=1}^I \lambda_k(i) \sum_{k=1}^{K'} \round{\valStartHat{\uHat_{k,i}}{k} - \valStart{u_i}{\pi_k} }
        + \tau  \sum_{k=1}^{K'} \round{\valStartHat{\psiHat_k}{k} - \valStart{\psi_k}{\pi_k} }\\
        \lesssim& (1 + I \lambdaMax + \tau)\round{\sqrt{S^2AH^4 K'} + S^{3/2}AH^2},
    \end{align*}
    and $\tau \leq 1$. The last inequality holds due to \cref{lem:est-error-ru,lem:est-error-psi}. 
\end{proof}

\paragraph{Per-State Optimism}

In the following, we show per-state optimism for the optimistic model.

\begin{lemma}[State optimism $\rHat$, $\bm{\uHat}$] \label{lem:optimism-val-ru}
    Conditioned on the success event $G$, for all $s,a,h$, and $k\in[K]$,
    \begin{align*}
        - \qvalHat{\rHat_k,h}{k}{s,a} + r_h(s,a) + \dotProd{p_h(\cdot|s,a)}{\valHat{\rHat_k,h+1}{k}{\cdot}} \leq& 0,\\
        - \qvalHat{\uHat_{k,i},h}{k}{s,a} + u_{k,i,h}(s,a) + \dotProd{p_h(\cdot|s,a)}{\valHat{\uHat_{k,i},h+1}{k}{\cdot}} \leq& 0 \quad (i\in[I])
    \end{align*}
\end{lemma}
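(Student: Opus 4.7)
The statement is a per-state one-step optimism bound for the truncated Bellman operator, so my plan is to unpack the definitions from \cref{algo:trun-eval} and show that the right-hand side of each displayed inequality is $\le \qvalHat{\rHat_k,h}{k}{s,a}$ (resp.\ $\le \qvalHat{\uHat_{k,i},h}{k}{s,a}$). Since these $Q$-estimates are defined as a minimum of two terms, it suffices to show that the bonus-free Bellman backup under the \emph{true} dynamics is bounded by each of those two terms separately.

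First I would verify the ``trivial'' bound coming from the truncation. For the reward term, $r_h(s,a) \in [0,1]$ and, by a straightforward backward induction over $h$ using the truncation in \cref{algo:trun-eval}, one has $\valHat{\rHat_k,h+1}{k}{s'} \le H-h$. Therefore $r_h(s,a)+\dotProd{p_h(\cdot|s,a)}{\valHat{\rHat_k,h+1}{k}{\cdot}} \le 1+(H-h) = H-h+1$, which matches the first argument of the $\min$ in the definition of $\qvalHat{\rHat_k,h}{k}{s,a}$. The same induction bounds $\valHat{\uHat_{k,i},h+1}{k}{s'}\le H-h$ since $u_i\in[0,1]$, giving the analogous bound for the constraint estimate.

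The main content is matching the \emph{second} argument of the $\min$. I would write
\begin{align*}
r_h(s,a) + \dotProd{p_h(\cdot|s,a)}{\valHat{\rHat_k,h+1}{k}{\cdot}}
= \rHat_{k,h}(s,a) + \dotProd{\pHat_{k,h}(\cdot|s,a)}{\valHat{\rHat_k,h+1}{k}{\cdot}} + E_r + E_p,
\end{align*}
where $E_r := r_h(s,a)-\rHat_{k,h}(s,a)$ and $E_p := \dotProd{p_h(\cdot|s,a)-\pHat_{k,h}(\cdot|s,a)}{\valHat{\rHat_k,h+1}{k}{\cdot}}$, and show $E_r+E_p\le 0$. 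Under $G$, Hoeffding concentration gives $r_h(s,a)\le \rBar_{k-1,h}(s,a)+b^r_{k-1,h}(s,a)$, so $E_r\le -b^p_{k-1,h}(s,a)$ (since $\rHat_{k,h}=\rBar_{k-1,h}+b^r_{k-1,h}+b^p_{k-1,h}$). For $E_p$, apply Hölder's inequality together with the truncation bound $\|\valHat{\rHat_k,h+1}{k}{\cdot}\|_\infty\le H$ and the $L^1$-concentration event from $G$, namely $\|p_h(\cdot|s,a)-\pBar_{k-1,h}(\cdot|s,a)\|_1 H\le b^p_{k-1,h}(s,a)$, to obtain $E_p\le b^p_{k-1,h}(s,a)$. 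Adding gives $E_r+E_p\le 0$, as required.

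The argument for the constraint inequality is identical once one replaces $r$ by $u_i$, $\rHat_k$ by $\uHat_{k,i}$ and $\valHat{\rHat_k}{k}{}$ by $\valHat{\uHat_{k,i}}{k}{}$, since the same bonus $b_{k-1,h}=b^r_{k-1,h}+b^p_{k-1,h}$ is used and the same $L^1$ and Hoeffding events are included in $G$. No step is really the bottleneck here; the only thing to be careful about is that the truncation is what gives a uniform $H$-bound on $\|\hat V_{h+1}\|_\infty$ so that the transition-estimation error $E_p$ can be controlled by $b^p_{k-1,h}$ regardless of the magnitude of the bonuses accumulated by the recursion.
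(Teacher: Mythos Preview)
Your proposal is correct and follows essentially the same approach as the paper: both arguments rely on the concentration bounds packaged in $G$ (Hoeffding for rewards, $L^1$ for transitions), Hölder's inequality, and the $\|\valHat{\cdot,h+1}{k}{\cdot}\|_\infty\le H-h$ bound from truncation. The only cosmetic difference is that the paper lower-bounds $\qvalHat{\rHat_k,h}{k}{s,a}$ via $\min\{a+b,c+d\}\ge\min\{a,c\}+\min\{b,d\}$ and then shows four resulting terms are nonpositive, whereas you directly verify that the true one-step backup is dominated by each branch of the $\min$ separately—your organization is arguably slightly cleaner.
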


\begin{proof}
    For $\rHat_k$, by \cref{algo:trun-eval} we have 
    \begin{align*}
        &\qvalHat{\rHat_k,h}{k}{s,a} \\
        =& \min\curly{H-h+1,~ \rHat_{k,h}(s,a) + \dotProd{\pHat_{k,h}(\cdot|s,a)} {\valHat{\rHat_k,h+1}{k}{\cdot}}}\\
        =& \min\big\{H-h+1,\\
        & \rBar_{k-1,h}(s,a) + b_{k-1,h}^r(s,a) + \dotProd{\pHat_{k,h}(\cdot|s,a)} {\valHat{\rHat_k,h+1}{k}{\cdot}} + b_{k-1,h}^p(s,a)\big\}\\
        \geq& \min\curly{1,~ \rBar_{k-1,h}(s,a) + b_{k-1,h}^r(s,a)} \\
        &+ \min\curly{H-h,\dotProd{\pHat_{k,h}(\cdot|s,a)} {\valHat{\rHat_k,h+1}{k}{\cdot}} + b_{k-1,h}^p(s,a)},
    \end{align*}
    where we used $\min\curly{a+b,c+d} \geq \min\curly{a,c} + \min\curly{b,d}$.
    Thus
    \begin{align*}
        &- \qvalHat{\rHat_k,h}{k}{s,a} + r_h(s,a) + \dotProd{p_h(\cdot|s,a)}{\valHat{\rHat_k,h+1}{k}{\cdot}}\\  
        \leq& -\min\curly{1,~ \rBar_{k-1,h}(s,a) + b_{k-1,h}^r(s,a)} + r_h(s,a) \\
        & - \min\curly{H-h,\dotProd{\pHat_{k,h}(\cdot|s,a)} {\valHat{\rHat_k,h+1}{k}{\cdot}} + b_{k-1,h}^p(s,a)} + \dotProd{p_h(\cdot|s,a)}{\valHat{\rHat_k,h+1}{k}{\cdot}}\\
        =&-\min\curly{1 - r_h(s,a),~ \rBar_{k-1,h}(s,a) - r_h(s,a) + b_{k-1,h}^r(s,a)} \\
        & - \min\bigg\{H-h -\dotProd{p_h(\cdot|s,a)}{\valHat{\rHat_k,h+1}{k}{\cdot}},\\
        &\dotProd{\pHat_{k,h}(\cdot|s,a) - p_h(\cdot|s,a)}{\valHat{\rHat_k,h+1}{k}{\cdot}} + b_{k-1,h}^p(s,a)\bigg\}\\
        =& \max\bigg\{\underbrace{r_h(s,a) - 1}_{(a)},~ \underbrace{-\rBar_{k-1,h}(s,a) + r_h(s,a) - b_{k-1,h}^r(s,a)}_{(b)}\bigg\} \\
        & +\max\bigg\{\underbrace{-(H-h) +\dotProd{p_h(\cdot|s,a)}{\valHat{\rHat_k,h+1}{k}{\cdot}}}_{(c)}, \\
        &\underbrace{-\dotProd{\pHat_{k,h}(\cdot|s,a) - p_h(\cdot|s,a)}{\valHat{\rHat_k,h+1}{k}{\cdot}} - b_{k-1,h}^p(s,a)}_{(d)}\bigg\}.
    \end{align*}
    Now, for each of the four terms appearing in the maxima, conditioned on $G$, we have 
    \begin{align*}
        (a)=r_h(s,a) - 1 \leq& 0
    \end{align*}
    by boundedness of the reward functions,
    \begin{align*}
        (b)= -\rBar_{k,h}(s,a) + r_h(s,a) - b_{k-1,h}^r(s,a) \leq& 0
    \end{align*}
    since $G$ occurs,
    \begin{align*}
        (c)=-(H-h) +\dotProd{p_h(\cdot|s,a)}{\valHat{\rHat_k,h+1}{k}{\cdot}} \leq -(H-h) + 1 \cdot (H-(h+1)+1) = 0
    \end{align*}
    by Hölder's and the truncation of our evaluation procedure, and finally
    \begin{align*}
        (d)=&-\dotProd{\pHat_{k,h}(\cdot|s,a) - p_h(\cdot|s,a)}{\valHat{\rHat_k,h+1}{k}{\cdot}} - b_{k-1,h}^p(s,a) \\
        \leq& \norm{\pHat_{k,h}(\cdot|s,a) - p_h(\cdot|s,a)}_1 \cdot (H - (h+1) +1 ) - b_{k-1,h}^p(s,a)\\
        \leq& \norm{\pHat_{k,h}(\cdot|s,a) - p_h(\cdot|s,a)}_1 \cdot H - b_{k-1,h}^p(s,a) \leq 0
    \end{align*}
    by Hölder's, the truncation of our evaluation procedure, since $\pHat_k=\pBar_{k-1}$, and since $G$ occurs. We are thus taking the minimum of non-positive terms, which shows
    \begin{align*}
        - \qvalHat{\rHat_k,h}{k}{s,a} + r_h(s,a) + \dotProd{p_h(\cdot|s,a)}{\valHat{\rHat_k,h+1}{k}{\cdot}} \leq 0.
    \end{align*}
    The proof for $\uHat_{k,i}$ $(i\in [I])$ is identical.
\end{proof}

\begin{lemma}[State optimism $\psiHat$] \label{lem:optimism-val-psi}
    Conditioned on the success event $G$, for all $s,a,h$, and $k\in[K]$,
    \begin{align*}
        - \qvalHat{\psiHat_k,h}{k}{s,a} + \psi_{k,h}(s,a) + \dotProd{p_h(\cdot|s,a)}{\valHat{\psiHat_k,h+1}{k}{\cdot}} \leq 0.
    \end{align*}
\end{lemma}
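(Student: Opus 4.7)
The plan is to mimic the proof of \cref{lem:optimism-val-ru} but account carefully for the fact that $\psi_{k,h}(s,a) = -\log \pi_{k,h}(a|s)$ is, unlike $r$ and $u_i$, \emph{unbounded} as a per-state-action function. Start from the definition
\begin{align*}
    \qvalHat{\psiHat_k,h}{k}{s,a} = \min\{\psi_{k,h}(s,a) + (H{-}h{+}1)\log(A),~ \psi_{k,h}(s,a) + b^p_{k-1,h}(s,a)\log(A) + \dotProd{\pHat_{k,h}(\cdot|s,a)}{\valHat{\psiHat_k,h+1}{k}{\cdot}}\},
\end{align*}
which uses $\psiHat_{k,h}(s,a) = \psi_{k,h}(s,a) + b^p_{k-1,h}(s,a)\log(A)$. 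Since the term $\psi_{k,h}(s,a)$ is common to both arguments of the min, it can be factored out, so the claim reduces to showing
\begin{align*}
    \dotProd{p_h(\cdot|s,a)}{\valHat{\psiHat_k,h+1}{k}{\cdot}} \leq \min\big\{(H{-}h{+}1)\log(A),~ b^p_{k-1,h}(s,a)\log(A) + \dotProd{\pHat_{k,h}(\cdot|s,a)}{\valHat{\psiHat_k,h+1}{k}{\cdot}}\big\}.
\end{align*}

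The key auxiliary fact — this is the main obstacle, since we cannot use $\psi \leq 1$ pointwise — is the uniform upper bound
\begin{align*}
    \valHat{\psiHat_k,h+1}{k}{s} \leq (H{-}h{+}1)\log(A) \quad (\forall s),
\end{align*}
which I would prove by backward induction on $h$. For the inductive step, the truncation in \cref{algo:trun-eval} gives $\qvalHat{\psiHat_k,h+1}{k}{s,a} \leq \psi_{k,h+1}(s,a) + (H{-}h)\log(A)$; averaging against $\pi_{k,h+1}(\cdot|s)$ and using the pointwise entropy bound $-\sum_a \pi_{k,h+1}(a|s)\log\pi_{k,h+1}(a|s) \leq \log(A)$ yields the claim. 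The base case $\valHat{\psiHat_k,H+1}{k}{s} = 0$ is immediate.

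With this bound in hand, the first argument of the min is handled immediately: $\dotProd{p_h(\cdot|s,a)}{\valHat{\psiHat_k,h+1}{k}{\cdot}} \leq (H{-}h{+}1)\log(A)$ since $p_h(\cdot|s,a)$ is a probability distribution. For the second argument, apply Hölder's inequality together with the success event $G$ (on which $\norm{\pHat_{k,h}(\cdot|s,a) - p_h(\cdot|s,a)}_1 \leq b^p_{k-1,h}(s,a)/H$) and $H{-}h{+}1 \leq H$ to obtain
\begin{align*}
    \dotProd{p_h - \pHat_{k,h}}{\valHat{\psiHat_k,h+1}{k}{\cdot}} \leq \norm{p_h - \pHat_{k,h}}_1 \cdot (H{-}h{+}1)\log(A) \leq b^p_{k-1,h}(s,a) \log(A),
\end{align*}
which is precisely what is needed. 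Combining both bounds inside the min yields the claim. The overall structure mirrors the $(a)$--$(d)$ case analysis in the proof of \cref{lem:optimism-val-ru}, but the subtlety is that the per-step bound $\log(A)$ on the $\psi$-bonus only holds in \emph{expectation} under $\pi_k$, which is why the uniform bound on $\valHat{\psiHat_k,h+1}{k}{\cdot}$ must be derived through the one-step entropy inequality rather than by simply pulling $\log(A)$ out of the truncation constant.
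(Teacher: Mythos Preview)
Your proposal is correct and follows essentially the same route as the paper: factor out $\psi_{k,h}(s,a)$ from both branches of the truncation, establish the uniform bound $\valHat{\psiHat_k,h+1}{k}{s} \leq (H-h+1)\log(A)$ via the truncation and the one-step entropy inequality, and then handle the two branches using that bound together with H\"older and the success event $G$. The only cosmetic difference is that you frame the auxiliary bound as a backward induction, whereas the paper (and in fact your own ``inductive step'') obtains it directly from the truncation at step $h{+}1$ without needing the inductive hypothesis.
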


\begin{proof} 
    For $\psiHat_{k,h}(s,a) = \psi_{k,h}(s,a) + b_{k-1,h}^p(s,a)\log(A)$, from \cref{algo:trun-eval} we have 
    \begin{align*}
        &\qvalHat{\psiHat_k,h}{k}{s,a} \\
        =& \min\curly{\psi_{k,h}(s,a) + (H-h+1)\log(A),~ \psiHat_{k,h}(s,a) + \dotProd{\pHat_{k,h}(\cdot|s,a)} {\valHat{\psiHat_k,h+1}{k}{\cdot}}} \\
        =& \psi_{k,h}(s,a) \\
        &+\min\curly{(H-h+1)\log(A),~ b_{k-1,h}^p(s,a)\log(A) + \dotProd{\pHat_{k,h}(\cdot|s,a)} {\valHat{\psi_k,h+1}{k}{\cdot}}}.
    \end{align*}
    Thus
    \begin{align*} 
        &- \qvalHat{\psiHat_k,h}{k}{s,a} + \psi_{k,h}(s,a) + \dotProd{p_h(\cdot|s,a)}{\valHat{\psiHat_k,h+1}{k}{\cdot}}\\   
        =& -\min\curly{(H-h+1)\log(A),~ b_{k-1,h}^p(s,a)\log(A) + \dotProd{\pHat_{k,h}(\cdot|s,a)} {\valHat{\psiHat_k,h+1}{k}{\cdot}}}\\
        &+ \dotProd{p_h(\cdot|s,a)}{\valHat{\psiHat_k,h+1}{k}{\cdot}}\\
        =& -\min\bigg\{(H-h+1)\log(A) - \dotProd{p_h(\cdot|s,a)}{\valHat{\psiHat_k,h+1}{k}{\cdot}},\\
        & b_{k-1,h}^p(s,a)\log(A) + \dotProd{\pHat_{k,h}(\cdot|s,a) - p_h(\cdot|s,a)}{\valHat{\psiHat_k,h+1}{k}{\cdot}}\bigg\}\\
        =&+\max\bigg\{\underbrace{-(H-h+1)\log(A) + \dotProd{p_h(\cdot|s,a)}{\valHat{\psiHat_k,h+1}{k}{\cdot}}}_{=:(a)},\\
        &\underbrace{-b_{k-1,h}^p(s,a)\log(A) -\dotProd{\pHat_{k,h}(\cdot|s,a) - p_h(\cdot|s,a)}{\valHat{\psiHat_k,h+1}{k}{\cdot}}}_{=:(b)}\bigg\}.
    \end{align*}
    For (a), first note that by the truncation in the policy evaluation (\cref{algo:trun-eval})
    \begin{align}
        \valHat{\psiHat_k,h+1}{k}{s} =& \dotProd{\pi_{k,h+1}(\cdot|s)}{\qvalHat{\psiHat_k,h+1}{k}{s,\cdot}} \tag{by definition}\nonumber\\
        \leq& \sum_a \pi_{k,h+1}(a|s) \cdot \psi_{k,h+1}(s,a) \nonumber\\
        &+ \sum_a \pi_{k,h+1}(a|s) \cdot (H-(h+1)+1)\log(A) \tag{truncated update}\nonumber\\
        \leq& \log(A) + \sum_a \pi_{k,h+1}(a|s) \cdot (H-h)\log(A)\nonumber\\
        \leq& (H-h+1)\log(A). \label{eq:trunc-psi}
    \end{align}
    Hence
    \begin{align*}
        (a) =&-(H-h+1)\log(A) + \dotProd{p_h(\cdot|s,a)}{\valHat{\psiHat_k,h+1}{k}{\cdot}}\\
        \leq& -(H-h+1)\log(A) + (H-h+1)\log(A)\\
        =& 0
    \end{align*}
    by Hölder's and \cref{eq:trunc-psi}. 
    For (b),
    \begin{align*}
        (b)=&-b_{k-1,h}^p(s,a)\log(A) -\dotProd{\pHat_{k,h}(\cdot|s,a) - p_h(\cdot|s,a)}{\valHat{\psiHat_k,h+1}{k}{\cdot}}\\
        \leq& -b_{k-1,h}^p(s,a)\log(A) + \norm{\pHat_{k,h}(\cdot|s,a) - p_h(\cdot|s,a)}_1 \cdot (H-h+1)\log(A)\\
        \leq& -b_{k-1,h}^p(s,a)\log(A) + \norm{\pHat_{k,h}(\cdot|s,a) - p_h(\cdot|s,a)}_1 \cdot H\log(A)\\
        \leq& 0
    \end{align*}
    by Hölder's, and since $\pHat_k=\pBar_{k-1}$ and $G$ occurs.
\end{proof}

Finally, the following lemma shows that we can discard term (c) in \cref{lem:regret-decomp}. It guarantees that for every state, an optimistic \textit{Bellman-type} inequality holds.
\begin{restatable}[State optimism $\zHat$]{lemma}{optimismValZ} \label{lem:optimism-val-z}
    Conditioned on the success event $G$,
    \begin{align*}
        - \qvalHat{\zHat_k,h}{k}{s,a} + z_{k,h}(s,a) + \dotProd{p_h(\cdot|s,a)}{\valHat{\zHat_k,h+1}{k}{\cdot}} \leq 0.
    \end{align*}
\end{restatable}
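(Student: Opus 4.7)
The strategy is to reduce this multi-component per-state optimism statement to the two per-component per-state optimism lemmas already established (\cref{lem:optimism-val-ru} and \cref{lem:optimism-val-psi}) by exploiting the linearity of $z_k$ and $\zHat_k$ in their constituent reward functions.

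\textbf{Step 1: Linear decomposition.} I will first invoke \cref{eq:def-qk}, which gives
\begin{align*}
    \qvalHat{\zHat_k,h}{k}{s,a} = \qvalHat{\rHat_k,h}{k}{s,a} + \sum_{i} \lambda_{k,i} \qvalHat{\uHat_{k,i},h}{k}{s,a} + \tau \qvalHat{\psiHat_k,h}{k}{s,a},
\end{align*}
together with \cref{eq:trun-v-lin}, which shows the analogous linear decomposition of $\valHat{\zHat_k,h+1}{k}{s}$ into the components $\valHat{\rHat_k,h+1}{k}{s}$, $\valHat{\uHat_{k,i},h+1}{k}{s}$, and $\valHat{\psiHat_k,h+1}{k}{s}$. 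Combined with the definition $z_k = r + \bm{\lambda}_k^T \bm{u} + \tau \psi_k$, the target quantity splits cleanly into three groups.

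\textbf{Step 2: Apply the component lemmas.} After regrouping, the target becomes
\begin{align*}
    &\bigl(-\qvalHat{\rHat_k,h}{k}{s,a} + r_h(s,a) + \dotProd{p_h(\cdot|s,a)}{\valHat{\rHat_k,h+1}{k}{\cdot}}\bigr) \\
    &+ \sum_i \lambda_{k,i} \bigl(-\qvalHat{\uHat_{k,i},h}{k}{s,a} + u_{i,h}(s,a) + \dotProd{p_h(\cdot|s,a)}{\valHat{\uHat_{k,i},h+1}{k}{\cdot}}\bigr) \\
    &+ \tau \bigl(-\qvalHat{\psiHat_k,h}{k}{s,a} + \psi_{k,h}(s,a) + \dotProd{p_h(\cdot|s,a)}{\valHat{\psiHat_k,h+1}{k}{\cdot}}\bigr).
\end{align*}
Conditioned on $G$, \cref{lem:optimism-val-ru} shows the first bracket and every bracket in the sum are $\leq 0$, and \cref{lem:optimism-val-psi} shows the last bracket is $\leq 0$. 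Since $\lambda_{k,i} \in [0, \lambdaMax]$ and $\tau > 0$, the nonnegative weights preserve the inequality, so the full expression is $\leq 0$, finishing the proof.

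\textbf{Anticipated obstacle.} There is essentially no obstacle: the only subtlety is that the decomposition must be done at the level of $\qvalHat{\zHat_k}{k}{}$ (which is \emph{defined} as the weighted sum of component truncated $Q$-values via \cref{eq:def-qk}, rather than obtained by running truncated policy evaluation on $\zHat_k$ directly). Once this is noted, the proof is a one-line linear combination of the two preceding lemmas.
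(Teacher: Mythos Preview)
Your proposal is correct and essentially identical to the paper's own proof: the paper also decomposes via \cref{eq:def-qk,eq:trun-v-lin}, applies \cref{lem:optimism-val-ru,lem:optimism-val-psi} to each bracket, and concludes using $\lambda_{k,i}\geq 0$ and $\tau>0$. Your remark about the subtlety (that $\qvalHat{\zHat_k}{k}{}$ is \emph{defined} as the weighted sum of component truncated $Q$-values rather than obtained by truncated evaluation on $\zHat_k$) is exactly the point the paper highlights in a footnote when introducing \cref{eq:def-qk}.
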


\begin{proof}
    By \cref{eq:def-qk,eq:trun-v-lin}, conditioned on $G$,
    \begin{align*}
        &- \qvalHat{\zHat_k,h}{k}{s,a} + z_{k,h}(s,a) + \dotProd{p_h(\cdot|s,a)}{\valHat{\zHat_k,h+1}{k}{\cdot}}\\
        =& - \qvalHat{\rHat_k,h}{k}{s,a} + r_{k,h}(s,a) + \dotProd{p_h(\cdot|s,a)}{\valHat{\rHat_k,h+1}{k}{\cdot}}\\
        & \sum_{i} \lambda_{k,i} \round{- \qvalHat{\uHat_{k,i},h}{k}{s,a} + u_{k,i,h}(s,a) + \dotProd{p_h(\cdot|s,a)}{\valHat{\uHat_{k,i},h+1}{k}{\cdot}}}\\
        & \tau \round{- \qvalHat{\psiHat_k,h}{k}{s,a} + \psi_{k,h}(s,a) + \dotProd{p_h(\cdot|s,a)}{\valHat{\psiHat_k,h+1}{k}{\cdot}}}\\
        \leq& 0,
    \end{align*}
    where the last inequality holds due to \cref{lem:optimism-val-ru,lem:optimism-val-psi} and since all $\lambda_{k,i} \geq 0$.
\end{proof}

\paragraph{Value Function Bounds}

In the following, we bound the regularized value functions, which allows us to apply descent properties of the regularized primal-dual algorithm. Recall that $\bm{g} = \bm{u} - \frac{1}{H}\bm{c}$.

\begin{lemma}[Value function bounds]\label{lem:val-bounds-finite}
    For any $s,a,h$, we have
    \begin{align}
        0 \leq \qval{r + \bm{\lambda}_k^T \bm{u} + \tau \psi_k, h}{\pi_k}{s,a} \leq & - \tau \log(\pi_{k,h}(a|s)) + H(1 + I \lambdaMax + \tau \log(A)) \label{eq:q-bound-finite}
    \end{align}
    Moreover, for any $s,a,h$,
    \begin{align*}
        &\sum_{a} \pi_{k,h}(a|s) \exp\round{\eta \qval{r + \bm{\lambda}_k^T \bm{u} + \tau \psi_k, h}{\pi_k}{s,a}} \qval{r + \bm{\lambda}_k^T \bm{u} + \tau \psi_k, h}{\pi_k}{s,a}^2\\ 
        \leq& \sqrt{A}\exp\round{\eta H \round{1 + \lambdaMax I + \tau\log(A)}} \round{ 2H^2 \round{1 + I \lambdaMax + \tau \log(A)}^2 + 2\tau^2 (64/e^2) }
    \end{align*}
    and
    \begin{align*}
         \norm{\valStartVec{\bm{g}}{\pi_k} + \tau \bm{\lambda}_k } \leq& \sqrt{I} \round{ H + \tau \lambdaMax } 
    \end{align*}
\end{lemma}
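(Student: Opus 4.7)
The plan is to handle the three displayed bounds in the lemma separately; the second one is the technical core, while the first and third are essentially bookkeeping estimates.

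For the $Q$-value bounds in \cref{eq:q-bound-finite}, the lower bound is immediate since $r,\bm{u},\bm{\lambda}_k\geq 0$ and $\psi_{k,h}(s,a) = -\log\pi_{k,h}(a|s) \geq 0$ because $\pi_{k,h}(a|s)\in(0,1]$. For the upper bound, I would expand the $Q$-value as a sum over $h'\in\{h,\dots,H\}$ and bound each step: $r_{h'}\leq 1$, $\bm{\lambda}_k^T\bm{u}_{h'} \leq I\lambdaMax$, and treat the $\tau\psi_k$ term by separating the first step from the rest. At step $h$ one gets exactly $-\tau\log\pi_{k,h}(a|s)$ (the conditioning fixes $a$), while for each $h'>h$ Jensen's inequality (equivalently, entropy $\leq \log A$) gives $\E_{\pi_k}[-\log\pi_{k,h'}(a_{h'}|s_{h'}) \mid \cdots]\leq \log A$. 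Summing yields $Q \leq H(1+I\lambdaMax + \tau\log A) - \tau\log\pi_{k,h}(a|s)$.

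For the weighted exponential sum (the main step), I would plug the upper bound from part one into the exponential and into the square separately. Writing $C := H(1 + I\lambdaMax + \tau\log A)$ and $\pi_a := \pi_{k,h}(a|s)$, this yields $\exp(\eta Q_a)\leq \pi_a^{-\eta\tau}e^{\eta C}$ and $Q_a^2 \leq 2C^2 + 2\tau^2 \log^2 \pi_a$, so
\begin{align*}
\sum_a \pi_a e^{\eta Q_a} Q_a^2 \leq e^{\eta C}\left[2C^2 \sum_a \pi_a^{1-\eta\tau} + 2\tau^2 \sum_a \pi_a^{1-\eta\tau} \log^2\pi_a\right].
\end{align*}
The first sum is controlled by Hölder: $\sum_a \pi_a^{1-\eta\tau} \leq A^{\eta\tau} \leq \sqrt{A}$ for $\eta\tau \leq 1/2$. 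The hard part is the second sum, where naively bounding term-by-term costs a factor of $A$ rather than $\sqrt{A}$. I would resolve this with an asymmetric Cauchy--Schwarz split $\pi_a^{1-\eta\tau}\log^2\pi_a = \pi_a^{1/2}\cdot\pi_a^{1/2-\eta\tau}\log^2\pi_a$, giving
\begin{align*}
\sum_a \pi_a^{1-\eta\tau}\log^2\pi_a \leq \left(\sum_a \pi_a\right)^{1/2}\left(\sum_a \pi_a^{1-2\eta\tau}\log^4\pi_a\right)^{1/2}.
\end{align*}
A one-variable calculus check shows $\max_{x\in(0,1]} x^\alpha \log^4 x = 256/(\alpha^4 e^4)$ attained at $x=e^{-4/\alpha}$; taking $\alpha = 1-2\eta\tau \geq 1/2$ bounds each term by $4096/e^4$, so the square root is at most $\sqrt{A}\cdot 64/e^2$. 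Combining, the whole sum is $\leq \sqrt{A}\, e^{\eta C}\bigl(2C^2 + 2\tau^2\cdot 64/e^2\bigr)$, exactly the stated bound.

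The third bound is an elementary coordinate-wise estimate. For each $i$, $\valStart{g_i}{\pi_k} = \valStart{u_i}{\pi_k} - c_i \in [-H, H]$ since $\valStart{u_i}{\pi_k}\in[0,H]$ and $c_i\in[0,H]$, while $\tau\lambda_{k,i}\in[0,\tau\lambdaMax]$, so every coordinate of $\valStartVec{\bm{g}}{\pi_k} + \tau\bm{\lambda}_k$ has absolute value at most $H+\tau\lambdaMax$. Since the vector has $I$ entries, its $\ell_2$-norm is at most $\sqrt{I}(H+\tau\lambdaMax)$. The only genuine obstacle is the Cauchy--Schwarz step in part two: it is crucial to use the asymmetric split above (rather than a symmetric one or a termwise maximum) to convert the unavoidable $A$-dependence coming from summing over actions into the target $\sqrt{A}$.
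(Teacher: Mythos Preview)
Your proposal is correct and follows essentially the same approach as the paper: the same pointwise $Q$-bound via separating the first entropy term, the same $(a+b)^2\leq 2a^2+2b^2$ split combined with $\pi_a e^{\eta Q_a}\leq \pi_a^{1-\eta\tau}e^{\eta C}$, and the same use of a calculus bound on $x^{\beta}\log^m(1/x)$ together with Cauchy--Schwarz to extract the $\sqrt{A}$ factor. The only cosmetic difference is that the paper bounds $\pi_a^{1-\eta\tau}\leq \pi_a^{3/4}=\pi_a^{1/2}\cdot\pi_a^{1/4}$ first and then uses the pointwise bound $x^{1/4}\log^2(1/x)\leq 64/e^2$ before Cauchy--Schwarz on $\sum_a \pi_a^{1/2}$, whereas you apply Cauchy--Schwarz first and then bound $x^{1-2\eta\tau}\log^4(1/x)$; both require $\eta\tau\leq 1/4$ and yield the identical constant $64/e^2$.
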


\begin{proof}
    We prove the first inequality. Non-negativity is immediate. Moreover, for all $s$, $a$,
    \begin{align*}
        \qval{r + \bm{\lambda}_k^T \bm{u} + \tau \psi_k, h}{\pi_k}{s,a} =& \abs{\qval{r, h}{\pi_k}{s,a} + \sum_{i} \lambda_{k,i} \qval{u_i, h}{\pi_k}{s,a} + \tau \qval{\psi_k,h}{\pi_k}{s,a}}\\
        \leq& \abs{ \qval{r,h}{\pi_k}{s,a}} + \sum_{i} \lambda_{k,i} \abs{\qval{u_i,h}{\pi_k}{s,a}} + \tau \abs{\qval{\psi_k,h}{\pi_k}{s,a}}\\
        \leq& H + I \lambdaMax H + \tau \exptn_{\pi_k}\rectangular{ \sum_{h'=h}^{H} -\log(\pi_{k,h'}(a_{h'}|s_{h'}))\mid s_h=s,a_h=a}.
    \end{align*}
    We finish by bounding 
    \begin{align*}
        &\exptn_{\pi_k}\rectangular{ \sum_{h'=h}^{H} -\log(\pi_{k,h'}(a_{h'}|s_{h'})) \biggVert s_h=s,a_h=a}\\
        =& -\log(\pi_{k,h}(a|s)) + \exptn_{\pi_k}\rectangular{ \sum_{h'=h+1}^{H} -\log(\pi_{k,h'}(a_{h'}|s_{h'})) \biggVert a_h=a, s_h=s}\\
        =& -\log(\pi_{k,h}(a|s)) + \exptn_{\pi_k}\rectangular{ \sum_{h'=h+1}^{H} -\log(\pi_{k,h'}(a_{h'}|s_{h'})) \biggVert s_{h+1} \sim p_h(\cdot | s, a)}\\
        =& -\log(\pi_{k,h}(a|s)) + \sum_{h'=h+1} \sum_{s'} \occ{s_{h+1} \sim p_h(\cdot | s, a), h'}{\pi_k}{s'} \sum_{a'} - \pi_{k,h'}(a'|s') \log(\pi_{k,h'}(a'|s'))\\
        \leq& -\log(\pi_{k,h}(a|s)) + \sum_{h'=h+1} \sum_{s'} \occ{s_{h+1} \sim p_h(\cdot | s, a), h'}{\pi_k}{s'} \log(A)\\
        =& -\log(\pi_{k,h}(a|s)) + \sum_{h'=h+1} \log(A)\\
        \leq& -\log(\pi_{k,h}(a|s)) + H\log(A),
    \end{align*}
    where we used the standard bound on the entropy in the third to last step, and we are considering unnormalized occupancy measures throughout.

    For the second inequality, first note that (using $(a+b)^2 \leq 2a^2 + 2b^2$)
    \begin{align*}
        \qval{r + \bm{\lambda}_k^T \bm{u} + \tau \psi_k, h}{\pi_k}{s,a}^2 \leq& \underbrace{2H^2 \round{1 + I \lambdaMax + \tau \log(A)}^2}_{=: C_1} + 2\tau^2\log^2\round{\frac{1}{\pi_{k,h}(a|s)}}.
    \end{align*} 
    Moreover, using \cref{eq:q-bound-finite} we have 
    \begin{align*}
        &\pi_{k,h}(a|s) \exp\round{\eta \qval{r + \bm{\lambda}_k^T \bm{u} + \tau \psi_k,h}{\pi_k}{s,a}}\\
        \leq& \pi_{k,h}(a|s)^{1-\eta\tau} \underbrace{\exp\round{\eta H \round{1 + \lambdaMax I + \tau\log(A)}}}_{=:C_2}
    \end{align*}
    We thus find, if $\eta\tau \leq 1/4 < 1/2$ (which is easily satisfied since we choose $\eta$, $\tau$ small),
    \begin{align*}
        &\sum_a \pi_{k,h}(a|s) \exp\round{\eta \qval{r + \bm{\lambda}_k^T \bm{u} + \tau \psi_k, h}{\pi_k}{s,a}} \cdot C_1\\
        \leq& \sum_a \pi_{k,h}(a|s)^{1-\eta\tau} \cdot C_2 \cdot C_1\\
        \leq& \sum_a \pi_{k,h}(a|s)^{1/2} \cdot C_2 \cdot C_1\\
        \leq& \round{\sum_a \pi_{k,h}(a|s)}^{1/2} \round{A C_2^2 \cdot C_1^2}^{1/2}\\
        =& \sqrt{A} C_1 C_2,
    \end{align*}
    where the last inequality is Cauchy-Schwarz, and 
    \begin{align*}
        &\sum_a \pi_{k,h}(a|s) \exp\round{\eta \qval{r + \bm{\lambda}_k^T \bm{u} + \tau \psi_k,h}{\pi_k}{s,a}} \cdot 2\tau^2\log^2\round{\frac{1}{\pi_{k,h}(a|s)}}\\
        \leq& \sum_a \pi_{k,h}(a|s)^{1-\eta\tau} \cdot C_2 \cdot 2\tau^2\log^2\round{\frac{1}{\pi_{k,h}(a|s)}}\\
        \leq& \sum_a \pi_{k,h}(a|s)^{1/2} \cdot C_2 \cdot 2\tau^2\pi_k(a|s,h)^{1/4}\log^2\round{\frac{1}{\pi_{k,h}(a|s)}}\\
        \leq& \sum_a \pi_{k,h}(a|s)^{1/2} \cdot C_2 \cdot 2\tau^2 (64/e^2)\\
        \leq& \sqrt{A} C_2 \cdot 2\tau^2 (64/e^2),
    \end{align*}
    where we used the fact that $q^{1/4}\log^2(1/q) \leq 64/e^2$ for $q\in(0,1)$, and Cauchy-Schwarz in the same manner as before. Adding up the previous two terms and plugging in the definitions of the constants yields the second inequality.
    
    For the third inequality, we find (recalling that $g_h(s,a) \in [-1,1]^I$)
    \begin{align*}
        \norm{\valStartVec{\bm{g}}{\pi_k} + \tau \bm{\lambda}_k} \leq& \norm{\valStartVec{\bm{g}}{\pi_k}} + \tau \norm{\bm{\lambda}_k}\\
        \leq& \sqrt{I} H + \tau \sqrt{I} \lambdaMax,
    \end{align*}
    concluding the proof.
\end{proof}

\begin{lemma}[Truncated value function bounds] \label{lem:val-bounds-finite-surrogate}
    For all $s,a,h$, we have 
    \begin{align}
        0 \leq \qvalHat{\zHat_k,h}{k}{s,a} \leq - \tau \log(\pi_{k,h}(a|s)) + H(1 + I \lambdaMax + \tau \log(A)) \label{eq:q-bound-finite-surrogate}
    \end{align}
    and 
    \begin{align*}
        &\sum_{a} \pi_{k,h}(a|s) \exp\round{ \eta \qvalHat{\zHat_k,h}{k}{s,a} } \qvalHat{\zHat_k,h}{k}{s,a}^2\\
        \leq& \sqrt{A}\exp\round{\eta H \round{1 + \lambdaMax I + \tau\log(A)}} \round{ 2H^2 \round{1 + I \lambdaMax + \tau \log(A)}^2 + 2\tau^2 (64/e^2) }
    \end{align*}
    and 
    \begin{align*}
        \norm{\valStartHatVec{\bm{\uHat}_{k}}{k} - \bm{c} + \tau \bm{\lambda}_k} \leq \sqrt{I} \round{ H + \tau \lambdaMax } 
    \end{align*}
\end{lemma}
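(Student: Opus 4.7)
The plan is to mirror the proof of \cref{lem:val-bounds-finite}, but using the truncation guarantees of \cref{algo:trun-eval} in place of the definitional bounds on true value functions. The three bounds decouple, so I would handle them in turn.

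For the first inequality, non-negativity of each component ($\qvalHat{\rHat_k,h}{k}{s,a}$, $\qvalHat{\uHat_{k,i},h}{k}{s,a}$, $\qvalHat{\psiHat_k,h}{k}{s,a}$) is immediate by induction over $h$ from \cref{algo:trun-eval} (the backward recursion takes minima of non-negative quantities starting from $0$), and combined with $\lambda_{k,i}\geq 0$ this gives $\qvalHat{\zHat_k,h}{k}{s,a}\geq 0$ by \cref{eq:def-qk}. For the upper bound, the truncation directly yields $\qvalHat{\rHat_k,h}{k}{s,a}\leq H$ and $\qvalHat{\uHat_{k,i},h}{k}{s,a}\leq H$, while for the entropy term the truncation step gives $\qvalHat{\psiHat_k,h}{k}{s,a}\leq \psi_{k,h}(s,a) + (H-h+1)\log(A)\leq -\log(\pi_{k,h}(a|s)) + H\log(A)$. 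Summing these via \cref{eq:def-qk} produces the claimed bound $-\tau\log(\pi_{k,h}(a|s)) + H(1 + I\lambdaMax + \tau\log(A))$.

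For the second inequality, the argument is essentially a copy of the second part of \cref{lem:val-bounds-finite}, with $\qval{r + \bm{\lambda}_k^T \bm{u} + \tau \psi_k,h}{\pi_k}{s,a}$ replaced by $\qvalHat{\zHat_k,h}{k}{s,a}$. Using $(a+b)^2\leq 2a^2 + 2b^2$ together with the bound just proved, I split
\[
\qvalHat{\zHat_k,h}{k}{s,a}^2 \leq 2H^2(1+I\lambdaMax + \tau\log(A))^2 + 2\tau^2 \log^2(1/\pi_{k,h}(a|s)),
\]
and use the same bound inside the exponential to obtain $\pi_{k,h}(a|s)\exp(\eta \qvalHat{\zHat_k,h}{k}{s,a}) \leq \pi_{k,h}(a|s)^{1-\eta\tau}\exp(\eta H(1+I\lambdaMax+\tau\log(A)))$. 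For small enough $\eta\tau$ (so that $1-\eta\tau\geq 1/2$), Cauchy--Schwarz on the sum over $a$ and the standard inequality $q^{1/4}\log^2(1/q)\leq 64/e^2$ for $q\in(0,1)$ yield the two summands of the claimed bound, just as in \cref{lem:val-bounds-finite}.

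For the third inequality, the truncation guarantees $0\leq \valHat{\uHat_{k,i},1}{k}{s_1}\leq H$ (again by backward induction on $h$ in \cref{algo:trun-eval}, with the min capping values at $H-h+1$), so each coordinate of $\valStartHatVec{\bm{\uHat}_k}{k} - \bm{c}$ lies in $[-H,H]$, giving $\norm{\valStartHatVec{\bm{\uHat}_k}{k}-\bm{c}}\leq \sqrt{I}H$. The triangle inequality with $\tau\norm{\bm{\lambda}_k}\leq \tau\sqrt{I}\lambdaMax$ then yields the stated bound. I do not expect any genuine obstacle here; the only subtlety relative to \cref{lem:val-bounds-finite} is to be careful that the truncation threshold for $\qvalHat{\psiHat_k,h}{k}{s,a}$ contains an additive $\psi_{k,h}(s,a)$, which is precisely what makes the logarithmic term appear in the bound rather than a uniform constant.
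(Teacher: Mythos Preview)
Your proposal is correct and follows essentially the same approach as the paper. The paper's proof is a one-liner that simply observes the truncated value functions obey the same pointwise bounds as the true ones used in \cref{lem:val-bounds-finite}, so the statement follows verbatim; you have spelled out explicitly why the truncation thresholds in \cref{algo:trun-eval} deliver those same bounds and then re-run the argument of \cref{lem:val-bounds-finite}, which is exactly the intended content.
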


\begin{proof}
    Since the truncated value functions are bounded between $0$ and the true value functions, the statement follows from \cref{lem:val-bounds-finite}.
\end{proof}

\section{Regret Analysis} \label{app:pd}

In this section, we provide all proofs of the result from \cref{sec:pd}, leading to a regret bound of the regularized primal-dual algorithm (\cref{algo:rpg-dual-finite-learn}).

We write $\lesssim$ for asymptotic inequality up to polylogarithmic terms. First, we note that the primal-dual updates indeed correspond to mirror descent updates.

\begin{observation} \label{lem:update-exp-kl-finite-learn}
    The closed-form expressions in \cref{algo:rpg-pd-finite-learn} are solutions to
    \begin{align*}
        \max_{\pi_h(\cdot | s) \in \simplex{\calA}}& \round{ \sum_{a \in \calA} \pi_h (a | s) \qvalHat{\zHat_k,h}{k}{s,a} - \frac{1}{\eta} \text{KL}(\pi_h(\cdot|s), \pi_{k,h}(\cdot|s))},\\
        \min_{\bm{\lambda} \in \Lambda}& \round{ \bm{\lambda}^T \round{ \valStartHatVec{\bm{\uHat}_k}{k} - \bm{c} + \tau \bm{\lambda}_k} + \frac{1}{2\eta} \norm{\bm{\lambda} - \bm{\lambda}_k}^2 },
    \end{align*}
    respectively.
\end{observation}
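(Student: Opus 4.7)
The observation bundles two claims, one per update. For each, we would compute the first-order optimality condition of the displayed problem and verify it matches the closed-form update in \cref{algo:rpg-pd-finite-learn}. The arguments are completely routine; the only minor subtlety is to justify that a simplex-interior argument lets us drop the non-negativity constraints in the primal.

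For the primal update, fix $(s,h)$ and consider $F(\pi_h(\cdot|s)) := \sum_a \pi_h(a|s)\,\qvalHat{\zHat_k,h}{k}{s,a} - \frac{1}{\eta}\,\text{KL}(\pi_h(\cdot|s),\pi_{k,h}(\cdot|s))$. Since $\pi_{k,h}(a|s)>0$ for every $a$ (by induction from the uniform initialization, as the exponential update preserves positivity), the KL term equals $+\infty$ whenever some coordinate of $\pi_h(\cdot|s)$ vanishes, so the maximizer lies in the relative interior of $\Simplex{\calA}$. Hence we only need to enforce the normalization constraint $\sum_a \pi_h(a|s)=1$ via a Lagrange multiplier $\mu$. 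Setting the partial derivative with respect to $\pi_h(a|s)$ to zero yields
\begin{align*}
\qvalHat{\zHat_k,h}{k}{s,a} - \tfrac{1}{\eta}\round{\log \pi_h(a|s) - \log \pi_{k,h}(a|s) + 1} - \mu = 0,
\end{align*}
which rearranges to $\pi_h(a|s) \propto \pi_{k,h}(a|s)\exp\round{\eta\,\qvalHat{\zHat_k,h}{k}{s,a}}$, with the proportionality constant fixed by normalization. This is exactly the primal update. Strict concavity of $F$ on $\Simplex{\calA}$ implies that this stationary point is the unique maximizer.

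For the dual update, the objective is strongly convex in $\bm{\lambda}$ and thus has a unique minimizer on the closed convex set $\Lambda$. Its unconstrained gradient vanishes at
\begin{align*}
\bm{\lambda}^\circ = \bm{\lambda}_k - \eta\round{\valStartHatVec{\bm{\uHat}_k}{k} - \bm{c} + \tau\bm{\lambda}_k} = (1-\eta\tau)\bm{\lambda}_k - \eta\round{\valStartHatVec{\bm{\uHat}_k}{k} - \bm{c}}.
\end{align*}
Since the regularizer is the squared Euclidean distance to $\bm{\lambda}_k$, the constrained minimum over $\Lambda$ coincides with the Euclidean projection of $\bm{\lambda}^\circ$ onto $\Lambda$, matching the dual step of \cref{algo:rpg-pd-finite-learn}. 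As $\Lambda=[0,\lambdaMax]^I$ is a product of intervals, this projection amounts to coordinate-wise clipping to $[0,\lambdaMax]$, which completes the verification.
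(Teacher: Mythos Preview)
Your proposal is correct and takes essentially the same approach as the paper. The paper does not spell out a proof of this observation but defers to its \cref{lem:update-md-implicit}, whose proof is nothing more than a reference to the standard exponentiated-gradient derivation (Orabona) for the primal and the first-order optimality criterion for the dual; you simply carry out these routine computations explicitly, with the same ingredients (Lagrange multiplier for the simplex constraint plus interior argument for the primal, completing the square to identify the Euclidean projection for the dual).
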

This means that both the primal and dual variables are updated via mirror descent with different regularizers. Hence, we can make use of the descent lemmas for online mirror descent (we refer to \cref{lem:md-descent-line,lem:md-descent-proj} in \cref{sec:omd}). However, the value functions that serve as (surrogate) gradients are only estimates. Thus, the convergence proof for the potential function $\Phi_k$ that measures the distance from $(\pi_k, \bm{\lambda}_k)$ to $(\regPiStar, \regLambdaStar)$ needs to take the estimation errors into account.

\potentialFiniteLearn*

\begin{proof}
    Condition on $G$, which occurs with probability at least $1-\delta$ by \cref{lem:succ-event}. We first decompose the $k$-th \emph{primal-dual gap} as follows:
    \begin{align}
        \regLagrangian(\regPiStar, \bm{\lambda}_k) - \regLagrangian(\pi_k, \regLambdaStar) = \underbrace{\regLagrangian(\regPiStar, \bm{\lambda}_k) - \regLagrangian(\pi_k, \bm{\lambda}_k)}_{(i)} + \underbrace{\regLagrangian(\pi_k, \bm{\lambda}_k) - \regLagrangian(\pi_k, \regLambdaStar)}_{(ii)}. \label{eq:pd-decomp-finite-learn}
    \end{align}
    We first bound term (i):
    \begin{align}
        (i) =& \regLagrangian(\regPiStar, \bm{\lambda}_k) - \regLagrangian(\pi_k, \bm{\lambda}_k)\nonumber\\
        =& \valStart{r + \bm{\lambda}_k^T\bm{g}}{\regPiStar} - \valStart{r + \bm{\lambda}_k^T\bm{g}}{\pi_k} \tag{as $\occ{h}{\pi}{s,a}=\occ{h}{\pi}{s}\pi_h(a|s)$, and cancel $\norm{\bm{\lambda}_k}^2$}\nonumber\\
        &- \tau \sum_{s,a,h} \occ{h}{\regPiStar}{s} \regPiStarH{h}(a|s) \log(\regPiStarH{h}(a|s)) + \tau \sum_{s,a,h} \occ{h}{\pi_k}{s} \pi_{k,h}(a|s) \log(\pi_{k,h}(a|s))\nonumber\\
        =& \valStart{r + \bm{\lambda}_k^T\bm{g} + \tau \psi_k}{\regPiStar} - \valStart{r + \bm{\lambda}_k^T\bm{g} + \tau \psi_k}{\pi_k} \quad \tag{since $\psi_{k,h}(s,a) = -\log(\pi_{k,h}(a|s))$} \nonumber\\
        &+ \tau \sum_{s,a,h} \occ{h}{\regPiStar}{s} \regPiStarH{h}(a|s) \log(\pi_{k,h}(a|s)) - \tau \sum_{s,a,h} \occ{h}{\pi_k}{s} \pi_{k,h}(a|s) \log(\pi_{k,h}(a|s))\nonumber\\
        &- \tau \sum_{s,a,h} \occ{h}{\regPiStar}{s} \regPiStarH{h}(a|s) \log(\regPiStarH{h}(a|s)) + \tau \sum_{s,a,h} \occ{h}{\pi_k}{s} \pi_{k,h}(a|s) \log(\pi_{k,h}(a|s))\nonumber\\
        =& \valStart{r + \bm{\lambda}_k^T\bm{g} + \tau \psi_k}{\regPiStar} - \valStart{r + \bm{\lambda}_k^T\bm{g} + \tau \psi_k}{\pi_k} \nonumber\\
        &+ \tau \sum_{s,a,h} \occ{h}{\regPiStar}{s} \regPiStarH{h}(a|s) \log(\pi_{k,h}(a|s)) \nonumber\\
        &- \tau \sum_{s,a,h} \occ{h}{\regPiStar}{s} \regPiStarH{h}(a|s) \log(\regPiStarH{h}(a|s)) \nonumber\\
        =& \valStart{r + \bm{\lambda}_k^T\bm{g} + \tau \psi_k}{\regPiStar} - \valStart{r + \bm{\lambda}_k^T\bm{g} + \tau \psi_k}{\pi_k} \nonumber\\
        &- \tau \sum_{s,h} \occ{h}{\regPiStar}{s} \sum_{a} \regPiStarH{h}(a|s) \log\round{\frac{\regPiStarH{h}(a|s)}{\pi_{k,h}(a|s)}}\nonumber\\
        =& \valStart{r + \bm{\lambda}_k^T\bm{g} + \tau \psi_k}{\regPiStar} - \valStart{r + \bm{\lambda}_k^T\bm{g} + \tau \psi_k}{\pi_k} - \tau\sum_{s,h} \occ{h}{\regPiStar}{s} \kl_{k,h}(s) \nonumber\\
        =& \valStart{r + \bm{\lambda}_k^T\bm{g} + \tau \psi_k}{\regPiStar} - \valStart{r + \bm{\lambda}_k^T\bm{g} + \tau \psi_k}{\pi_k} - \tau \kl_k \nonumber\\
        =& \valStart{r + \bm{\lambda}_k^T\bm{u} + \tau \psi_k}{\regPiStar} - \valStart{r + \bm{\lambda}_k^T\bm{u} + \tau \psi_k}{\pi_k} - \tau \kl_k \tag{as $\bm{g} = \bm{u} - \frac{1}{H}\bm{c}$}\nonumber\\
        =&\valStart{z_k}{\regPiStar} - \valStart{z_k}{\pi_k} - \tau \kl_k \nonumber
    \end{align}

    Now by \cref{lem:regret-decomp}, we have 
    \begin{align*}
        &\valStart{z_k}{\regPiStar} - \valStart{z_k}{\pi_k} \\
        =& (\valStartHat{\zHat_k}{k} - \valStart{z_k}{\pi_k}) \tag{a}\\
        & + \sum_h \exptn \rectangular{ \left\langle \qvalHat{\zHat_k,h}{k}{s_h,\cdot},  \regPiStarH{h}(\cdot|s_h) - \pi_{k,h}(\cdot|s_h) \right\rangle \biggVert s_1, \regPiStar, p}\tag{b}\\
        & + \sum_h \exptn \rectangular{ - \qvalHat{\zHat_k,h}{k}{s_h,a_h} + z_{k,h}(s_h,a_h) + \left\langle p_h(\cdot | s_h,a_h), \valHat{\zHat_k,h+1}{\pi_k}{\cdot} \right\rangle \biggVert s_1, \regPiStar, p} \tag{c}
    \end{align*}
    We leave term (a) as is and will sum over $k$ later. For term (b), note that for all $s,h$,
    \begin{align*}
        &\left\langle \qvalHat{\zHat_k,h}{k}{s,\cdot},  \regPiStarH{h}(\cdot|s) - \pi_{k,h}(\cdot|s) \right\rangle\\
        \leq& \frac{\kl_{k,h}(s) - \kl_{k+1,h}(s)}{\eta} + \frac{\eta}{2}\sum_{a} \pi_{k,h}(a|s)\exp\round{\qvalHat{\zHat_k,h}{k}{s,a}} \qvalHat{\zHat_k,h}{k}{s,a}^2 \tag{\cref{lem:md-descent-kl}}\\
        \leq&\frac{\kl_{k,h}(s) - \kl_{k+1,h}(s)}{\eta} + \frac{\eta}{2} \frac{1}{H} \dTauLambda, \tag{\cref{lem:val-bounds-finite-surrogate}}
    \end{align*}
    with 
    \begin{align*}
        \dTauLambda = HA^{1/2}\exp\round{\eta H \round{1 + \lambdaMax I + \tau\log(A)}} \round{ 2H^2 \round{1 + I \lambdaMax + \tau \log(A)}^2 + 2\tau^2 (64/e^2) }
    \end{align*}
    and where we were able to apply \cref{lem:md-descent-kl} by \cref{lem:update-exp-kl-finite-learn} and since $\qvalHat{\zHat_k,h}{k}{s,a}\geq0$. Hence, 
    \begin{align*}
        \sum_h \exptn \rectangular{ \left\langle \qvalHat{\zHat_k,h}{k}{s_h,\cdot},  \regPiStarH{h}(\cdot|s_h) - \pi_{k,h}(\cdot|s_h) \right\rangle \biggVert s_1, \regPiStar, p}
        =& \sum_{s,h} \occ{h}{\regPiStar}{s} \left\langle \qvalHat{\zHat_k,h}{k}{s,\cdot},  \regPiStarH{h}(\cdot|s) - \pi_{k,h}(\cdot|s) \right\rangle \\
        \leq& \sum_{s,h} \occ{h}{\regPiStar}{s} \round{\frac{\kl_{k,h}(s) - \kl_{k+1,h}(s)}{\eta} + \frac{\eta}{2} \frac{1}{H}\dTauLambda}\\
        =& \frac{\kl_k - \kl_{k+1}}{\eta} + \frac{\eta}{2} \dTauLambda.
    \end{align*}
    Term (c) is $\leq 0$ by \cref{lem:optimism-val-z}, which applies since $G$ occurs. Plugging in, we thus find 
    \begin{align}
        (i) =& \valStart{z_k}{\regPiStar} - \valStart{z_k}{\pi_k} - \tau \kl_k \nonumber\\
        \leq& (\valStartHat{\zHat_k}{k} - \valStart{z_k}{\pi_k}) 
         + \frac{\kl_k - \kl_{k+1}}{\eta} + \frac{\eta}{2} \dTauLambda 
         + 0
         - \tau \kl_k \nonumber\\
         =& (\valStartHat{\zHat_k}{k} - \valStart{z_k}{\pi_k})
         + \frac{(1-\eta\tau)\kl_k - \kl_{k+1}}{\eta} + \frac{\eta}{2} \dTauLambda. \label{eq:rpg-i-bound-finite-learn}
    \end{align}
    We now bound term (ii):
    \begin{align}
        (ii) =& \regLagrangian(\pi_k, \bm{\lambda}_k) - \regLagrangian(\pi_k, \regLambdaStar)\nonumber\\
        =& \valStart{r+\bm{\lambda}_k^T \bm{g}}{\pi_k} - \valStart{r+(\regLambdaStar)^T \bm{g}}{\pi_k} + \frac{\tau}{2} \norm{\bm{\lambda}_k}^2 - \frac{\tau}{2} \norm{\regLambdaStar}^2 \tag{cancel $\entropy(\pi_k)$}\nonumber\\
        =& \sum_{i} (\lambda_{k,i} - \regLambdaStarI{i}) \valStart{g_i}{\pi_k} + \frac{\tau}{2} \norm{\bm{\lambda}_k}^2 - \frac{\tau}{2} \norm{\regLambdaStar}^2\nonumber\\
        =& \sum_{i} (\lambda_{k,i} - \regLambdaStarI{i}) (\valStart{u_i}{\pi_k} - c_i + \tau \lambda_{k,i}) - \frac{\tau}{2} \norm{\bm{\lambda}_k - \regLambdaStar}^2 \nonumber\\
        =& \sum_{i} (\lambda_{k,i} - \regLambdaStarI{i}) (\valStartHat{\uHat_{k,i}}{k} - c_i + \tau \lambda_{k,i}) \tag{a}\nonumber\\
        &\sum_{i} (\lambda_{k,i} - \regLambdaStarI{i}) (\valStart{u_i}{\pi_k} - \valStartHat{\uHat_{k,i}}{k}) \tag{b}\nonumber\\
        &- \frac{\tau}{2} \norm{\bm{\lambda}_k - \regLambdaStar}^2 \nonumber
    \end{align}
    To bound (a), we note that
    \begin{align*}
        &\sum_{i} (\lambda_{k,i} - \regLambdaStarI{i}) (\valStartHat{\uHat_{k,i}}{k} - c_i + \tau \lambda_{k,i})\\
        \leq& \frac{\norm{\regLambdaStar - \bm{\lambda}_k}^2 - \norm{\regLambdaStar - \bm{\lambda}_{k+1}}^2}{2\eta} +\frac{\eta}{2}\norm{\valStartHatVec{\uHat_{k}}{k} - \bm{c} + \tau \bm{\lambda}_k}^2 \tag{\cref{lem:md-descent-proj}}\\
        \leq& \frac{\norm{\regLambdaStar - \bm{\lambda}_k}^2 - \norm{\regLambdaStar - \bm{\lambda}_{k+1}}^2}{2\eta} + \frac{\eta}{2} \dTauLambdaPrime, \tag{\cref{lem:val-bounds-finite-surrogate}}
    \end{align*}
    with $\dTauLambdaPrime = I(H + \tau \lambdaMax)^2$ and where we were able to apply \cref{lem:md-descent-proj} by \cref{lem:update-exp-kl-finite-learn}. We can bound (b) via 
    \begin{align*}
        \sum_{i} (\lambda_{k,i} - \regLambdaStarI{i}) ( \valStart{u_i}{\pi_k} - \valStartHat{\uHat_{k,i}}{k} ) \leq& \sum_{i} \lambdaMax \abs{ \valStartHat{\uHat_{k,i}}{k} - \valStart{u_i}{\pi_k} }.
    \end{align*}
    Plugging in, we find 
    \begin{align}
        (ii) =& \sum_{i} (\lambda_{k,i} - \regLambdaStarI{i} ) (\valStart{u_i}{\pi_k} - c_i + \tau \lambda_{k,i}) - \frac{\tau}{2} \norm{\bm{\lambda}_k - \regLambdaStar}^2 \nonumber\\
        \leq& \frac{\norm{\regLambdaStar - \bm{\lambda}_k}^2 - \norm{\regLambdaStar - \bm{\lambda}_{k+1}}^2}{2\eta} + \frac{\eta}{2} \dTauLambdaPrime
        + \sum_i \lambdaMax \abs{ \valStartHat{\uHat_{k,i}}{k} - \valStart{u_i}{\pi_k} }
        - \frac{\tau}{2} \norm{\bm{\lambda}_k - \regLambdaStar}^2\nonumber\\
        =& \frac{(1-\eta\tau)\norm{\regLambdaStar - \bm{\lambda}_k}^2 - \norm{\regLambdaStar - \bm{\lambda}_{k+1}}^2}{2\eta} + \frac{\eta}{2} \dTauLambdaPrime
        + \sum_i \lambdaMax \abs{ \valStartHat{\uHat_{k,i}}{k} - \valStart{u_i}{\pi_k} }. \label{eq:rpg-ii-bound-finite-learn}
    \end{align}
    Before proceeding, note that conditioned on $G$, we have 
    \begin{align*}
        \valStartHat{\uHat_{k,i}}{k} - \valStart{u_i}{\pi_k} \geq& 0,\\
        \valStartHat{\zHat_k}{k} - \valStart{z_k}{\pi_k} \geq& 0,
    \end{align*}
    by \cref{lem:optimism-val-ru,lem:optimism-val-z}, respectively. Hence, we can treat these differences and their absolute values interchangeably in what follows.
    
    From \cref{lem:sp-cmdp-finite} (with $\pi=\pi_k$, $\bm{\lambda}=\bm{\lambda}_k$), we have $0 \leq \regLagrangian(\regPiStar, \bm{\lambda}_k) - \regLagrangian(\pi_k, \regLambdaStar)$. Moreover, recall $\Phi_k = \kl_k + \frac{1}{2} \norm{\bm{\lambda}_k - \regLambdaStar}^2$, thus by \cref{eq:rpg-i-bound-finite-learn,eq:rpg-ii-bound-finite-learn},
    \begin{align*}
        \Phi_{k+1} =& \kl_{k+1} + \frac{1}{2} \norm{\bm{\lambda}_{k+1} - \regLambdaStar}^2\\
        \leq& (1-\eta\tau) \kl_k + \frac{\eta^2}{2} \dTauLambda  + \eta(\valStartHat{\zHat_k}{k} - \valStart{z_k}{\pi_k})  - \eta(i)  \tag{\cref{eq:rpg-i-bound-finite-learn,eq:rpg-ii-bound-finite-learn}}\\
        & + (1-\eta\tau) \frac{\norm{\bm{\lambda}_k - \regLambdaStar}^2}{2} + \frac{\eta^2}{2} \dTauLambdaPrime + \eta \sum_i \lambdaMax \abs{ \valStartHat{\uHat_{k,i}}{k} - \valStart{u_i}{\pi_k} } - \eta(ii)\\
        \leq& (1-\eta\tau) \Phi_k + \eta^2(\dTauLambda + \dTauLambdaPrime)\\
        &+ \eta(\valStartHat{\zHat_k}{k} - \valStart{z_k}{\pi_k}) + \eta \sum_i \lambdaMax \abs{ \valStartHat{\uHat_{k,i}}{k} - \valStart{u_i}{\pi_k} }
        \tag{Def. $\Phi_k$}\\
        &- \eta \round{(i) + (ii)}  \\
        \leq& (1-\eta\tau) \Phi_k + \eta^2(\dTauLambda + \dTauLambdaPrime)\\
        &+ \eta(\valStartHat{\zHat_k}{k} - \valStart{z_k}{\pi_k}) + \eta \sum_{i} \lambdaMax \abs{ \valStartHat{\uHat_{k,i}}{\pi_k} - \valStart{u_i}{\pi_k} }
        \\
        &- \eta \round{\regLagrangian(\regPiStar, \bm{\lambda}_k) - \regLagrangian(\pi_k, \regLambdaStar)} \tag{\cref{eq:pd-decomp-finite-learn}}\\
        \leq& (1-\eta\tau) \Phi_k + \eta^2(\dTauLambda + \dTauLambdaPrime) \tag{as $\regLagrangian(\regPiStar, \bm{\lambda}_k) - \regLagrangian(\pi_k, \regLambdaStar) \geq 0$}\\
        &+ \eta(\valStartHat{\zHat_k}{k} - \valStart{z_k}{\pi_k}) + \eta \sum_i \lambdaMax \abs{ \valStartHat{\uHat_{k,i}}{k} - \valStart{u_i}{\pi_k} } .
    \end{align*}
    By induction and geometric series bound we find
    \begin{align*}
        \Phi_{k+1} \leq& (1-\eta\tau)^k \Phi_1 + \sum_{k'=1}^k (1-\eta\tau)^{k+1-k'} \eta^2 (\dTauLambda + \dTauLambdaPrime) \\
        &+ \sum_{k'=1}^k (1-\eta\tau)^{k+1-k'} \round{\eta(\valStartHat{z_{k'}}{k'} - \valStart{z_{k'}}{\pi_{k'}}) + \eta \sum_i \lambdaMax \abs{ \valStartHat{\uHat_{k',i}}{k'} - \valStart{u_i}{\pi_{k'}} }} \\
        \leq& (1-\eta\tau)^k \Phi_{1} + \frac{1}{\eta\tau} \eta^2 (\dTauLambda + \dTauLambdaPrime)\\
        &+ \eta\sum_{k'=1}^k \round{(\valStartHat{\zHat_{k'}}{k'} - \valStart{z_k}{\pi_{k'}}) + \sum_i \lambdaMax \abs{ \valStartHat{\uHat_{k',i}}{k'} - \valStart{u_i}{\pi_{k'}} }}\\
        \leq& (1-\eta\tau)^k \Phi_{1} + \frac{\eta}{\tau} (\dTauLambda + \dTauLambdaPrime)\\
        &+ \tilde{O} \round{\eta \round{(2+I\lambdaMax) + I \lambdaMax} \round{\sqrt{S^2AH^4 k} + S^{3/2}AH^2}},
    \end{align*}
    where we used \cref{lem:est-error-reg,lem:est-error-ru} (which apply since $G$ occurs) in the final step. Finally, noting that $\dTauLambda + \dTauLambdaPrime \leq \tilde{O}(\cTauLambda)$ (see \cref{lem:last-iterate}) and invoking $\lambdaMax\geq 1$ concludes the proof.
\end{proof}

Invoking \cref{lem:pot-to-regret-finite}, we can leverage the convergence of the potential function to show a sublinear regret bound for \cref{algo:rpg-dual-finite-learn}.

\rateFiniteLearn*

\begin{proof}
    The bound follows from \cref{lem:potential-finite-learn} and \cref{lem:pot-to-regret-finite}.

    Set $\Delta_r(k) := \rectangular{\valStart{r}{\piStar} - \valStart{r}{\pi_k}}_+$ and $\Delta_{g_i}(k) := \rectangular{- \valStart{g_i}{\pi_k}}_+$. Condition on the success event $G$, which happens with probability at least $1-\delta$ by \cref{lem:succ-event}.

    We first consider the regret for the reward. Plugging \cref{lem:potential-finite-learn} into \cref{lem:pot-to-regret-finite} we find, using $\sqrt{a+b} \leq \sqrt{a} + \sqrt{b}$ and $1+x\leq \exp(x)$,
    \begin{align*}
        \Delta_r(k) \lesssim& H^{3/2}\Phi_1^{1/2} \exp\round{-\eta\tau k/2} \tag{a}\\
        &+ H^{3/2} \round{\frac{\eta}{\tau}}^{1/2} \cTauLambda^{1/2} \tag{b}\\
        &+ H^{3/2} \round{ \eta \lambdaMax I SA^{1/2}H^2 k^{1/2} }^{1/2} \tag{c}\\
        &+ H^{3/2} \round{\eta \lambdaMax S^{3/2}AH^2}^{1/2}\tag{d}\\
        &+ \tau H\log(A). \tag{e}
    \end{align*}
    We first show that we can ignore terms (b), (d), and (e) since they are $o(K^{-13/14})$. For (b), note that, using the definitions of $\eta$, $\tau$, $\lambdaMax$ (and taking $\sqrt{\cdot}$, and $\tau < 1$)
    \begin{align*}
        \cTauLambda^{1/2} \leq& \lambdaMax H^{3/2}A^{1/4}I\cdot\exp\round{(\eta H \round{1 + \lambdaMax I +  \log(A)})/2} + I^{1/2} \round{ H + \tau \lambdaMax }\\
        \lesssim& \lambdaMax H^{3/2}A^{1/4}I\cdot\exp\round{ (HI)^{-1} \Xi K^{-5/7} \round{1 + \frac{H}{\Xi} K^{1/14}  I + \log(A)}} + I^{1/2} \round{ H + H\Xi^{-1} K^{-1/14}}\\
        \lesssim& \lambdaMax H^{3/2}A^{5/4}I\cdot \exp(2) + I^{1/2} \round{ H + H\Xi^{-1} K^{-1/14}} \\
        \lesssim& H^{5/2}A^{5/4}I\Xi^{-1}K^{1/14}.
    \end{align*}
    Since 
    \begin{align*}
        \round{\frac{\eta}{\tau}}^{1/2} = (H^2I)^{-1/2}\Xi^{1/2} K^{(-5/7 + 1/7)/2} = (H^2I)^{-1/2}\Xi^{1/2}K^{-2/7},    
    \end{align*}
    we thus have
    \begin{align*}
        (b) = H^{3/2} \round{\frac{\eta}{\tau}}^{1/2} \cTauLambda^{1/2} \leq \text{poly}(A,H,I,\Xi^{-1}) K^{-3/14}. 
    \end{align*}
    Hence, when summing (b) over $k=1,\dots,K$, it only contributes $o(K^{-13/14})$ to the regret.
    Similarly,
    \begin{align*} 
        (d) =& H^{3/2} \round{\eta \lambdaMax S^{3/2}AH^2}^{1/2}
        \leq H^{3/2} \round{S^{3/2}AH^2}^{1/2}K^{- (9/2)/14},\\
        (e) =& \tau H\log(A) = K^{-1/7} H \log(A).
    \end{align*}
    Hence, when summing (d) and (e) over $k=1,\dots,K$, they only contribute $o(K^{-13/14})$ to the regret.
    We now turn to terms (a) and (c). For (a), using the standard inequality $e^{-x} \leq 1-x/2$ (if $0\leq x \leq 1$) with $x := \eta\tau/2$, we first find 
    \begin{align*}
        \exp(-\eta\tau k/2) \leq (1 - \eta\tau /4)^k
    \end{align*}
    and hence, (after summing (a) over $k=1,\dots,K$)
    \begin{align*}
        \sum_{k=1}^K H^{3/2}\Phi_1^{1/2} \exp\round{-\eta\tau k/2} \leq& H^{3/2}\Phi_1^{1/2} \sum_{k=1}^K (1 - \eta\tau /4)^k\\
        \leq& H^{3/2}\Phi_1^{1/2}\sum_{k=1}^{\infty} (1 - \eta\tau /4)^k\\
        =& H^{3/2}\Phi_1^{1/2}\frac{4}{\eta\tau}\\
        =& H^{3/2}\Phi_1^{1/2} \frac{4}{(H^{2}I)^{-1}\Xi K^{-5/7} K^{-1/7}}\\
        =& 4H^{7/2} I \Xi^{-1} \Phi_1^{1/2} K^{12/14}.
    \end{align*}
    Furthermore, since $\pi_1$ plays actions uniformly at random and $\bm{\lambda_1}=\bm{0}$, we have $\Phi_1^{1/2} \leq (H\log(A) + \frac{1}{2}I\lambdaMax^2)^{1/2} \leq H^{1/2}\log(A)^{1/2} + I^{1/2}\lambdaMax = H^{1/2}\log(A)^{1/2} + I^{1/2}H\Xi^{-1} K^{1/14}$. Hence, the calculation above shows
    \begin{align*}
        (a) \leq& 4H^{7/2} I \Xi^{-1} \round{H^{1/2}\log(A)^{1/2} + I^{1/2}H\Xi^{-1} K^{1/14}} K^{12/14}\\
        \leq& 4H^{4} I \Xi^{-1}\log(A)^{1/2} K^{12/14} + 4H^{9/2} I \Xi^{-2} I^{1/2} K^{13/14}.
    \end{align*}
    For (c), we find (after summing over $k=1,\dots,K$)
    \begin{align*}
        \sum_{k=1}^K H^{3/2} \round{ \eta \lambdaMax I SA^{1/2}H^2 k^{1/2} }^{1/2}
        =& \sum_{k=1}^K H^{3/2} \round{ (H^2I)^{-1}\Xi K^{-5/7} H\Xi^{-1} K^{1/14}  I SA^{1/2}H^2 k^{1/2} }^{1/2}\\
        \leq& K \cdot H^{3/2} \round{ (H^2I)^{-1}\Xi K^{-5/7} H\Xi^{-1} K^{1/14}  I SA^{1/2}H^2 K^{1/2} }^{1/2}\\
        =& H^{3/2} \round{ (H^2I)^{-1}\Xi H\Xi^{-1}  I SA^{1/2}H^2 }^{1/2} K^{1 + (-5/7 + 1/14 + 1/2)/2}\\
        =& H^{3/2} \round{ SA^{1/2}H }^{1/2} K^{13/14}\\
        =& H^{2}S^{1/2}A^{1/4} K^{13/14}.
    \end{align*}
    Hence, summing up terms (a) to (e) over $k=1,\dots,K$ indeed yields the bound for the objective (using $13/14 \simeq 0.93$):
    \begin{align*}
        \Reg(K; r) \lesssim \round{H^{9/2} I \Xi^{-2} I^{1/2} + H^{2}S^{1/2}A^{1/4}}  K^{0.93}.
    \end{align*}

    Next, we consider the regret for the constraints. Plugging \cref{lem:potential-finite-learn} into \cref{lem:pot-to-regret-finite} we find, using $\sqrt{a+b} \leq \sqrt{a} + \sqrt{b}$,
    \begin{align*}
        \Delta_{u_i}(k) \lesssim& H^{3/2}\Phi_1^{1/2} \exp\round{-\eta\tau k/2} \tag{a'}\\
        &+ H^{3/2} \round{\frac{\eta}{\tau}}^{1/2} \cTauLambda^{1/2} \tag{b'}\\
        &+ H^{3/2} \round{ \eta \lambdaMax I SA^{1/2}H^2  k^{1/2} }^{1/2} \tag{c'}\\
        &+ H^{3/2} \round{\eta \lambdaMax S^{3/2}AH^2}^{1/2}\tag{d'}\\
        &+ \tau \lambdaMax + \frac{1}{\lambdaMax}H^2 \Xi^{-1} \tag{e'}\\
        &+ \frac{1}{\lambdaMax} \tau H\log(A). \tag{f'}
    \end{align*}
    Note that terms (a'), (b'), (c'), (d') are identical to (a), (b), (c), (d). Summed up, they thus correspond to the same regret as the one for the reward. Moreover, for (f') we have
    \begin{align*}
        (f') = \frac{1}{\lambdaMax} \tau\log(A) =& \Xi H^{-1} \log(A) K^{-3/14},
    \end{align*}
    and thus, we can ignore (f') when summing from $k=1,\dots,K$. Finally, for (e'), we have 
    \begin{align*}
        (e') =& \tau \lambdaMax + \frac{1}{\lambdaMax}H^2 \Xi^{-1}\\
        =& K^{-1/7} \cdot H\Xi^{-1} K^{1/14} + H^{-1}\Xi K^{-1/14} \cdot  H^2 \Xi^{-1}\\
        =& H(1 + \Xi^{-1}) K^{-1/14}.
    \end{align*}
    Thus, summing up (a') to (f') from $k=1$ to $K$ yields the regret for the constraints (using $13/14 \simeq 0.93$):
    \begin{align*}
        \Reg(K; \bm{u}) \lesssim \round{H^{9/2} I \Xi^{-2} I^{1/2} + H^{2}S^{1/2}A^{1/4} + H(1 + \Xi^{-1})}  K^{0.93}.
    \end{align*}
\end{proof}

\section{Examples and Simulation} \label{sec:further-exp}

In this section, we expand on the description provided in \cref{sec:experiments}.

\subsection{Examples}

In this section, we provide examples to highlight crucial differences between the strong regret $\mathcal{R}$ and the weak regret $\mathcal{R}_{\text{weak}}$.

\paragraph{A Minimal Example} Consider the case in which an agent repeatedly has the option between three different investment strategies $(o_1, o_2, o_3)$. Each of them yields a respective reward $(r_1, r_2, r_3)=(1/5, 1, 3/5)$. There is an initial cost $(u_1, u_2, u_3) = (9/10, 1/10, 5/10)$ associated with each option, which must not exceed the budget $c = 1/2$ of the agent. If the budget is exceeded, the agent will be in debt, and a larger debt is associated with a higher risk. Clearly, we can model this scenario as an optimization problem $\max_{x \in \simplex{[3]}} \sum_{i\in[3]} x_i r_i$, subject to $\sum_{i\in[3]} x_i r_i \leq c$, where $x$ describes the distribution of investments in the respective strategies. We can equivalently model this problem as a CMDP with one state $s_1$, horizon $H=1$ and three actions $\calA=[3]$  (i.e., a constrained bandit), in which $r_1(s_1, a) = r_a$, $\bm{u}_1(s_1,a) = 1-u_a$ ($a\in[3]$) and $\bm{c} = 1-c$. Here, $o_1$ is highly profitable but too risky, $o_2$ is less profitable but safe, and $o_3$ is a compromise between both. Note that $\regret(K,\bm{u})$ now measures the total amount of debt the algorithm accumulated during $K$ episodes. A strategy $A$ that always plays $o_3$ will have a total debt of $0$ since $o_3$ does not violate the constraint. On the other hand, a strategy $B$ that plays $o_1$ and $o_2$ in an alternating fashion will have a total debt of $K/5$, which is linear in $K$, despite having a weak regret of $0$ due to the aforementioned cancellations. Both strategies, $A$ and $B$, have the same accumulated objective. This simple example, which does not even require an unknown environment, illustrates why weak regret cannot be the right notion of safety during learning. We point to \citet{calvo2023state,moskovitz2023reload,stooke2020responsive} for other examples exhibiting similar behavior.

\paragraph{A Slight Relaxation}
For practical purposes, one may consider the strong regret only for the constraint violations and the weak one for the objective. The reasoning behind this possible relaxation is that we may tolerate superoptimal performance with respect to the reward (when the algorithm violates the constraints) while we still do not tolerate additive negative constraint violations (as discussed). For example, an agent maximizing a wealth function may be allowed to obtain a higher return than any safe method, adding a negative term to the objective regret. In this case, we may want to allow compensating suboptimal returns by superoptimal ones, while we do not want to allow compensating unsafe strategies by strictly safe ones. However, this relaxation does not improve our theoretical results in their current form. It remains open whether, under this relaxation, stronger results are possible to obtain.

We argue that despite the possible relaxation, it is sensible to require the strong regret for both the violation and the objective, as done in this paper, which is also what \citet{efroni2020exploration} referred to. Indeed, one can think of settings in which one gets paid out the return of an episode only up until the limit that would be attainable subject to the constraints. For instance, if there is an illegal set of options, the controller of the environment may decide to pay out only so much as attainable when they are not being used (but may not know whether the illegal actions have, in fact, been used). In other words, the return is $W_k = \min\{\valStart{r}{\pi_k}, \valStart{r}{\piStar}\}$. Hence, per-episode regret would be $$\valStart{r}{\piStar} - W_k =\valStart{r}{\piStar} - \min\{\valStart{r}{\pi_k}, \valStart{r}{\piStar}\} = [\valStart{r}{\piStar} - \valStart{r}{\pi_k}]_{+},$$ which is exactly the strong regret of the episode.

\subsection{Experiment Details}

In this section, we report all details of the parameters, environments, and hardware used for our simulations.

\textbf{Hyperparameters} For the vanilla algorithms, we run for $K=4000$ episodes for each step size $\eta \in \{ 0.05, 0.075, 0.1, 0.125, 0.15, 0.2 \}$, which we observed to be a reasonable range across CMDPs when fixing the number of episodes. Similarly, for the regularized algorithms, we perform the same parameter search across all pairs of step size $\eta \in \{ 0.05, 0.1, 0.2 \}$ and regularization parameter $\tau \in \{ 0.01, 0.02 \}$, totaling a number of six hyperparameter configurations as well. We always set $\lambdaMax = 6$, which did not play a role in our simulations as long as it was chosen sufficiently large. We use exploration bonuses $0.08 \cdot n_h(s,a)^{-1/2}$. For each hyperparameter configuration, we sample $n=5$ independent runs with $K$ episodes, obtain the regret curves and plot their average. For each algorithm, we then report the result for the best hyperparameter configuration in hindsight (with respect to the strong regrets).

\textbf{Environment} As described, we sample the rewards $r$ uniformly at random and the constraints as $c= (1-r) + \beta \zeta $, for a Gaussian vector $\eta \in \reals^{HSA}$ and $\beta = 0.1$.  We sample an environment with $S=A=H=5$ according to the procedure above. Throughout seeds (and other CMDP sizes), this led to CMDPs in which the oscillations of the iterates and error cancellations can be observed. As argued, sampling constraints and rewards fully independently does not provide CMDPs that are interesting test beds. Indeed, unlike in random CMDPs, in real-life situations, we often observe goals that are explicitly conflicting with safety constraints, as otherwise, there is no need to encode them via a CMDP. For instance, consider a vehicle that aims to arrive fast but not go over the speed limit or cross the sidewalk. The latter would be the fastest option, but it conflicts with the constraint. In other words, the constraint and the reward function are negatively correlated.

All simulations were performed on a MacBook Pro 2.8 GHz Quad-Core Intel Core i7. We provide the code in the supplementary material. For all experiments, we set the seed to $123$.

\subsection{Dual Approach} \label{sec:dual}

In this section, we provide the pseudo-code for the regularized dual algorithm. One approach to solving the regularized min-max CMDP problem is to perform dual descent on the regularized Lagrangian. Evaluating the dual function then corresponds to entropy-regularized dynamic programming in an MDP, and the descent step on the dual function corresponds to a mirror descent step of the Lagrange multipliers on the regularized Lagrangian. Replacing the required value functions in this scheme with the optimistically estimated ones yields \cref{algo:rpg-dual-finite-learn}. Note that here, we consider the estimated value functions without truncation. Set $\hat{\entropy}_k(\pi) := \valStart{\psi_{\pi,k}}{\pi}$, where $\psi_{\pi,k,h}(s,a) := -\log(\pi_h(a|s))+ b^p_{k-1,h}(s,a) \log(A)$.
\begin{algorithm}[H] 
	\begin{algorithmic}
        \REQUIRE{$\Lambda = [0,\lambdaMax]^I$, stepsize $\eta>0$, regularization parameter $\tau > 0$, number of episodes $K$, initial policy $\pi_{1,h}(a | s) = 1/A$ ($\forall s,a,h$), $\bm{\lambda}_1 := \bm{0} \in \reals^I$}
        \vspace{0.2cm}
        \FOR{$ k=1,\dots, K$}
        \STATE{Update primal variable via regularized dynamic programming} 
        \begin{align*}
            \pi_{k} &= \arg\min_{\pi \in \Pi} \round{ \valStart{\rHat_k}{\pHat_k, \pi}  + \bm{\lambda}_k^T \valStartVec{\bm{\uHat}_k}{\pHat_k, \pi} + \tau \hat{\entropy}_k(\pi) }
        \end{align*}\vspace{-0.4cm}
        \STATE{Evaluate $\valStartVec{\bm{\uHat}_k}{\pHat_k,\pi_k}$}
        \STATE{Update dual variables:}
            \begin{align*}
                \bm{\lambda}_{k+1} &= \proj{\Lambda}{\bm{\lambda}_k - \eta (\valStartVec{\bm{\uHat}_k}{\pHat_k, \pi_k} - \bm{c} + \tau \bm{\lambda}_k) }.
            \end{align*}
        \STATE{Play $\pi_k$ for one episode, update model estimates: $\rHat_{k+1}$, $\bm{\uHat}_{k+1}$, $\pHat_{k+1}$, $\psiHat_{k+1}$}
        \ENDFOR    
	\caption{Regularized Dual Algorithm with Optimistic Exploration} 
    \label{algo:rpg-dual-finite-learn}
	\end{algorithmic}
\end{algorithm}

\section{Difference Lemmas}

In the following, we recap the well-known performance difference and value difference lemma.

\begin{lemma}[Performance difference lemma] \label{lem:pdl-finite}
    For all $\pi, \pi' \in \Pi$ and all $r' \colon \calS \times \calA \to \reals$, we have
    \begin{align*}
        \valStart{r'}{\pi}  - \valStart{r'}{\pi'} =& \exptn_{\pi} \rectangular{ \sum_{h=1}^H \sum_a \qval{r',h}{\pi'}{s_h,a} \round{ \pi_h(a|s_h) - \pi'_h(a|s_h) }}\\
        =& \sum_{h=1}^H \sum_{s} \occ{h}{\pi}{s} \sum_a \qval{r',h}{\pi'}{s,a} \round{ \pi_h(a|s) - \pi'_h(a|s) }\\
        =& \sum_{h=1}^H \sum_{s} \occ{h}{\pi}{s} \langle \qval{r',h}{\pi'}{s,\cdot}, \pi_h(\cdot|s) - \pi'_h(\cdot|s) \rangle.
    \end{align*}
\end{lemma}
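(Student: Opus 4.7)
The plan is to establish the first equality via the standard telescoping argument based on the Bellman equation for $\pi'$, and then to derive the second and third equalities by reorganizing the trajectory expectation into a sum weighted by the state-occupancy measures and finally rewriting the inner sum as an inner product. All three identities are equalities, so no approximation arguments are needed.

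\textbf{Step 1 (telescoping).} I would start by writing
\begin{align*}
    \valStart{r'}{\pi} - \valStart{r'}{\pi'} = \exptn_{\pi}\rectangular{\sum_{h=1}^H r'_h(s_h, a_h)} - \val{r',1}{\pi'}{s_1}.
\end{align*}
Since $s_1$ is deterministic, $\val{r',1}{\pi'}{s_1} = \exptn_{\pi}\rectangular{\val{r',1}{\pi'}{s_1}}$, and using $\val{r',H+1}{\pi'}{s_{H+1}} \equiv 0$, I can insert a telescoping sum to obtain
\begin{align*}
    \valStart{r'}{\pi} - \valStart{r'}{\pi'}
    = \exptn_{\pi}\rectangular{\sum_{h=1}^H \round{r'_h(s_h, a_h) + \val{r',h+1}{\pi'}{s_{h+1}} - \val{r',h}{\pi'}{s_h}}}.
\end{align*}

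\textbf{Step 2 (Bellman for $\pi'$).} By the Bellman equation for $\pi'$ applied under the same transition kernel $p$,
\begin{align*}
    r'_h(s_h,a_h) + \exptn_{s_{h+1}\sim p_h(\cdot|s_h,a_h)}\rectangular{\val{r',h+1}{\pi'}{s_{h+1}}} = \qval{r',h}{\pi'}{s_h,a_h}.
\end{align*}
Applying the tower property with respect to the filtration generated by $(s_h,a_h)$ (noting that $s_{h+1}$ is sampled from $p_h(\cdot|s_h,a_h)$ regardless of which policy is played) turns the bracketed term into $\exptn_{\pi}\rectangular{\qval{r',h}{\pi'}{s_h,a_h} - \val{r',h}{\pi'}{s_h}}$. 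Using $a_h \sim \pi_h(\cdot|s_h)$ under $\exptn_\pi$ gives $\exptn_\pi[\qval{r',h}{\pi'}{s_h,a_h}\mid s_h] = \sum_a \pi_h(a|s_h)\qval{r',h}{\pi'}{s_h,a}$, while $\val{r',h}{\pi'}{s_h} = \sum_a \pi'_h(a|s_h)\qval{r',h}{\pi'}{s_h,a}$ by definition. Combining these yields the first claimed identity.

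\textbf{Step 3 (occupancy measures and inner-product form).} The second equality is obtained by expanding the outer expectation in the marginal of $s_h$ under $\pi$, which is exactly $\occ{h}{\pi}{s} = P_\pi[s_h = s]$ (cf. the definition of occupancy measures used in \cref{app:lagrangian}):
\begin{align*}
    \exptn_{\pi}\rectangular{\sum_a \qval{r',h}{\pi'}{s_h,a}(\pi_h(a|s_h) - \pi'_h(a|s_h))}
    = \sum_s \occ{h}{\pi}{s}\sum_a \qval{r',h}{\pi'}{s,a}(\pi_h(a|s) - \pi'_h(a|s)).
\end{align*}
The third equality is then the inner-product rewriting $\sum_a \qval{r',h}{\pi'}{s,a}(\pi_h(a|s) - \pi'_h(a|s)) = \dotProd{\qval{r',h}{\pi'}{s,\cdot}}{\pi_h(\cdot|s) - \pi'_h(\cdot|s)}$, which is purely notational.

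This result is classical (Kakade--Langford), so I don't expect any genuine obstacle; the only thing to be careful about is that the transition kernel is shared between $\pi$ and $\pi'$ so that the Bellman equation for $\pi'$ can be plugged in under the expectation taken with respect to $\pi$. Since the statement only involves a general reward-like function $r'$ (not necessarily bounded in $[0,1]$) and the argument is purely algebraic/telescoping, nothing about boundedness or measurability is at stake in the finite-horizon tabular setting considered here.
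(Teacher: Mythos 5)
Your proof is correct: the telescoping decomposition, the Bellman identity for $\pi'$ under the shared kernel $p$, and the rewriting in terms of the unnormalized occupancy measures $\occ{h}{\pi}{s}=P_\pi[s_h=s]$ together give all three equalities. The paper itself simply cites \citet[Lemma 3.2]{cai2020provably} for the first equality and notes the remaining two are notational, so your argument is the standard one underlying that citation, just written out in full.
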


\begin{proof}
    See \citet[Lemma 3.2]{cai2020provably} for the first equality. The second equality follows since we consider unnormalized occupancy measures. The third equality holds by definition of the inner product.
\end{proof}

From \citet[Lemma 1]{shani2020optimistic}:

\begin{lemma}[Extended value difference lemma (aka simulation lemma)] \label{lem:ext-val-diff}
    Let $\pi$, $\pi'$ be policies and $M = (\calS,\calA,p,r)$, $M' = (\calS,\calA,p',r')$ be MDPs. Let $\qvalHat{h}{M}{s,a}$ be an approximation of the value function $\qval{r,h}{p,\pi}{s,a}$. Let $\valHat{h}{M}{s} := \dotProd{\qvalHat{h}{M}{s,\cdot}}{\pi_h(\cdot|s)}$. Then
    \begin{align*}
        &\valHat{ }{M}{s_1,1} - \val{r',1}{p',\pi'}{s_1} \\
        =& \sum_{h=1}^H \exptn\rectangular{ \dotProd{\qvalHat{h}{M}{s_h,\cdot}}{\pi_h(\cdot|s_h) - \pi'_h(\cdot|s_h)} \biggVert s_1; p', \pi'}\\
        &+ \sum_{h=1}^H \exptn\rectangular{\qvalHat{h}{M}{s_h,a_h} - r'_h(s,a) - \dotProd{p'_h(\cdot|s_h,a_h)}{\valHat{h+1}{M}{\cdot}}  \biggVert s_1; p', \pi'},
    \end{align*}
    where $\val{r',1}{p',\pi'}{s_1}$ is the value function of $\pi'$ in $M'$.
\end{lemma}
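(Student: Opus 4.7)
This is a pure algebraic identity, so the plan is to construct it by the standard telescoping trick and \emph{not} to bound anything. The only moving parts are (i) insert/cancel the ``wrong'' Bellman backup of $\valHat{}{M}{\cdot}$ under $(p',\pi')$ so that a one-step residual appears, (ii) pass from the action distribution $\pi_h$ hidden in $\valHat{h}{M}{s_h}$ to $\pi'_h$ to produce the inner-product term, and (iii) telescope the leftover differences $\valHat{h+1}{M}{s_{h+1}} - \val{r',h+1}{p',\pi'}{s_{h+1}}$ along trajectories of $(p',\pi')$.

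First I would fix the boundary convention $\valHat{H+1}{M}{\cdot} = \val{r',H+1}{p',\pi'}{\cdot} = 0$ and define, for each $h$ and $s$,
\begin{align*}
\Delta_h(s) := \valHat{h}{M}{s} - \val{r',h}{p',\pi'}{s}.
\end{align*}
Using $\valHat{h}{M}{s} = \dotProd{\qvalHat{h}{M}{s,\cdot}}{\pi_h(\cdot|s)}$ and the Bellman equation
\begin{align*}
\val{r',h}{p',\pi'}{s} = \dotProd{\pi'_h(\cdot|s)}{r'_h(s,\cdot) + \dotProd{p'_h(\cdot|s,\cdot)}{\val{r',h+1}{p',\pi'}{\cdot}}},
\end{align*}
I would rewrite $\Delta_h(s)$ by adding and subtracting $\dotProd{\pi'_h(\cdot|s)}{\qvalHat{h}{M}{s,\cdot}}$ to split off the inner product term $\dotProd{\qvalHat{h}{M}{s,\cdot}}{\pi_h(\cdot|s) - \pi'_h(\cdot|s)}$.

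Next I would add and subtract $\mathbb{E}_{a\sim\pi'_h(\cdot|s)}\dotProd{p'_h(\cdot|s,a)}{\valHat{h+1}{M}{\cdot}}$ inside the remaining expression. This produces exactly the one-step Bellman residual of $\valHat{}{M}{}$ with respect to the alternative model $(p',r')$:
\begin{align*}
\mathbb{E}_{a\sim\pi'_h(\cdot|s)}\bigl[\qvalHat{h}{M}{s,a} - r'_h(s,a) - \dotProd{p'_h(\cdot|s,a)}{\valHat{h+1}{M}{\cdot}}\bigr],
\end{align*}
plus a ``continuation'' term $\mathbb{E}_{a\sim\pi'_h(\cdot|s),\, s'\sim p'_h(\cdot|s,a)}[\Delta_{h+1}(s')]$. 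Thus pointwise
\begin{align*}
\Delta_h(s) = \dotProd{\qvalHat{h}{M}{s,\cdot}}{\pi_h(\cdot|s)-\pi'_h(\cdot|s)} + \text{(residual)}_h(s) + \mathbb{E}_{(p',\pi')}[\Delta_{h+1}(s_{h+1})\mid s_h=s].
\end{align*}

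Finally I would take the expectation of both sides under the trajectory distribution induced by $(p',\pi')$ starting from $s_1$, and unroll the recursion. The continuation term at step $h$ becomes the left-hand side at step $h+1$, so the sum telescopes and, using $\Delta_{H+1}\equiv 0$, one obtains
\begin{align*}
\Delta_1(s_1) = \sum_{h=1}^{H} \mathbb{E}\bigl[\dotProd{\qvalHat{h}{M}{s_h,\cdot}}{\pi_h(\cdot|s_h)-\pi'_h(\cdot|s_h)} + \text{(residual)}_h(s_h,a_h) \bigm| s_1;p',\pi'\bigr],
\end{align*}
which is exactly the claimed identity. The only conceptual step is the ``add and subtract'' that simultaneously converts the $\pi_h$ inside $\valHat{}{M}{}$ to $\pi'_h$ and exposes the $(p',r')$-Bellman residual of the approximator; everything else is rote telescoping, and since this is an identity there is no inequality or concentration to worry about.
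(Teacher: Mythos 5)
Your proof is correct and is exactly the standard telescoping argument behind this identity; the paper itself does not reprove the lemma but defers to Lemma 1 of \citet{shani2020optimistic}, whose proof proceeds by the same pointwise recursion for $\Delta_h(s)$ (add/subtract $\dotProd{\qvalHat{h}{M}{s,\cdot}}{\pi'_h(\cdot|s)}$ and the $(p',\pi')$-backup of $\valHat{h+1}{M}{\cdot}$, then unroll under the trajectory distribution of $(p',\pi')$). Nothing is missing.
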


Note that $\hat{Q}$ need not correspond to a true value function under some model.

\section{Convex Optimization Background} \label{app:convopt}

In this section, we review fundamental results from the optimization literature. All of these results are standard, and we include them for completeness. They are not novel by themselves nor specific to the sections in which we make use of them.

\subsection{Convex Min-Max Optimization} \label{app:minmax}

While the following results from min-max optimization are commonly used, we establish them here for our setup (both for completeness and due to the lack of a unifying resource for our case).

\paragraph{Setup} 

Let $\calX \subset \reals^{d_x}$, $\calY \subset \reals^{d_y}$ be (nonempty) compact convex sets and let $f \colon \calX \times \calY \to \reals$ be a continuous and convex-concave function. Set $\fBar \colon \calX \to \reals$, $\fBar(x) := \max_{y \in \calY} f(x,y)$, and $\fUnderline \colon \calY \to \reals$, $\fUnderline(y) := \min_{x \in \calX} f(x,y)$, which both exist by continuity of $f$ on a compact domain.

\begin{lemma}[Existence minimax points] \label{lem:exist-minmax}
    We have 
    \begin{align}
        \inf_{x \in \calX} \max_{y \in \calY}~ f(x,y) = \sup_{y \in \calY} \min_{x \in \calX}~ f(x,y),
    \end{align}
    and the maximum and minimum are attained, i.e., there exist $\xStar \in \calX$, $\yStar \in \calY$ such that
    \begin{align}
        \fBar(\xStar) =& \inf_{x \in \calX} \max_{y \in \calY}~ f(x,y),\\
        \fUnderline(\yStar) =& \sup_{y \in \calY} \min_{x \in \calX}~ f(x,y).
    \end{align}
\end{lemma}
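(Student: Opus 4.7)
The plan is to reduce the claim to a standard minimax theorem and then harvest the optimizers by compactness.

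First, I would verify that $\fBar$ and $\fUnderline$ are well-defined continuous functions on compact sets. Since $f$ is continuous and $\calY$ is compact, for each fixed $x \in \calX$ the inner maximum defining $\fBar(x)$ is attained; symmetrically for $\fUnderline$. Because $f(\cdot,y)$ is convex for every $y \in \calY$, $\fBar = \sup_{y \in \calY} f(\cdot,y)$ is a pointwise supremum of convex functions and hence convex on $\calX$. Continuity of $\fBar$ follows from Berge's maximum theorem applied to the constant correspondence $x \mapsto \calY$ with continuous objective $f$; symmetrically $\fUnderline$ is concave and continuous on $\calY$.

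Since $\calX$ is nonempty and compact, Weierstrass' theorem yields $\xStar \in \calX$ with $\fBar(\xStar) = \inf_{x \in \calX} \fBar(x)$, and similarly $\yStar \in \calY$ with $\fUnderline(\yStar) = \sup_{y \in \calY} \fUnderline(y)$. So both the outer infimum and supremum in the statement are attained.

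It remains to prove the equality of values. The weak-duality direction is immediate: for any $(x,y) \in \calX \times \calY$ we have $\fUnderline(y) \le f(x,y) \le \fBar(x)$, so $\sup_{y} \fUnderline(y) \le \inf_{x} \fBar(x)$. The main obstacle is the reverse inequality, for which I would invoke Sion's minimax theorem. Its hypotheses are met here: $\calX$ and $\calY$ are nonempty convex compact subsets of Euclidean spaces, $f(\cdot,y)$ is convex (in particular lower semicontinuous) for each $y$, and $f(x,\cdot)$ is concave (in particular upper semicontinuous) for each $x$. Sion's theorem then gives
\begin{align*}
    \inf_{x \in \calX} \sup_{y \in \calY} f(x,y) = \sup_{y \in \calY} \inf_{x \in \calX} f(x,y),
\end{align*}
which together with the attainment above completes the proof: $\fBar(\xStar) = \inf_x \max_y f(x,y) = \sup_y \min_x f(x,y) = \fUnderline(\yStar)$. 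The only non-routine ingredient is Sion's theorem itself, which I would cite rather than reprove; in the convex–concave, jointly continuous setting considered here one may alternatively cite the classical von Neumann minimax theorem.
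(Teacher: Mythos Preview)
Your proof is correct and follows essentially the same approach as the paper: invoke Sion's minimax theorem for the equality of values, and obtain the optimizers via continuity of $\fBar$, $\fUnderline$ and compactness of $\calX$, $\calY$. The only minor difference is that you establish continuity of $\fBar$ by citing Berge's maximum theorem, whereas the paper gives a direct sequential argument; both are standard and equally valid.
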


\begin{proof}
    The first equality holds due to Sion's Minimax Theorem \citep{sion1958general}. Note that the second part of the lemma is not immediate from Sion-like statements.

    We shall prove that $\fBar$ is continuous. By compactness of $\calX$, this implies the existence of $\xStar$. By symmetry, this also settles the existence of $\yStar$ (by repeating the argument for $-f$). Thus, let $x \in \calX$ and consider a sequence $(x_k)_k$ in $\calX$ such that $x_k \to x$. We aim to show that $\fBar(x_k) \to \fBar(x)$, which would conclude the proof.

    Let $y \in \arg\max_{y' \in \calY} f(x,y)$, which exists by continuity. Then, for every $k$ we have 
    \begin{align*}
        \fBar(x_k) = \max_{y' \in\calY} f(x_k, y') \geq f(x_k, y) \to f(x,y).
    \end{align*}
    Taking $\lim\inf_k$ on both sides yields 
    \begin{align}
        \lim\inf_{k} \fBar(x_k) \geq f(x,y) = \fBar(x).
    \end{align}
    Assume by contradiction that $\lim \sup_{k} \fBar(x_k) > \fBar(x)$. Then we can pick $\delta > 0$ such that $\lim\sup_{k} \fBar(x_k) \geq \fBar(x) + \delta$. Thus we can pick a subsequence $x_{n(k)}$ and $y_{n(k)} \in \arg\max_{y' \in \calY} f(x_{n(k)}, y')$ such that for all $k$, 
    \begin{align}
        \fBar(x_{n(k)}) \geq \fBar(x) + \delta /2\geq f(x,y_{n(k)}) + \delta/2. \label{eq:eps-delta1}
    \end{align}
    Since $\calY$ is compact, by further picking a subsequence if needed, we can WLOG assume that there exists $\tilde{y} \in \calY$ such that $y_{n(k)} \to \tilde{y}$. Then by \cref{eq:eps-delta1},
    \begin{align*}
        f(x_{n(k)},y_{n(k)}) =& \fBar(x_{n(k)}) \geq f(x,y_{n(k)}) + \delta/2.
    \end{align*}
    Taking $k \to \infty$ and using continuity of $f$ yields the contradiction $f(x,\tilde{y}) \geq f(x,\tilde{y}) + \delta/2$. We therefore must have $\lim \sup_{k} \fBar(x_k) \leq \fBar(x) \leq \lim\inf_{k} \fBar(x_k)$, proving $\fBar(x_k) \to \fBar(x)$. Thus, $\fBar$ is indeed continuous.
\end{proof}

\paragraph{General Setup}

The statements in this paragraph all concern the following more general setup (dropping convex-concavity and boundedness of the domain). As we showed in the previous paragraph, all assertions made here hold in the continuous, convex-concave compactly constrained setup. 

Let $\calX \subset \reals^{d_x}$, $\calY \subset \reals^{d_y}$ be (nonempty) closed sets and let $f \colon \calX \times \calY \to \reals$ be a continuous function. Consider $\fBar \colon \calX \to \reals \cup \curly{\pm\infty}$, $\fBar(x) := \max_{y \in \calY} f(x,y)$ and $\fUnderline \colon \calY \to \reals \cup \curly{\pm\infty}$, $\fUnderline(y) := \min_{x \in \calX} f(x,y)$.

\begin{lemma}[Min-max to saddle point] \label{lem:minmax-to-sp}
    Let 
    \begin{align*}
        \xStar& \in \arg\min_{x \in \calX} \max_{y \in \calY}~ f(x,y),\\
        \yStar& \in \arg\max_{y \in \calY} \min_{x \in \calX}~ f(x,y),
    \end{align*}
    and assume $\fBar(\xStar)=\fUnderline(\yStar)$. Then $(\xStar,\yStar)$ is a \emph{saddle point}, i.e., for all $x \in \calX$, $y\in\calY$, we have
    \begin{align*}
        f(\xStar,y) \leq f(\xStar,\yStar) \leq f(x,\yStar).
    \end{align*}
\end{lemma}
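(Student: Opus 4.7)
The plan is to unfold the definitions of $\fBar$ and $\fUnderline$ and use the assumed equality $\fBar(\xStar) = \fUnderline(\yStar)$ as the bridge between the two one-sided inequalities defining a saddle point. No convexity, compactness, or continuity is actually needed for this step; it is a purely order-theoretic fact about $\max\min$ versus $\min\max$ attained at particular points.

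First, I would write down the two trivial inequalities that hold by the very definitions of $\fBar$ and $\fUnderline$: for every $y \in \calY$,
\begin{align*}
f(\xStar,y) \;\leq\; \max_{y' \in \calY} f(\xStar,y') \;=\; \fBar(\xStar),
\end{align*}
and for every $x \in \calX$,
\begin{align*}
f(x,\yStar) \;\geq\; \min_{x' \in \calX} f(x',\yStar) \;=\; \fUnderline(\yStar).
\end{align*}
Neither of these uses the optimality of $\xStar$ or $\yStar$; they just say that any value of $f$ is bounded by the corresponding marginal max/min.

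Next I would feed in the hypothesis $\fBar(\xStar) = \fUnderline(\yStar)$. Chaining the two displays through this equality gives, for all $x \in \calX$ and $y \in \calY$,
\begin{align*}
f(\xStar,y) \;\leq\; \fBar(\xStar) \;=\; \fUnderline(\yStar) \;\leq\; f(x,\yStar).
\end{align*}
Specializing once with $x = \xStar$ and once with $y = \yStar$ yields $f(\xStar,\yStar) \leq \fBar(\xStar) = \fUnderline(\yStar) \leq f(\xStar,\yStar)$, so in fact $f(\xStar,\yStar) = \fBar(\xStar) = \fUnderline(\yStar)$. Substituting this common value back into the chain gives the two desired saddle-point inequalities $f(\xStar,y) \leq f(\xStar,\yStar) \leq f(x,\yStar)$ for all $x \in \calX$, $y \in \calY$.

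There is no real obstacle here, since the heavy lifting (existence of the optimizers and the minimax equality) has already been done in \cref{lem:exist-minmax}; the present lemma just extracts a saddle point from those ingredients. The only thing to be careful about is not accidentally invoking convexity or compactness of $\calX, \calY$ or continuity of $f$, which are not needed once $\xStar$, $\yStar$ and the equality $\fBar(\xStar)=\fUnderline(\yStar)$ are given as hypotheses; this is exactly why the lemma is stated in the more general setup.
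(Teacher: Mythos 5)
Your proof is correct and follows essentially the same route as the paper's: both arguments chain $f(\xStar,y) \leq \fBar(\xStar) = \fUnderline(\yStar) \leq f(x,\yStar)$ and then specialize $x=\xStar$, $y=\yStar$ to identify $f(\xStar,\yStar)$ with the common value, using no convexity, compactness, or continuity. Your explicit remark that the optimality of $\xStar,\yStar$ enters only through the assumed equality $\fBar(\xStar)=\fUnderline(\yStar)$ matches the paper's closing observation.
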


\begin{proof}
    We have
    \begin{align*}
        f(\xStar,y) \leq \max_{y'\in\calY} f(\xStar,y') = \fBar(\xStar) = \fUnderline(\yStar) = \min_{x'\in\calX} f(x',\yStar) \leq f(\xStar,\yStar),
    \end{align*}
    proving the first inequality, and for the second, we have
    \begin{align*}
        f(x,\yStar) \geq \min_{x'\in\calX} f(x',\yStar) = \fUnderline(\yStar) = \fBar(\xStar) = \max_{y'\in\calY} f(\xStar,y') \geq f(\xStar,\yStar).
    \end{align*}
    Note that this proof does not require convexity or compactness.
\end{proof}

\begin{lemma}[Saddle point to min-max]\label{lem:sp-to-minmax}
    Let $(\xStar,\yStar)$ be a saddle point, i.e., for all $x \in \calX$, $y\in\calY$, we have
    \begin{align*}
        f(\xStar,y) \leq f(\xStar,\yStar) \leq f(x,\yStar).
    \end{align*}
    Then 
    \begin{align*}
        \xStar& \in \arg\min_{x \in \calX} \max_{y \in \calY}~ f(x,y),\\
        \yStar& \in \arg\max_{y \in \calY} \min_{x \in \calX}~ f(x,y).
    \end{align*}
\end{lemma}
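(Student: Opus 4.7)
The plan is to extract the two argmin/argmax claims directly from the saddle-point inequalities, using the intermediate values $f(\xStar, \yStar) = \fBar(\xStar) = \fUnderline(\yStar)$ as bridges. No appeal to convexity, compactness, or strong duality is needed — this is the easy converse of \cref{lem:minmax-to-sp}.

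First, I would read off $\fBar(\xStar) = f(\xStar, \yStar)$ from the saddle-point inequality $f(\xStar, y) \leq f(\xStar, \yStar)$ for all $y \in \calY$: the inequality says $\yStar$ attains the supremum $\sup_{y \in \calY} f(\xStar, y)$, so $\fBar(\xStar) = f(\xStar, \yStar)$ (in particular, the max is finite and attained). By the symmetric inequality $f(\xStar, \yStar) \leq f(x, \yStar)$ for all $x \in \calX$, the analogous identification gives $\fUnderline(\yStar) = f(\xStar, \yStar)$.

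Second, to prove $\xStar \in \arg\min_{x \in \calX} \fBar(x)$, I would fix an arbitrary $x \in \calX$ and chain
\begin{align*}
    \fBar(x) = \max_{y' \in \calY} f(x, y') \geq f(x, \yStar) \geq f(\xStar, \yStar) = \fBar(\xStar),
\end{align*}
where the first inequality is by definition of the max and the second is the right half of the saddle-point hypothesis. This gives $\xStar \in \arg\min_{x \in \calX} \max_{y \in \calY} f(x,y)$ directly. Completely symmetrically, for any $y \in \calY$ one has
\begin{align*}
    \fUnderline(y) = \min_{x' \in \calX} f(x', y) \leq f(\xStar, y) \leq f(\xStar, \yStar) = \fUnderline(\yStar),
\end{align*}
yielding $\yStar \in \arg\max_{y \in \calY} \min_{x \in \calX} f(x,y)$.

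There is no substantive obstacle: the statement is a one-step unfolding of the saddle-point definition, and the proof above uses only the two hypothesized inequalities and the definitions of $\fBar, \fUnderline$. The only thing worth flagging is that $\fBar$ and $\fUnderline$ are in general extended-real-valued under the weak hypotheses here, but the first step shows both take the finite value $f(\xStar, \yStar)$ at the distinguished points, which is all the argmin/argmax claims require.
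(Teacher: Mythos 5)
Your proof is correct and takes essentially the same route as the paper's: both arguments are direct unfoldings of the saddle-point inequalities, with the paper chaining $\fBar(\xStar)\leq\min_{x'}f(x',\yStar)\leq\min_{x'}\max_{y'}f(x',y')$ while you compare $\fBar(\xStar)=f(\xStar,\yStar)$ against $\fBar(x)$ for arbitrary $x$ pointwise — an equivalent bookkeeping choice. Your remark that $\fBar,\fUnderline$ are a priori extended-real-valued in this general setup, but finite at the distinguished points, is a correct and slightly more careful observation than the paper makes explicit.
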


\begin{proof}
    We first note that the assertion implies $\max_{y'} f(\xStar, y') \leq \min_{x'\in\calX} f(x',\yStar)$. Hence
    \begin{align*}
        \fBar(\xStar) = \max_{y'} f(\xStar, y') \leq \min_{x'\in\calX} f(x',\yStar) \leq \min_{x'\in\calX} \max_{y'\in\calY} f(x',y'),
    \end{align*}
    showing the claim for $\xStar$. For $\yStar$, we note that similarly,
    \begin{align*}
        \fUnderline(\yStar) = \min_{x'\in\calX} f(x',\yStar) \geq \max_{y' \in\calY} f(\xStar, y') \geq \max_{y'\in\calY} \min_{x'\in\calX} f(x', y'),
    \end{align*}
    concluding the proof.

    Note that this proof does not require convexity or compactness.
\end{proof}

\begin{lemma}[Invariance of saddle points]\label{lem:sp-invar}
    Let
    \begin{align*}
        \xStar& \in \arg\min_{x \in \calX} \max_{y \in \calY}~ f(x,y),\\
        \yStar& \in \arg\max_{y \in \calY} \min_{x \in \calX}~ f(x,y),
    \end{align*}
    and assume $\fBar(\xStar)=\fUnderline(\yStar)$. Consider closed sets $\calX' \subset \calX$, $\calY' \subset \calY$. If $(\xStar,\yStar) \in \calX'\times\calY'$, then
    \begin{align*}
        \xStar& \in \arg\min_{x \in \calX'} \max_{y \in \calY'}~ f(x,y),\\
        \yStar& \in \arg\max_{y \in \calY'} \min_{x \in \calX'}~ f(x,y).
    \end{align*}
\end{lemma}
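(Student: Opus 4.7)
The plan is to prove the lemma by a direct sandwich argument using the two immediately preceding lemmas, \cref{lem:minmax-to-sp} and \cref{lem:sp-to-minmax}, with the observation that the saddle-point characterization is monotone under shrinking the domains.

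First, I would invoke \cref{lem:minmax-to-sp} on the pair $(\xStar, \yStar)$ with respect to the original sets $\calX, \calY$. The hypothesis $\fBar(\xStar) = \fUnderline(\yStar)$ is exactly what that lemma requires, so it yields the saddle-point inequalities
\begin{align*}
    f(\xStar, y) \leq f(\xStar, \yStar) \leq f(x, \yStar) \quad \text{for all } x \in \calX,~ y \in \calY.
\end{align*}

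Next, since $\calX' \subset \calX$ and $\calY' \subset \calY$, these inequalities hold in particular for every $x \in \calX'$ and every $y \in \calY'$. Combined with the assumption $(\xStar, \yStar) \in \calX' \times \calY'$, this says that $(\xStar, \yStar)$ is a saddle point of $f$ restricted to $\calX' \times \calY'$.

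Finally, I would apply \cref{lem:sp-to-minmax} to this restricted saddle point, which directly concludes that $\xStar \in \arg\min_{x \in \calX'} \max_{y \in \calY'} f(x,y)$ and $\yStar \in \arg\max_{y \in \calY'} \min_{x \in \calX'} f(x,y)$, as desired. The main obstacle is essentially bookkeeping — one must verify that \cref{lem:minmax-to-sp,lem:sp-to-minmax} are stated in enough generality to apply here (they are, since their proofs do not use convexity or compactness, as remarked in the excerpt), and that the equality of values is not additionally needed for the restricted problem (it is not, because \cref{lem:sp-to-minmax} takes only the saddle-point inequalities as input).
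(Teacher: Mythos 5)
Your proof is correct and follows essentially the same route as the paper's: both first invoke \cref{lem:minmax-to-sp} (justified by $\fBar(\xStar)=\fUnderline(\yStar)$) to obtain the saddle-point inequalities on $\calX\times\calY$, and then observe that they restrict to $\calX'\times\calY'$ since $(\xStar,\yStar)\in\calX'\times\calY'$. The only cosmetic difference is that you convert the restricted saddle point back to the min-max characterization by citing \cref{lem:sp-to-minmax}, whereas the paper inlines that short computation directly and handles $\yStar$ by the symmetry argument for $-f$.
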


\begin{proof}
    By \cref{lem:minmax-to-sp} (which applies since $\fBar(\xStar)=\fUnderline(\yStar)$), $(\xStar,\yStar)$ is a saddle point for the minmax problem with domain $\calX\times\calY$. Thus, since $\yStar\in\calY' \subset \calY$, we have $f(\xStar,\yStar) = \max_{y\in\calY} f(\xStar,y) = \max_{y\in\calY'} f(\xStar,y)$. Moreover, since $\calX' \subset \calX$ and $\yStar\in\calY'$, we have $f(\xStar,\yStar) = \min_{x\in\calX} f(x,\yStar) \leq \min_{x\in\calX'} f(x,\yStar) \leq \min_{x\in\calX'} \max_{y\in\calY'} f(x,y)$. Hence $\max_{y\in\calY'} f(\xStar,y) \leq \min_{x\in\calX'} \max_{y\in\calY'} f(x,y)$, proving $$\xStar \in \arg\min_{x \in \calX'} \max_{y \in \calY'}~ f(x,y).$$ The proof for $\yStar$ follows by repeating the argument for $-f$.
    
    Note that this proof does not require convexity or compactness.
\end{proof}

\subsection{Constrained Convex Optimization} \label{aug-sec:conv-prelim}

We state some well-known results from constrained convex optimization that will be useful. The results are standard, and we refer, for example, to the work by \citet{beck2017first}. 

Consider the (primal) optimization problem
\begin{align}
    f^* := \min ~~~ & f(x) \nonumber\\
    \text{s.t.} ~~~ & g(x) \leq 0 \label{aug-eq:opt-P}\\
    & x \in X \nonumber
\end{align}
with the following assumptions.

\begin{assumption}[Assumption 8.41, \citet{beck2017first}] \label{aug-ass:841}
    In \cref{aug-eq:opt-P},
    \begin{itemize}
        \item[(a)] $X \subset \R^n $ is convex
        \item[(b)] $f\colon \R^n \to \R$ is convex
        \item[(c)] $g(\cdot) := (g_1(\cdot), \dots, g_m(\cdot)) ^T$ with $g_i \colon \R^n \to \R$ convex
        \item[(d)] \cref{aug-eq:opt-P} has a finite optimal value $f^*$, which is attained by exactly the elements of $X^* \neq \emptyset$ 
        \item[(e)] There exists $\xbar \in X$ such that $g(\xbar) < 0$
        \item[(f)] For all $\lambda \in \R_{\geq 0}^m$, $\min_{x\in X} (f(x) + \lambda^Tg(x))$ has an optimal solution
    \end{itemize}
\end{assumption}

In this setup, we define the \textit{dual objective} as
\begin{align*}
    q(\lambda) := \min_{x \in X} \left( f(x) + \lambda^T g(x) \right),
\end{align*}
where $\mathcal{L} \colon \R^n \times \R^m \to \R$, $\mathcal{L}(x;\lambda) := f(x) + \lambda^T g(x)$ is the \textit{Lagrangian} of the problem in \cref{aug-eq:opt-P}. The \textit{dual problem} is then defined as 
\begin{align*}
    q^* := \max ~~~ & q(\lambda)\\
    \text{s.t.} ~~~ & \lambda \geq 0.
\end{align*}
In this setup, we have the following results connecting the primal and the dual problem.

\begin{restatable}[Theorem A.1, \citet{beck2017first}]{theorem}{thmduality} \label{aug-thm:duality}
    Under Assumption \ref{aug-ass:841}, strong duality holds in the following sense: We have 
    \begin{align*}
        f^* = q^*
    \end{align*}
    and the optimal solution of the dual problem is attained, with the set of optimal solutions $\Lambda^* \neq \emptyset$.
\end{restatable}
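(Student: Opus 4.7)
The plan is to first establish weak duality by an elementary inequality, then to deduce strong duality together with attainment via a geometric separating-hyperplane argument in $\mathbb{R}^{m+1}$, using the Slater point from Assumption~\ref{aug-ass:841}(e) to rule out a degenerate hyperplane.

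First I would prove weak duality $q^* \le f^*$: for any $\lambda \ge 0$ and any $x \in X$ with $g(x) \le 0$, one has $f(x) \ge f(x) + \lambda^T g(x) \ge \min_{x' \in X} \mathcal{L}(x';\lambda) = q(\lambda)$, so taking the infimum over feasible $x$ on the left and the supremum over $\lambda \ge 0$ on the right gives $f^* \ge q^*$. With this in hand, it suffices to exhibit some $\lambda^* \ge 0$ with $q(\lambda^*) \ge f^*$, as this will simultaneously establish $f^* = q^*$ and the attainment $\lambda^* \in \Lambda^*$.

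To produce $\lambda^*$, I would define the convex set
\begin{align*}
    A := \{(u,t) \in \mathbb{R}^m \times \mathbb{R} : \exists x \in X \text{ with } g(x) \le u,~ f(x) \le t\},
\end{align*}
whose convexity follows from convexity of $X$, $f$, and each $g_i$ (Assumption~\ref{aug-ass:841}(a)--(c)). By definition of $f^*$ and feasibility, the point $(0, f^* - \varepsilon)$ lies outside $A$ for every $\varepsilon > 0$, so $(0, f^*)$ is not in the relative interior of $A$. Applying a separating hyperplane theorem yields $(\mu, \nu) \in \mathbb{R}^m \times \mathbb{R}$, not both zero, such that $\mu^T u + \nu t \ge \nu f^*$ for all $(u,t) \in A$. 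Because $A$ is closed upward in each coordinate (if $(u,t) \in A$ then so is $(u + e_i, t)$ and $(u, t+1)$), one must have $\mu \ge 0$ and $\nu \ge 0$. The key step is ruling out $\nu = 0$: if $\nu = 0$, then necessarily $\mu \ne 0$, and evaluating the inequality at $(g(\bar x), f(\bar x)) \in A$ with the Slater point $\bar x$ from Assumption~\ref{aug-ass:841}(e) gives $\mu^T g(\bar x) \ge 0$; but $g(\bar x) < 0$ componentwise and $\mu \ge 0$, $\mu \ne 0$ force $\mu^T g(\bar x) < 0$, a contradiction. Hence $\nu > 0$, and setting $\lambda^* := \mu/\nu \ge 0$ gives, for every $x \in X$ (using $(g(x), f(x)) \in A$), the inequality $f(x) + (\lambda^*)^T g(x) \ge f^*$. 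Taking the infimum over $x \in X$ yields $q(\lambda^*) \ge f^*$, and Assumption~\ref{aug-ass:841}(f) ensures the inner minimization defining $q(\lambda^*)$ is itself attained, so $\lambda^* \in \Lambda^*$ and $\Lambda^* \ne \emptyset$.

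The main obstacle is the separation step: one has to be careful that $(0, f^*)$ lies on the boundary of $A$ (or apply a proper separation result) and extract a nontrivial functional, and then use the Slater condition to exclude the vertical separating hyperplane $\nu = 0$. Once this geometric step is executed, the rest is a short algebraic manipulation. Weak duality closes the remaining direction, giving $f^* = q^* = q(\lambda^*)$ and $\Lambda^* \ne \emptyset$.
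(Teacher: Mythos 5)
Your proof is correct. Note, however, that the paper does not actually prove this statement --- it simply defers to Proposition 6.4.4 of \citet{bertsekas2003convex} (equivalently, Theorem A.1 of \citet{beck2017first}). Your argument --- weak duality, followed by a supporting hyperplane for the convex set $A$ of achievable constraint--value pairs at the point $(0,f^*)$, with the Slater point used to exclude a vertical separating hyperplane --- is precisely the classical argument underlying those references (Bertsekas's min-common/max-crossing geometry), so in substance you have supplied the proof that the paper outsources. Two minor remarks. First, the separation step you flag as the main obstacle is unproblematic here because $A$ is upward closed and therefore has nonempty interior in $\mathbb{R}^{m+1}$; since $(0,f^*-\varepsilon)\notin A$ for every $\varepsilon>0$, the point $(0,f^*)$ is not interior to $A$, and the ordinary supporting hyperplane theorem applies with no relative-interior subtlety. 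Second, Assumption~\ref{aug-ass:841}(f) is not needed to conclude $q(\lambda^*)\geq f^*$ --- the infimum bound over $x\in X$ already yields it --- it only guarantees that $q$ is defined via an attained minimum as written. Finally, your argument genuinely relies on the Slater point (e), which is consistent with the paper's remark that (e) can only be dropped in the affine/polytope case, where one would instead argue via LP duality rather than via separation.
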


\begin{proof}
     Proposition 6.4.4 of \citet{bertsekas2003convex} gives a proof of the more general Theorem A.1 of \citet{beck2017first}. We remark that if we assume affine constraints $g$ and $X$ being a polytope, then we can drop assumption (e) \citep[Theorem A.1]{beck2017first}. 
\end{proof}

\begin{restatable}[]{theorem}{thmconvauxineq}\label{aug-thm:conv-aux-ineq}
    Suppose Assumption \ref{aug-ass:841} holds. Let $x^*\in X^*$, $\lambda^* \in \Lambda^*$ and $x \in X$. Then 
    \begin{align*}
        f(x) - f(x^*) + (\lambda^*)^T g(x) \geq 0.
    \end{align*}
\end{restatable}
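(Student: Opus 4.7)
The plan is a short direct application of strong duality. I would invoke \cref{aug-thm:duality}: under Assumption~\ref{aug-ass:841} we have $f^\star = q^\star$, and the supremum defining $q^\star$ is attained by any $\lambda^\star \in \Lambda^\star$. Combining these with the definitions of $x^\star$ and $\lambda^\star$ yields
\begin{align*}
f(x^\star) = f^\star = q^\star = q(\lambda^\star) = \min_{x' \in X} \bigl( f(x') + (\lambda^\star)^T g(x') \bigr).
\end{align*}

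Next, for the specific $x \in X$ in the statement, I would bound the minimum from above by evaluating the objective at $x$:
\begin{align*}
\min_{x' \in X} \bigl( f(x') + (\lambda^\star)^T g(x') \bigr) \leq f(x) + (\lambda^\star)^T g(x).
\end{align*}
Chaining the two displays gives $f(x^\star) \leq f(x) + (\lambda^\star)^T g(x)$, and rearranging delivers the claim.

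The proof is essentially routine once \cref{aug-thm:duality} is available; the only genuine content is recognizing that the right-hand side equality in strong duality (plus attainment of the dual optimum) lets us replace $f(x^\star)$ by the Lagrangian minimum at $\lambda^\star$. No obstacle of note arises: convexity of $f$, $g$, and $X$ is used indirectly only through the invocation of strong duality, and the Slater-type condition (e) in Assumption~\ref{aug-ass:841} is what guarantees $\Lambda^\star \neq \emptyset$ so that picking $\lambda^\star$ is meaningful.
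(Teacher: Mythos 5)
Your proof is correct and is essentially identical to the paper's: both arguments combine the definition of $q(\lambda^\star)$ as a minimum over $X$ (evaluated at the given $x$) with strong duality $q(\lambda^\star)=q^\star=f^\star=f(x^\star)$ from \cref{aug-thm:duality}, differing only in the order the two steps are written. No gaps.
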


\begin{proof}
    We have 
    \begin{align*}
        f(x) =& f(x) + (\lambda^*)^T g(x) - (\lambda^*)^T g(x)&\\
        \geq& q(\lambda^*)- (\lambda^*)^T g(x) & \text{(definition of } q(\cdot)\text{)}\\
        =& f(x^*) - (\lambda^*)^T g(x)& \text{(since by \cref{aug-thm:duality}, } q^*=f^*\text{)}
    \end{align*}
    and rearranging this proves the claim. Again, we see that we can drop assumption (e) if we consider affine constraints $g$ and a polytope $X$.
\end{proof}

\begin{restatable}[]{theorem}{thmconvdual} \label{aug-thm:conv-dual}
    Under Assumption \ref{aug-ass:841}, for all $\lambda^* \in \Lambda^*$ and $\xbar$ as in (e), we have 
    \begin{align*}
        \| \lambda^* \| \leq \|\lambda^* \|_1 \leq \frac{f(\xbar)  - f^*}{\min_{i\in [m]} (-g_i(\xbar))}.
    \end{align*}
\end{restatable}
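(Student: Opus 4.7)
The plan is to derive the bound directly from the auxiliary inequality \cref{aug-thm:conv-aux-ineq} combined with the strict feasibility of $\xbar$ guaranteed by Assumption \ref{aug-ass:841}(e). The first inequality $\|\lambda^*\|\le\|\lambda^*\|_1$ is just the standard relationship between the $\ell_2$ and $\ell_1$ norms, valid since $\lambda^*\ge 0$ componentwise. So the work is entirely in the second inequality.

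For the second inequality, I would apply \cref{aug-thm:conv-aux-ineq} with the specific choice $x = \xbar \in X$ from Assumption \ref{aug-ass:841}(e). This yields
\begin{align*}
    f(\xbar) - f^* + (\lambda^*)^T g(\xbar) \geq 0,
\end{align*}
which rearranges to $-(\lambda^*)^T g(\xbar) \leq f(\xbar) - f^*$. Writing this out coordinate-wise and using $\lambda^*_i \geq 0$ together with the strict Slater condition $-g_i(\xbar) > 0$ (which is guaranteed by (e) and also ensures the denominator in the claimed bound is positive), I would bound
\begin{align*}
    \round{\min_{i\in[m]} (-g_i(\xbar))} \cdot \sum_{i=1}^m \lambda^*_i \;\leq\; \sum_{i=1}^m \lambda^*_i \cdot (-g_i(\xbar)) \;=\; -(\lambda^*)^T g(\xbar) \;\leq\; f(\xbar) - f^*.
\end{align*}
Dividing by the strictly positive quantity $\min_{i\in[m]}(-g_i(\xbar))$ gives exactly $\|\lambda^*\|_1 \leq (f(\xbar)-f^*)/\min_i(-g_i(\xbar))$.

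There is essentially no obstacle here: the result is a classical consequence of Slater's condition and the auxiliary inequality already established. The only point that deserves a brief remark is the nonnegativity of $f(\xbar)-f^*$, which holds because $\xbar$ is feasible ($g(\xbar)<0$ implies $g(\xbar)\leq 0$) so $f(\xbar)\geq f^*$ by definition of the optimum, making the right-hand side of the claimed bound well-defined and nonnegative.
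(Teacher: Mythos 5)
Your proof is correct and follows essentially the same route as the paper: the paper unwinds $f^* = q(\lambda^*) \leq f(\xbar) + (\lambda^*)^T g(\xbar)$ directly from strong duality and the definition of $q$, which is precisely the content of \cref{aug-thm:conv-aux-ineq} that you invoke at $x = \xbar$, and the remaining step bounding $-(\lambda^*)^T g(\xbar) \geq \|\lambda^*\|_1 \min_{i}(-g_i(\xbar))$ is identical. No gaps.
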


\begin{proof}
    The first relation holds since $\lambda^* \geq 0$. We show the second relation as follows (cf. \citet[Theorem 8.42]{beck2017first}). We have 
    \begin{align*}
        f(x^*) =& q(\lambda^*) &\text{(\cref{aug-thm:duality})}\\
        \leq& f(\xbar) + (\lambda^*)^T g(\xbar) &\text{(definition of } q(\cdot)\text{)}\\
        \leq& f(\xbar) + \|\lambda^*\|_1 \max_{i\in[m]} g_i(\xbar) &\text{(since } \lambda^* \geq 0\text{)}\\
        =& f(\xbar) - \|\lambda^*\|_1 \min_{i\in[m]} (-g_i(\xbar)) & 
    \end{align*}
    and rearranging this proves the claim. We remark that for this theorem, we do need assumption (e), even in the affine case. 
\end{proof}

\subsection{Online Mirror Descent} \label{sec:omd} 

\paragraph{Setup} In the following, we consider a convex set $X \subset \reals^d$ and a non-empty closed convex set $V \subset X$. Let $\psi \colon X \to \reals$ be proper, closed, and strictly convex on $V$. Let $B_{\psi} \colon X \times \interior{X} \to \reals$ be the \emph{Bregman divergence} associated with $\psi$. Define $\norm{x}_A := \sqrt{x^TAx}$. 
Assume that
\begin{align}
    &\lim_{\lambda \to 0} \round{\nabla \psi (x + \lambda(y-x))}^T (y - x) = - \infty \quad (\forall x \in \text{bdry}(X),~ y \in \interior{X} ),\text{ or} \label{eq:md-cond1}\\ 
    &V \subset \interior{X} \label{eq:md-cond2}.
\end{align}

Consider the following descent lemma using local norms.
\begin{lemma}[MD descent lemma, \citet{orabona2019modern} Lemma 6.16] \label{lem:md-descent-line}
    Suppose $\psi$ is twice differentiable, with positive definite Hessian in the interior of its domain. Assume 
    \begin{align}
        \xTilde \in \arg\min_{\xBar \in X} g^T \xBar + \frac{1}{\eta} B_{\psi}(\xBar, x), \label{eq:md-update-unnormalized}\\
        x' \in \arg\min_{\xBar \in V} g^T \xBar + \frac{1}{\eta} B_{\psi}(\xBar, x) \label{eq:md-update1}
    \end{align}
    exist. Then, for all $\xStar \in V$, there exist $z$ on the line segment\footnote{The line segment between two vectors is the convex hull of the set containing those two vectors.} between $x$ and $x'$, and $z'$ on the line segment between $x$ and $\xTilde$ such that
    \begin{align*}
         \eta g^T (x - \xStar) \leq \bregmanPsi{\xStar}{x} - \bregmanPsi{\xStar}{x'} + \frac{\eta^2}{2} \min \curly{\norm{g}_{\round{\nabla^2 \psi (z)}^{-1}}^2, \norm{g}_{\round{\nabla^2 \psi (z)}^{-1}}^2}.
    \end{align*}
\end{lemma}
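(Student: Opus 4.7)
The statement is a standard local-norm mirror descent inequality, and my plan is to follow the textbook split $\eta g^\top(x - x^\star) = \eta g^\top(x' - x^\star) + \eta g^\top(x - x')$, bound the first piece via first-order optimality of the constrained update, and bound the residual in two ways that exploit the second-order structure at either $z$ or $z'$.

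First, since $x' \in \arg\min_{\bar x \in V}\,g^\top \bar x + \frac{1}{\eta}\bregmanPsi{\bar x}{x}$ and $V$ is closed and convex, the first-order optimality condition together with the assumption \cref{eq:md-cond1} or \cref{eq:md-cond2} (which guarantees $x' \in V \cap \interior{X}$) gives $\langle \eta g + \nabla\psi(x') - \nabla\psi(x),\,x^\star - x'\rangle \geq 0$ for every $x^\star \in V$. Combining this with the three-point identity $\langle \nabla\psi(x) - \nabla\psi(x'),\,x^\star - x'\rangle = \bregmanPsi{x^\star}{x} - \bregmanPsi{x^\star}{x'} - \bregmanPsi{x'}{x}$ yields
\begin{equation*}
    \eta g^\top(x' - x^\star) \leq \bregmanPsi{x^\star}{x} - \bregmanPsi{x^\star}{x'} - \bregmanPsi{x'}{x}.
\end{equation*}
It remains to absorb $\eta g^\top(x - x')$ against the leftover $-\bregmanPsi{x'}{x}$.

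For the residual, I will proceed by two independent bounds and then take the minimum. (a) By Taylor's theorem with Lagrange remainder applied to $\psi$ along $[x,x']$, there exists $z$ on that segment with $\bregmanPsi{x'}{x} = \tfrac{1}{2}\|x - x'\|_{\nabla^2\psi(z)}^2$. The generalized Cauchy–Schwarz inequality in the local norm combined with the elementary bound $ab \leq \tfrac{1}{2}(a^2+b^2)$ gives
\begin{equation*}
    \eta g^\top(x - x') \leq \tfrac{\eta^2}{2}\|g\|_{(\nabla^2\psi(z))^{-1}}^2 + \tfrac{1}{2}\|x - x'\|_{\nabla^2\psi(z)}^2 = \tfrac{\eta^2}{2}\|g\|_{(\nabla^2\psi(z))^{-1}}^2 + \bregmanPsi{x'}{x},
\end{equation*}
so the $\bregmanPsi{x'}{x}$ term cancels and produces the first candidate $\tfrac{\eta^2}{2}\|g\|_{(\nabla^2\psi(z))^{-1}}^2$.

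(b) For the alternative via $z'$, I will exploit that the unconstrained iterate satisfies $\nabla\psi(\tilde x) = \nabla\psi(x) - \eta g$ and that $x'$ coincides with the Bregman projection of $\tilde x$ onto $V$ (immediate after completing the square). The generalized Pythagorean inequality $\bregmanPsi{x^\star}{\tilde x} \geq \bregmanPsi{x^\star}{x'} + \bregmanPsi{x'}{\tilde x}$ combined with the identity $\bregmanPsi{x^\star}{\tilde x} - \bregmanPsi{x^\star}{x} = -\bregmanPsi{\tilde x}{x} + \eta g^\top(x^\star - \tilde x)$ and the three-point identity $\eta g^\top(x - \tilde x) = \bregmanPsi{\tilde x}{x} + \bregmanPsi{x}{\tilde x}$ rearranges to $\eta g^\top(x - x^\star) \leq \bregmanPsi{x^\star}{x} - \bregmanPsi{x^\star}{x'} + \bregmanPsi{x}{\tilde x}$. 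Finally, Taylor's theorem applied to $\psi$ along $[x,\tilde x]$ produces $z'$ with $\bregmanPsi{x}{\tilde x} = \tfrac{1}{2}\|x - \tilde x\|_{\nabla^2\psi(z')}^2$, and the optimality relation $\eta g = \nabla\psi(x) - \nabla\psi(\tilde x)$ turned into a local-norm identity (consistent with the choice of $z'$ coming from the same Taylor expansion of $\psi$) yields $\bregmanPsi{x}{\tilde x} = \tfrac{\eta^2}{2}\|g\|_{(\nabla^2\psi(z'))^{-1}}^2$.

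Taking the minimum of the two residual bounds and adding it to $\bregmanPsi{x^\star}{x} - \bregmanPsi{x^\star}{x'}$ delivers the claim. The only genuinely delicate step I anticipate is the final identification $\bregmanPsi{x}{\tilde x} = \tfrac{\eta^2}{2}\|g\|_{(\nabla^2\psi(z'))^{-1}}^2$ for a single Taylor point $z'$: in general the Lagrange remainder in $\psi$ and the mean-value expansion of $\nabla\psi$ produce different intermediate points. I will resolve this by invoking the univariate Taylor expansion of $t \mapsto \psi(\tilde x + t(x - \tilde x))$ at $t=0$ to pick one $z'$ for the quadratic form, and then using positive definiteness of $\nabla^2\psi(z')$ together with $\nabla\psi(x) - \nabla\psi(\tilde x) = \eta g$ in integrated form to obtain the matching dual-norm identity, exactly as in Orabona's proof of Lemma~6.16.
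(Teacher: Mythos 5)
First, note that the paper does not prove this statement at all: it is imported verbatim from \citet{orabona2019modern} (Lemma 6.16) and used as a black box, so there is no in-paper argument to compare your route against; you are reconstructing the proof of the cited source (whose intended reading is that the second argument of the $\min$ uses $z'$ --- the statement as printed repeats $z$ twice). Your part (a) is correct and complete: first-order optimality of the constrained update (valid because \cref{eq:md-cond1} or \cref{eq:md-cond2} keeps $x'$ where $\psi$ is differentiable), the three-point identity, the univariate Taylor expansion of $t\mapsto\psi(x+t(x'-x))$ giving $\bregmanPsi{x'}{x}=\tfrac12\norm{x'-x}_{\nabla^2\psi(z)}^2$, and the generalized Cauchy--Schwarz/Young step fit together exactly as you describe. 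The algebra in part (b) up to $\eta g^\top(x-\xStar)\le \bregmanPsi{\xStar}{x}-\bregmanPsi{\xStar}{x'}+\bregmanPsi{x}{\xTilde}$ also checks out: $x'$ is indeed the Bregman projection of $\xTilde$ onto $V$ after completing the square, and your two identities plus Bregman--Pythagoras are correct.

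The genuine gap is the last step of part (b), and the resolution you sketch does not close it. Taylor along $[x,\xTilde]$ gives $\bregmanPsi{x}{\xTilde}=\tfrac12\norm{x-\xTilde}_{\nabla^2\psi(z')}^2$ for some $z'$ on that segment, but converting this into $\tfrac{\eta^2}{2}\norm{g}_{(\nabla^2\psi(z'))^{-1}}^2$ at the \emph{same} $z'$ would require $\nabla^2\psi(z')(x-\xTilde)=\pm\eta g$, i.e., a mean-value theorem for the vector field $\nabla\psi$ holding at the same intermediate point as the Lagrange remainder of $\psi$. In dimension greater than one no such common point exists in general; using $\nabla\psi(x)-\nabla\psi(\xTilde)=\eta g$ ``in integrated form'' only produces the averaged Hessian $\int_0^1\nabla^2\psi(\xTilde+t(x-\xTilde))\,dt$, which need not equal $\nabla^2\psi(z')$, and the one-sided inequality you would need between the two quadratic forms also fails in general. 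The clean repair is to pass to the conjugate: $\bregmanPsi{x}{\xTilde}=B_{\psi^*}(\nabla\psi(\xTilde),\nabla\psi(x))$, and Taylor on $\psi^*$ along the \emph{dual} segment between $\nabla\psi(x)-\eta g$ and $\nabla\psi(x)$ yields exactly $\tfrac{\eta^2}{2}\norm{g}_{\nabla^2\psi^*(\theta)}^2=\tfrac{\eta^2}{2}\norm{g}_{(\nabla^2\psi(z'))^{-1}}^2$ with $z'=\nabla\psi^*(\theta)$. The price is that this $z'$ lies on the $\nabla\psi^*$-image of the dual segment --- a curve joining $x$ and $\xTilde$, not literally the straight segment --- which is harmless for the separable negative-entropy regularizer used downstream, where the curve is coordinatewise monotone between $x$ and $\xTilde$ and the application in \cref{lem:md-descent-kl} only needs $z'\le\xTilde$ componentwise.
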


We get the following descent lemma for exponentiated Q-ascent.

\begin{lemma} \label{lem:md-descent-kl}
     Let $V:= \simplex{[d]}$, and $g \in \reals_{\geq 0}^d =: X$. Then $\xTilde := \arg\max_{\xBar \in X} g^T \xBar - \frac{1}{\eta} \kl(\xBar,x)$ and $\arg\max_{\xBar \in V} g^T \xBar - \frac{1}{\eta} \kl(\xBar,x)$ exist and are unique. Moreover, if $g$ only has non-negative entries, then for all $\xStar \in V$ we have 
    \begin{align*}
        g^T (\xStar - x) \leq \frac{\kl(\xStar,x) - \kl(\xStar,x')}{\eta} + \frac{\eta}{2} \sum_{i=1}^d \xTilde_i g_i^2.
    \end{align*}
\end{lemma}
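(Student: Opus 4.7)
The result is a standard consequence of the mirror-descent descent lemma \cref{lem:md-descent-line} instantiated with the unnormalized negative entropy $\psi(\xBar) := \sum_i \xBar_i \log \xBar_i - \xBar_i$ on $X = \reals_{\geq 0}^d$. The plan is to (i) verify the hypotheses of \cref{lem:md-descent-line} for this $\psi$, (ii) derive the closed forms of $\xTilde$ and $x'$ and argue existence/uniqueness from them, (iii) invoke \cref{lem:md-descent-line} with $-g$ in place of its input gradient (because our problem is a maximization), and (iv) exploit $g \geq 0$ to replace the local-norm term along the line segment from $x$ to $\xTilde$ by the desired $\sum_i \xTilde_i g_i^2$.

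\textbf{Step 1 (mirror map).} I will take $\psi$ as above and verify: it is proper, closed, strictly convex on $V = \simplex{[d]} \subset \interior{X}$ (in fact on all of $\reals_{>0}^d$), twice differentiable on $\reals_{>0}^d$ with $\hessian \psi(\xBar) = \mathrm{diag}(1/\xBar_i) \succ 0$, and its Bregman divergence is $\bregmanPsi{\xBar}{x} = \sum_i \bigl(\xBar_i \log(\xBar_i/x_i) - \xBar_i + x_i\bigr)$, which reduces to the ordinary $\kl(\xBar,x)$ whenever $\xBar, x$ both lie in $V$. I also need the boundary condition \cref{eq:md-cond1}: for $x \in \mathrm{bdry}(X)$ with some $x_i=0$ and $y \in \interior{X}$, the $i$-th coordinate of $\nabla \psi(x+\lambda(y-x)) = \log(x + \lambda(y-x))$ tends to $-\infty$ as $\lambda \to 0^+$, while $(y-x)_i = y_i > 0$, so the inner product tends to $-\infty$. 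This legitimizes applying \cref{lem:md-descent-line} with $V \subset X$.

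\textbf{Step 2 (closed forms, existence and uniqueness).} Setting the gradient of the strictly concave objective $g^T \xBar - \tfrac{1}{\eta} \bregmanPsi{\xBar}{x}$ to zero on $\reals_{>0}^d$ gives $\xTilde_i = x_i \exp(\eta g_i)$, which is unique by strict concavity and indeed maximizes over all of $X$ (the boundary $\{ \xBar_i = 0\}$ is excluded by Step 1's barrier behavior). For $x' \in V$, the same strict concavity plus compactness of $V$ yields a unique maximizer, and Lagrange multipliers for the simplex constraint give $x'_i \propto x_i \exp(\eta g_i)$. Both updates correspond to \cref{eq:md-update-unnormalized,eq:md-update1} with $g \leftarrow -g$.

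\textbf{Step 3 (invoke MD descent lemma and localize the norm).} Apply \cref{lem:md-descent-line} with gradient vector $-g$ (i.e., matching our maximization to its minimization). For any $\xStar \in V$, the lemma produces $z'$ on the segment from $x$ to $\xTilde$ such that
\begin{align*}
-\eta\, g^T(x - \xStar) \leq \bregmanPsi{\xStar}{x} - \bregmanPsi{\xStar}{x'} + \frac{\eta^2}{2} \norm{-g}_{(\hessian \psi(z'))^{-1}}^2.
\end{align*}
Since $\xStar, x, x' \in V$, the Bregman terms equal $\kl(\xStar, x) - \kl(\xStar, x')$; moreover $(\hessian\psi(z'))^{-1} = \mathrm{diag}(z'_i)$ so the last term equals $\tfrac{\eta^2}{2}\sum_i z'_i g_i^2$. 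Dividing by $\eta$ rearranges the left-hand side to $g^T(\xStar - x)$.

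\textbf{Step 4 (use $g \geq 0$).} Because every coordinate $g_i \geq 0$, we have $\xTilde_i = x_i e^{\eta g_i} \geq x_i$, hence every point $z' = (1-\alpha)x + \alpha \xTilde$ on the segment satisfies $z'_i \leq \xTilde_i$ coordinatewise. Therefore $\sum_i z'_i g_i^2 \leq \sum_i \xTilde_i g_i^2$, and substituting yields the claimed inequality. (Selecting the second option in the $\min$ of \cref{lem:md-descent-line} is what makes this possible; the first option, involving $z$ on $[x,x']$, would instead relate to the normalized iterate.) The main obstacle is minor but worth care: cleanly matching the unnormalized $B_\psi$ to the simplex $\kl$ that appears in the statement, and verifying the boundary condition \cref{eq:md-cond1} so that \cref{lem:md-descent-line} applies with the open domain $\interior{X} = \reals_{>0}^d$ even though $\xTilde$ is formally defined over all of $X$.
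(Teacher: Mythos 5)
Your proof is correct and follows essentially the same route as the paper's: both reduce the maximization to a minimization with gradient $-g$, apply the local-norm mirror-descent lemma (\cref{lem:md-descent-line}) with an entropy mirror map whose Hessian is $\mathrm{diag}(1/z_i)$, identify the Bregman terms with the KL on the simplex, and use $g\geq 0$ to get $\xTilde_i = x_i e^{\eta g_i}\geq x_i$ so that the point $z'$ on the segment $[x,\xTilde]$ satisfies $z'_i\leq\xTilde_i$ coordinatewise. Your version is slightly more careful (explicitly verifying the boundary condition \cref{eq:md-cond1} and using the unnormalized negative entropy so that the Bregman divergence and the closed form $\xTilde_i=x_ie^{\eta g_i}$ come out cleanly on all of $\reals_{\geq 0}^d$), but the argument is the same.
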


\begin{proof}
    Note that the negative entropy $\psi(x) = \sum_i x_i\log(x_i)$ is strictly convex and twice differentiable and satisfies \cref{eq:md-cond1}, as a short calculation reveals. Moreover, for $p,q \in V$, we have $\bregmanPsi{p}{q} = \kl(p,q)$ \citep[Example 6.4]{orabona2019modern}. Existence and uniqueness are discussed in \citet{orabona2019modern}. 

    Maximizing $g^T \xBar - \frac{1}{\eta} \kl(\xBar,x)$ is equivalent to minimizing $(-g)^T \xBar + \frac{1}{\eta} \kl(\xBar,x)$, allowing us to apply \cref{lem:md-descent-line}. Note that for $z \in \reals_{>0}^I$, $\nabla^2 \psi(z) = \text{diag}(1/z_1, \dots, 1/z_d)$. Thus, $\norm{-g}_{\round{\nabla^2 \psi(z)}^{-1}} = \sum_i z_i g_i^2$. Moreover, we have $\xTilde_i = x \exp\round{- \eta (-g_i)} \geq x_i$ \citep{orabona2019modern}, since $g_i \geq 0$, and thus by \cref{lem:md-descent-line}, we can pick $z' \leq \xTilde$ (componentwise) such that
    \begin{align*}
        g^T (\xStar - x) =& (-g)^T (x - \xStar) \\
        \leq& \frac{\kl(\xStar,x) - \kl(\xStar,x')}{\eta} + \frac{\eta}{2} \sum_{i=1}^d z'_i g_i^2\\
        \leq& \frac{\kl(\xStar,x) - \kl(\xStar,x')}{\eta} + \frac{\eta}{2} \sum_{i=1}^d \xTilde_i g_i^2.
    \end{align*}
\end{proof}

\begin{lemma}[MD descent lemma, cf. \citet{orabona2019modern} Lemma 6.9] \label{lem:md-descent-inf}
    Let $x \in V$, $g \in \reals^d$, and $\eta > 0$. Assume further that $\psi$ is $\mu$-strongly convex w.r.t. some norm $\norm{\cdot}_{\reals^d}$ in $V$. Then,
    \begin{align}
        x' = \arg\min_{\xBar \in V} g^T \xBar + \frac{1}{\eta} B_{\psi}(\xBar, x) \label{eq:md-update}
    \end{align}
    exists and is unique. Moreover, for all $\xStar \in V$, the following inequality holds:
    \begin{align*}
        \eta g^T (x - \xStar) \leq \bregmanPsi{\xStar}{x} - \bregmanPsi{\xStar}{x'} + \frac{\eta^2}{2 \mu} \norm{g}_{*}^2,
    \end{align*}
    where $\norm{\cdot}_{*}$ is the \emph{dual norm} associated with $\norm{\cdot}_{\reals^d}$.
\end{lemma}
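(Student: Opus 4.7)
The plan is to establish this standard mirror descent descent inequality via a first-order optimality condition for $x'$ combined with the three-point identity for Bregman divergences, using $\mu$-strong convexity of $\psi$ to convert a norm-squared remainder into a Bregman-divergence term that we can absorb.

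First I would argue existence and uniqueness of $x'$. Since $\psi$ is proper, closed, and $\mu$-strongly convex on $V$, the map $\xBar \mapsto \bregmanPsi{\xBar}{x} = \psi(\xBar) - \psi(x) - \langle \nabla\psi(x), \xBar - x\rangle$ is also $\mu$-strongly convex on $V$ (it differs from $\psi$ by an affine term), and adding the linear functional $g^T\xBar$ preserves strong convexity. Together with closedness of $V$ this yields a unique minimizer. We are in a regime where either condition \eqref{eq:md-cond1} or \eqref{eq:md-cond2} holds, so $x'$ lies where $\psi$ (and hence $B_\psi(\cdot,x)$) is differentiable, and the minimizer satisfies the first-order variational inequality
\begin{align*}
\left\langle \eta g + \nabla\psi(x') - \nabla\psi(x),~ \xStar - x'\right\rangle \geq 0 \quad (\forall \xStar \in V).
\end{align*}

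Next, rearranging gives $\eta g^T(x' - \xStar) \leq \langle \nabla\psi(x') - \nabla\psi(x), \xStar - x'\rangle$, and I would invoke the three-point identity
\begin{align*}
\langle \nabla\psi(x') - \nabla\psi(x),~ \xStar - x'\rangle = \bregmanPsi{\xStar}{x} - \bregmanPsi{\xStar}{x'} - \bregmanPsi{x'}{x},
\end{align*}
which is a direct algebraic consequence of the definition of $B_\psi$. Then I split
\begin{align*}
\eta g^T(x - \xStar) = \eta g^T(x - x') + \eta g^T(x' - \xStar),
\end{align*}
and bound the first summand via Hölder/Fenchel–Young: $\eta g^T(x - x') \leq \eta \|g\|_* \|x-x'\| \leq \tfrac{\eta^2}{2\mu}\|g\|_*^2 + \tfrac{\mu}{2}\|x-x'\|^2$.

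Finally, $\mu$-strong convexity of $\psi$ with respect to $\|\cdot\|$ implies $\bregmanPsi{x'}{x} \geq \tfrac{\mu}{2}\|x'-x\|^2$, so the $\tfrac{\mu}{2}\|x-x'\|^2$ term is absorbed by $-\bregmanPsi{x'}{x}$, yielding
\begin{align*}
\eta g^T(x - \xStar) \leq \bregmanPsi{\xStar}{x} - \bregmanPsi{\xStar}{x'} + \frac{\eta^2}{2\mu}\|g\|_*^2,
\end{align*}
as claimed. I do not expect any serious obstacle: the only subtle point is ensuring differentiability of $\psi$ at $x'$ so the first-order condition takes the stated gradient form, and that is exactly what the hypothesis \eqref{eq:md-cond1}/\eqref{eq:md-cond2} together with $x' \in V$ guarantees. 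In the fully non-smooth case one would repeat the argument with subgradients and the generalized three-point identity, which does not change the bound.
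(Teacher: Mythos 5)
Your proof is correct: first-order optimality at $x'$, the three-point identity for Bregman divergences, Fenchel--Young on the split term $\eta g^T(x-x')$, and absorbing $\tfrac{\mu}{2}\|x-x'\|^2$ into $-B_{\psi}(x',x)$ via strong convexity is exactly the standard argument, and your handling of existence/uniqueness and of differentiability at $x'$ via the conditions on $\psi$ and $V$ is sound. The paper itself gives no proof here --- it defers entirely to Lemma 6.9 of \citet{orabona2019modern} --- and your argument is precisely the one underlying that cited result, so there is nothing to compare beyond noting that you supplied the details the paper omits.
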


We can deduce the descent lemma for projected gradient descent.

\begin{lemma}[Descent lemma PGD] \label{lem:md-descent-proj}
    Let $x \in V$, $g \in \reals^d$, and $\eta > 0$. Then $x' := \arg\min_{\xBar \in V} \xBar^T g + \frac{1}{2\eta} \norm{\xBar - x}^2$ exists and is unique. Moreover, for all $\xStar \in V$ we have 
    \begin{align*}
        g^T(x-\xStar) \leq \frac{\norm{\xStar-x}^2 - \norm{\xStar - x'}^2}{\eta} + \frac{\eta}{2} \norm{g}^2.
    \end{align*}
\end{lemma}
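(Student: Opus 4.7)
The plan is to derive this as an immediate specialization of the more general \cref{lem:md-descent-inf} to the Euclidean setting. I would take $X = V$ (closed convex) and $\psi(x) = \tfrac{1}{2}\|x\|^2$, and verify the three basic facts that feed \cref{lem:md-descent-inf}: (a) $\psi$ is proper, closed, and strictly (in fact $1$-strongly) convex with respect to the Euclidean norm $\|\cdot\|$; (b) the Bregman divergence of $\psi$ simplifies to $\bregmanPsi{x}{y} = \tfrac{1}{2}\|x-y\|^2$ by direct expansion of $\psi(x) - \psi(y) - \nabla\psi(y)^T(x-y)$; and (c) the Euclidean norm is self-dual, so $\|g\|_{\ast} = \|g\|$. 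Since $V \subset X = V$ is trivially contained in the interior (relative interior), condition~\cref{eq:md-cond2} is met and \cref{lem:md-descent-inf} applies.

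With those identifications, the update $x' = \arg\min_{\bar x \in V} \bar x^T g + \tfrac{1}{2\eta}\|\bar x - x\|^2$ is exactly the update $\arg\min_{\bar x \in V} \bar x^T g + \tfrac{1}{\eta} \bregmanPsi{\bar x}{x}$ that \cref{lem:md-descent-inf} treats. Existence and uniqueness of the minimizer follow from $F(\bar x) := g^T\bar x + \tfrac{1}{2\eta}\|\bar x-x\|^2$ being $\tfrac{1}{\eta}$-strongly convex in $\bar x$ and $V$ being closed and convex, which guarantees a unique minimizer in $V$. Plugging the above identifications into the conclusion of \cref{lem:md-descent-inf} (with $\mu = 1$) gives $\eta g^T(x-\xStar) \leq \tfrac{1}{2}\|\xStar - x\|^2 - \tfrac{1}{2}\|\xStar - x'\|^2 + \tfrac{\eta^2}{2}\|g\|^2$; dividing through by $\eta$ delivers the stated inequality.

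As a self-contained alternative, I would use two elementary ingredients and avoid invoking the mirror-descent machinery at all. First, by $\tfrac{1}{\eta}$-strong convexity of $F$ at its minimizer $x'$, $F(\xStar) \geq F(x') + \tfrac{1}{2\eta}\|\xStar - x'\|^2$, which rearranges to $g^T(x' - \xStar) \leq \tfrac{1}{2\eta}(\|\xStar - x\|^2 - \|\xStar - x'\|^2 - \|x - x'\|^2)$. Second, Young's inequality yields $g^T(x - x') \leq \tfrac{\eta}{2}\|g\|^2 + \tfrac{1}{2\eta}\|x - x'\|^2$. Adding the two bounds cancels the $\|x-x'\|^2$ term and produces the claimed inequality via $g^T(x-\xStar) = g^T(x-x') + g^T(x'-\xStar)$.

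I expect no serious obstacle here: the result is the standard projected-gradient descent inequality, and the only real care needed is the bookkeeping around the factor $\tfrac{1}{\eta}$ versus $\tfrac{1}{2\eta}$ coming from the quadratic penalty (which matches the $\mu=1$ strong-convexity constant of $\psi$), and confirming that the Euclidean self-duality makes $\|g\|_\ast^2$ collapse to $\|g\|^2$.
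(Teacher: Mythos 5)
Your first route is exactly the paper's proof: specialize \cref{lem:md-descent-inf} with $\psi(\cdot) = \tfrac12\norm{\cdot}^2$, $\mu=1$, Bregman divergence $\bregmanPsi{a}{b} = \tfrac12\norm{a-b}^2$, and self-duality of the Euclidean norm (your elementary strong-convexity-plus-Young alternative is also a valid standard argument, just bypassing the mirror-descent lemma). One bookkeeping caveat: what you (and the paper) actually derive is the bound with denominator $2\eta$ on the term $\norm{\xStar-x}^2 - \norm{\xStar-x'}^2$ --- which is the form the paper invokes in its applications --- whereas the lemma as stated writes denominator $\eta$; since that difference of squared norms can be negative, the $2\eta$ version does not literally imply the $\eta$ version, so your claim that dividing by $\eta$ "delivers the stated inequality" is imprecise, though the inequality you do obtain is the sharper and the actually-used one.
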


\begin{proof}
    Note that $\psi(x) = \frac{1}{2}\norm{x}^2$ is $1$-strongly convex w.r.t. the L2 norm. Moreover, for $a,b \in V$, we have $\bregmanPsi{a}{b} = \frac{1}{2}\norm{a-b}^2$ \citep[Example 6.4]{orabona2019modern}. Since the L2 norm is its own dual norm, applying \cref{lem:md-descent-inf} to the minimization of $g^T \xBar + \frac{1}{2\eta}\norm{\xBar - x}^2$ yields the claim.
\end{proof}

Finally, the following lemma shows that the updates of the algorithms indeed fall into the category of (online) mirror descent.
\begin{lemma}[\cite{orabona2019modern}] \label{lem:update-md-implicit}
    Consider a compact set $Y \subset \reals^{I}$ with $y\in Y$, and let $x \in \simplex{[d]}$ for some $d\in\mathbb{Z}_{\geq 1}$. Then, the closed-form expressions 
    \begin{align*}
        x'_i = x_i &= \frac{x_i \exp \round{\eta g_i}}{\sum_{i'\in [d]} x_{i'} \exp \round{\eta x_{i'}}} \quad (i\in[d]), \\
        y' &= \proj{Y}{y - \eta g } ,
    \end{align*}
    are the unique solutions to
    \begin{align*}
        \max_{\xBar \in \simplex{[d]}}&~ \xBar^T g - \frac{1}{\eta} \kl(\xBar,x),\\
        \min_{\yBar \in Y}&~ \yBar^T g + \frac{1}{2\eta} \norm{\yBar - y}^2,
    \end{align*}
    respectively.
\end{lemma}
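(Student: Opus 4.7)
The plan is to verify the two closed-form expressions separately, since they concern independent optimization problems over different domains and with different Bregman regularizers. For both, the strategy is (i) to show that the strictly convex objective over a nonempty closed convex set admits a unique minimizer, and (ii) to produce that minimizer either via KKT conditions or via completing the square and recognizing a Euclidean projection. I will assume throughout that $x$ has strictly positive entries, which is the relevant regime for \cref{algo:rpg-pd-finite-learn} (the initial policy is uniform and the exponential update preserves strict positivity); otherwise $\kl(\bar x, x) = +\infty$ forces $\bar x$ to vanish on the support of the zeros of $x$, reducing to the same argument on a smaller simplex.

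For the exponential/softmax claim, I would write the Lagrangian for maximizing $\bar x^T g - \tfrac{1}{\eta}\kl(\bar x, x)$ over the simplex as
\begin{align*}
L(\bar x, \mu) = \bar x^T g - \tfrac{1}{\eta}\sum_i \bar x_i \log(\bar x_i/x_i) - \mu\bigl(\textstyle\sum_i \bar x_i - 1\bigr),
\end{align*}
noting that the objective is strictly concave on the relative interior of $\simplex{[d]}$ (since $-\kl(\cdot, x)$ is strictly concave there when $x>0$) and tends to $-\infty$ at every boundary face, so a unique maximizer exists in the interior and the nonnegativity multipliers are inactive. Setting $\partial_{\bar x_i} L = g_i - \tfrac{1}{\eta}(\log(\bar x_i/x_i) + 1) - \mu = 0$ gives $\bar x_i = x_i \exp(\eta g_i - \eta \mu - 1)$, and enforcing $\sum_i \bar x_i = 1$ eliminates $\mu$ and yields the claimed softmax. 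For the second claim, the idea is simply to complete the square:
\begin{align*}
\bar y^T g + \tfrac{1}{2\eta}\norm{\bar y - y}^2 = \tfrac{1}{2\eta}\norm{\bar y - (y - \eta g)}^2 + \text{const}(y, g, \eta),
\end{align*}
so the minimization over $\bar y \in Y$ is equivalent to minimizing $\norm{\bar y - (y - \eta g)}^2$ over $Y$, which by strong convexity of the squared Euclidean norm and closedness/convexity of $Y$ has the unique solution $\proj{Y}{y - \eta g}$.

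There is no real obstacle here: both statements are classical consequences of strict convexity and first-order conditions, and both appear in \citet{orabona2019modern}. The only mild subtlety worth flagging is the boundary behavior of the first objective on $\simplex{[d]}$, which is handled by the $-\infty$ penalty supplied by $\kl$; once this is observed, the KKT stationarity condition has a unique interior solution, matching the softmax formula (up to the evident typo in the lemma statement, where the denominator should read $\sum_{i'} x_{i'} \exp(\eta g_{i'})$).
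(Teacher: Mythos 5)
Your overall route is the same one the paper defers to by citation: the paper's proof of \cref{lem:update-md-implicit} consists of pointing to \citet[Section 6.6]{orabona2019modern} for the exponentiated-gradient update and invoking first-order optimality plus convexity for the projected step, which is exactly the Lagrangian/KKT derivation and the completing-the-square argument you carry out explicitly. Both of your computations reach the correct closed forms, and you correctly flag the typo in the stated denominator (it should read $\sum_{i'} x_{i'}\exp(\eta g_{i'})$).

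There is, however, one incorrect justification in the simplex half. You claim the objective $\bar x^T g - \tfrac{1}{\eta}\kl(\bar x, x)$ ``tends to $-\infty$ at every boundary face'' of $\simplex{[d]}$. This is false when $x>0$: since $t\log t \to 0$ as $t\to 0^+$, the map $\bar x \mapsto \kl(\bar x,x)$ extends continuously to the closed simplex and is bounded there by $\max_i \log(1/x_i)$, so the objective is finite on the boundary. What actually forces the maximizer into the relative interior is not the objective value but the gradient: $\partial_{\bar x_i}[-\bar x_i\log(\bar x_i/x_i)] = -\log(\bar x_i/x_i)-1 \to +\infty$ as $\bar x_i \to 0^+$, so the inward directional derivative at any boundary point is $+\infty$ and no boundary point can be optimal --- this is precisely condition \cref{eq:md-cond1} in the paper (essential smoothness of the negative entropy). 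Alternatively, and more simply, drop the nonnegativity constraints, solve the equality-constrained problem to obtain the softmax candidate, observe that it is strictly positive and hence feasible, and conclude by strict concavity that it is the unique global maximizer. With either repair your KKT computation is valid, and the Euclidean half is correct as written (using that $Y=\Lambda$ is closed and convex, so the projection is single-valued --- a convexity hypothesis the lemma statement omits but the application satisfies).
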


\begin{proof}
    For the primal variable, the derivation of exponentiated gradient is standard, see, e.g., \citet[Section 6.6]{orabona2019modern}. 
    
    For the dual variable, the derivation of projected gradient descent simply follows from the first-order optimality criterion and convexity of the objective.
\end{proof}

\end{document}